\definecolor{red}{HTML}{E51400} 
\definecolor{blue}{HTML}{0050EF} 
\definecolor{green}{HTML}{008A00} 
\definecolor{purple}{HTML}{AA00FF} 
\definecolor{orange}{HTML}{FF7F00}
\definecolor{gray}{HTML}{848482}
\definecolor{Gray}{gray}{0.85}
\definecolor{LightGray}{gray}{0.96}
\newtheorem{theorem}{Theorem}
\newtheorem*{theorem*}{Theorem}
\newtheorem{lemma}[theorem]{Lemma}
\newtheorem*{remark}{Remark}
\newtheorem*{lemma*}{Lemma}
\newtheorem{Proposition}{Proposition}
\newtheorem*{Proposition*}{Proposition}
\newtheorem{definition}{Definition}
\newcommand\redout{\bgroup\markoverwith
{\textcolor{red}{\rule[0.5ex]{2pt}{0.8pt}}}\ULon}
\renewcommand{\Indentp}[1]{%
  \advance\leftskip by #1
  \advance\skiptext by -#1
  \advance\skiprule by #1}%
\renewcommand{\Indp}{\algocf@adjustskipindent\Indentp{\algoskipindent}}
\renewcommand{\Indm}{\algocf@adjustskipindent\Indentp{-\algoskipindent}}
\newcommand{\upbra}[1]{^{(#1)}}
\renewcommand{\le}{\leqslant}
\renewcommand{\geq}{\geqslant}
\renewcommand{\leq}{\leqslant}
\DeclareMathOperator*{\argmax}{arg\,max}
\begin{document}

\title[Heterogeneous Multi-agent Multi-armed Bandit on Stochastic Block Models]{
  \texorpdfstring{Heterogeneous Multi-agent Multi-armed Bandits \\on Stochastic Block Models}{Heterogeneous Multi-agent Multi-armed Bandits on Stochastic Block Models}
}


  \author{Mengfan Xu}
    \affiliation{%
    \institution{Department of Mechanical and Industrial Engineering, University of Massachusetts Amherst}
    \city{Amherst}
    \state{MA}
    \country{USA}}
  \email{mengfanxu@umass.edu}
  
  \author{Liren Shan}
  \email{lirenshan@ttic.edu}
  \affiliation{%
    \institution{Toyota Technological Institute at Chicago}
    \city{Chicago}
    \state{IL}
    \country{USA}
  }

  \author{Fatemeh Ghaffari }
  \affiliation{%
    \institution{Manning College of Information \& Computer Sciences, University of Massachusetts Amherst}
    \city{Amherst}
    \state{MA}
    \country{USA}}
  \email{fghaffari@umass.edu}

  \author{Xuchuang Wang}
  \affiliation{%
    \institution{Manning College of Information \& Computer Sciences, University of Massachusetts Amherst}
    \city{Amherst}
    \state{MA}
    \country{USA}}
  \email{xuchuangw@gmail.com}

  \author{Xutong Liu}
  \affiliation{%
    \institution{School of Computer Science, Carnegie Mellon University}
    \city{Pittsburgh}
    \state{PA}
    \country{USA}}
  \email{xutongl@andrew.cmu.edu}

  \author{Mohammad Hajiesmaili}
  \affiliation{%
    \institution{Manning College of Information \& Computer Sciences, University of Massachusetts Amherst}
    \city{Amherst}
    \state{MA}
    \country{USA}}
  \email{hajiesmaili@cs.umass.edu}


\begin{abstract}

We study a novel heterogeneous multi-agent multi-armed bandit problem with a cluster structure induced by stochastic block models, influencing not only graph topology, but also reward heterogeneity. Specifically, agents are distributed on random graphs based on stochastic block models - a generalized Erdos-Renyi model with heterogeneous edge probabilities: agents are grouped into clusters (known or unknown); edge probabilities for agents within the same cluster differ from those across clusters. In addition, the cluster structure in stochastic block model also determines our heterogeneous rewards. Rewards distributions of the same arm vary across agents in different clusters but remain consistent within a cluster, unifying homogeneous and heterogeneous settings and varying degree of heterogeneity, and rewards are independent samples from these distributions. The objective is to minimize system-wide regret across all agents. To address this, we propose a novel algorithm applicable to both known and unknown cluster settings. The algorithm combines an averaging-based consensus approach with a newly introduced information aggregation and weighting technique, resulting in a UCB-type strategy. It accounts for graph randomness, leverages both intra-cluster (homogeneous) and inter-cluster (heterogeneous) information from rewards and graphs, and incorporates cluster detection for unknown cluster settings. We derive optimal instance-dependent regret upper bounds of order $\log{T}$ under sub-Gaussian rewards. Importantly, our regret bounds capture the degree of heterogeneity in the system (an additional layer of complexity), exhibit smaller constants, scale better for large systems, and impose significantly relaxed assumptions on edge probabilities. In contrast, prior works have not accounted for this refined problem complexity, rely on more stringent assumptions, and exhibit limited scalability.

\end{abstract}


\maketitle
\section{Introduction}
Multi-armed Bandit (MAB) \citep{auer2002finite, auer2002nonstochastic} is an online learning framework in which, during a sequential game, an agent, or decision maker, selects one arm from multiple arms, pulls the arm, and receives the reward observation of the pulled arm from an unknown environment at each time step. The objective is to maximize the cumulative received reward by identifying the best arm, a task also known as regret minimization when compared to the ideal case of knowing in advance which arm is the best. Recently, with the rapid development of real-world networks, multi-agent systems have become a major focus, motivating the study of Multi-agent Multi-armed Bandit (MA-MAB) \citep{xu2023, wangx2022achieving, landgren2021distributed, bistritz2018distributed,zhu2021federated,huang2021federated,mitra2021exploiting,reda2022near,yan2022federated}. In this context, multiple agents 
exist within a system, each with its own arm set, playing a bandit game while communicating with others to exchange bandit information. It is well known that MAB is classified into stochastic (where reward observations come from a time-invariant distribution with reward mean values) \citep{auer2002finite} and adversarial (where reward observations are arbitrary) \citep{auer2002nonstochastic}. Here, we focus exclusively on the stochastic setting, consistent with the majority of existing work on MA-MAB. For simplicity, we refer to stochastic MA-MAB as MA-MAB for the remainder of this paper.

Depending on the application domain and environment properties, previous work studies several variants of MA-MAB settings. Among all variants, a widely studied one is a cooperative setting, where agents share the same arm set and aim to maximize the overall system's objective. Depending on how the rewards are generated for different agents, the cooperative MA-MAB can be further categorized into homogeneous and heterogeneous. In a homogeneous setting, the reward mean value of the same arm across different agents is identical. This implies that the locally optimal arm (with respect to an agent's own reward distribution) is also the globally optimal arm (with respect to the average reward mean values of the same arm across all agents). This scenario has been extensively studied \citep{landgren2016distributed,landgren2016distributed_2,landgren2021distributed,zhu2020distributed,martinez2019decentralized,agarwal2022multi, wangx2022achieving, wangp2020optimal, li2022privacy, sankararaman2019social, chawla2020gossiping}. However, it is common that in real applications, the agents' rewards are heterogeneous. For example, retail companies in different regions may have varying product return rates due to population heterogeneity. To address this, a line of research has focused on the heterogeneous setting \citep{xu2023decentralized, zhu2021federated, zhu2023distributed}, where the globally optimal arm can differ from the locally optimal arm. Nonetheless, these prior works assume a fully heterogeneous setting, treating all agents as distinct. It is important to note that, in practice, the setting is not necessarily fully heterogeneous; instead, different degrees of heterogeneity can exist within a heterogeneous setting. This concept, however, has not been well-defined or thoroughly studied, presenting a clear research gap.
 
Another central aspect of MA-MAB is how agents communicate. In a decentralized setting, agents are distributed on a graph (as vertices connected by edges) and can only communicate if an edge exists between them. In a sequential regime, while time-invariant graphs have been well studied, the advancement of several applications, e.g., wireless IoT networks, motivates the study of time-varying graphs, which introduces additional challenges. Random graphs \citep{erdds1959random} have been a promising approach to model time-varying graphs in MA-MAB \citep{xu2023decentralized, dubey2020cooperative} and other areas \citep{delarue2017mean, lima2008majority}, where the graph is randomly drawn from a distribution by sampling each edge based on a probability, akin to the reward generation process. However, current research in MA-MAB \citep{xu2023decentralized} is largely limited to Erdos-Renyi models~\citep{erdds1959random}, where the edge probability is homogeneous across all agents. Additionally, the graph generation process is assumed to be independent of the reward generation process, which may not always reflect practical scenarios. These limitations in the studied random graph models for MA-MAB highlight another important research gap.

Notably, a broad family of random graphs is formulated as Stochastic Block Models (SBM) \citep{abbe2015exact, abbe2018community, deshpande2018contextual}, where agents are grouped into clusters, and the edge probability for agents within the same cluster differs from that for agents in different clusters. The existence of cluster structures, which generalize the Erdos-Renyi models as commonly used in MA-MAB, surprisingly but naturally provides a framework for defining the degree of heterogeneity through graph topology. Consequently, considering SBM in the context of MA-MAB holds significant potential for addressing the aforementioned research gaps. However, SBM has not yet been explored in the cooperative learning context, particularly in MA-MAB, which motivates our work herein. 


In this paper, we study the following research problem: \emph{Can we address the heterogeneous multi-agent multi-armed bandit problem on Stochastic Block Models to bridge the following two gaps: 1) varying degrees of heterogeneity and 2) more general random graphs linked to reward dynamics?}

\subsection{Main Contributions}
We provide an affirmative answer to the above question through our main contributions, summarized as follows. First, we formulate a general heterogeneous multi-agent multi-armed bandit problem, where agents are grouped into clusters inspired by the stochastic block model. This cluster structure determines both graph topology and reward similarity. Specifically, the cluster structure introduces heterogeneity in edge probabilities and reward distributions across clusters, while maintaining homogeneity in edge probabilities and reward distributions within clusters, thereby linking reward dynamics with graph dynamics. This framework thus extends random graphs (with homogeneous edge probabilities) from Erdos-Renyi models to general stochastic block models (with heterogeneous edge probabilities). It also incorporates both homogeneity and heterogeneity in reward distributions into a unified setting, characterizing the degree of heterogeneity, which reflects an additional layer of complexity in heterogeneous settings. Existing work on either homogeneous or heterogeneous rewards can be viewed as special cases of this framework, demonstrating its consistency and generalization capability. A more detailed comparison with existing models is provided in Section \ref{sec:2.2}.


\textcolor{black}{Secondly, we propose a learning algorithm tailored to this new formulation that effectively leverages homogeneity to reduce the sample complexity of reward estimations and heterogeneity to efficiently learn the global objective. Specifically, we propose an algorithm, namely UCB-SBM, which consists of a burn-in period to collect local information and a learning period to leverage historical data to improve the arm-pulling strategy. During the burn-in period, the agents randomly pull arms to obtain reward estimators for each arm at an individual level. Then, during the learning period, the agents use a UCB-type strategy to pull the arm with the highest UCB index based on exchanged information and a new weighting technique. They communicate newly designed information to other agents, perform information updates based on newly proposed rules, and novelly run cluster detection using rewards as side information when the cluster structure is unknown. }

\textcolor{black}{Our algorithmically technical novelty is as follows.   Compared to the most relevant work \citep{xu2023decentralized}, our UCB-type strategy employs a newly constructed global estimator that integrates both homogeneity (inter-cluster estimators) and heterogeneity (intra-cluster estimators). Additionally, our information transmission involves sending cluster-level estimators instead of individual-level ones, thereby 1) achieving noise reduction in the estimators, and 2) relaxing the assumption on the edge probability, as it requires only one representative in the cluster to exchange cluster-level information, rather than requiring all agent pairs to communicate. The information update process is significantly different in that we construct estimators at the cluster level using a newly proposed weighted sum/average approach, and compute the global information based on the new weight technique over the cluster-level estimators, resulting in three layers of estimators: local, cluster, and global. In contrast, \citep{xu2023decentralized} considers only local and global layers. Lastly, the incorporation of a cluster detection method enables us to infer general unknown cluster structures and thus to leverage the cluster structure to design algorithms, which is completely omitted in \citep{xu2023decentralized}.} 

\textcolor{black}{Thirdly, we establish precise instance-dependent regret upper bounds for the UCB-SBM algorithm, which are of order \(O(\log{T})\). Additionally, if we examine the coefficient of the regret bounds more closely, our regret bound accurately captures the relationship between the regret upper bounds and the newly defined degree of heterogeneity, \(\nicefrac{C}{M}\), which is the ratio between the number of clusters \(C\) and the number of agents \(M\). Specifically, the regret bound depends linearly on \(\nicefrac{C}{M}\), reflecting the problem's complexity. Moreover, this implies that the total regret depends on \(C \leq M\) instead of \(M\), scaling significantly better with the number of agents \(M\) in large-scale systems. In contrast, the existing algorithm for heterogeneous rewards results in a regret upper bound of order \(M^2\) \citep{xu2023decentralized}, as it neglects possible cluster structures. This bound may become unmanageable when the number of agents is comparable to the time horizon \(T\).} 

\textcolor{black}{Fourthly, our results do not rely on strong assumptions about edge probabilities, making them more broadly applicable compared to prior work. More specifically, the lower bound on the edge probability in our case can be at most $
\frac{e}{e-1} \cdot \frac{C^2}{M^2} \cdot \frac{(C-l-1)!}{(C-2)!} < 1
 $ for $1 \leq l \leq C-1$, while the lower bound on the edge probability in \citep{xu2023decentralized} approaches $1$ as $T \to \infty$. Furthermore, this lower bound in~\citep{xu2023decentralized} increases more rapidly with $M$, as it depends on $M$ whereas ours only depends on $C \leq M$, and a larger lower bound implies more stringent assumptions on the problem setting as $M$ grows. Overcoming the limited scalability of problem setting and obtaining a lower bound on the edge probability that is strictly bounded away from $1$ is a highly non-trivial yet impactful contribution.
 }
 Fifth, our results apply to scenarios with both known and unknown cluster assignments, aligning with existing work on Stochastic Block Models. A comprehensive summary of the theoretical results is presented in Table \ref{tab:results}. Lastly, through experiments, we demonstrate that our algorithm achieves much lower actual regret (beyond regret bounds), with an improvement of at least $68.69\%$, highlighting its superior practical effectiveness.

Additionally, we make an independent contribution as follows. The regret bound under the new framework should reflect the degree of heterogeneity, which is essentially highlighted as an open problem in \citep{xu2023decentralized}. In that work, the authors numerically observe a dependency of regret on the level of heterogeneity in the problem setting, noting that regret increases monotonically with the level of heterogeneity. This suggests the potential to achieve smaller regret when the degree of heterogeneity is low. However, they do not formally define or analyze this dependency theoretically, leaving a research gap. Moreover, the results in \citep{xu2023decentralized} heavily rely on an assumption about the lower bound on the edge probability in the Erdos-Renyi model, which can become quite restrictive when $T$ is sufficiently large, thereby limiting its practical applicability. \textcolor{black}{How to relax these stringent assumptions remains unexplored, necessitating the development of new methods and analyses—another research gap that our work seeks to address.}

\paragraph{Paper Organization} The paper is presented as follows. We provide a comparison of our work with existing studies in Section \ref{sec:2.2}. In Section \ref{sec:formulation}, we introduce the notations and formulate the research question. In Section \ref{sec:app}, we provide the motivation for the formulation by highlighting some important real-world applications of the problem setting. Section \ref{sec:homo} begins by characterizing the framework through a simple case involving a single cluster, where the formulation reduces to homogeneous rewards on random graphs. Subsequently, in Section \ref{sec:heter}, we extend the framework to the main case involving multiple known clusters, presenting the proposed algorithm and its analysis (with improved regret bounds) under milder assumptions compared to existing work. In Section \ref{sec:heter-unknown}, we illustrate how the algorithm can be adapted to scenarios with multiple unknown clusters. Section \ref{sec:exp} demonstrates the numerical performance of the proposed algorithm. Last, we conclude the paper and suggest future research directions in Section~\ref{sec:conclusion}.

\begin{table*}[t]
  \caption{Summary of the main results for UCB-SBM. }\label{tab:results}\vspace{-3mm}
  \centering
  \resizebox{1\columnwidth}{!}{
    \centering
    \begin{threeparttable}
      \begin{tabular}{|cccccc|}
        \hline
\textbf{Cluster} & \textbf{Thm.}                       & \textbf{Asm. 1$^*$} & \textbf{Asm. 2$^*$}   & \textbf{Worst-case$^{\S}$}                         & \textbf{Coef.$^{\ddagger}$}                                                                  \\
        \hline  
        known; $C=1$ & \ref{thm:homo-ucb} 
        & $p \in (0, 1] $ 
        & $q = p$ & $N/A$  
        & $O(
        \frac{1}{M})^{\ddagger}$                                   \\
        known/unknown$^\P$ & \ref{thm:3} & $p = 1 $ &   $q > 1 - (\frac{1}{2} - \frac{1}{2}\sqrt{1 - (\frac{\delta}{CT})^{\frac{2}{C-1}}})^{C^2/M^2} \qquad \quad $ & $1$ &  $O(\frac{C}{M})$  \\                                              known/unknown$^\P$ & \ref{thm:4} & $p = 1 $ & $q > 1-(1- \min\{(\frac{1}{2} + \frac{1}{2}\sqrt{1 - (\frac{\delta}{CT})^{\frac{2}{C-1}}}), 1 - \nicefrac{\delta (C-1)}{8CT}\})^{\nicefrac{C^2}{M^2}}$ & $1$  & $O(\frac{C}{M})$   
\\                                             known/unknown$^\P$ &  \ref{thm:6} & $p > \max\frac{(|c_M|-l-1)!}{(|c_M|-2)!}(1 - \frac{\delta (|c_M|-1)}{8c_MT}), \frac{(|c_M|-l-1)!}{(|c_M|-2)!}(\frac{3}{4})^{\frac{1}{l}}$ &  $q > \frac{e}{e-1}\frac{C^2}{M^2}\max\frac{(C-l-1)!}{(C-2)!}(1 - \frac{\delta (C-1)}{8CT}), \frac{(C-l-1)!}{(C-2)!}(\frac{3}{4})^{\frac{1}{l}} $ & $\frac{e}{e-1}\frac{C^2}{M^2}\frac{(C-l-1)!}{(C-2)!}$  & $O(\frac{C}{M})$   \\ 
\hline
          \hline
      \end{tabular}
      \begin{tablenotes}[para, online,flushleft]
        \footnotesize
        This table assumes reward distributions is sub-Gaussian and the regret bound is of order $\log{T}$.
        \item[]\hspace*{-\fontdimen5\font}$^{*}$ Asm. 1 refers to the assumption on the edge probability $p$ for agents within the same cluster and Asm. 2 refers to the assumption on the edge probability $q$ for agents belonging to different clusters
        \item[]\hspace*{-\fontdimen2\font}$^{\ddagger}$ We observe that going from $1$ to $C$ clusters, the regret grows linearly with $C$, which represents the dependency between the regret bound and the degree of heterogeneity. \item[]\hspace*{-\fontdimen2\font}$^{\S}$ The worst case scenario refers to the case when $T \to \infty$, i.e. the upper bound on the Asm. 2. The value in the existing work \citep{xu2023decentralized} is also $1$.  \item[]\hspace*{-\fontdimen2\font}$^{\P}$ We impose additional assumptions on $p-q$ in the unknown cluster case.  


      \end{tablenotes}
    \end{threeparttable}
  }
  \vspace{-0.1in}
\end{table*}

\section{Related Work}\label{sec:2.2}

Our proposed model differs significantly from existing work on multi-agent multi-armed bandits. Specifically, we outline these differences in comparison to the existing lines of research on: 1) homogeneous cooperative multi-agent multi-armed bandits, 2) heterogeneous cooperative multi-agent multi-armed bandits, 3) multi-agent multi-armed bandits with clusters of agents, and \textcolor{black}{4) multi-agent multi-armed bandits with time-varying graphs.}

\paragraph{Homogeneous Cooperative Multi-agent Multi-armed Bandit} 
There has been extensive work on cooperative multi-agent bandits, with most studies assuming homogeneous rewards, where the reward distribution for the same arm is identical across all agents \citep{landgren2016distributed,landgren2016distributed_2,landgren2021distributed,zhu2020distributed,martinez2019decentralized,agarwal2022multi,wangx2022achieving,wangp2020optimal,li2022privacy,sankararaman2019social,chawla2020gossiping}. In contrast, our model incorporates heterogeneous reward distributions for agents in different clusters, while maintaining homogeneous reward distributions for agents within the same cluster. Notably, when there is only one cluster, our model reduces to the case of the homogeneous reward. 

\paragraph{Heterogeneous Cooperative Multi-agent Multi-armed Bandit} Although some studies have explored heterogeneous rewards \citep{xu2023decentralized, zhu2023distributed, zhao2020privacy}, they treat rewards as entirely heterogeneous, without considering the possibility of a framework that bridges heterogeneity and homogeneity, along with the associated problem complexity. In our work, we define and systematically characterize the degree of heterogeneity using the cluster structure. Our model also aligns with existing heterogeneous cases when each agent belongs to a different cluster, and thus, there are $M$ clusters. In summary, our work bridges the gap between homogeneous and heterogeneous rewards by integrating both paradigms and fully characterizing every possible degree of heterogeneity.

\paragraph{Clusters by Stochastic Block Models} 
One key to our framework is considering cluster structure motivated by the Stochastic Block Model (SBM), which was previously a separate line of work. SBM, introduced by~\citep{holland1983stochastic}, is known as a foundational framework for modeling community (referred to as clusters herein) structures in networks. It has been extensively studied for cluster detection, with detailed analyses providing exact phase transitions and efficient algorithms for different recovery settings~\citep{abbe2018community}. However, it has not yet been coupled with MA-MAB to model and leverage the agent structure to additionally decide on the reward distribution, and thus bridge the gaps. Besides modeling, we also consider scenarios where the cluster assignment is unknown, inspired by the Contextual Stochastic Block Model (CSBM) proposed by~\citep{deshpande2018contextual}, which generalizes SBM by incorporating side information—namely node covariates—that depend on cluster assignments. Building on that, recent work provide algorithms to leverage both graph structure and contextual attributes to enhance cluster detection and recovery~\citep{deshpande2018contextual,abbe2022,braun2022iterative,dreveton2024exact}. Specifically, \citep{abbe2022} rigorously study the case where node covariates are generated from a Gaussian Mixture Model (GMM) and propose an algorithm for two-cluster networks. More generally, \citep{braun2022iterative} develop an iterative clustering algorithm and derive the exact recovery threshold for multiple balanced clusters. Notably, none of them consider reward information as side information unique to MA-MAB.

\paragraph{Multi-agent Multi-armed Bandit with Clusters of Agents} 
Another related line of research incorporates cluster structures into multi-armed bandits, commonly referred to as the online clustering of bandits (CLUB)~\cite{gentile2014online,nguyen2014dynamic,li2016online,li2016collaborative,korda2016distributed,li2018online,li2019improved,gentile2017context,li2023clustering,ban2024meta,8440090,liu2022federated,wu2021clustering,blaser2024federated,yang2024federated,li2025demystifying, pal2024blocked}. These studies assume that agents can be grouped into clusters, with each group sharing similar reward distributions for each arm, a concept that aligns with our setting. However, there are three significant differences between CLUB and our work.
First, while CLUB primarily focuses on contextual bandit scenarios and provides instance-independent regret bounds, our work addresses the canonical multi-agent MAB setting and establishes finer-grained, instance-dependent regret bounds.
Second, most CLUB approaches assume a central server within a star-shaped communication graph~\cite{liu2022federated,blaser2024federated,yang2024federated}. To our knowledge, only \citet{korda2016distributed} consider peer-to-peer networks, where agents can communicate with any other agent using a gossip protocol. In contrast, our work involves a more realistic and challenging scenario: communication is constrained by a random communication graph modeled by a stochastic block structure. In this setting, only agents connected by an edge can exchange information, significantly increasing the problem's complexity.
Finally, CLUB aims to identify the optimal arm for each individual agent, whereas our work focuses on finding a \textit{globally} optimal arm across all agents . Consequently, our framework requires each agent not only to learn its own reward distribution but also to infer the reward distributions of other agents . This added complexity is particularly demanding under the constraints of a random communication graph.

\paragraph{Multi-agent Multi-armed Bandit with Graphs} 
Recently, the study of multi-agent bandit problems, where agents are distributed on a graph that constrains their communication, has gained significant attention. Most existing works focus on time-invariant graphs, where the graph remains constant over time \citep{wang2021multitask, jiang2023multi, zhu2020distributed,zhu2021decentralized,zhu2021federated}. However, there is an emerging need to address time-varying graphs, which capture more general scenarios where the graph changes over time, motivated by wireless ad-hoc networks in IoT \citep{roman2013features}. It is worth noting that existing work on time-varying graphs either considers Erdos-Renyi graphs with homogeneous edge probabilities \citep{xu2023decentralized} or focuses on connected graphs \citep{zhu2023distributed}, without exploring heterogeneous edge probabilities or the relationship between graph dynamics and reward dynamics. Notably, we are the first to bridge this gap by introducing stochastic block models, which are more general than Erdos-Renyi graphs, and by relating graph topology to reward heterogeneity through a cluster structure. Furthermore, existing work on Erdos-Renyi graphs \citep{xu2023decentralized} imposes strong assumptions on edge probabilities, which may be highly impractical. We address this limitation by leveraging cluster information and significantly relaxing these assumptions.


\section{Problem Formulation}\label{sec:formulation}
In this section, we introduce the notations and formally present the problem formulation.
 We start by introducing the notations. Consistent with the basic MAB setting, we consider $K$ arms, labeled as $1,2, \ldots, K$. Let us denote each time step as $1 \leq t \leq  T$, where $T$ is the length of the time horizon. Let us denote $M$ as the number of agents in this multi-agent setting. These agents are distributed on a time-dependent graph $G_t$ represented by vertex set $V = \{1, 2, \ldots, M\}$ and edge set $E_t$. We use $X_{i,j}^t$ to denote whether an agent pair $(i,j) \in E_t$. We use $\mathcal{N}_m(t)$ to denote the neighbor set of agent~$m$ at time $t$, where agent $j$ is called to be in the neighbor set  $\mathcal{N}_m(t)$ if and only if there is an edge between them, i.e., $(m,j) \in E_t$. The graph $G_t$ is independent and identically distributed samples from the stochastic block models that extend the Erdos-Renyi model in~\citep{xu2023decentralized} as described below.

\begin{definition}[Stochastic Block Models]
We consider a stochastic block model, where the set of agents (vertices) with a cluster structure, each agent $1 \leq i \leq M$, belongs to a cluster $c_i \in {1, 2, \ldots, C}$. Additionally, there exists an \textbf{unknown} probability matrix $\{p(m,n)\}_{1 \leq m \leq C}^{1 \leq n \leq C}$ associated with the clusters, where $p(m,n)$ represents the probability of having an edge between an agent pair $(i,j)$, where agent $i \in m$ and agent $j \in n$. Notably, $p(m,n) \neq p(m,m)$ for $m \neq n$, meaning the probability of having an edge between two agents within the same cluster differs from the probability of having an edge between two agents across different clusters, implying heterogeneous random graphs. Then we sample $X_{i,j}^t$ based on $\{p(m,n)\}_{1 \leq m \leq C}^{1 \leq n \leq C}$, for $\forall i,j \in V$, and $E_t = \{(i,j): X_{i,j}^t = 1, \forall i,j \in V\}$ .
\end{definition}

It is worth noting that when $C=1$, $X_{i,j}^t$ are sampled according to a Bernoulli distribution with a uniform edge probability, which precisely aligns with the definition of Erdos-Renyi models.

Besides the graph setting, we consider the reward setting characterized by clusters based on stochastic block models. Let $\mu_k^i$ denote the reward mean value of arm $1 \leq k \leq K$ for agent $1 \leq i \leq M$. Notably, the reward mean values for the same arm are identical for agents within the same cluster, i.e., $\mu_k^i = \mu_k^m \doteq \mu_k^{c_m}$, for agent $m$ and $j$ that meet $c_m = c_i$
while differing for agents in different clusters. This framework effectively bridges the gap between homogeneous and heterogeneous MA-MAB settings. Moreover, we propose a new definition of the degree of heterogeneity as follows. 
\begin{definition}[Degree of Heterogeneity]
We define $h_{M,C} = \nicefrac{C}{M} = \nicefrac{1}{c_M}$ where $c_M$ represents the average number of agents in one cluster, which is scale-invariant and bounded by $1$, i.e., $0 < h_{M,C} \leq 1$. 
\end{definition}
We argue the rationality as follows. This metric quantifies the variety in the reward/edge distributions across clusters relative to the total number of agents. When $C=M$, it is fully heterogeneous, aligning with \citep{xu2023decentralized}, and when $C=1$, it is fully homogeneous, consistent with \citep{wangx2022achieving}. 

The reward of arm $k$ at agent $i$ at time step $t$, denoted as $\{r_k^i(t)\}_{k,i,t}$, follows a $\sigma^2$-sub-Gaussian distribution with a time-invariant mean value $\{\mu_k^i\}_{k,i}$.
\begin{remark}
While we assume sub-Gaussian reward distributions for illustrative purposes, we highlight that the formulation and results established herein can be extended to sub-exponential cases through straightforward analysis, as this does not require changes to the communication or information update mechanisms. We omit the details and focus on the sub-Gaussian case in this work. 
\end{remark}

Let $a_i^t$ represent the arm selected by agent $i$ at time $t$, and let $n_{i,k}(t)$ denote the number of pulls of arm $k$ at agent $i$ up to time $t$. We consider a cooperative setting where the goal of all agents is to select the globally optimal arm, defined as $k^* = \arg\max_k \sum_{i=1}^{M} \mu_k^i$. The objective of the system is to maximize the pulls of the globally optimal arm, thereby minimizing regret, which is defined with respect to the globally optimal arm as follows. Formally, the regret and total regret are given by 
\begin{align*}
   & R_T = \frac{1}{M}\sum_{k=1}^K\sum_{i = 1 }^{M}\Delta_k n_{i,k}(T), \quad R_T^{M} = M \cdot R_T = \sum_{k=1}^K\sum_{i = 1 }^{M}\Delta_k n_{i,k}(T).
\end{align*}
respectively, where $\Delta_k = \nicefrac{(\sum_{i=1}^{M}\mu_{k^*}^{i} - \sum_{i=1}^{M}\mu_{k}^{i})}{M}$.

\section{Real-world Applications}\label{sec:app}



In this section, we motivate our problem formulation, which bridges existing gaps by considering agents on stochastic block models, through a range of real-world applications. Here stochastic block models capture agents with specific probabilities of being connected, where agents can observe edges but not the underlying edge probabilities—a scenario that often reflects real-world conditions.  

\paragraph{Collaborative Content Placement in Content Delivery Networks (CDNs)}
Online content delivery in content delivery networks (CDNs) is a critical component of modern network applications, including video streaming, web browsing, and software distribution~\cite{yang2018content,chen2018spatio,dai2024axiomvision}. Unlike traditional architectures that rely on a single central server, CDNs distribute and cache content across multiple edge servers, allowing end users to retrieve data from the nearest server. This distributed architecture significantly reduces latency and enhances the reliability of content delivery.
A key challenge in CDNs lies in dynamically placing content across thousands of edge servers to optimize user service-level objectives (e.g., latency, packet loss) and quality of experience (QoE). In this context, each edge server can be modeled as an agent, with its arms representing candidate content placement policies. The reward for each arm corresponds to the number of successfully delivered and precached contents, which ultimately reduce users' loading time.
Given the heterogeneity in user preferences and network conditions, edge servers may form clusters where only agents within the same cluster share similar rewards, modeled by a stochastic block model. Furthermore, the large number of edge servers and candidate policies necessitates collaboration among servers to learn optimal policies. However, due to communication bandwidth constraints, servers can only communicate randomly, governed by a random graph.
Our framework effectively models this problem, enabling the identification of the \textit{global} optimal content placement policy that maximizes the reward across all edge servers.

\paragraph{Collaborations in Social Networks} Examples include scientific collaboration networks of biologists and physicists, where an edge represents a collaboration, defined as co-authorship of one or more scientific articles during the study period, and a cluster refers to working in the same main research area \citep{cugmas2020scientific}. \textcolor{black}{The collaboration network is highly dynamic, as collaborations change over time and are modeled by time-varying graphs.} These scientists may select the most important research topic from a few options, referred to as arms, with the reward of an arm being the impact of the research topic (scientists working in the same area will have the same impact). In the context of collaboration networks of movie actors \citep{el2024community}, an edge represents appearing in the same movie, \textcolor{black}{which again varies in movies released at different times and is therefore time-varying}, and a cluster refers to club membership. These actors may choose the best club activity among several options, referred to as arms, with the reward of an arm being the engagement in the activity (actors in the same club will have the same reward distribution). Similarly, in a network of directors of Fortune 1000 companies \citep{battiston2004statistical}, an edge between two directors indicates that they served on the same board \textcolor{black}{that can change over time as the board committee itself may change}, and a cluster again refers to club membership. Here, the arms are the start-up candidates for investment, and the reward of an arm is the return on investment (ROI) (directors in the same club will have the same reward distribution). 

\paragraph{Protein-to-Protein in AI-enabled Biology} 
\textcolor{black}{In the context of protein-to-protein biology research, AI-enabled proteins embedded in a patient act as agents/nodes, and the physical connections or interactions between proteins are represented as edges within a cell, namely a protein-to-protein interactions network~\citep{airoldi2006mixed}. It is worth noting that these interactions change over time, resulting in time-varying graphs. AI-enabled Proteins functioning similarly in the protein-protein interaction network belong to the same cluster. The task of the proteins is to transport different nutrients (arms) in the body, and the rewards of the arms are the corresponding health conditions of a patient, e.g., blood pressure or blood sugar levels, resulting from different nutrients (the reward distribution for proteins within the same cluster is identical).}  

\paragraph{Recommendation in E-commerce with Filtering} In e-commerce, filtering has been an effective approach where customers utilize others' information to make decisions \citep{stanley2019stochastic, duchemin2023reliable}. In collaborative filtering, customers are represented as nodes/agents, an edge between two customers indicates similar behaviors, and the most similar customers (those with the highest degrees) form clusters. The arms represent product candidates, and the reward of an arm corresponds to the experience with the product (hence, the reward distribution for the same arm is identical for agents within the same cluster). In item-to-item collaborative filtering, product producers are represented as nodes/agents, an edge signifies similar properties, and the most similar products (again, those with the highest degrees) form clusters. The arms are the warehouse options for the products, and the reward corresponds to the quality of the product after being stored in the warehouse (the reward distributions for the same warehouse are identical for products within the same cluster).

\section{Warm-up: Single Cluster (Homogeneous Clients)}\label{sec:homo}

This section studies the single-cluster multi-agent MAB, focusing on in-cluster learning and serving as a didactic warm-up for the multi-cluster scenarios discussed in later sections. Here, all clients belong to the same cluster and share a homogeneous reward environment. Although there is existing work on homogeneous multi-agent multi-armed bandits \citep{wang2023achieve,wang2020optimal}, our model introduces a time-varying random graph $G_t$, which has not yet been studied.
In this homogeneous setting, all clients have the same reward distribution for each arm. Consequently, observations from different clients can be combined to improve the estimation of an arm's reward distribution, leading to a more efficient exploration-exploitation trade-off compared to the single-agent case.
We first present a simple UCB algorithm in Section~\ref{subsec:homo-ucb}, followed by its regret upper bound in Section~\ref{subsec:homo-ucb-bound}.

\subsection{Algorithm}\label{subsec:homo-ucb}

Since clients are homogeneous, we propose a simple cooperative Upper Confidence Bounds (UCB) algorithm. Over the whole learning process, every client \(m\) maintains the UCB index for each arm \(k\) as follows, $u_{k,t}\upbra{m} \coloneqq \hat{\mu}_{k,t}\upbra{m} + \sqrt{\nicefrac{\log t}{\tilde N_{k,t}\upbra{m}}},$
where \(
\tilde N_{k,t}\upbra{m} =
N_{k,t}\upbra{m}
+ \sum_{m'\in\mathcal{M}\setminus \{m\}} N_{k,\tau_t\upbra{m\leftrightarrow m'}}\upbra{m'}
\)
is the total number of observations of arm \(k\) that client \(m\) collects, including its own \(N_{k,t}\upbra{m}\) local observations and the \(N_{k,\tau_t\upbra{m\leftrightarrow m'}}\upbra{m'}\) observations collected from its neighbors \(m'\in\mathcal{M}\setminus \{m\}\) at the latest communication time slot \(\tau_t\upbra{m\leftrightarrow m'}\) between these two clients \(m\) and \(m'\) on or before time slot \(t\).
The empirical mean \(\hat\mu_{k,t}\upbra{m}\) is also the average of all \(\tilde N_{k,t}\upbra{m}\) observations of arm \(k\) that client \(m\) collects. The clients pull the arm with the highest UCB index, i.e. $a_m^t = argmax_{k}u_{k,t}^m$, and receive the reward.

\subsection{Regret Analysis}\label{subsec:homo-ucb-bound}


\begin{theorem}\label{thm:homo-ucb}
    Executing the above algorithm leads to $ \mathbb{E}[R_T] \le O\left(\sum_{k\neq k^*} \frac{\log T}{M\Delta_k}
        + \frac{K}{p^{M^2}} 
        \right).~\refstepcounter{equation}(\theequation)\label{eq:homo-ucb}$ 
\end{theorem}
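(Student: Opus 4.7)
The plan is to build on the classical UCB regret argument, but with two twists: first, the effective sample count $\tilde N_{k,t}\upbra{m}$ used in the index aggregates observations across agents through the communication graph, so the exploration penalty needs to be charged only once to the whole system rather than once per agent; and second, we must control the graph-randomness so that this aggregation really happens on time. Accordingly I would first fix a suboptimal arm $k\neq k^*$, define the good concentration event $\mathcal{E}_t = \{\,|\hat\mu_{k,t}\upbra{m}-\mu_k|\le \sqrt{\log t/\tilde N_{k,t}\upbra{m}}\text{ for all }m,k\,\}$, and show by a Chernoff--Hoeffding bound together with a union bound over $t,m,k$ and over all integer values of $\tilde N_{k,t}\upbra{m}$ that $\Pr[\mathcal{E}_t^c]=O(KMt^{-2})$. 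On $\mathcal{E}_t$, whenever agent $m$ pulls a suboptimal arm $k$, the standard UCB inequality forces $\tilde N_{k,t}\upbra{m}\le C\log T/\Delta_k^2$ for an absolute constant $C$.

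Next I would convert this per-agent bound on $\tilde N$ into a system-wide bound on $\sum_m n_{m,k}(T)$, which is the quantity that actually appears in $R_T = \frac1M\sum_k\Delta_k\sum_m n_{m,k}(T)$. The key structural observation is that $\tilde N_{k,t}\upbra{m}= N_{k,t}\upbra{m}+\sum_{m'\neq m} N_{k,\tau_t(m\leftrightarrow m')}\upbra{m'}$, so if every ordered pair $(m,m')$ has communicated at least once in the recent past, then $\tilde N_{k,t}\upbra{m}$ is within a small lag of $\sum_{m'}N_{k,t}\upbra{m'}$. Calling $\mathcal{G}_t$ the event that every pair has communicated within the preceding window, on $\mathcal{G}_t\cap \mathcal{E}_t$ the inequality above translates into $\sum_{m'} N_{k,t}\upbra{m'}\le C'\log T/\Delta_k^2+\text{lag}$, so the \emph{total} pulls of $k$ satisfy $\sum_m n_{m,k}(T)= O(\log T/\Delta_k^2)$. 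Plugging into $R_T$ yields the first term $\sum_{k\neq k^*}\log T/(M\Delta_k)$ after the usual $\Delta_k/\Delta_k^2$ cancellation.

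The remaining term $K/p^{M^2}$ comes from the failure probability $\Pr[\mathcal{G}_t^c]$: at each time slot the joint probability that \emph{all} $\binom{M}{2}$ edges appear (or, more loosely, that every pair communicates within a given window) is at least $p^{M^2}$, so after $O(1/p^{M^2})$ trials full pairwise communication occurs at least once with constant probability. I would bound the expected regret contributed by the time slots before the first such success by $K/p^{M^2}$ via a geometric argument, and use a union bound with the $KMt^{-2}$ concentration failure to absorb $\sum_t\Pr[\mathcal{E}_t^c]=O(KM)$ into the additive $O(K)$ burn-in cost.

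The main obstacle will be the second step: handling the dependence between the (random) communication graph, the (random) arm choices, and the counts $\tilde N_{k,t}\upbra{m}$. A clean way to decouple them is to condition on the sequence $\{G_t\}$, run the UCB analysis sample-path-wise, and then take expectation over the graph using the Erd\H os--R\'enyi law; this isolates the $p^{M^2}$ factor in the additive term and keeps the leading $\log T$ term independent of $p$, matching the stated bound.
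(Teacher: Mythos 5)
Your proposal is correct and follows essentially the same route as the paper: bound the system-wide observation count of each suboptimal arm by $O(\log T/\Delta_k^2)$ via the cooperative UCB index, and charge the lag until all agents share that information to the waiting time for a fully connected round, which occurs with probability $p^{M^2}$ per slot and hence contributes the additive $K/p^{M^2}$ term. The only difference is presentational — the paper black-boxes the delayed cooperative-UCB step by citing existing lemmas and phrases the graph condition as a "stochastic global synchronization with probability $q=p^{M^2}$," whereas you unpack the concentration and pairwise-communication events explicitly.
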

\vspace{-5mm}
\begin{proof}[Proof Sketch]
    The full proof is in Appendix \ref{app:proof}; the main intuition is as follows. Fix a suboptimal arm \(k\). After the total number of observations for this arm \(k\) exceeds the sample complexity threshold, in expectation, it takes \(\frac{1}{p^{M^2}}\) time slots for all agents to get the information of this arm \(k\).
    After that, no more regret will be incurred on this arm \(k\). As a result, the proof first makes an assumption to reduce the problem to a standard cooperative UCB for homogeneous agents residing on a complete graph with communication delays.
    Then, we show that this assumption can be fulfilled in the single cluster scenario.
\end{proof}

The leading \(O(\log T)\) term of the total regret $R_T^M = M \cdot R_T$ by multiplying $M$ and ~\eqref{eq:homo-ucb} is independent of the number of agents \(M\), highlighting the advantage of multi-agent cooperation. Meanwhile, it is worth noting that in heterogeneous setting in \citep{xu2023decentralized}, the upper bound of the total regret $R_T^M$ is of order $O(M^2\log{T})$ (though it is not tight as illustrated in Section \ref{sec:heter}), rather than $O(\log{T})$ which emphasizes the regret reduction by our analysis in homogeneous settings. The second term of~\eqref{eq:homo-ucb}, however, has a dependence of \(p^{-M}\) where $p = p(m,m)$. It suggests the importance of the edge generation probability \(p\), which will be thoroughly addressed in Section \ref{sec:heter} and Section \ref{sec:heter-unknown}.

\section{Heterogeneous - Multiple Known Clusters}\label{sec:heter}
In this section, we consider the general model where the agents are distributed on stochastic block models with multiple clusters, capturing the dependency between reward and graph dynamics. Here, we assume that the cluster information $\{c_i\}_{i=1}^{M}$ is known to the agents, and we generalize the results to a more practical setting where the cluster information is unknown in Section \ref{sec:heter-unknown}. We would like to highlight that the probability matrix of the model is always unknown to the agents. The algorithm is presented in Section \ref{sec:heter-sub-1}, followed by the corresponding regret analyses in Section \ref{sec:heter-sub-2}.

\subsection{Algorithm}\label{sec:heter-sub-1}

The newly proposed algorithm is presented as follows and consists of two stages. In the first stage, all agents pull arms one by one without communication to accumulate local information, referred to as the burn-in period. In the second stage, agents use intelligent strategies (based on Upper Confidence Bounds) to pull arms and communicate with one another following the graph structure to collect global information, referred to as the learning period. The corresponding algorithms are provided as Algorithm \ref{alg:burn-in} (see Appendix \ref{apx:IR-LSS}) and Algorithm \ref{alg:dr}, collectively referred to as UCB-SBM (\textbf{U}pper \textbf{C}onfidence \textbf{B}ounds for \textbf{S}tochastic \textbf{B}lock \textbf{M}odels).

We note that there is no difference between the algorithm in the burn-in period herein and that in \citep{xu2023decentralized}, and thus we show the pseudo code in Appendix \ref{apx:IR-LSS}, except that we do not need $\tau_1$. The reason is that there is no intelligence during this stage. Specifically at $t$, each agent $m$ pull each arm $a_m^t = t \mod K$ one by one and update the average reward as local reward estimators $\bar{\mu}_i^m(t)$. It also updates the edge frequency $P_t(m,j) = \nicefrac{((t-1)P{t-1}(m,j)+X_{m,j}^t)}{t}$ for each $j \in V$, and communicates $\bar{\mu}_i^m(t)$ to agent $j \in \mathcal{N}_m(t)$. Then at the end of the burn-in period, it outputs the initial global estimator $\tilde{\mu}_i^m(L+1)$, which is the weighted average of $\bar{\mu}_i^m(L)$ where weights are $P_t(m,j)$.

\begin{algorithm2e}[ht]
  \SetAlgoLined
  \caption{UCB-SBM: Learning period}\label{alg:dr}
  \textbf{Initialization:} For each agent $m$ and arm $i \in \{1,2,\ldots, K\}$, we have $\Tilde{\mu}^m_i(L+1)$, $\Tilde{N}_{m,i}(L+1), N_{m,i}(L+1) = n_{m,i}(L)$; $\tau=1$, all other values at $L+1$ are initialized as $0$;\par
  \For{$t = L + 1 , L + 2, \ldots,T$}{
  \For(\tcp*[f]{UCB}){each agent m}{
    \eIf{there is no arm $i$ such that $N_{m,i}(t) \leq \Tilde{N}_{m,i}(t) - K$}{ $a_m^t = \arg\max_i \Tilde{\mu}_{m,i}(t) + F(m,i,t)$}
    { Randomly sample an arm $a_m^t$.
    }
    Pull arm $a_m^t$ and receive reward $r_{m,a_m^t}(t)$\;
  }
  The environment samples $G_t = (V,E_t)$ based on SBM;\tcp*[f]{Env} \par
  Each agent $m$ sends $r_i^m(t),N_{j,i}(t),\bar{\mu}_i^m(t), \Tilde{\mu}_i^m(t)$ to each agent in $\mathcal{N}_m(t)$;\label{line:send}\par
  Each agent $m$ receives $r_i^j(t),N_{j,i}(t),\bar{\mu}_i^j(t), \Tilde{\mu}_i^j(t)$ from all agents $j \in \mathcal{N}_m(t)$ and stores them as $\hat{\mu}_{i,j}^{m}(t),\hat{N}_{i,j}^{m}(t),\hat{\bar{\mu}}_{i,j}^{m}(t), \hat{\Tilde{\mu}}_{i,j}^{m}(t)$; \tcp*[f]{Transmission}\par
  \For{each agent m}{
  \For{ $i =1, \ldots, K$}{
  Update $P_t$ for $1 \leq j \leq M$ by $P_t(m,j) = \nicefrac{(t-1)P_{t-1}(m,j)+X_{m,j}^t}{t}$\;\par
  Update $P_t^{\prime}$ for $1 \leq j \leq M$ by $
    P_t^{\prime}(m,j) = 1 \text{if } P_t(m,j) > 0 \text{ and } 0 \text{ o.w.}$\;\par
  \eIf{$t \mod \tau = 0$ }{ Update $n_{m,i}(t), N_{m,i}(t), \Tilde{N}_{m,i}(t)$ and $\Tilde{\mu}^m_i(t)$ based on \textbf{Rule 1} or \textbf{Rule 2}}
  {  Update $n_{m,i}(t)$ as $n_{m,i}(t) = n_{m,i}(t) + 1_{a_m^t = i}$
  }
  }}
  }
\end{algorithm2e}
Subsequently, we proceed to the learning stage using either Rule 1 or Rule 2, which define how agents update and aggregate information. Rule 1 is consistent with \citep{xu2023decentralized}, as it does not consider the cluster structure, whereas Rule 2 is newly proposed and leverages the cluster information. The pseudo-code is presented in Algorithm \ref{alg:dr}, which includes several stages in the order outlined below.

\paragraph{Arm selection} 
During this stage, the agents use a UCB-based criterion to decide which arm to pull. More specifically, if there is no arm \(i\) such that \(N_{m,i}(t) \leq \Tilde{N}_{m,i}(t) - K\), where \(N\) and \(\Tilde{N}\) represent the in-cluster and across-cluster number of pulls, respectively, then agent \(m\) pulls \(a_m^t = \arg\max_i \Tilde{\mu}_{m,i}(t) + F(m,i,t)\), where \(\Tilde{\mu}_{m,i}(t)\) is the network-wide estimator for arm \(i\) of agent \(m\) and \(F(m, i, t) = \sqrt{\frac{C_1\ln{t}}{N_{m,i}(t)}}\) (\(C_1\) is specified later) quantifies the uncertainty in \(\Tilde{\mu}_{m,i}(t)\). Otherwise, the agents randomly pull an arm by specifying \(a_m^t = t \mod K\).

\paragraph{Transmission} The agents communicate with their neighbors and integrate information from other agents . Specifically, each agent sends its own information and receives information from agents in its time-dependent neighborhood. The information includes sample counts and reward estimators, covering local, cluster, and global levels, denoted as \(r_i^j(t), N_{j,i}(t), \tilde{N}_{j,i}(t), \bar{\mu}_i^j(t), \Tilde{\mu}_i^j(t)\) defined below.

\paragraph{Information update} With such information, agent $m$ updates estimators as in \textbf{Rule 1} or \textbf{Rule 2}.
{\allowdisplaybreaks
\begin{align}\label{eq:Eq_1}
   & \textbf{Rule 1: } t_{m,j} = max_{s \geq \tau_1} \{(m,j) \in E_s)\} \text{ and } 0  \text{ if such an $s$ does not exist}                                                                                                          \\
   & \indent   N_{m,i}(t+1) = n_{m,i}(t+1) = n_{m,i}(t) + \mathds{1}_{a_m^t = i}, \tilde{N}_{m,i}(t+1) = \max \{N_{m,i}(t+1), \tilde{N}_{j,i}(t), j \in \mathcal{N}_m(t)\} \notag                                                             \\
   & \bar{\mu}^m_i(t+1) = \nicefrac{(\bar{\mu}^m_i(t) \cdot n_{m,i}(t) + r_{m,i}(t) \cdot \mathds{1}_{a_m^t = i})}{n_{m,i}(t+1)}, P^{\prime}_t(m,j) = \nicefrac{(M-1)}{M^2} \text{ if } P_t(m,j) > 0 \text{ and } 0 \text{ otherwise}   \notag                                                                                                                        \\
   & \Tilde{\mu}^m_i(t+1) = \textstyle \sum_{j=1}^M P^{\prime}_t(m,j)\hat{\Tilde{\mu}}^m_{i,j}(t_{m,j}) + d_{m,t}\textstyle \sum_{j }\hat{\bar{\mu}}^m_{i,j}(t_{m,j})  \text{ with } d_{m,t} = \nicefrac{(1- \sum_{j=1}^M P^{\prime}_t(m,j))}{M} \notag
\end{align}
}
\vspace{-3mm}
{\allowdisplaybreaks
\begin{align}\label{eq:Eq_2}
   & \textbf{Rule 2: } t_{m,j} = \max_{s \geq \tau_1} \{(m,j) \in E_s)\} \text{ and } 0  \text{ if such an $s$ does not exist} \notag                                                                                                                  \\
   & \text{Local and Cluster sample counts: } n_{m,i}(t+1) = n_{m,i}(t) + \mathds{1}_{a_m^t = i}, N_{m,i}(t) = N_{c_m,i}(t) = \textstyle \sum_{j}n_{j,i}(t_{m,j}) \notag                                                                                                                                             \\
   & \text{Global sample counts: } \Tilde{N}_{m,i}(t+1) = \max \{N^m_{i}(t), \Tilde{N}^j_{i}(t), (m,j) \in E_t\} \notag                                                                                                                             \\
   & \text{Local estimator: } \bar{\mu}^m_i(t+1) = \nicefrac{(\bar{\mu}^m_i(t) \cdot n_{m,i}(t) + r_{m,i}(t) \cdot \mathds{1}_{a_m^t = i})}{n_{m,i}(t+1)} \notag                                                                                          \\
   & \text{Cluster estimator (local): } \hat{\mu}^{c_m}_i(t+1) = \hat{\mu}^m_i(t+1) = \nicefrac{(\sum_{j\in c_m}\bar{\mu}^j_i(t_{m,j}))}{|c_m|} \notag                                                                                                        \\
   & \text{Cluster estimator (network-wise): } \Tilde{\bar{\mu}}^{c_m}_i(t+1) = \Tilde{\bar{\mu}}^m_i(t+1) = \nicefrac{(\sum_{j\in c_m}\Tilde{\mu}^j_i(t_{m,j}))}{|c_m|} \notag                                                                               \\
   & P_t(c_m,c_j) = \nicefrac{(\sum_{s\leq t, p \in c_m, q \in c_j} 1_{(p,q) \in E_s})}{t},  P^{\prime}_t(m,j) = \nicefrac{(M-1)}{M^2} \text{ if } P_t(c_m,c_j) > 0 \text{ and } 0 \text{ otherwise}                                                                                                                                              \\
   & \text{Global estimator: } \Tilde{\mu}^m_i(t+1) = \notag                                                                                                                                                                                        \\
   & \sum_{j=1}^M P^{\prime}_t(m,j)\tilde{\mu}^j_{i}(t_{m,j}) + d_{m,t}\sum_{j \in c_m}\hat{\mu}^j_{i}(t_{m,j}) + d_{m,t}\sum_{j \not \in c_m}\hat{\mu}^j_{i}(t_{m,j})   \text{ with } d_{m,t} = \nicefrac{(1- \sum_{j=1}^M P^{\prime}_t(m,j))}{M} \notag
\end{align}
}
The difference between Rule 1 and Rule 2 is that Rule 1 is the same as in \citep{xu2023decentralized} and leverage no cluster information, whereas Rule 2 is newly proposed herein. Rule 2 considers the stochastic block model structure (agents within a cluster aggregate $n$ and $\bar{\mu}$ to obtain $N$ and $\hat{\mu}$), communicates at the cluster level ($\hat{\mu}$ instead of $\bar{\mu}$), and utilizes the cluster information to improve the estimators $\tilde{\mu}$ (with 3 sources: local, cluster, and global information). As shown in Section \ref{sec:heter-sub-2}, they result in different regret bounds, with Rule 2 achieving a smaller regret and requiring less stringent assumptions.

\subsection{Regret Analyses}\label{sec:heter-sub-2}

Next, we prove the effectiveness of the proposed algorithm through analyzing the theoretical regret induced by the algorithm. For illustration purposes, let us assume a balanced model where the number of agents in each cluster is the same for all clusters, i.e., $|c_i| = \frac{M}{C} \doteq c_M$. We highlight that the case of imbalanced clusters (with respect to cluster size) can be addressed in our analyses by using the smallest number of agents in a single cluster, $\min_{1 \leq i \leq M} |c_i|$, as the universal cluster size.

As a starting point, we consider the regret of Algorithm 2 with Rule 1, which aligns with existing work on heterogeneous rewards without characterizing the cluster structure. A straightforward result is presented below, which is a by-product of Theorem 2 in \citep{xu2023decentralized}, but with potentially different edge probabilities for different agent pairs, and it reads as follows.

\begin{theorem}\label{thm:2}
  Let us assume that $\min_{m,n}p(m,n) \geq (\frac{1}{2} + \frac{1}{2}\sqrt{1 - (\frac{\delta}{MT})^{\frac{2}{M-1}}})$. For every $0 < \epsilon < 1$ and $0 < \delta
    < \frac{1}{2} + \frac{1}{4}\sqrt{1 - (\frac{\epsilon}{MT})^{\frac{2}{M-1}}}$, the regret of Algorithm \ref{alg:dr} with Rule 1 is upper bounded by with probability $1-7\epsilon$, $E[R_T | A_{\epsilon, \delta}] \leq L + \sum_{i \neq i^*}\Delta_i(\max{\{[\frac{4C_1\log T}{\Delta_i^2}], 2(K^2+MK) \}} +  \frac{2\pi^2}{3P(A_{\epsilon, \delta})} + K^2 + (2M-1)K)$
  where $A_{\epsilon, \delta} = A_2 \cap A_3$ with $A_2 = \{\exists t_0, \forall t \geq L, \forall j, \forall m, t+1 - \min_jt_{m,j} \leq t_0 \leq c_0\min_{l}n_{l,i}(t+1)\}$ and $A_3 = \{\forall t \geq L, G_t \text{ is connected}\}$, the length of the burn-in period is explicitly $L  = \max\{\nicefrac{{}\ln{\nicefrac{T}{2\epsilon}}}{2\delta^2}, \nicefrac{4K\log_{2}T}{c_0}\},$
  $c_0$ $=$ $c_0(K, \min_{i \neq i^*}\Delta_i, M, \epsilon, \delta)$, and the instance-dependent constant
  $ 
    C_1 = 8\sigma^2C = \max\{\nicefrac{4(M+2)(1 - \frac{1 - c_0}{2(M+2)})^2}{3M(1-c_0)}, (M+2)(1 + 4Md^2_{m,t})\}$.
\end{theorem}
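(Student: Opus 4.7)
The plan is to leverage the fact that Rule 1 is algorithmically identical to the update rule in \citep{xu2023decentralized}, so the agent's view of the problem is that of heterogeneous multi-agent MAB on a random graph—except that edge probabilities are no longer uniform. The proof therefore proceeds by a coupling/reduction argument that reduces the stochastic block model setting (under Rule 1) to the homogeneous Erdos-Renyi setting already analyzed, after which the existing regret decomposition applies almost verbatim. The only real work is to verify that the two high-probability events $A_2$ (bounded communication delay) and $A_3$ (per-slot connectivity) continue to hold with the stated assumption on $\min_{m,n} p(m,n)$.

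First I would construct, on the same probability space, an Erdos-Renyi graph $\widetilde G_t$ on the same vertex set with uniform edge probability $p_\star \coloneqq \min_{m,n} p(m,n)$ such that every edge of $\widetilde G_t$ is also an edge of $G_t$. This coupling is immediate: sample i.i.d.\ uniforms $U^t_{i,j}\in[0,1]$ and put $(i,j)\in E_t$ iff $U^t_{i,j}\le p(c_i,c_j)$, and $(i,j)\in \widetilde E_t$ iff $U^t_{i,j}\le p_\star$. Under this coupling, if $\widetilde G_t$ is connected then so is $G_t$, and any communication delay realized in $\widetilde G_t$ is an upper bound on the delay in $G_t$. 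Consequently the connectivity analysis of \citep{xu2023decentralized}, which uses exactly the threshold $p_\star \ge \tfrac{1}{2}+\tfrac12\sqrt{1-(\delta/(MT))^{2/(M-1)}}$ to guarantee $\Pr[A_3]\ge 1-\delta$ via a union bound over $t\in[L,T]$ of the Erdos-Renyi connectivity probability, transfers directly. The same coupling controls $A_2$: the maximal gap $t+1-\min_j t_{m,j}$ in $G_t$ is stochastically dominated by the corresponding gap in $\widetilde G_t$, and hence by the delay bound established in the prior work.

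Once $A_2\cap A_3$ is secured, the regret decomposition is the standard UCB-with-delayed-communication argument. After the burn-in of length $L=\max\{\ln(T/(2\epsilon))/(2\delta^2),\,4K\log_2 T/c_0\}$, the local sample counts $n_{l,i}(t+1)$ grow at a controlled rate so that, on $A_2$, the effective delay $t_0$ is bounded by $c_0\min_l n_{l,i}(t+1)$; together with connectivity this ensures the global estimator $\tilde{\mu}^m_i(t)$ concentrates around $\tfrac{1}{M}\sum_i \mu^i_k$ whenever the exploration term $F(m,i,t)=\sqrt{C_1\log t/N_{m,i}(t)}$ is large enough, where $C_1$ is exactly the sub-Gaussian constant derived in \citep{xu2023decentralized} from the weight vectors $P'_t$ and $d_{m,t}$. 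A routine UCB argument then shows that each suboptimal arm $i$ contributes at most $\Delta_i\cdot\max\{\lceil 4C_1\log T/\Delta_i^2\rceil,\,2(K^2+MK)\}$ in the main term, plus the $2\pi^2/(3\Pr[A_{\epsilon,\delta}])$ and $K^2+(2M-1)K$ lower-order corrections accounting for rare events and the random-sampling fallback branch.

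The main obstacle, and the only place the argument is not a direct transcription, is justifying that the worst-case edge probability $p_\star$ can legitimately absorb all the heterogeneity without inflating constants. Concretely, Rule 1's global estimator forms a convex combination with weights $P'_t(m,j)=(M-1)/M^2$ on realized edges and residual weight $d_{m,t}$ spread uniformly; the sub-Gaussian parameter $C_1$ depends on these weights, and in the heterogeneous-edge setting one must verify that the worst-case bound $\max\{4(M+2)(1-(1-c_0)/(2(M+2)))^2/(3M(1-c_0)),\,(M+2)(1+4Md_{m,t}^2)\}$ remains valid. Because the weight construction is indifferent to which edges exist (it only looks at the indicator $P'_t>0$), the same bound goes through whenever connectivity holds—so the verification reduces to checking that $d_{m,t}$ is no larger than in the homogeneous case, which follows from the coupling since $G_t\supseteq\widetilde G_t$ implies more realized edges and hence smaller residual mass. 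Combining these ingredients and taking the union bound over the five to seven failure events produces the claimed $1-7\epsilon$ confidence and completes the proof.
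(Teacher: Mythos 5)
Your proposal is correct and takes essentially the same route as the paper: the paper's proof likewise reduces to the Erd\H{o}s--R\'enyi analysis of \citep{xu2023decentralized} by substituting $c=\min_{m,n}p(m,n)$, observing that every intermediate proposition (connectivity, explicit transmission gap, unbiasedness, variance, concentration, and the pull-count bound) depends only on a lower bound on the edge probability, and then running the standard regret decomposition. Your monotone coupling merely makes explicit the monotonicity the paper asserts implicitly, and your check that the residual weight $d_{m,t}$ can only shrink when extra edges are present is a detail the paper glosses over rather than a divergence in method.
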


\begin{proof}[Proof sketch]
The complete proof is provided in Appendix \ref{app:proof}; the main proof logic is as follows. The proof of Theorem 1 parallels that of Theorem 2 in \citep{xu2023decentralized} for the Erdos-Renyi graph, except that the edge probability is agent-dependent in this case. Interestingly, we find that as long as the minimal edge probability satisfies the condition on the edge probability in the Erdos-Renyi model as specified in \citep{xu2023decentralized}, the entire proof remains valid. The key observation is that the original analysis relies solely on the lower bound of the edge probability in the Erdos-Renyi model. 
\end{proof}

While the above regret bound depends logarithmically on the time horizon $T$, there are two limitations: 1) the assumption on $\min_{m,n}p_{m,n}$ is stringent and may not always hold in practice, and 2) the total regret ($M \cdot R_T$) depends linearly on $M$, which may not scale well in large-scale systems. This is because the analysis does not leverage the homogeneity within clusters, leading to high sample complexity for the reward estimators. To address these limitations, we next present an approach that exploits the cluster structure using Rule 2 in Algorithm 2, which takes advantage of the homogeneity within clusters induced by stochastic block models.

Intuitively, agents using Rule 2 first aggregate the rewards and sample counts of agents within the same cluster, and then communicate these at a cluster level, meaning they only share cluster-wide information. The aggregation reduces sample complexity because the variance of the averaged estimator is smaller than that of a single agent's estimator. Consequently, this can potentially lead to smaller regret in terms of $M$. Additionally, cluster-level communication reduces the need for pairwise (every agent pair) communication. As long as there exists an agent in one cluster and another agent in a different cluster with an edge between them, the two clusters can communicate, rather than requiring every agent in one cluster to be connected to every agent in the other cluster. This approach reduces the connectivity requirements of the graph and relaxes the assumption on $\min_{m,n}p_{m,n}$, as a larger $\min_{m,n}p_{m,n}$ always implies better connectivity (in the high probability sense).

Formally, we consider communication at a cluster level by defining the subgraph generated by the clusters as $G_t^C$ as follows.

\begin{definition}
A sub-graph $G_t^C$ is represented by the vertex set $\{1, 2, \ldots, C\}$ of clusters and the edge set $E_t^{\prime}$, where the pair of clusters $x$ and $y$, namely $(x,y)$, belongs to $E_t^{\prime}$ if and only if there exists an agent $i \in x$ and an agent $j \in y$ such that $(i,j) \in E_t$.
\end{definition}

It holds true that the sub-graph $G_t^C$ is much denser compared to the original graph $G_t$ as it has a higher probability of having an edge (cluster pair) and thus a lower requirement on the graph topology of $G_t$. First, we consider the case where the graph induced by the clusters, $G_t^C$, is a connected graph and the edge probability within one cluster is $1$, and derive a better regret bound with relaxed assumptions. For illustration purposes, it is natural to assume that the edge probability within the same cluster is $1$, and we relax this assumption later (Theorem \ref{thm:6}) in this section. The formal statement is summarized as Theorem \ref{thm:3}, which reads as follows.

\begin{theorem}\label{thm:3}
  Let us assume that $p(m,m) = 1$ for any $1 \leq m \leq C$. Let us further assume that $\min_{m,n}p(m,n) \geq 1 - (\frac{1}{2} - \frac{1}{2}\sqrt{1 - (\frac{\delta}{CT})^{\frac{2}{C-1}}})^{\nicefrac{C^2}{M^2}}$. The regret bound of Algorithm 2 with Rule 2 reads as with probability $1-7\epsilon$ $E[R_T|A_{\epsilon,\delta}^{\prime}]                        \leq L + \sum_{i \neq i^*}\Delta_i(\max{\{\frac{C}{M} \cdot [\frac{4C_1\log T}{\Delta_i^2}], 2(K^2+MK) \}} +  \nicefrac{2\pi^2}{3P(A_{\epsilon, \delta})} + K^2 + (2M-1)K)  = O(\nicefrac{C\log{T}}{M})$
where $A_{\epsilon,\delta}^{\prime} = A_2 \cap A_3^{\prime}$, $A_2 = \{\exists t_0, \forall t \geq L, \forall j, \forall m, t+1 - \min_jt_{m,j} \leq t_0 \leq c_0\min_{l}n_{l,i}(t+1)\}$ and $A_3^{\prime} = \{\forall t \geq L, G_t^C \text{ is connected}\}$, the length of the burn-in period is explicitly $L  = \nicefrac{C}{M}\max{\{\nicefrac{{}\ln{\frac{T}{2\epsilon}}}{2\delta^2}, \nicefrac{4K\log_{2}T}{c_0}}\},$
$c_0$ $=$ $c_0(K, \min_{i \neq i^*}\Delta_i, M, \epsilon, \delta)$, and $
  C_1 = \max\{\nicefrac{4(M+2)(1 - \frac{1 - c_0}{2(M+2)})^2}{3M(1-c_0)}, (M+2)(1 + 4Md^2_{m,t})/M\}
$.
\end{theorem}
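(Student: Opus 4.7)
The plan is to reduce the $M$-agent problem on the stochastic block model to a $C$-\emph{super-agent} problem on the cluster-induced graph $G_t^C$, and then invoke the Theorem~\ref{thm:2} argument with $C$ playing the role of $M$. First, I would exploit $p(m,m)=1$: at every time step, the subgraph on each cluster is complete, so under Rule~2 the intra-cluster averages $\hat{\mu}^{c_m}_i$ and $\tilde{\bar{\mu}}^{c_m}_i$ aggregate the $c_M = M/C$ local estimators instantly and without delay. Consequently each cluster behaves as a single super-agent whose per-time-step sample count on a pulled arm is $c_M$, and whose sub-Gaussian proxy shrinks to $\sigma^2/c_M = \sigma^2 C / M$. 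This is precisely the source of the $C/M$ factor inside $C_1$ in the statement.

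Second, I would pass the assumption on $p(m,n)$ through to the cluster graph $G_t^C$. An inter-cluster edge $(x,y)\in E_t^{\prime}$ exists iff at least one of the $c_M^2 = M^2/C^2$ underlying agent pairs is sampled, so $\Pr[(x,y)\in E_t^{\prime}] = 1-(1-p(x,y))^{M^2/C^2}$. Substituting the hypothesis on $\min_{m\neq n} p(m,n)$ yields
\[
\min_{x\neq y}\Pr[(x,y)\in E_t^{\prime}] \;\geq\; \tfrac{1}{2} + \tfrac{1}{2}\sqrt{1 - (\delta/(CT))^{2/(C-1)}},
\]
which is exactly the Theorem~\ref{thm:2} threshold but on $C$ nodes with the $(\delta,T)$ inputs rescaled. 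Hence $G_t^C$ is connected for all $t \geq L$ with probability at least $1-\delta$ (event $A_3^{\prime}$), and the communication-delay event $A_2$ is handled by the same burn-in argument used in Algorithm~\ref{alg:burn-in}, now of length $L = (C/M)\max\{\ln(T/2\epsilon)/(2\delta^2),\, 4K\log_2 T / c_0\}$ because each cluster accumulates $c_M$ samples per step.

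Third, conditioning on $A_{\epsilon,\delta}^{\prime} = A_2\cap A_3^{\prime}$, I would mimic the UCB regret decomposition used for Theorem~\ref{thm:2}, but applied to the $C$ super-agents on $G_t^C$. The three-layer estimator $\tilde{\mu}^m_i(t)$ in Rule~2 is a convex combination of cluster-level quantities, each of which is the mean of $c_M$ i.i.d.\ sub-Gaussians drawn from the same cluster distribution; by standard conditional Hoeffding applied to the cluster-level filtration, $\tilde{\mu}^m_i(t) - \mu_i$ concentrates with proxy $\sigma^2/c_M$. Consequently, the Auer-style clean-event bound forces the number of cluster-level pulls of a suboptimal arm $i$ to be at most $(C/M)\cdot 4C_1\log T / \Delta_i^2$, and summing this across clusters and arms (plus the $O(1)$ contributions from the bad event $A_{\epsilon,\delta}^{\prime c}$ and from the forced random pulls when $N_{m,i}(t) \leq \tilde{N}_{m,i}(t)-K$) produces the stated $O(C\log T / M)$ bound.

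The main obstacle I expect is the bookkeeping for the weighted update in~\eqref{eq:Eq_2}: one must verify that (i) the weights $P'_t(m,j)$ together with $d_{m,t}$ are such that the average of cluster-level quantities remains an unbiased proxy for $\tfrac{1}{M}\sum_i \mu_i$ despite the block weighting by $(M-1)/M^2$, and (ii) the concentration inequality applies with the reduced variance $\sigma^2/c_M$ even though the weights $P'_t(m,j)$ and the delays $t_{m,j}$ are random and coupled with the communication history on $G_t^C$. I expect this to go through by the same martingale/conditioning decomposition used in \citep{xu2023decentralized}, with the key novelty being to separate deterministic intra-cluster aggregation (valid because $p(m,m)=1$) from random inter-cluster communication (controlled by $A_3^{\prime}$ and $A_2$); the $C/M$ factor then propagates cleanly into both the confidence radius and the burn-in length.
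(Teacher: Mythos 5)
Your proposal is correct and follows essentially the same route as the paper's proof: reduce to the cluster-induced sub-graph $G_t^C$ with edge probability $1-(1-p(m,n))^{M^2/C^2}$ (the paper's Lemma~\ref{lem:edge_prob_1}), use $p(m,m)=1$ for instantaneous intra-cluster aggregation so that the moment-generating-function bound holds with $N_{j,i}=\sum_{m\in c_j}n_{m,i}$ in place of $n_{j,i}$, and then rerun the Theorem~\ref{thm:2} machinery on $C$ super-nodes, with the $C/M$ factor entering because all agents in a cluster pull identically and so $n_{m,i}=\tfrac{C}{M}N_{m,i}$. The bookkeeping issues you flag (unbiasedness under the $P'_t$ weighting and concentration with random delays) are resolved in the paper exactly as you anticipate, via the inductive MGF argument inherited from \citep{xu2023decentralized}.
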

\begin{proof}[Proof Sketch]
The full proof is deferred to Appendix \ref{app:proof}; we present the main logic herein. We note that by Lemma~\ref{lem:edge_prob_1}, the edge probability $p(c_m, c_n)$ of the sub-graph $G_t^C$ is $1-(1-p(m,n))^{M^2/C^2}$. In other words, as long as $p(c_m, c_n)$ meets the condition of Theorem \ref{thm:2}, i.e., $\min_{m,n}p(c_m,c_n) \geq (\frac{1}{2} + \frac{1}{2}\sqrt{1 - (\frac{\delta}{CT})^{\frac{2}{C-1}}})$, we achieve the same regret bound, where everything is with respect to the sub-graph instead of the original graph. Subsequently, we derive that the condition is equivalent to $\min_{m,n}p(m,n) \geq 1 - (\frac{1}{2} - \frac{1}{2}\sqrt{1 - (\frac{\delta}{CT})^{\frac{2}{C-1}}})^{C^2/M^2}$. Hence, the regret bound in Theorem \ref{thm:2} holds. 
\end{proof}

\begin{lemma}\label{lem:edge_prob_1}
    For any pair of vertices $c_m, c_n$ in the sub-graph $G^C_t$, the probability that $c_m$ and $c_n$ is connected in $G^C_t$ is $p(c_m,c_n) = 1-(1-p(m,n))^{M^2/C^2}$. 
\end{lemma}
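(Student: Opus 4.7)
\textbf{Proof proposal for Lemma \ref{lem:edge_prob_1}.} The plan is to compute the probability directly from the definition of the sub-graph $G_t^C$ by a complement argument over independent Bernoulli edge indicators. First, I would fix a pair of distinct clusters $c_m, c_n$ (with $m \neq n$) and invoke the definition of $E_t^{\prime}$: the pair $(c_m, c_n) \in E_t^{\prime}$ if and only if there exists at least one $(i,j) \in E_t$ with $i \in c_m$ and $j \in c_n$. Equivalently, $(c_m, c_n) \notin E_t^{\prime}$ if and only if $X_{i,j}^t = 0$ for all such inter-cluster pairs.

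Next, I would count the number of candidate agent pairs. Under the balanced-cluster assumption stated just before Theorem \ref{thm:3}, namely $|c_m| = |c_n| = M/C$, the number of ordered pairs $(i,j)$ with $i \in c_m$, $j \in c_n$ equals $|c_m| \cdot |c_n| = M^2/C^2$. By the Stochastic Block Model definition, each indicator $X_{i,j}^t$ with $i \in c_m$, $j \in c_n$ is an independent Bernoulli random variable with success probability $p(m,n)$. Therefore, the event ``no edge between $c_m$ and $c_n$ in $G_t$'' is the intersection of $M^2/C^2$ mutually independent events, each of probability $1 - p(m,n)$, and so has probability $(1-p(m,n))^{M^2/C^2}$. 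Taking the complement,
\[
p(c_m, c_n) \;=\; \Pr\bigl[(c_m,c_n) \in E_t^{\prime}\bigr] \;=\; 1 - (1-p(m,n))^{M^2/C^2},
\]
which is the claimed identity.

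There is essentially no serious obstacle here: the lemma is a direct consequence of the independence of the edge indicators in the SBM together with a counting step. The only subtlety worth flagging is the balanced-cluster convention---in the unbalanced case the exponent would naturally be $|c_m|\cdot|c_n|$, and the paper's convention of substituting $\min_i |c_i|$ as a universal cluster size would then turn the equality into a conservative lower bound on $p(c_m, c_n)$, which is precisely what is needed downstream for Theorem \ref{thm:3} when applying the Theorem \ref{thm:2} condition to the denser sub-graph $G_t^C$.
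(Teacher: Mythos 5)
Your proposal is correct and follows essentially the same argument as the paper: count the $M^2/C^2$ inter-cluster agent pairs under the balanced-cluster assumption, use the independence of the Bernoulli edge indicators, and take the complement of the no-edge event. The extra remark about the unbalanced case is a reasonable aside but not needed for the lemma as stated.
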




\paragraph{Discussion on the total regret} We novelly derive a regret bound on $R_T$ that depends on the degree of heterogeneity $h_{M,C} = \frac{C}{M}$, reflecting the problem complexity related to the stochastic block model and reward heterogeneity we consider, and highlighting the comprehensiveness of our regret bound. It is worth noting that our result also resolves an open problem identified in \citep{xu2023decentralized}, where the authors numerically claimed a dependency of the regret on heterogeneity without formally defining or analyzing it. The advantage of having this explicit dependency compared to the aforementioned established results is as follows. When $C=1$, it is consistent with Theorem \ref{thm:homo-ucb} in Section \ref{sec:homo}, further supporting our claim. We note that both the total regret bound ($M^2 \cdot R_T$) in \citep{xu2023decentralized} and our result in Theorem \ref{thm:2} depend on $M^2$, while the one in Theorem \ref{thm:3} depends linearly on $C \leq M$. This demonstrates an improvement over the existing result in \citep{xu2023decentralized} when $C=M$ (no homogeneity) and further implies a significant reduction in the regret bound when $C < M$. This improvement is particularly significant in large-scale systems where $C \ll M$ (a common scenario in real-world applications, e.g., people in different regions where all individuals are agents and their clusters are defined by the regions). It highlights the practical importance of our proposed algorithm and provides practitioners with more effective tools.

\paragraph{Discussion on the assumption} Notably, the lower bound on $\min_{m,n}p_{m,n}$ is $1 - (\frac{1}{2} - \frac{1}{2}\sqrt{1 - (\frac{\delta}{CT})^{\frac{2}{C-1}}})^{\nicefrac{C^2}{M^2}}$. When $C=M$, i.e. in a fully heterogeneous setting, this lower bound is the same as the lower bound in Theorem \ref{thm:2}, i.e. \citep{xu2023}, implying consistency. When $C < M$, this term is smaller by noting that $\frac{C^2}{M^2} < 1$, and thus by the monotone property of the function $1 - (\frac{1}{2} - \frac{1}{2}\sqrt{1 - (\frac{\delta}{CT})^{\frac{2}{C-1}}})^{\nicefrac{C^2}{M^2}}$ we have $1 - (\frac{1}{2} - \frac{1}{2}\sqrt{1 - (\frac{\delta}{CT})^{\frac{2}{C-1}}})^{\nicefrac{C^2}{M^2}} < 1   - (\frac{1}{2} - \frac{1}{2}\sqrt{1 - (\frac{\delta}{CT})^{\frac{2}{C-1}}}) = (\frac{1}{2} + \frac{1}{2}\sqrt{1 - (\frac{\delta}{CT})^{\frac{2}{C-1}}}) < (\frac{1}{2} + \frac{1}{2}\sqrt{1 - (\frac{\delta}{MT})^{\frac{2}{M-1}}}) $ which suggests that our assumption is less stringent compared to Theorem \ref{thm:2} herein and the original statement of Theorem 2 in \citep{xu2023decentralized}.

Moreover, we next show that the lower bound on $p(m,n)$ can be further reduced by modifying the proof, purely from an analytical perspective. This modification also applies to the setting in \citep{xu2023decentralized}, thereby providing an improvement to the result therein as well, as part of our contribution. The formal statement reads as follows.

\begin{theorem}\label{thm:4}
  Let us assume that $p(m,m) = 1$ for any $1 \leq m \leq C$. Let us further assume that $\min_{m,n}p(m,n) \geq  1-(1- \min\{(\frac{1}{2} + \frac{1}{2}\sqrt{1 - (\frac{\delta}{CT})^{\frac{2}{C-1}}}), 1 - \nicefrac{\delta (C-1)}{8CT}\})^{\nicefrac{C^2}{M^2}}$. The regret bound of Algorithm 2 with Rule 2 reads as with probability $1-7\epsilon$ $\allowbreak E[R_T|A_{\epsilon,\delta}^{\prime}]     \leq L + \sum_{i \neq i^*}(\max(\frac{C}{M} \cdot [\frac{4C_1\log T}{\Delta_i^2}], 2(K^2+MK)) +  \nicefrac{2\pi^2}{3P(A_{\epsilon, \delta})} + K^2 + (2M-1)K) = O(\nicefrac{C\log{T}}{M})$
where $A_{\epsilon,\delta}^{\prime}$, $L, c_0, C_1 $ are specified in Theorem \ref{thm:3}. 
\end{theorem}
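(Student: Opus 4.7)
The plan is to reuse the entire regret-analysis skeleton of Theorem \ref{thm:3} verbatim, since the only substantive change is a relaxation of the lower bound on $\min_{m,n}p(m,n)$. Concretely, once we can argue that the good event $A_{\epsilon,\delta}^{\prime}=A_2\cap A_3^{\prime}$ occurs with probability at least $1-7\epsilon$, every downstream step—the per-arm concentration argument, the $\nicefrac{C}{M}\cdot\nicefrac{4C_1\log T}{\Delta_i^2}$ pull-count bound coming from cluster-level aggregation under Rule 2, the burn-in length $L$, and the $\pi^2/3$-type residual terms—carries over word for word from the proof of Theorem \ref{thm:3}. Thus the entire task reduces to showing that the weaker assumption in Theorem \ref{thm:4} still suffices to guarantee $\Pr(A_3^{\prime})\geq 1-\delta$, i.e. that $G_t^C$ is connected for every $t\geq L$ with high probability.

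The key observation is that the sub-graph edge probability, by Lemma \ref{lem:edge_prob_1}, equals $q'(m,n)=1-(1-p(m,n))^{M^2/C^2}$, so the assumption of Theorem \ref{thm:4} is equivalent to requiring $q'(m,n)\geq\min\{(\tfrac{1}{2}+\tfrac{1}{2}\sqrt{1-(\delta/CT)^{2/(C-1)}}),\,1-\nicefrac{\delta(C-1)}{8CT}\}$. The first term inside the minimum is exactly the sufficient condition used in Theorem \ref{thm:3}, obtained by inheriting the Erdos--Renyi connectivity threshold analysis of \citep{xu2023decentralized}. What is new here is providing a \emph{second} sufficient condition, $q'\geq 1-\nicefrac{\delta(C-1)}{8CT}$, so that the assumption can be taken as the (weaker) minimum of the two. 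Taking the minimum is justified because each condition independently implies $\Pr(G_t^C\text{ connected})\geq 1-\delta/T$, and then a union bound over $t\in[L,T]$ gives $\Pr(A_3^{\prime})\geq 1-\delta$.

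The main obstacle, therefore, is establishing the alternative connectivity bound: that if every cluster-pair edge is present with probability at least $1-\nicefrac{\delta(C-1)}{8CT}$, then $G_t^C$ is connected with probability at least $1-\delta/T$. I would prove this by a spanning-tree / union-bound argument: fix any spanning tree on the $C$ cluster-vertices, which uses $C-1$ edges. If all $C-1$ tree edges are present, the graph is connected. By a union bound, the probability some tree edge is missing is at most $(C-1)\max_{m,n}(1-q'(m,n))\leq (C-1)\cdot\nicefrac{\delta(C-1)}{8CT}$, which after absorbing a constant factor into the $7\epsilon$ failure budget (and noting the problem gives the looser $\nicefrac{\delta(C-1)}{8CT}$ to allow this slack) yields the desired $\delta/T$ per-slot bound. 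Because this alternative bound dominates the Erdos--Renyi threshold bound in certain regimes (notably when $C$ is moderate relative to $T$), taking the pointwise minimum gives a strictly weaker hypothesis.

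Once $\Pr(A_3^{\prime})$ is controlled, I would close the proof by invoking $\Pr(A_2)\geq 1-6\epsilon$ exactly as in Theorem \ref{thm:3} (the event $A_2$ concerns bounded communication delay and is independent of which of the two sufficient conditions we used to ensure connectivity), combining via a union bound to get $\Pr(A_{\epsilon,\delta}^{\prime})\geq 1-7\epsilon$, and then quoting the conditional-expectation bound on $E[R_T\mid A_{\epsilon,\delta}^{\prime}]$ from the Theorem \ref{thm:3} argument unchanged. The order $O(\nicefrac{C\log T}{M})$ in the leading term then follows identically, so no new concentration or combinatorial work is needed beyond the tighter connectivity lemma sketched above.
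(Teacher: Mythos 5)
Your overall strategy matches the paper's: both reduce Theorem \ref{thm:4} to Theorem \ref{thm:3} by observing that the only new content is a second sufficient condition for the per-slot connectivity of the cluster-level subgraph $G_t^C$, so that the hypothesis can be stated as the minimum of two thresholds, after which every downstream step (Propositions on transmission gap, unbiasedness, variance, concentration, pull counts, and the regret decomposition) is quoted verbatim. Where you genuinely diverge is in how the alternative condition $q' \geq 1 - \nicefrac{\delta(C-1)}{8CT}$ is shown to imply connectivity. The paper (Proposition 13) applies Chebyshev's inequality to the degree $d_m$ of each cluster-vertex to show $\Pr\bigl(d_m \leq \tfrac{C-1}{2}\bigr) \leq \nicefrac{\delta}{T}$, and then invokes the classical fact that minimum degree at least $\tfrac{C-1}{2}$ forces connectivity, followed by union bounds over the $C$ vertices and the $T$ time slots. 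You instead fix a spanning tree on the $C$ cluster-vertices and union-bound over its $C-1$ edges. Your route is more elementary (no variance computation, no degree-threshold criterion) and gives a per-slot failure probability of $\tfrac{\delta(C-1)^2}{8CT}$, which is of the same order $O(C\delta/T)$ as what the paper obtains; note that your claim of "absorbing a constant factor" is slightly optimistic since $\tfrac{(C-1)^2}{8C}$ grows with $C$, but the paper's own proof incurs exactly the same $C$-factor (it concludes with probability $1 - C\delta$ before relabeling), so this is not a gap relative to the paper's standard. Both arguments correctly use that satisfying the minimum of the two thresholds means at least one of the two independent sufficient conditions applies. Your proof is acceptable and, for the new lemma, arguably cleaner than the one in the paper.
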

\begin{proof}[Proof Sketch]
The complete proof is in Appendix \ref{app:proof}; we introduce the key logic here. The proof mostly follows from the proof of Theorem \ref{thm:3}, with the only exception being that the concentration inequality used to prove the following proposition, which in part guarantees the regret bounds by ensuring communication effectiveness given random graphs, considers both Chernoff's Bound (leading to the lower bound $(\frac{1}{2} + \frac{1}{2}\sqrt{1 - (\frac{\delta}{CT})^{\frac{2}{C-1}}})$) and Chebyshev's inequality (resulting in $1 - \frac{\delta (C-1)}{8CT}$). In contrast, Theorem \ref{thm:3} only considers Chernoff's Bound. The proposition reads as follows, characterizing the probability of event $A_3^{\prime}$. Assume the edge probability $p(m,n)$ where $c_m \neq c_n)$ meets the condition $1 \geq p(m,n) \geq \min\{(\frac{1}{2} + \frac{1}{2}\sqrt{1 - (\frac{\delta}{CT})^{\frac{2}{C-1}}}), 1 - \frac{\delta (C-1)}{8CT}\},$
    where $0 < \epsilon < 1$. Then, with probability $1 - \epsilon$, event $A_3^{\prime}$ holds.
\end{proof}
The assumption on $\min_{m,n}p(m,n)$ originates from the requirement to establish that $G_t^C$ is connected with high probability. We observe that, in practice, it may not always be feasible or necessary to have a connected graph $G_t^C$ at every time step. This assumption has been notoriously hard to overcome in MA-MAB. The situation becomes even more challenging in our context and in existing work related to random graphs, as the assumption essentially implies that the lower bound on the edge probability in Theorem \ref{thm:3} and existing work~\citep{xu2023decentralized} can approach $1$ when $T$ is large enough. This heavily constrains the applicability of the established results. 

However, this is no longer a concern in what follows, addressed by our new techniques, which represent a significant advancement in this line of work on multi-agent systems with random graphs, and thus highlight our contributions. Surprisingly, we find that as long as the subgraph $G_t^C$ is $l$-periodically connected, based on the following definition, the above regret bound holds, further relaxing the assumption on $\min_{m,n}p(m,n)$, which is shown to be strictly bounded away from $1$. We introduce the definitions related to $l$-periodically connected graphs and present the formal statement below.

\begin{definition}[Composition of graphs] Let us assume graphs $G^1, G^2, \ldots, G^l$  have the same vertex set $V$ and possibly different edge set $E_1, E_2, \ldots, E_l$. Then the composition of these $l$ graphs $G^1, G^2, \ldots, G^l$, known as $G = G_{1} \otimes G_{2} \otimes G_{3} \otimes \cdots \otimes G_{l}$, is uniquely defined by vertex set $V$ and edge set $E$ where $(i,j) \in E$ if and only if there exist vertex $v_2, v_3, v_{l-1}$ such that $(i, v_2) \in E_1, (v_1, v_2) \in E_2, (v_2, v_3) \in E_3, \ldots, (v_{l-1}, j) \in E_l$.

\end{definition}

\begin{definition}[$l$-periodically connected \citep{zhu2023distributed}]
  A sequence of time-dependent sub-graphs $G_t^C$ is said to be $l$-periodically connected in the sense that the composition of any $l \geq 1$ consecutive sub-graphs $G_{t_1}^C, G_{t_1+1}^C, G_{t_1+2}^C, \ldots, G_{t_1+l-1}^C$ is a connected graph, formally expressed as $G = G_{t_1}^C \otimes G_{t_1+1}^C \otimes G_{t_1+2}^C \otimes \cdots \otimes G_{t_1+l-1}^C$ is connected.
\end{definition}

It is worth noting that, in our context, $l$ is at most $C-1$ since there are $C$ vertices. Also, by definition, $l$ is a positive integer. Next, we demonstrate that any $l$ within this range specifies a lower bound on $\min_{m,n}p(m,n)$, and when this lower bound holds for $\min_{m,n}p(m,n)$, the same regret bound as in Theorem \ref{thm:4} also applies, but under much less stringent assumptions. 

First, based on the $l$-periodical connectivity, we have the following lemma hold which characterizes the relationship between $p(m,n)$ and $p(c_m, c_n)$ and is much stronger compared to Lemma \ref{lem:edge_prob_1}. 

\begin{lemma}\label{lem:edge_prob_2}
    For any pair of vertices $c_m, c_n$ in the sub-graph $G^C_t$, the probability that $c_m$ and $c_n$ is connected in $G^C_t$ is $p(c_m,c_n) \geq (1-\frac{1}{e})\min\{1, \frac{M^2}{C^2} \cdot p(m,n)\}$. 
\end{lemma}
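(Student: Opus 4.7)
The plan is to directly compute $p(c_m,c_n)$ from the edge sampling process and then bound the resulting expression via elementary inequalities, handling two regimes depending on whether $\frac{M^2}{C^2}p(m,n)$ exceeds $1$ or not. Formally, the cluster-level edge $(c_m,c_n)\in E_t'$ exists if and only if at least one of the $|c_m|\cdot|c_n| = M^2/C^2$ potential agent-pairs across the two clusters is realized in $G_t$. Since these pairs are independent Bernoulli$(p(m,n))$ draws, I would first write
\begin{equation*}
p(c_m,c_n) \;=\; 1-(1-p(m,n))^{M^2/C^2}.
\end{equation*}
Writing $k=M^2/C^2$ and $x=p(m,n)$, the goal reduces to showing $1-(1-x)^k \ge (1-\tfrac{1}{e})\min\{1,kx\}$ for all $x\in[0,1]$ and $k\ge 1$.

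Next I would apply the standard inequality $(1-x)^k \le e^{-kx}$ (valid for $x\in[0,1]$) to obtain $1-(1-x)^k \ge 1-e^{-kx}$, and then split into two cases. In the first case $kx \ge 1$, the right-hand side of the target inequality is $1-1/e$, and since $1-e^{-kx} \ge 1-e^{-1}$ whenever $kx\ge 1$, the bound follows immediately. In the second case $kx < 1$, the right-hand side is $(1-1/e)kx$, so setting $y=kx\in[0,1]$ it suffices to verify $1-e^{-y}\ge (1-1/e)\,y$ on $[0,1]$. The function $f(y)=1-e^{-y}-(1-1/e)y$ satisfies $f(0)=f(1)=0$ and $f''(y)=-e^{-y}<0$, so by concavity $f(y)\ge 0$ on the whole interval, which closes the case.

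Combining the two cases yields $1-(1-x)^k \ge (1-1/e)\min\{1,kx\}$, which substituted back gives exactly the claim. For imbalanced clusters one simply replaces $M^2/C^2$ by $|c_m|\cdot|c_n|$ and lower-bounds it by the universal cluster size $\min_i|c_i|^2$ as the paper already suggested in Section~\ref{sec:heter-sub-2}. I do not expect any genuine obstacle here: the only delicate step is the concavity-based inequality in the sub-unit regime, which is a clean one-line calculus argument; the rest is bookkeeping about the SBM edge-sampling structure.
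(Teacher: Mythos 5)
Your proof is correct and follows essentially the same route as the paper: both compute $p(c_m,c_n)=1-(1-p(m,n))^{M^2/C^2}$ exactly from the independent Bernoulli edge draws and then split into the same two cases at the threshold $\frac{M^2}{C^2}p(m,n)=1$. The only difference is in the small-probability case, where the paper asserts (without proof) that the ratio $\frac{1-(1-x)^{k}}{kx}$ is decreasing in $x$ and evaluates it at $x=C^2/M^2$, whereas you pass through the surrogate $1-e^{-kx}$ and verify $1-e^{-y}\ge(1-\frac{1}{e})y$ on $[0,1]$ by concavity with equality at the endpoints — a slightly more self-contained version of the same elementary step.
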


The formal statement on regret is as follows. Here we assume $l \geq 2$, as $l=1$ implies connectivity. 
\begin{theorem}\label{thm:5}
  Let us assume that $p(m,m) = 1$ for any $1 \leq m \leq C$. Given any $C \geq 4$, $2 \leq l \leq C-1$, let us further assume that $\min_{m,n} p(m,n) \geq \frac{e}{e-1}\cdot\frac{C^2}{M^2}\cdot \max\{\frac{(C-l-1)!}{(C-2)!}(1 - \frac{\delta (C-1)}{8CT}), \frac{(C-l-1)!}{(C-2)!}(\frac{3}{4})^{\frac{1}{l}}\}$. The regret bound of Algorithm 2 with Rule 2 reads as with probability $1-7\epsilon$ $E[R_T|A_{\epsilon,\delta}^{\prime}]             \leq L + \sum_{i \neq i^*}\Delta_i(\max{\{\frac{C}{M} \cdot [\frac{4C_1\log T}{\Delta_i^2}], 2(K^2+MK) \}} +  \frac{2\pi^2}{3P(A_{\epsilon, \delta})} + K^2 + (2M-1)K)  = O(\frac{C\log{T}}{M})$ where $A_{\epsilon,\delta}^{\prime} = A_2 \cap A_3^{\prime}$, $A_2 = \{\exists t_0, \forall t \geq L, \forall j, \forall m, t+1 - \min_jt_{m,j} \leq t_0 \leq c_0\min_{l}n_{l,i}(t+1)\}$ and $A_3^{\prime} = \{\forall t \geq L, G_t^C \text{ is $l$-periodically connected}\}$, $L, c_0, C_1 $ are specified in Theorem \ref{thm:3}. 
\end{theorem}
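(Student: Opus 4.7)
The plan is to follow the same overall framework as the proofs of Theorems~\ref{thm:3} and~\ref{thm:4}, but with the per-step connectivity event $A_3'$ replaced by $l$-periodic connectivity. First, conditional on $A_{\epsilon,\delta}'$, the UCB-driven regret decomposition and sample-complexity arguments developed for Theorem~\ref{thm:4} carry over essentially unchanged, because communication is guaranteed within every window of length $l$, which is a constant with respect to $T$; this incurs only a constant multiplicative delay in information propagation and leaves the leading $O(C \log T / M)$ term intact. The full novelty therefore lies in showing that $A_3'$ holds with probability at least $1 - \epsilon$ under the relaxed assumption on $\min_{m,n} p(m,n)$.

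Second, to lower-bound $P(A_3')$, I would fix a time window $[t_1, t_1+l-1]$ and analyze the composition $G = G^C_{t_1} \otimes G^C_{t_1+1} \otimes \cdots \otimes G^C_{t_1+l-1}$. For any pair of distinct clusters $c_m, c_n$, a length-$l$ path in the composition is an ordered sequence of $l-1$ distinct intermediate clusters chosen from the $C-2$ remaining clusters, yielding $\frac{(C-2)!}{(C-l-1)!}$ candidate paths. Because the $l$ sub-graphs in the composition are drawn independently across time, a fixed path exists with probability exactly $q^l$, where $q = p(c_m, c_n)$. Lemma~\ref{lem:edge_prob_2} then relates $q$ to the underlying $p(m,n)$ via $q \geq (1 - 1/e)\min\{1, (M^2/C^2)\, p(m,n)\}$, which accounts for the $\tfrac{e}{e-1} \cdot \tfrac{C^2}{M^2}$ prefactor appearing in the theorem.

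Third, I would use a two-pronged concentration argument (mirroring Theorem~\ref{thm:4}) to bound the probability that at least one length-$l$ path survives between each pair of clusters: a Chernoff-type argument producing the $(3/4)^{1/l}$ branch and a Chebyshev-type second-moment bound on the number of surviving paths producing the $(1 - \delta(C-1)/(8CT))$ branch. Taking the better of the two and union-bounding over the $\binom{C}{2}$ cluster pairs and over the $O(T)$ windows yields $P(A_3') \geq 1 - \epsilon$ under the stated assumption. Converting the resulting condition on $q$ back to one on $p(m,n)$ through Lemma~\ref{lem:edge_prob_2} produces the factor $\frac{(C-l-1)!}{(C-2)!}$ visible in the theorem and confirms that, unlike in Theorems~\ref{thm:3} and~\ref{thm:4}, the required lower bound on $\min_{m,n} p(m,n)$ is strictly bounded away from $1$ even as $T \to \infty$.

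The main obstacle is the Chebyshev branch of Step~3: distinct length-$l$ paths between the same endpoints can share edges in one or more of the $l$ time slots, so the indicator random variables for path-existence are correlated and the variance of the path count does not admit a naive independence bound. Enumerating path-pair overlaps by the number of shared time-slots and shared vertices, and summing their covariances, is what produces the specific $\frac{(C-l-1)!}{(C-2)!}$ and $\delta(C-1)/(8CT)$ coefficients in the final assumption. Once this variance estimate is in place, the remaining steps are a routine adaptation of the arguments already established for Theorem~\ref{thm:4}.
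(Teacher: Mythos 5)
Your overall architecture matches the paper's: condition on $l$-periodic connectivity of $G_t^C$, observe that the delay of at most $l$ steps leaves the UCB machinery of Theorems~\ref{thm:3} and~\ref{thm:4} intact (the paper does this via an information-delay lemma, $\hat{N}_{m,i}(t) \leq 2\min_{j}\hat{N}_{j,i}(t)$, imported from \citep{zhu2023distributed}), and reduce everything to showing $P(A_3') \geq 1-\epsilon$ under the relaxed edge-probability assumption. Your path-counting in Step~2 — $\frac{(C-2)!}{(C-l-1)!}$ ordered choices of intermediates, each path surviving with probability $q^l$, and Lemma~\ref{lem:edge_prob_2} supplying the $\frac{e}{e-1}\cdot\frac{C^2}{M^2}$ prefactor — is exactly the paper's computation.

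Where you diverge is the concentration step, and you have chosen the harder road. You propose to show that \emph{every} pair of clusters is joined by a length-$l$ path (i.e., the composition graph is complete), via a second-moment bound on the path count between a fixed pair, union-bounded over $\binom{C}{2}$ pairs. The paper instead applies Chebyshev to the \emph{degree} $d_m = \sum_{n}\mathds{1}_{E_{m,n}}$ of each cluster in the composition graph, shows $P(d_m \leq \frac{C-1}{2}) \leq \delta/T$, and invokes the elementary fact that minimum degree at least $\frac{C-1}{2}$ on $C$ vertices forces connectivity, union-bounding over only $C$ clusters. This buys two things: connectivity (which is all the delay lemma needs) is much cheaper to certify than completeness, and — more importantly — the variance of $d_m$ is bounded crudely by $(C-1)^2\mathrm{Var}(\mathds{1}_{E_{m,n}})$ without ever touching the correlations between overlapping paths. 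The covariance enumeration you identify as your ``main obstacle'' (pairs of length-$l$ paths sharing edges in some subset of the $l$ time slots) is precisely the computation the paper never has to do, and it is the one step of your proposal that is asserted rather than executed; you claim it yields the exact coefficients $\frac{(C-l-1)!}{(C-2)!}$ and $\frac{\delta(C-1)}{8CT}$, but those constants in the paper come out of the degree argument, and it is not evident that the path-overlap bookkeeping would reproduce them (your union bound over $C^2$ pairs rather than $C$ clusters alone would already shift the constants). I would recommend either carrying out the overlap computation in full or switching to the degree-based argument, which closes the gap with far less effort.
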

\begin{proof}[Proof Sketch]
We refer to Appendix \ref{app:proof} for the detailed proof; the intuitive logic is as follows. The relaxation is obtained by proving the following: for any $m,i,t > L$, if $n_{m,i}(t) \geq 2(K^2+KM+M)$ and the sub-graph $G_t$ is $l$-periodically connected, then we have
 $\hat{N}_{m,i}(t) \leq 2\min_{j}\hat{N}_{j,i}(t),$ where the minimum is taken over all clusters, not just the neighbors. This is equivalent to proving that the agents stay on the same page and achieve consensus, which is guaranteed by $l$-periodically connected sub-graphs, not limited to connected sub-graphs. Based on the definition of the composition of graphs, the probability of having an edge is much larger than the edge probability $p(m,n)$. Thus, the degree of the composition graph is more likely to exceed $\frac{C-1}{2}$, which is a sufficient condition for connectivity, i.e., meeting the condition of being $l$-periodically connected.
\end{proof}

\paragraph{Choice of $l$} A natural question is how to specify such $2 \leq l \leq C-1$. Since the result in Theorem \ref{thm:5} holds for any $l$, an optimal choice of $l$ is the one that minimizes the lower bound on $\min_{m,n}p(m,n)$, which reads as $l^* = \arg\min_{l}\frac{e}{e-1}\frac{C^2}{M^2}\max\{\frac{(C-l-1)!}{(C-2)!}(1 - \frac{\delta (C-1)}{8CT}), \frac{(C-l-1)!}{(C-2)!}(\frac{3}{4})^{\frac{1}{l}}\}$. Nevertheless, we can always use any $l$, which improves the previous results, as stated in the following.


\paragraph{Comparison with Theorem \ref{thm:3} \& \ref{thm:4}} The lower bound on the edge probability in Theorem \ref{thm:5} is given by 
$\min_{m,n}p(m,n) \allowbreak \geq \frac{e}{e-1}\frac{C^2}{M^2}\max\{\frac{(C-l-1)!}{(C-2)!}(1 - \frac{\delta (C-1)}{8CT}), \frac{(C-l-1)!}{(C-2)!}(\frac{3}{4})^{\frac{1}{l}}\}$. In contrast, the corresponding lower bounds in Theorems \ref{thm:3} and \ref{thm:4} are $1 - (\frac{1}{2} - \frac{1}{2}\sqrt{1 - (\frac{\delta}{CT})^{\frac{2}{C-1}}})^{\nicefrac{C^2}{M^2}}$ and $1-(1- \min\{(\frac{1}{2} + \frac{1}{2}\sqrt{1 - (\frac{\delta}{CT})^{\frac{2}{C-1}}}), 1 - \nicefrac{\delta (C-1)}{8CT}\})^{\nicefrac{C^2}{M^2}}$, respectively. When \(l > 1\), the lower bound in Theorem \ref{thm:5} can be significantly smaller than the lower bound in Theorem \ref{thm:4}. Specifically:
- The additional term \(\frac{(C-l-1)!}{(C-2)!}\) is always less than 1 (and substantially so due to factorial decay).
- The factor \(\nicefrac{C^2}{M^2}\) is also smaller when \(C < M\). These differences contribute significantly to relaxing the assumption on edge probabilities. Notably, in Theorems \ref{thm:3} and \ref{thm:4}, the lower bounds converge to 1 as \(T \to \infty\), implying that the graph becomes fully connected. However, this is no longer a concern in Theorem \ref{thm:5}. For the second term, consider the case where \(C = M\) is large. In this scenario:
$\frac{1}{2} + \frac{1}{2}\sqrt{1 - (\frac{\delta}{CT})^{\frac{2}{C-1}}} \approx 1 - \frac{1}{2}\frac{\delta}{CT} > 1 - \frac{\delta (C-1)}{8CT}$.
Thus, the lower bound in Theorem \ref{thm:4} becomes approximately: $1 - \frac{\delta (C-1)}{8CT}^{\nicefrac{C^2}{M^2}} = 1- \frac{\delta (C-1)}{8CT}$.
This demonstrates the improvement of Theorem \ref{thm:5} over Theorem \ref{thm:4} in this case. However, when \(C\) is small or \(C < M\), the improvement mainly comes from the additional term in Theorem \ref{thm:5}, as the difference between the second terms in Theorems \ref{thm:4} and \ref{thm:5} becomes negligible in comparison.




It is worth noting that the above three results assume that the in-cluster probability $p(m,m) = 1$, which may be violated in some cases. To this end, we further relax this assumption by considering more general scenarios. We derive similar results, but only require a lower bound on $p(m,m)$ by additionally considering the $l$-periodically connected sub-graphs induced by the agents within the same cluster (motivated by considering the sub-graphs across clusters), rather than assuming $p(m,m) = 1$. Methodologically, in Algorithm \ref{alg:dr}, we set $\tau = l$ instead of $1$, which implies that the frequency of updating the within cluster information aligns with $l$-periodical connectivity. We next present the corresponding theoretical result below.

\begin{theorem}\label{thm:6}
  Let us assume that $\min_{m}p(m,m) \geq \max\{\frac{(|c_M|-l-1)!}{(|c_M|-2)!}(1 - \frac{\delta (|c_M|-1)}{8c_MT}), \frac{(|c_M|-l-1)!}{(|c_M|-2)!}(\frac{3}{4})^{\frac{1}{l}}\}$ for any $1 \leq m \leq C$, and that $\min_{m \neq n}p(m,n) \geq \frac{e}{e-1}\cdot\frac{C^2}{M^2}\cdot\max\{\frac{(C-l-1)!}{(C-2)!}(1 - \frac{\delta (C-1)}{8CT}), \frac{(C-l-1)!}{(C-2)!}(\frac{3}{4})^{\frac{1}{l}}\}$. The regret bound of Algorithm 2 with Rule 2 reads as with probability $1-7\epsilon$ $E[R_T|A_{\epsilon,\delta}^{\prime}] \leq L + \sum_{i \neq i^*}\Delta_i(\max{\{\frac{C}{M} \cdot [\frac{4C_1\log T}{\Delta_i^2}], 2(K^2+MK) \}} +  \frac{2\pi^2}{3P(A_{\epsilon, \delta})} + K^2 + (2M-1)K) + l  = O(\frac{C\log{T}}{M})$
where $A_{\epsilon,\delta}^{\prime}$ is defined in Theorem \ref{thm:5}, $L, c_0, C_1 $ are specified in Theorem \ref{thm:3}.
\end{theorem}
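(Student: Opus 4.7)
The plan is to generalize Theorem \ref{thm:5} by replacing the deterministic within-cluster connectivity (which held trivially when $p(m,m) = 1$) with the same $l$-periodically connected subgraph argument applied to the within-cluster induced subgraphs. I would proceed in three conceptual stages: (i) within-cluster consensus, (ii) across-cluster consensus, and (iii) combining them into the final regret bound while accounting for the aggregation delay.

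First, for each cluster $m$, consider the induced sub-graph $H_t^m$ on the $|c_M|$ agents in that cluster. Since each intra-cluster edge appears independently with probability $p(m,m)$, by mirroring the argument used in Theorem \ref{thm:5} for $G_t^C$ (but without the cluster-shrinking factor $\frac{e}{e-1}\cdot\frac{C^2}{M^2}$, since we are already at the agent level inside the cluster), the assumption $p(m,m) \geq \max\{\frac{(|c_M|-l-1)!}{(|c_M|-2)!}(1 - \frac{\delta (|c_M|-1)}{8c_M T}), \frac{(|c_M|-l-1)!}{(|c_M|-2)!}(\frac{3}{4})^{\frac{1}{l}}\}$ is sufficient to ensure that $H_t^m$ is $l$-periodically connected with high probability. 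This step repurposes Lemma \ref{lem:edge_prob_2} and the connectivity analysis from Theorem \ref{thm:5} verbatim, replacing $C$ by $|c_M|$ and the sub-graph $G_t^C$ by $H_t^m$.

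Second, with $\tau = l$ instead of $\tau = 1$ in Algorithm \ref{alg:dr}, the aggregation step that builds $\hat{\mu}^{c_m}_i(t+1)$ and $\tilde{\bar{\mu}}^{c_m}_i(t+1)$ is executed only every $l$ time steps, matching the period at which $H_t^m$ is guaranteed to be connected in the composition sense. I would then reproduce the key consensus bound from Theorem \ref{thm:5} at the within-cluster level, namely $\hat{N}_{m,i}(t) \leq 2\min_{j \in c_m} N_{j,i}(t)$, to ensure that intra-cluster sample counts stay within a factor of two. Conditional on this event, the cluster-level estimators $\hat{\mu}^{c_m}_i$ inherit the same concentration properties they enjoyed under $p(m,m)=1$, up to an $O(l)$ delay in the time indices. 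Then, using the assumption on $p(m,n)$ for $m \neq n$, I invoke Theorem \ref{thm:5} at the cluster level to conclude that $G_t^C$ is $l$-periodically connected with high probability, and combine the two layers via a union bound so that $A_{\epsilon,\delta}^{\prime}$ holds with probability $1-7\epsilon$ after appropriately partitioning the probability budget.

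Finally, to obtain the regret bound, I would follow the same decomposition as in Theorem \ref{thm:5}: on the good event, each suboptimal arm $i$ is pulled at most $\frac{C}{M}\cdot\lceil\frac{4C_1\log T}{\Delta_i^2}\rceil$ times (plus lower-order terms) because the cluster-level global estimators are synchronized within a factor of two; the $l$-delay contributes an additive $+l$ per suboptimal pull budget, yielding the extra $+l$ term in the stated bound without affecting the leading $O(\frac{C\log T}{M})$ scaling. The main obstacle is the careful handling of the two-layer consensus: one must verify that the intra-cluster aggregation delay of $l$ steps does not amplify the variance of $\hat{\mu}^{c_m}_i$ beyond a constant factor in the weighting constant $C_1$, and that the union of the two high-probability events (intra-cluster $l$-periodic connectivity for each of the $C$ clusters, and inter-cluster $l$-periodic connectivity of $G_t^C$) still leaves a $1-7\epsilon$ probability budget. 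This requires redistributing the $\delta$ parameter across the within-cluster and across-cluster concentration inequalities, which is precisely what the two different bounds on $p(m,m)$ and $p(m,n)$ are designed to accommodate.
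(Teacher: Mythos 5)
Your proposal follows essentially the same route as the paper's proof: establish $l$-periodic connectivity of the within-cluster induced subgraphs under the $p(m,m)$ assumption (the paper's Proposition 25, which reruns the Theorem \ref{thm:5} connectivity argument with $|c_M|$ vertices and no $\frac{e}{e-1}\frac{C^2}{M^2}$ factor, exactly as you describe), set $\tau = l$ so intra-cluster aggregation matches the connectivity period, propagate the resulting $O(l)$ delay through the variance and concentration bounds by replacing $N_{m,i}(t)$ with $N_{m,i}(t-l)$, and collect the additive $+l$ in the per-arm pull count before the standard regret decomposition. The approach and the handling of the two-layer consensus and delay are the same as in the paper.
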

\begin{proof}[Proof Sketch]
The complete proof is referred to in Appendix \ref{app:proof}; here we illustrate the main proof logic. Instead of considering a complete sub-graph within one cluster, we consider $l$-periodically connected sub-graphs, which implies that the delay to receive all other agents ' information within the cluster is at most $l$. The assumption of $\min_{m}p(m,m) \geq \max{\{\frac{(|c_M|-l-1)!}{(|c_M|-2)!}(1 - \frac{\delta (|c_M|-1)}{8CT}), \frac{(|c_M|-l-1)!}{(|c_M|-2)!}(\frac{3}{4})^{\frac{1}{l}}\}}$ guarantees that, with high probability, the sub-graph induced by the agents in one cluster is $l$-periodically connected at all times. The proof then follows from the statement for sub-graphs induced by the clusters. The regret incurred in between is at most $l$. Otherwise, once an agent collects the information from all agents in the same cluster, the algorithm proceeds as before, where the cluster can be treated as a complete graph (since all information is available). Subsequently, the previous regret bound also holds, which provides the regret bound as stated in Theorem \ref{thm:6}.
\end{proof}

When interpreting the above result in more detail, we surprisingly find that it provides useful insights into the proposed framework, which incorporates both homogeneity and heterogeneity. We next illustrate these insights. If $|c_M| < C$, i.e., $M \leq C^2$, the lower bound for the edge probability $p(m,m)$ within one cluster is larger than that for the edge probability $p(m,n)$ across clusters where $m \neq n$. This implies that for small-scale systems with a large number of clusters, sufficient edges within a cluster (homogeneity) are required to achieve the order reduction in the regret bound. Conversely, when $|c_M| \gg C$, i.e., $M \gg C^2$, the reverse holds: the lower bound for the edge probability $p(m,m)$ within one cluster becomes smaller than that for the edge probability $p(m,n)$ across clusters where $m \neq n$. Hence, for large-scale systems, it is more important to ensure sufficient information is gathered across clusters (heterogeneity) to guarantee the regret bound.



\section{Heterogeneous - Multiple Unknown Clusters}\label{sec:heter-unknown}

In practice, the cluster structure is often unknown due to the complexity of real-world data. Simply assuming no cluster structure, as in \citep{xu2023decentralized}, is an overly simplistic approach that ignores the potential presence of clusters. This neglect not only oversimplifies the problem but also divert leveraging these structures to design more refined and effective algorithms. To overcome this limitation, and unlike the case where the cluster assignment $\{i \to c_i\}$ is known beforehand as in Section \ref{sec:heter}, we demonstrate that our methodological framework can be extended to scenarios where such information is unavailable. This extension opens the door to tackling more practical and complex real-world problems and providing robust tools.  We show that, with only minor modifications, our algorithm seamlessly integrates with a cluster detection algorithm (Section \ref{sec:heter-unknown-1}). We establish the corresponding regret bound under these extended conditions in Section \ref{sec:heter-unknown-2}.

\subsection{Algorithm}\label{sec:heter-unknown-1}

The method involves two steps: first, incorporating a cluster detection method (Algorithm \ref{alg:IR-LSS}) to estimate the cluster structure with reward information given by the burn-in period (Algorithm \ref{alg:burn-in}); and second, running Algorithm \ref{alg:dr} using this estimation in a plug-and-play fashion.

\subsubsection{Cluster Detection}\label{sec:heter-unknown-1-1}

To estimate the cluster structure from the graph and the reward information, we use the algorithm for cluster detection in~\cite{braun2022iterative}, which extends the stochastic block model to contextual stochastic block models by incorporating node covariates as side information.

\begin{definition}
    The Contextual Symmetric Stochastic Block Model (CSSBM) is a special stochastic block model with node covariates. Given the graph size $M$, number of clusters $C$, edge connection probabilities $p,q$, $C$ vectors $\mu^1,\cdots, \mu^C \in \mathbb{R}^K$ and variance $\sigma^2$, it generates a graph with $M$ nodes and $C$ balanced clusters from a stochastic block model with edge probability $p(m,m) = p$ for any $m$ and $p(m,n) = q$ for any $m \neq n$. The node covariates are generated from a Gaussian mixture model, where for each node $i$ in cluster $c_m$, its covariates $v^i$ are sampled from Gaussian distribution $\mathcal{N}(\mu^m, \sigma^2 I_K)$. 
\end{definition}

Braun et al.~\cite{braun2022iterative} proposed an iterative clustering algorithm IR-LSS to recover the cluster structure under CSSBM by incorporating both graph structure and vertex covariates (rewards in our case).
We provide the pseudo-code of the IR-LSS algorithm (Algorithm 3) in Appendix~\ref{apx:IR-LSS}. 
The IR-LSS algorithm iteratively refines clusters by alternating between the following two steps - 1) Parameter Estimation: Using the current clustering, it estimates model parameters, such as the connectivity probability and covariate means, and 2) Clustering Refinement: It reassigns vertices to clusters by minimizing a least-squares criterion, based on estimated model parameters.

This iterative approach effectively leverages both graph and reward information, addressing cases where clusters are difficult to recover from either source alone, aligning perfectly with MA-MAB.

\subsubsection{Full Algorithm} 
The full algorithm first runs Algorithm \ref{alg:burn-in} to accumulate reward information and then executes Algorithm 3 to detect the cluster information, both of which are burn-in steps. Note that for each agent to run the clustering algorithm locally, an additional $O(M)$ steps are required to propagate the estimated reward information and local edge information across the entire graph. Thus, the new burn-in steps have a total length of $L_1 = O(L + M)$, where $L = O(\frac{C}{M} \cdot K \cdot \log T)$ represents the length of Algorithm~\ref{alg:burn-in}. Subsequently, the full algorithm proceeds to the learning period by running Algorithm \ref{alg:dr}, as before, to design UCB-based strategies.

\subsection{Analyses}\label{sec:heter-unknown-2}

Again, let us assume that the cluster structure is balanced for illustration purposes. As before, the imbalanced case can be easily handled by using the smallest cluster size.

We first present the key results regarding the cluster detection algorithm. By using the iterative refinement clustering algorithm by~\cite{braun2022iterative}, the cluster structure of the CSSBM can be exactly recovered in the following regime efficiently with high probability. First, we define the Signal-to-Noise Ratio (SNR) to be $\mathrm{SNR} = \frac{1}{8\sigma^2} \min_{m\neq n}\|\mu^m - \mu^n\|_2^2 + \frac{\log M}{C}(\sqrt{p'} - \sqrt{q'})^2, $
where $p = p' \frac{\log M}{M}$ and $q = q' \frac{\log M}{M}$.

\begin{lemma}[Braun et al.~\cite{braun2022iterative}]\label{lemma:CSSBM}
    Consider a CSSBM with $M$ nodes, $C$ clusters, and signal-to-noise ratio $\mathrm{SNR}$. Suppose $\mathrm{SNR} > 2\log M$ and $C^{3} \leq \mathrm{SNR} \cdot \delta$ for a small constant $\delta < 1/2$. Then, with probability at least $1 - 1/\mathrm{poly}(M)$, Algorithm~\ref{alg:IR-LSS} exactly recovers the cluster structure.
\end{lemma}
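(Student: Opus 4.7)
The plan is to follow the standard two-phase analysis that underlies iterative refinement algorithms for stochastic block models with side information. First, I would establish that IR-LSS admits a good initialization: the clustering produced by a spectral step applied to a suitable combination of the centered adjacency matrix $A$ and the covariate matrix $V = [v^1, \ldots, v^M]$ has misclassification rate $o(1)$ with high probability under the stated SNR condition. The tools for this step are Davis--Kahan type perturbation bounds together with matrix concentration (Bernstein-type bounds for $A - \mathbb{E}[A]$ and Gaussian concentration for $V - \mathbb{E}[V]$), yielding that a $1 - o(1)$ fraction of nodes are correctly labeled up to a global permutation of cluster indices.

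Second, conditional on such an initialization, I would analyze a single iteration of the refinement step. Given a clustering estimator $\hat\tau$ with misclassification error at most $\epsilon$, the parameter-estimation substep produces $\hat p$, $\hat q$, and centroids $\hat\mu^1, \ldots, \hat\mu^C$ that are within $O(\epsilon + 1/\sqrt{M})$ of the truth in the appropriate norm; this is routine because averaging over nearly-correct clusters introduces only a controllable bias. The least-squares reassignment substep then relabels each node $i$ to the index $m$ minimizing a composite loss that combines a graph-likelihood term (the squared distance between the observed adjacency row of $i$ and the row expected under cluster $m$) and a covariate-likelihood term ($\|v^i - \hat\mu^m\|_2^2 / (2\sigma^2)$). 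For a fixed node with true label $m$, the key step is a large-deviation bound showing that the probability of being reassigned to some $n \neq m$ is at most $\exp\!\left(-\mathrm{SNR}/2 \cdot (1 - o(1))\right)$, obtained by combining a binomial tail bound for the adjacency contribution (contributing the $(\sqrt{p'} - \sqrt{q'})^2 \log M / C$ term) with a Gaussian tail bound for the covariate contribution (contributing the $\min_{m\neq n}\|\mu^m - \mu^n\|_2^2 / (8\sigma^2)$ term).

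Third, I would chain the per-iteration guarantee into exact recovery. A union bound over all $M$ nodes and $C-1$ alternative labels yields a total misclassification probability after one refinement step of order $MC \exp(-\mathrm{SNR}/2)$, which under the hypotheses $\mathrm{SNR} > 2\log M$ and $C^3 \leq \delta \cdot \mathrm{SNR}$ is $1/\mathrm{poly}(M)$. Iterating $O(\log M)$ times therefore drives the number of misclassified nodes strictly below $1$, i.e.\ to zero, while keeping the total failure probability at $1/\mathrm{poly}(M)$ by a further union bound across iterations.

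The hardest step, I expect, is the joint concentration analysis in the refinement phase: the reassignment decision for a given node $i$ depends on the estimated parameters $\hat p, \hat q, \hat\mu^1, \ldots, \hat\mu^C$, which are themselves functions of the entire random graph and covariate matrix, creating a subtle dependence between the decision rule and the randomness being bounded. Handling this cleanly typically requires a leave-one-out argument: one constructs auxiliary estimators that exclude node $i$'s adjacency row and covariate vector, shows that these auxiliary estimators are close to the true ones in the relevant norm, and only then applies clean tail bounds to quantities that are conditionally independent of $i$'s data. Getting both the constants and the dependence on $C$ tight enough to land at the factor of $1/2$ in the exponent, rather than a weaker $1/8$ or $1/4$, is where the delicate part of the analysis lies.
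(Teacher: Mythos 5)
The first thing to note is that the paper does not prove this statement at all: it is quoted verbatim from Braun et al.~\cite{braun2022iterative} as a black-box guarantee for Algorithm~\ref{alg:IR-LSS}, restated once in Appendix~\ref{app:proof-lemma} without argument, and then used only as an ingredient in the proof of Lemma~\ref{lem:recover}. So there is no in-paper proof to compare against; the question is whether your outline is a faithful and self-supporting reconstruction of the cited analysis. Architecturally it is: spectral initialization with $o(1)$ misclassification, a one-step refinement analysis with a per-node large-deviation bound whose exponent splits into a graph term and a covariate term, a leave-one-out device to decouple the reassignment rule from the data of the node being reassigned, and a union bound over nodes and $O(\log M)$ iterations is exactly the structure of the argument in~\cite{braun2022iterative}. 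You also correctly identify the leave-one-out step as the delicate part.

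There is, however, a concrete quantitative gap in your third step that prevents the sketch from closing as written. You claim a per-node reassignment probability of $\exp\bigl(-\tfrac{1}{2}\mathrm{SNR}\,(1-o(1))\bigr)$ and then assert that $MC\exp(-\mathrm{SNR}/2)$ is $1/\mathrm{poly}(M)$ under $\mathrm{SNR} > 2\log M$. But $\mathrm{SNR} > 2\log M$ gives only $\exp(-\mathrm{SNR}/2) < M^{-1}$, so the union bound over $M$ nodes and $C-1$ labels yields a failure probability of order $C$, which is not even $o(1)$, let alone polynomially small. For the chaining to work you need either a per-node exponent of $-(1-o(1))\,\mathrm{SNR}$ (so that the per-node error is $M^{-2+o(1)}$ and the union bound gives $C\,M^{-1+o(1)}$), or a strictly larger SNR threshold; and in either case the $o(1)$ loss in the exponent must be shown to be genuinely negligible relative to $\log M$, which is precisely where the hypothesis $C^{3}\leq \delta\cdot\mathrm{SNR}$ has to be invoked (it controls the accumulated initialization and parameter-estimation errors uniformly over the $C$ clusters) — a role your sketch names but never actually uses. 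Until the one-step contraction lemma is stated with its exact exponent and the arithmetic of the union bound is made consistent with the stated threshold $\mathrm{SNR}>2\log M$, the proposal remains a plan rather than a proof.
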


Then, we use the graph information and the reward estimation for each agent from the burn-in period with $L = O(\frac{C}{M}\cdot K\log T)$ rounds as the input for the above clustering algorithm. 
We assume that the reward for each arm $i \in [K]$ of agent $m \in [M]$ is sampled from a Gaussian distribution $\mathcal{N}(\mu^m_i,\sigma^2)$.
Then, we can recover the cluster structure exactly under the following conditions. 

\begin{lemma}\label{lem:recover}
    Consider the graph is generated from a stochastic block model with $M$ agents and $C$ balanced clusters with edge connection probability $p(m,m) = p$ for all $m \in [C]$ and $p(m,n) = q$ for all $m \neq n$, where $p = p'\frac{\log M}{M}$ and $q = q' \frac{\log M}{M}$.
    The reward for each arm $i \in [K]$ of agent $m \in [M]$ is sampled from a Gaussian distribution $\mathcal{N}(\mu^m_i, \sigma^2)$.
    Suppose $\mathrm{SNR} > 2 \log M$ and $C^3 \leq \mathrm{SNR} \cdot  \delta$ for a small constant $\delta < 1/2$, where $\mathrm{SNR} = \frac{C\log T}{8 M\sigma^2} \min_{m\neq n}\|\mu^m - \mu^n\|_2^2 + \frac{K\log T \log M}{M}(\sqrt{p'} - \sqrt{q'})^2.$
    Then, with probability at least $1 - 1/\mathrm{poly}(M)$, Algorithm~\ref{alg:IR-LSS} exactly recovers the cluster structure.
\end{lemma}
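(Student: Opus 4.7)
The plan is to reduce the claim to Lemma~\ref{lemma:CSSBM} by showing that the data collected during the burn-in period of Algorithm~\ref{alg:burn-in}, together with the random graphs observed over those rounds and propagated across the network, fit into a Contextual Symmetric Stochastic Block Model whose effective noise and edge probabilities exactly realize the SNR expression in the statement. Concretely, I would treat the per-agent empirical reward vector as the node covariate and an aggregated adjacency over the burn-in horizon as the graph input to Algorithm~\ref{alg:IR-LSS}, then invoke Lemma~\ref{lemma:CSSBM} directly.

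The first step is to identify the covariates. Since the burn-in lasts $L = \Theta(\tfrac{C}{M}\cdot K\log T)$ rounds with round-robin pulls, each agent $m$ has $L/K = \Theta(\tfrac{C}{M}\log T)$ i.i.d.\ $\mathcal{N}(\mu_i^m,\sigma^2)$ samples of each arm $i$, so the empirical-mean vector $v^m=(\bar\mu_1^m,\dots,\bar\mu_K^m)$ satisfies $v^m\sim\mathcal{N}(\mu^m,\sigma_{\mathrm{eff}}^2 I_K)$ with $\sigma_{\mathrm{eff}}^2 = K\sigma^2/L = M\sigma^2/(C\log T)$. Substituting this into the covariate term of the CSSBM SNR immediately yields $\tfrac{1}{8\sigma_{\mathrm{eff}}^2}\min_{m\neq n}\|\mu^m-\mu^n\|_2^2 = \tfrac{C\log T}{8M\sigma^2}\min_{m\neq n}\|\mu^m-\mu^n\|_2^2$, matching the first term of the SNR in the lemma. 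The second step is to aggregate graph observations: each of the $L$ burn-in rounds produces an independent SBM draw with intra/inter-cluster probabilities $p=p'\log M/M$ and $q=q'\log M/M$, both of order $o(1/L)$ under the regime considered, so taking the union graph produces an effective SBM with edge probabilities $p_{\mathrm{eff}}=1-(1-p)^L\approx Lp$ and $q_{\mathrm{eff}}\approx Lq$. Writing $p_{\mathrm{eff}} = p'_{\mathrm{eff}}\log M/M$ with $p'_{\mathrm{eff}}=Lp'$ and analogously $q'_{\mathrm{eff}}=Lq'$, the graph term of the CSSBM SNR becomes $\tfrac{\log M}{C}(\sqrt{p'_{\mathrm{eff}}}-\sqrt{q'_{\mathrm{eff}}})^2 = \tfrac{L\log M}{C}(\sqrt{p'}-\sqrt{q'})^2 = \tfrac{K\log T\log M}{M}(\sqrt{p'}-\sqrt{q'})^2$, which matches the second term in the SNR of the lemma. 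Hence the SNR condition $\mathrm{SNR}>2\log M$ and the balance condition $C^3\leq\mathrm{SNR}\cdot\delta$ in the hypothesis coincide with the preconditions of Lemma~\ref{lemma:CSSBM}, so applying it gives exact recovery with probability at least $1-1/\mathrm{poly}(M)$.

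The main obstacle I anticipate is the graph-aggregation step, which departs slightly from the single-snapshot CSSBM in Lemma~\ref{lemma:CSSBM}: strictly speaking the union of $L$ i.i.d.\ SBMs has edge probability $1-(1-p)^L$, not $Lp$, so one must verify that the linearization is valid in the relevant regime (which it is, since $Lp,Lq=o(1)$ under the stated scaling, with the multiplicative error absorbed into the polynomial tail) and that the union graph remains a genuine SBM with independent edges (which it does, since edges are drawn independently across pairs and across rounds). A secondary issue is ensuring that each agent actually has the global graph and the global covariate vectors required to run Algorithm~\ref{alg:IR-LSS} locally; this is handled by the additional $O(M)$ propagation rounds appended to the burn-in, after which every agent holds an identical copy of the aggregated adjacency and of all $v^m$, so the clustering outcome and its guarantee apply uniformly across agents. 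With these two points verified, the reduction to Lemma~\ref{lemma:CSSBM} is direct and the conclusion follows.
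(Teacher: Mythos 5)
Your proposal is correct and takes essentially the same route as the paper's own proof: aggregate the $L$ burn-in graphs into a single SBM instance with effective edge probabilities $1-(1-p)^L\approx Lp$ and $1-(1-q)^L\approx Lq$, treat the per-agent empirical reward means as Gaussian covariates with variance $K\sigma^2/L$, verify that the resulting CSSBM signal-to-noise ratio equals the one in the statement, and invoke Lemma~\ref{lemma:CSSBM}. The only differences are that you flag the validity of the linearization and the $O(M)$ propagation step explicitly, which the paper treats more briefly.
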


The proof of Lemma \ref{lem:recover} is provided in Appendix \ref{app:proof-lemma}. We would like to emphasize that, given the high probability of detecting the cluster structure, the regret bounds introduced in Section \ref{sec:heter} remain valid with certain modifications. Specifically, the probability of the regret bound holds as the product of the original probability and the probability of exact cluster detection. Additionally, the regret bound includes an extra term of order $M$, arising from the burn-in period $L_1$ during which the cluster detection algorithm is executed.

Formally, we present the following corollary of Theorem \ref{thm:6}, removing the assumption of known clusters from the theorem. We show the corollary of Theorem \ref{thm:6} for illustrative purposes, as Theorem \ref{thm:6} represents the most general form of the regret bound under the least stringent assumptions. Likewise, the corollaries of Theorems \ref{thm:2}, \ref{thm:3}, \ref{thm:4}, and \ref{thm:5} also hold, as presented in Appendix \ref{app:Theory}.

We first define event \( A_{\epsilon, \delta, \tau} = A_{\epsilon, \delta} \cap \{c_i = c_i^{\prime}, \forall 1 \leq i \leq M\} \), where \( c_i^{\prime} \) is the cluster label for agent \( i \) recovered by Algorithm 3. It is straightforward to show that \( P(A_{\epsilon, \delta, \tau}) \geq 1 - 7\epsilon - \tau \), where \( \tau = P(\{c_i \neq c_i^{\prime}, \exists 1 \leq i \leq M\}) = 1 - 1/\mathrm{poly}(M)\) is the failure probability for the cluster recovery.

\begin{corollary}[Extension of Theorem \ref{thm:6}]\label{cor:6}
Let the assumptions in Theorem \ref{thm:6} hold except that $\{c_i\}$ are unknown. Let the assumptions in Lemma \ref{lem:edge_prob_2} hold. Then the regret of Full Algorithm reads as with probability $1-7\epsilon - 1/\mathrm{poly}(M)$, $E[R_T|A_{\epsilon,\delta, \tau}] \leq L_1 + \sum_{i \neq i^*}(\max\{\frac{C}{M} \cdot \frac{4C_1\log T}{\Delta_i^2}, 2(K^2+MK) \} +  \frac{2\pi^2}{3P(A_{\epsilon, \delta, \tau})} + K^2 + (2M-1)K) + l \leq O(\frac{C}{M}\log T)$.
\end{corollary}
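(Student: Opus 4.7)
The plan is to condition on the correct-recovery event of the clustering subroutine and then invoke Theorem~\ref{thm:6} in a black-box fashion. The argument factorizes cleanly into two stages: (i) a cluster-detection stage of total length $L_1 = O(L+M)$ consisting of the burn-in phase (Algorithm~\ref{alg:burn-in}), the $O(M)$ rounds needed to propagate local reward and graph information across the network so that every agent can execute the clustering procedure locally, and the IR-LSS step (Algorithm~\ref{alg:IR-LSS}); and (ii) a learning stage that is distributionally identical to the known-cluster case of Theorem~\ref{thm:6} once the labels are recovered.

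First I would define the cluster-recovery event $\mathcal{C} = \{c_i' = c_i, \ \forall i \in [M]\}$ and apply Lemma~\ref{lem:recover} to conclude $P(\mathcal{C}) \geq 1 - 1/\mathrm{poly}(M)$. This step requires verifying the SNR hypothesis of Lemma~\ref{lem:recover}: the burn-in delivers $\Theta(\frac{C}{M} \log T)$ samples per arm per agent, so the reward-signal contribution $\tfrac{C\log T}{8M\sigma^2}\min_{m\neq n}\|\mu^m - \mu^n\|_2^2$ dominates and exceeds $2\log M$ for $T$ sufficiently large, which is precisely why the $L$ chosen in Theorem~\ref{thm:6} is also enough for recovery. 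Next, conditional on $\mathcal{C}$, every invocation of Rule~2 in Algorithm~\ref{alg:dr} for $t > L_1$ uses the correct labels, so the cluster-level estimators $\hat{\mu}_i^{c_m}(t)$, $\Tilde{\bar{\mu}}_i^{c_m}(t)$, the cluster-edge frequencies $P_t(c_m,c_n)$, and the resulting UCB indices coincide in distribution with those analyzed in Theorem~\ref{thm:6}.

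Having reduced to a known-cluster instance on $\mathcal{C}$, I would apply Theorem~\ref{thm:6} verbatim to bound the regret incurred during rounds $L_1+1,\ldots,T$ on $A_{\epsilon,\delta}' \cap \mathcal{C}$, and add back the trivial bound of $L_1$ on the regret accumulated during the extended burn-in (per-step per-agent instantaneous regret is at most $\max_k \Delta_k = O(1)$). Since $L_1 - L = O(M)$ is dominated by the $O(\frac{C}{M}\log T)$ leading term whenever $T$ is large enough that $L = \Omega(M^2/C)$, the stated form of the bound follows. The probability claim comes from a direct union bound over the two good events, $P(A_{\epsilon,\delta} \cap \mathcal{C}) \geq 1 - 7\epsilon - 1/\mathrm{poly}(M)$, which is valid because both events live on the same probability space driven by the random graphs and the Gaussian rewards, so no independence argument is required.

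The main obstacle is checking that the SNR condition of Lemma~\ref{lem:recover} is genuinely achievable under the edge-probability regime permitted by Theorem~\ref{thm:6}. Because Theorem~\ref{thm:6} allows edge probabilities that are bounded away from $1$ and for which the graph-signal gap $(\sqrt{p'}-\sqrt{q'})^2$ may be small, the reward-signal term alone must carry the SNR above the $2\log M$ threshold; this is what forces the dependence of $L$ on $\log T$ rather than merely $\log M$ and is the one place where the unknown-cluster hypotheses couple nontrivially to the clustering subroutine. A minor secondary point is that the $O(M)$ extra rounds for network-wide propagation of local data must not violate any sample-count invariants used inside Theorem~\ref{thm:6}; this is immediate because those invariants depend only on the value of $N_{m,i}$ when the learning phase begins, which is inherited unchanged from the end of Algorithm~\ref{alg:burn-in}.
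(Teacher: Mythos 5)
Your proposal takes essentially the same route as the paper's proof: condition on the exact-recovery event from Lemma~\ref{lem:recover}, apply Theorem~\ref{thm:6} verbatim to the post-$L_1$ rounds, charge the extended burn-in trivially as $L_1$, and union-bound the two failure probabilities to get $1-7\epsilon-1/\mathrm{poly}(M)$. Your additional care in checking that the burn-in length suffices for the SNR hypothesis and that the $O(M)$ propagation rounds do not disturb the sample-count invariants goes slightly beyond the paper's (very terse) argument, but does not change the structure of the proof.
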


\begin{proof}
 The proof is straightforward, combining the above lemma, which implies that after \( L_1 \), with probability \( 1 - \tau \), the cluster structure is correctly identified. From \( L_1 \) to \( T \), the agents run the same algorithm as before, resulting in the regret bound in Theorem \ref{thm:6}, under \( A_{\epsilon, \delta, \tau} \).
\end{proof}

This corollary demonstrates that our algorithm and analysis are general, making them applicable to scenarios where the cluster information is unknown and, thus, valuable for many real-world applications, as mentioned in Section \ref{sec:app}.

\section{Numerical Experiments}\label{sec:exp}

In this section, we numerically evaluate the performance of the proposed algorithm. Specifically, we compare the regret performance of Algorithm 2 over time with existing benchmark methods studied in the literature. We present the regret curves by computing the algorithms' exact regret as the average over 25 runs, along with the corresponding 95\% confidence intervals (CI). We present the results on both synthetic datasets and a real-world dataset in this section. Details about numerical experiments are referred to Appendix \ref{app:exp}.  




The benchmarks include DrFed-UCB~\citep{xu2023decentralized}, GoSInE~\citep{chawla2020gossiping}, Gossip\_UCB~\citep{zhu2021federated}, and Dist\_UCB~\citep{zhu2023distributed}. Notably, GoSInE and Gossip\_UCB are tailored for time-invariant graphs, whereas Dist\_UCB and DrFed-UCB are recent works designed for time-varying graphs. Among these, we emphasize that DrFed-UCB is the most recent and highly relevant to our framework, as it considers Erdos-Renyi models and pure heterogeneity, while GoSInE focuses solely on homogeneity.


\begin{figure}
\vspace{-5mm}
\centering
\hspace{8mm}\includegraphics[width=0.9\textwidth]{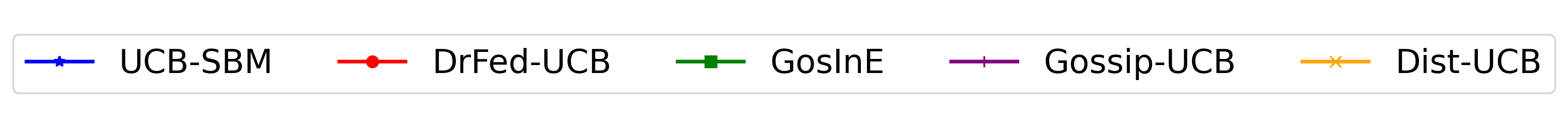}\label{fig:leg}
\vspace{-8mm}
\newline
\subfloat[][Synthetic data]
{\includegraphics[width=0.33\textwidth]{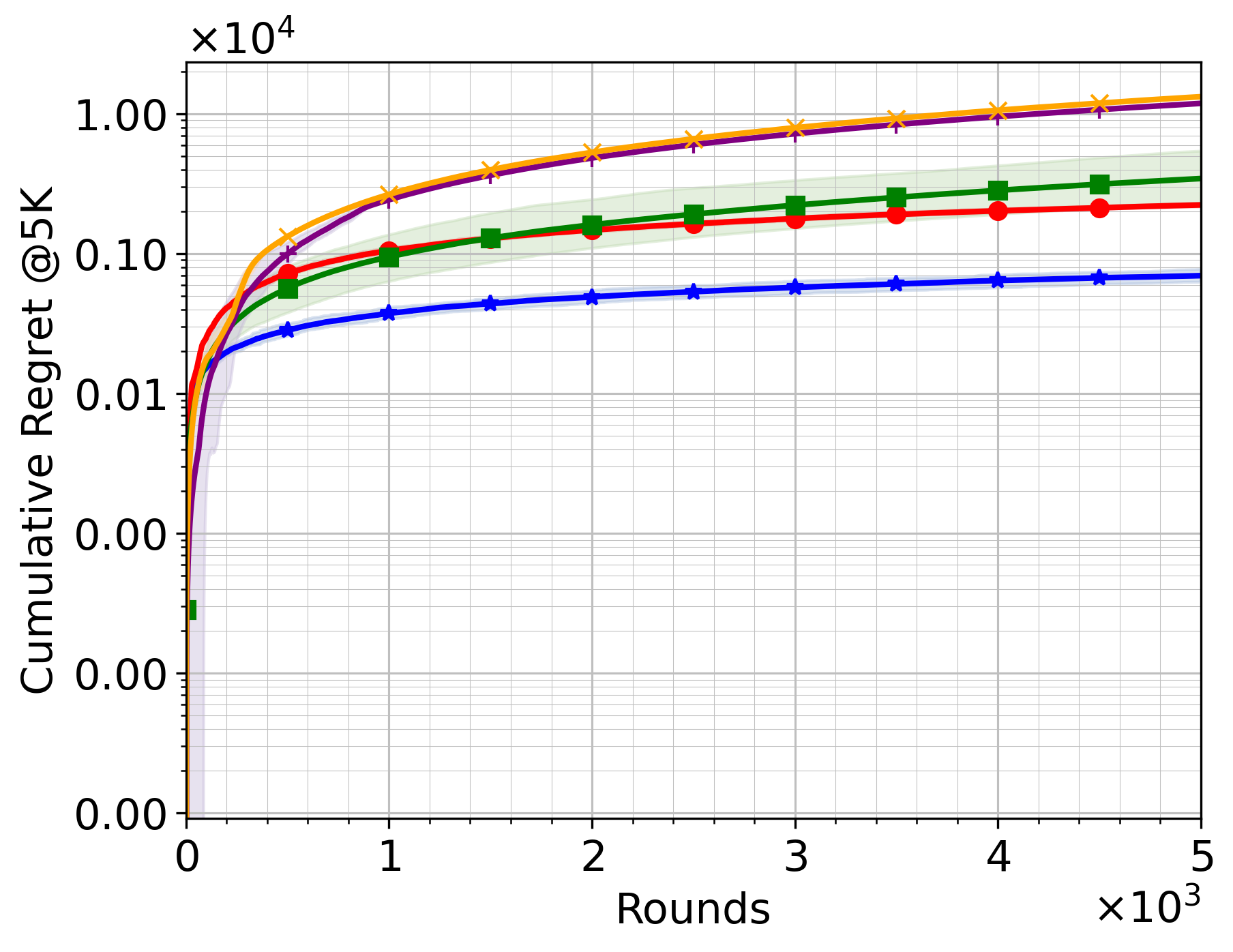}\label{fig:syn}}
\hfill
\subfloat[][Real-world data]
{\includegraphics[width=0.33\textwidth]{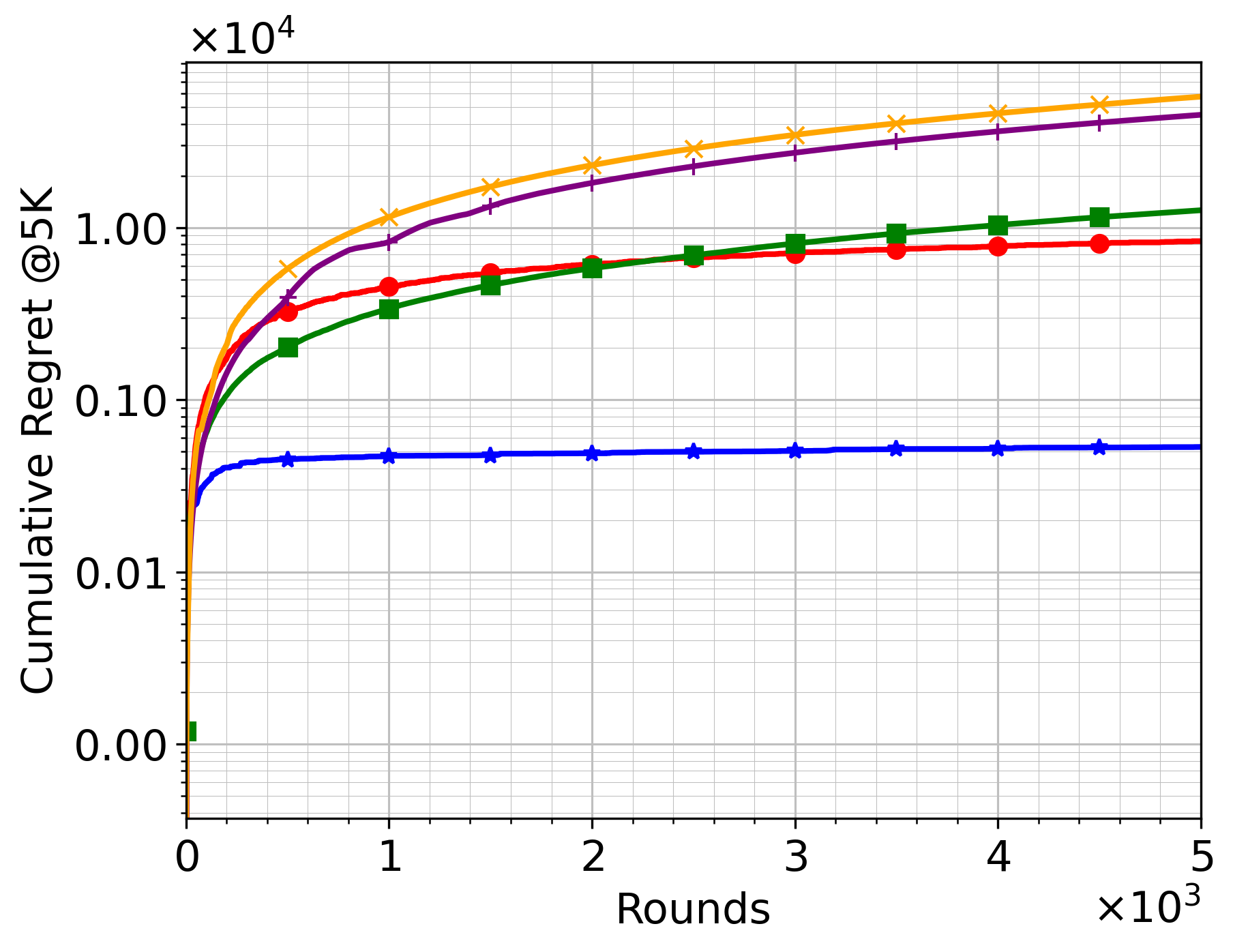}\label{fig:r}}
\hfill
\subfloat[][Changing $M$]
{\includegraphics[width=0.33\textwidth]{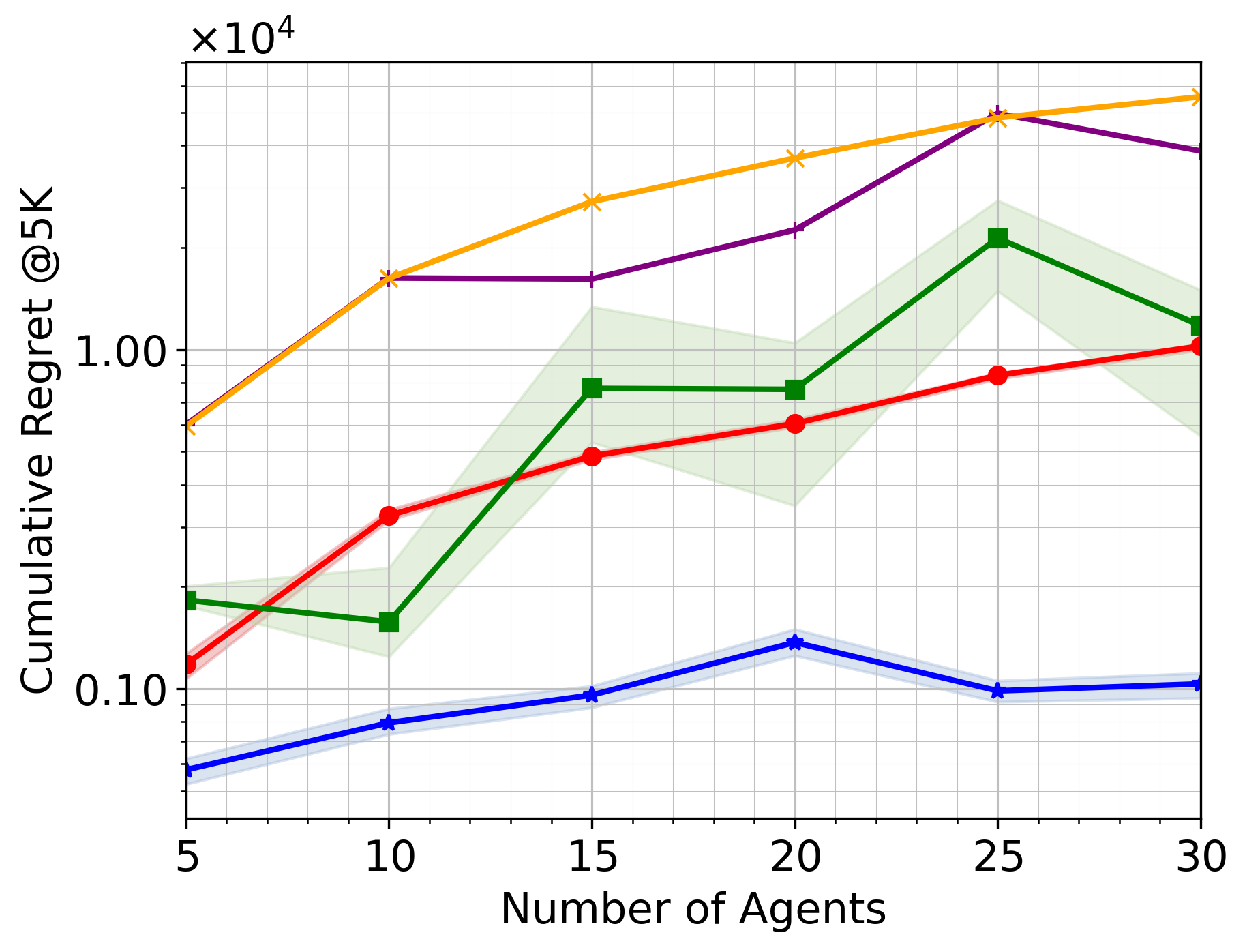}\label{fig:M}}
\vspace{-6mm}
\newline
\subfloat[][Changing $C$]
{\includegraphics[width=0.33\textwidth]{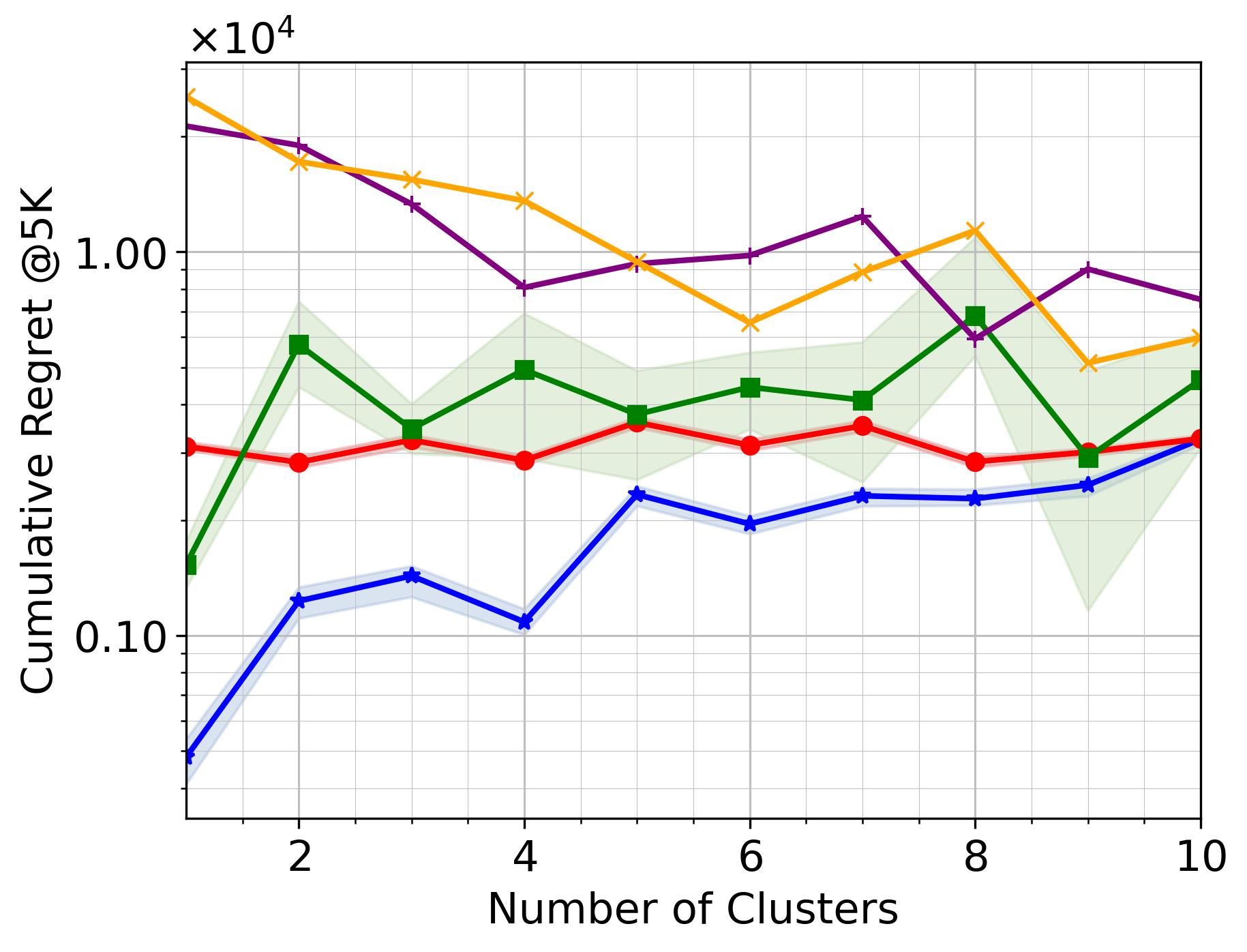}\label{fig:C}}
\hfill
\subfloat[][Changing $p(m,m)$]
{\includegraphics[width=0.33\textwidth]{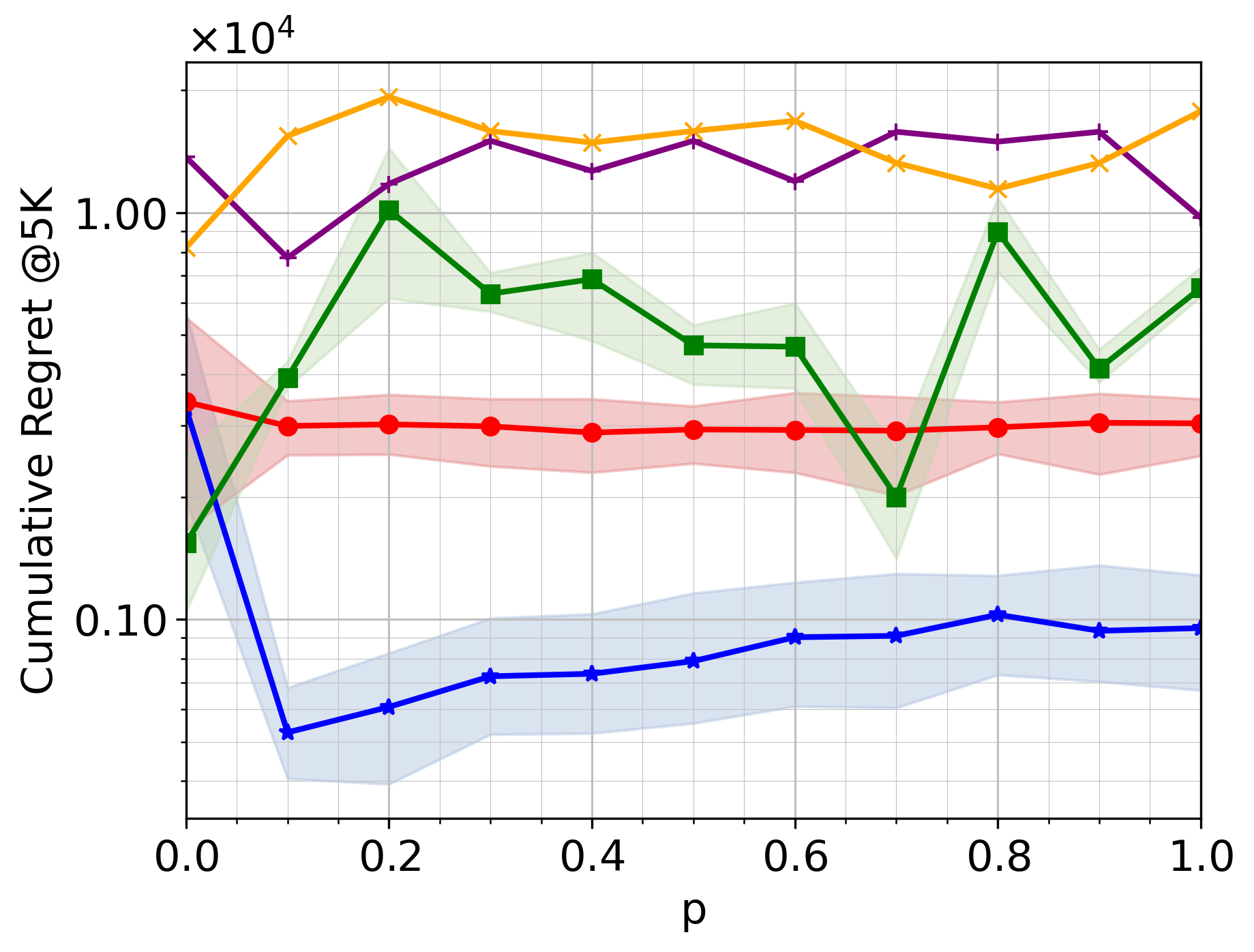}\label{fig:p}}
\hfill
\subfloat[][Changing $q(m,n)$]
{\includegraphics[width=0.33\textwidth]{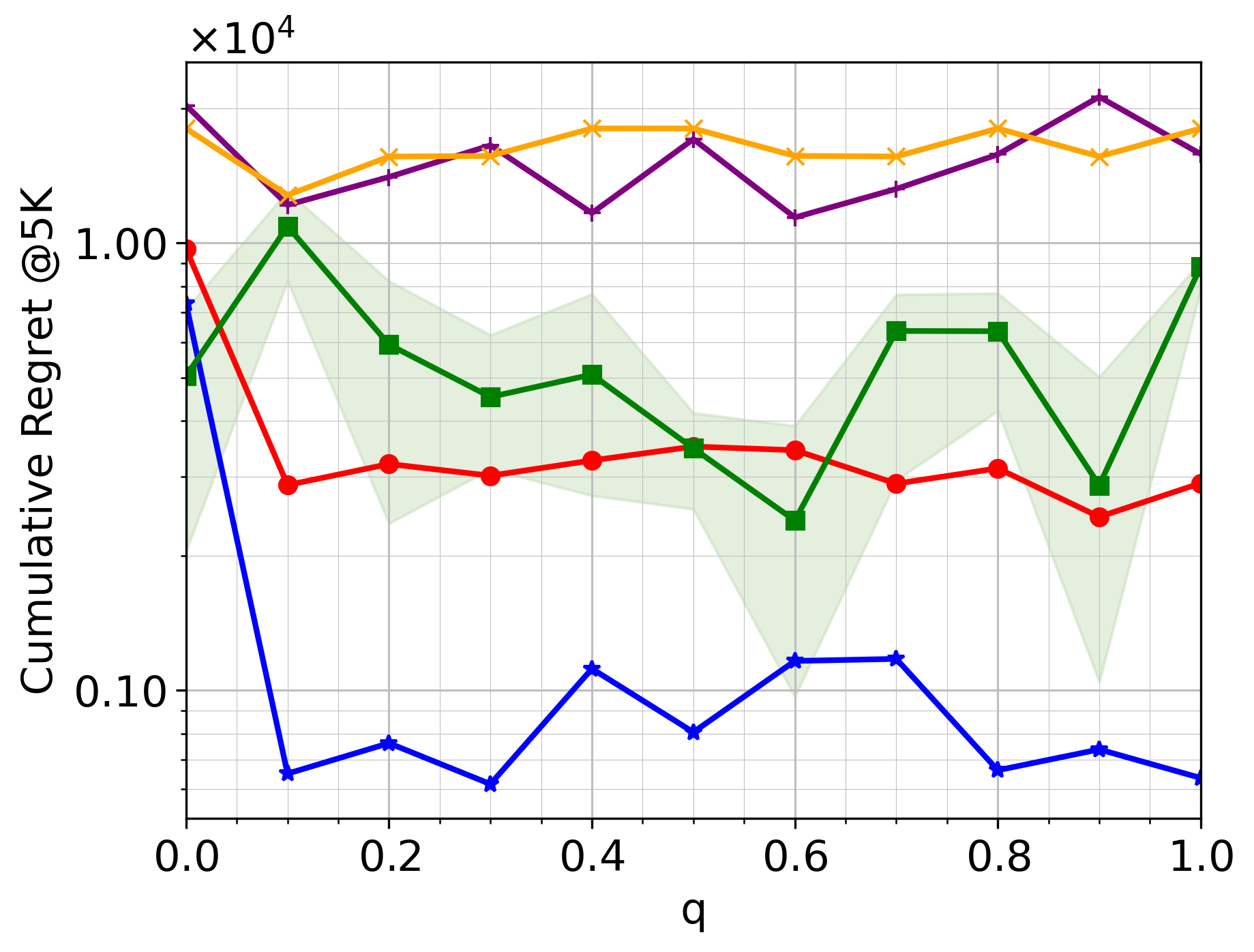}\label{fig:q}}
\vspace{-5mm}
\caption{ The regret of different methods across different settings}
\label{fig 1: comparison}
\end{figure}
\paragraph{Benchmark Comparison Results} The comparison of the regret curves on the synthetic dataset between UCB-SBM (our method) and benchmarks is shown in Fig. \ref{fig:syn}, where the shaded area denotes the CI. Here, the $x$-axis and $y$-axis represent the time $t$ and the cumulative regret on a log scale $\log{R_t}$, respectively. Among these, UCB-SBM achieves the smallest regret. We observe that UCB-SBM consistently demonstrates significantly smaller regret, showcasing notable improvements and highlighting the advantages of considering homogeneity within clusters and relaxing the assumption on edge probability. More precisely, the improvements in average regret $R_t$ compared to DrFed-UCB, GoSInE, Gossip\_UCB, and Dist\_UCB are \textbf{68.79\%}, \textbf{79.80\%}, \textbf{94.14\%}, and \textbf{94.75\%}, respectively. The comparison with DrFed-UCB emphasizes the heavy performance degradation of DrFed-UCB when neglecting the cluster structure. Meanwhile, our algorithm exhibits small variances (with GoSInE showing the largest), indicating stability even with time-varying graphs. Likewise, we draw similar conclusions from the real-world dataset, as presented in \textcolor{black}{Fig. \ref{fig:r}, wherein our improvement is even more significant.} 



\paragraph{Regret Dependency Results} Moreover, we demonstrate how the actual regret of UCB-SBM depends on several parameters associated with the problem setting, including the number of agents $M$, the number of clusters $C$, and the parameters $p = p(m,m)$ and $q = p(m,n)$ (which also affect $|p(m,m) - p(m,n)|$). While other parameters, such as $K$ and the difference in mean values $h$, are important, they have been studied in \citep{xu2023decentralized}. The aforementioned parameters are unique to our problem setting and necessitate examining how the actual regret changes with them beyond the theoretical upper bounds. The results of varying $M$, $C$, $p$, and $q$ are shown in Figs.~\ref{fig:M}, \ref{fig:C}, \ref{fig:p}, and~\ref{fig:q}, respectively. First, across all possible settings, our algorithm consistently achieves the smallest regret, and its performance does not change dramatically with different parameters, demonstrating both the effectiveness and robustness of the algorithm. In Fig. \ref{fig:M}, we observe that, except for UCB-SBM, all other algorithms exhibit an increasing trend as $M$ increases, while UCB-SBM remains steady, implying that UCB-SBM scales much better with $M$ and is thus more practical. In Fig. \ref{fig:C}, UCB-SBM's regret increases with $C$, consistent with the theoretical bound's dependence on $C$. Notably, when $C=10=M$, i.e., the fully heterogeneous case, UCB-SBM and DrFed-UCB achieve the same regret, validating the consistency of the results. The regret of UCB-SBM increases and then decreases with $p$ and $q$, as shown in Fig. \ref{fig:p} and Fig. \ref{fig:q}. This is possibly because lower bounds on $p$ and $q$ are necessary for the theoretical regret bounds to hold, as this pattern holds true for DrFed-UCB. Establishing an explicit dependency of $R_T$ on $p$ and $q$ is left for future exploration. 

\section{Conclusion and Future Work}\label{sec:conclusion}

In this paper, we novelly study the multi-agent multi-armed bandit (MA-MAB) problem, where agents are distributed on random graphs induced by a cluster structure, namely stochastic block models, and their reward mean values also depend on the cluster structure. This introduces both homogeneity and heterogeneity in edge probabilities and rewards, within and across clusters. The cluster assignment can be either known or unknown. This is the first framework that unifies the existing formulations of both homogeneous MA-MAB (1 cluster) and heterogeneous MA-MAB ($M$ clusters), smoothly capturing more general cases in between and reflecting different degrees of heterogeneity. Algorithmically, we propose a new method where agents within one cluster aggregate their information to achieve sample complexity reduction, communicate with other clusters to collect heterogeneous information, integrate this information to estimate the globally optimal arm, and pull arms based on newly designed UCB indices. When the cluster assignment is unknown, the agents leverage a cluster detection algorithm to estimate the cluster assignment, and our algorithm operates in a plug-and-play fashion, demonstrating its generalization ability. This approach leads to significantly improved results under less stringent assumptions. Theoretically, we show that the regret bound has a constant reduction of $\frac{C}{M}$, uncovering how the regret bound changes with the degree of heterogeneity and improving upon existing work \citep{xu2023decentralized}, beyond solely $T$. Moreover, the assumption on the minimal edge probability of the random graph is significantly relaxed, scaling better with $T$. Notably, while the minimal edge probability in existing work can approach $1$ as $T \to \infty$, our approach bounds it by much smaller values (e.g., $\frac{e}{e-1}\frac{C^2}{M^2}$ and $\frac{e}{e-1}\frac{C^2}{M^2} \frac{(C-l-1)!}{(C-2)!}$). Numerically, we demonstrate the superior performance of the proposed algorithm by comparing it with benchmarks. Consistently, our algorithm shows significant regret improvement, with the relative improvement percentage being at least $68\%$. We also examine how actual regret changes with parameters unique to the framework, consistent with the theoretical findings.

Moving forward, we identify several promising directions for future work. First, while we assume a balanced cluster structure or use the minimal cluster size to run the algorithm in unbalanced cases, it would be interesting to explore how to fully leverage unbalanced cluster structures instead of relying solely on the minimal cluster size. Additionally, while we assume that the reward distribution is sub-Gaussian (and can be extended to sub-exponential cases), more general heavy-tailed distributions present another direction for future research. Lastly, exploring other types of cluster structures, beyond stochastic block models, and characterizing how regret changes with these structures would be of great interest to both theorists and practitioners.





\newpage

\bibliography{software}

\begin{thebibliography}{67}
\providecommand{\natexlab}[1]{#1}
\providecommand{\url}[1]{\texttt{#1}}
\expandafter\ifx\csname urlstyle\endcsname\relax
  \providecommand{\doi}[1]{doi: #1}\else
  \providecommand{\doi}{doi: \begingroup \urlstyle{rm}\Url}\fi

\bibitem[Abbe(2018)]{abbe2018community}
E.~Abbe.
\newblock Community detection and stochastic block models: recent developments.
\newblock \emph{Journal of Machine Learning Research}, 18\penalty0 (177):\penalty0 1--86, 2018.

\bibitem[Abbe et~al.(2015)Abbe, Bandeira, and Hall]{abbe2015exact}
E.~Abbe, A.~S. Bandeira, and G.~Hall.
\newblock Exact recovery in the stochastic block model.
\newblock \emph{IEEE Transactions on information theory}, 62\penalty0 (1):\penalty0 471--487, 2015.

\bibitem[Abbe et~al.(2022)Abbe, Fan, and Wang]{abbe2022}
E.~Abbe, J.~Fan, and K.~Wang.
\newblock An lp theory of pca and spectral clustering.
\newblock \emph{The Annals of Statistics}, 50\penalty0 (4):\penalty0 2359--2385, 2022.

\bibitem[Agarwal et~al.(2022)Agarwal, Aggarwal, and Azizzadenesheli]{agarwal2022multi}
M.~Agarwal, V.~Aggarwal, and K.~Azizzadenesheli.
\newblock Multi-agent multi-armed bandits with limited communication.
\newblock \emph{The Journal of Machine Learning Research}, 23\penalty0 (1):\penalty0 9529--9552, 2022.

\bibitem[Airoldi et~al.(2006)Airoldi, Blei, Fienberg, Xing, and Jaakkola]{airoldi2006mixed}
E.~M. Airoldi, D.~M. Blei, S.~E. Fienberg, E.~P. Xing, and T.~Jaakkola.
\newblock Mixed membership stochastic block models for relational data with application to protein-protein interactions.
\newblock In \emph{Proceedings of the international biometrics society annual meeting}, volume~15, page~1, 2006.

\bibitem[Auer et~al.(2002{\natexlab{a}})Auer, Cesa-Bianchi, and Fischer]{auer2002finite}
P.~Auer, N.~Cesa-Bianchi, and P.~Fischer.
\newblock Finite-time analysis of the multiarmed bandit problem.
\newblock \emph{Machine Learning}, 47\penalty0 (2-3):\penalty0 235--256, 2002{\natexlab{a}}.

\bibitem[Auer et~al.(2002{\natexlab{b}})Auer, Cesa-Bianchi, Freund, and Schapire]{auer2002nonstochastic}
P.~Auer, N.~Cesa-Bianchi, Y.~Freund, and R.~E. Schapire.
\newblock The nonstochastic multiarmed bandit problem.
\newblock \emph{SIAM Journal on Computing}, 32\penalty0 (1):\penalty0 48--77, 2002{\natexlab{b}}.

\bibitem[Ban et~al.(2024)Ban, Qi, Wei, Liu, and He]{ban2024meta}
Y.~Ban, Y.~Qi, T.~Wei, L.~Liu, and J.~He.
\newblock Meta clustering of neural bandits.
\newblock In \emph{Proceedings of the 30th ACM SIGKDD Conference on Knowledge Discovery and Data Mining}, pages 95--106, 2024.

\bibitem[Battiston and Catanzaro(2004)]{battiston2004statistical}
S.~Battiston and M.~Catanzaro.
\newblock Statistical properties of corporate board and director networks.
\newblock \emph{The European Physical Journal B}, 38:\penalty0 345--352, 2004.

\bibitem[Bistritz and Leshem(2018)]{bistritz2018distributed}
I.~Bistritz and A.~Leshem.
\newblock Distributed multi-player bandits-a game of thrones approach.
\newblock \emph{Advances in Neural Information Processing Systems}, 31, 2018.

\bibitem[Blaser et~al.(2024)Blaser, Li, and Wang]{blaser2024federated}
E.~Blaser, C.~Li, and H.~Wang.
\newblock Federated linear contextual bandits with heterogeneous clients.
\newblock In \emph{International Conference on Artificial Intelligence and Statistics}, pages 631--639. PMLR, 2024.

\bibitem[Braun et~al.(2022)Braun, Tyagi, and Biernacki]{braun2022iterative}
G.~Braun, H.~Tyagi, and C.~Biernacki.
\newblock An iterative clustering algorithm for the contextual stochastic block model with optimality guarantees.
\newblock In \emph{International Conference on Machine Learning}, pages 2257--2291. PMLR, 2022.

\bibitem[Chawla et~al.(2020)Chawla, Sankararaman, Ganesh, and Shakkottai]{chawla2020gossiping}
R.~Chawla, A.~Sankararaman, A.~Ganesh, and S.~Shakkottai.
\newblock The gossiping insert-eliminate algorithm for multi-agent bandits.
\newblock In \emph{International conference on artificial intelligence and statistics}, pages 3471--3481. PMLR, 2020.

\bibitem[Chen et~al.(2018)Chen, Xu, Ren, and Zhou]{chen2018spatio}
L.~Chen, J.~Xu, S.~Ren, and P.~Zhou.
\newblock Spatio--temporal edge service placement: A bandit learning approach.
\newblock \emph{IEEE Transactions on Wireless Communications}, 17\penalty0 (12):\penalty0 8388--8401, 2018.

\bibitem[Cugmas et~al.(2020)Cugmas, Mali, and {\v{Z}}iberna]{cugmas2020scientific}
M.~Cugmas, F.~Mali, and A.~{\v{Z}}iberna.
\newblock Scientific collaboration of researchers and organizations: a two-level blockmodeling approach.
\newblock \emph{Scientometrics}, 125\penalty0 (3):\penalty0 2471--2489, 2020.

\bibitem[Dai et~al.(2024)Dai, Zhang, Yang, Xu, Liu, and Lui]{dai2024axiomvision}
X.~Dai, Z.~Zhang, P.~Yang, Y.~Xu, X.~Liu, and J.~C. Lui.
\newblock Axiomvision: Accuracy-guaranteed adaptive visual model selection for perspective-aware video analytics.
\newblock In \emph{Proceedings of the 32nd ACM International Conference on Multimedia}, pages 7229--7238, 2024.

\bibitem[Delarue(2017)]{delarue2017mean}
F.~Delarue.
\newblock Mean field games: A toy model on an {E}rd{\"o}s-{R}enyi graph.
\newblock \emph{ESAIM: Proceedings and Surveys}, 60:\penalty0 1--26, 2017.

\bibitem[Deshpande et~al.(2018)Deshpande, Sen, Montanari, and Mossel]{deshpande2018contextual}
Y.~Deshpande, S.~Sen, A.~Montanari, and E.~Mossel.
\newblock Contextual stochastic block models.
\newblock \emph{Advances in Neural Information Processing Systems}, 31, 2018.

\bibitem[Dreveton et~al.(2024)Dreveton, Fernandes, and Figueiredo]{dreveton2024exact}
M.~Dreveton, F.~Fernandes, and D.~Figueiredo.
\newblock Exact recovery and bregman hard clustering of node-attributed stochastic block model.
\newblock \emph{Advances in Neural Information Processing Systems}, 36, 2024.

\bibitem[Dubey and Pentland(2730--2739, 2020)]{dubey2020cooperative}
A.~Dubey and A.~Pentland.
\newblock Cooperative multi-agent bandits with heavy tails.
\newblock In \emph{International Conference on Machine Learning}, 2730--2739, 2020.

\bibitem[Duchemin(2023)]{duchemin2023reliable}
Q.~Duchemin.
\newblock Reliable prediction in the markov stochastic block model.
\newblock \emph{ESAIM: Probability and Statistics}, 27:\penalty0 80--135, 2023.

\bibitem[El~Haj(2024)]{el2024community}
A.~El~Haj.
\newblock Community detection in multiplex continous weighted nodes networks using an extension of the stochastic block model.
\newblock \emph{Computing}, 106\penalty0 (11):\penalty0 3711--3725, 2024.

\bibitem[ERDdS and R\&wi(1959)]{erdds1959random}
P.~ERDdS and A.~R\&wi.
\newblock On random graphs i.
\newblock \emph{Publ. math. debrecen}, 6\penalty0 (290-297):\penalty0 18, 1959.

\bibitem[Gentile et~al.(2014)Gentile, Li, and Zappella]{gentile2014online}
C.~Gentile, S.~Li, and G.~Zappella.
\newblock Online clustering of bandits.
\newblock In \emph{International conference on machine learning}, pages 757--765. PMLR, 2014.

\bibitem[Gentile et~al.(2017)Gentile, Li, Kar, Karatzoglou, Zappella, and Etrue]{gentile2017context}
C.~Gentile, S.~Li, P.~Kar, A.~Karatzoglou, G.~Zappella, and E.~Etrue.
\newblock On context-dependent clustering of bandits.
\newblock In \emph{International Conference on machine learning}, pages 1253--1262. PMLR, 2017.

\bibitem[Holland et~al.(1983)Holland, Laskey, and Leinhardt]{holland1983stochastic}
P.~W. Holland, K.~B. Laskey, and S.~Leinhardt.
\newblock Stochastic blockmodels: First steps.
\newblock \emph{Social networks}, 5\penalty0 (2):\penalty0 109--137, 1983.

\bibitem[Huang et~al.(2021)Huang, Wu, Yang, and Shen]{huang2021federated}
R.~Huang, W.~Wu, J.~Yang, and C.~Shen.
\newblock Federated linear contextual bandits.
\newblock \emph{Advances in Neural Information Processing Systems}, 34:\penalty0 27057--27068, 2021.

\bibitem[Jiang and Cheng(1--33, 2023)]{jiang2023multi}
F.~Jiang and H.~Cheng.
\newblock Multi-agent bandit with agent-dependent expected rewards.
\newblock \emph{Swarm Intelligence}, 1--33, 2023.

\bibitem[Korda et~al.(2016)Korda, Szorenyi, and Li]{korda2016distributed}
N.~Korda, B.~Szorenyi, and S.~Li.
\newblock Distributed clustering of linear bandits in peer to peer networks.
\newblock In \emph{International conference on machine learning}, pages 1301--1309. PMLR, 2016.

\bibitem[Landgren et~al.(2016{\natexlab{a}})Landgren, Srivastava, and Leonard]{landgren2016distributed}
P.~Landgren, V.~Srivastava, and N.~E. Leonard.
\newblock On distributed cooperative decision-making in multiarmed bandits.
\newblock In \emph{2016 European Control Conference}. 243--248. IEEE, 2016{\natexlab{a}}.

\bibitem[Landgren et~al.(2016{\natexlab{b}})Landgren, Srivastava, and Leonard]{landgren2016distributed_2}
P.~Landgren, V.~Srivastava, and N.~E. Leonard.
\newblock Distributed cooperative decision-making in multiarmed bandits: Frequentist and {B}ayesian algorithms.
\newblock In \emph{2016 IEEE 55th Conference on Decision and Control}. 167--172. IEEE, 2016{\natexlab{b}}.

\bibitem[Landgren et~al.(2021)Landgren, Srivastava, and Leonard]{landgren2021distributed}
P.~Landgren, V.~Srivastava, and N.~E. Leonard.
\newblock Distributed cooperative decision making in multi-agent multi-armed bandits.
\newblock \emph{Automatica}, 125:\penalty0 109445, 2021.

\bibitem[Li et~al.(2023)Li, Zhao, Yu, Wu, and Li]{li2023clustering}
Q.~Li, C.~Zhao, T.~Yu, J.~Wu, and S.~Li.
\newblock Clustering of conversational bandits with posterior sampling for user preference learning and elicitation.
\newblock \emph{User Modeling and User-Adapted Interaction}, 33\penalty0 (5):\penalty0 1065--1112, 2023.

\bibitem[Li and Zhang(2018)]{li2018online}
S.~Li and S.~Zhang.
\newblock Online clustering of contextual cascading bandits.
\newblock In \emph{Proceedings of the AAAI Conference on Artificial Intelligence}, volume~32, 2018.

\bibitem[Li et~al.(2016{\natexlab{a}})Li, Gentile, Karatzoglou, and Zappella]{li2016online}
S.~Li, C.~Gentile, A.~Karatzoglou, and G.~Zappella.
\newblock Online context-dependent clustering in recommendations based on exploration-exploitation algorithms.
\newblock \emph{ArXiv, abs/1608.03544}, 2016{\natexlab{a}}.

\bibitem[Li et~al.(2016{\natexlab{b}})Li, Karatzoglou, and Gentile]{li2016collaborative}
S.~Li, A.~Karatzoglou, and C.~Gentile.
\newblock Collaborative filtering bandits.
\newblock In \emph{Proceedings of the 39th International ACM SIGIR conference on Research and Development in Information Retrieval}, pages 539--548, 2016{\natexlab{b}}.

\bibitem[Li et~al.(2019)Li, Chen, and Leung]{li2019improved}
S.~Li, W.~Chen, and K.-S. Leung.
\newblock Improved algorithm on online clustering of bandits.
\newblock \emph{arXiv preprint arXiv:1902.09162}, 2019.

\bibitem[Li and Song(2022)]{li2022privacy}
T.~Li and L.~Song.
\newblock Privacy-preserving communication-efficient federated multi-armed bandits.
\newblock \emph{IEEE Journal on Selected Areas in Communications}, 40\penalty0 (3):\penalty0 773--787, 2022.

\bibitem[Li et~al.(2025)Li, Liu, Dai, and Lui]{li2025demystifying}
Z.~Li, M.~Liu, X.~Dai, and J.~Lui.
\newblock Demystifying online clustering of bandits: Enhanced exploration under stochastic and smoothed adversarial contexts.
\newblock \emph{arXiv preprint arXiv:2501.00891}, 2025.

\bibitem[Lima et~al.(2008)Lima, Sousa, and Sumuor]{lima2008majority}
F.~W. Lima, A.~O. Sousa, and M.~Sumuor.
\newblock Majority-vote on directed {E}rd{\H{o}}s--{R}{\'e}nyi random graphs.
\newblock \emph{Physica A: Statistical Mechanics and its Applications}, 387\penalty0 (14):\penalty0 3503--3510, 2008.

\bibitem[Liu et~al.(2022)Liu, Zhao, Yu, Li, and Lui]{liu2022federated}
X.~Liu, H.~Zhao, T.~Yu, S.~Li, and J.~C. Lui.
\newblock Federated online clustering of bandits.
\newblock In \emph{Uncertainty in Artificial Intelligence}, pages 1221--1231. PMLR, 2022.

\bibitem[Mart{\'\i}nez-Rubio et~al.(2019)Mart{\'\i}nez-Rubio, Kanade, and Rebeschini]{martinez2019decentralized}
D.~Mart{\'\i}nez-Rubio, V.~Kanade, and P.~Rebeschini.
\newblock Decentralized cooperative stochastic bandits.
\newblock \emph{Advances in Neural Information Processing Systems}, 32, 2019.

\bibitem[Mitra et~al.(2021)Mitra, Hassani, and Pappas]{mitra2021exploiting}
A.~Mitra, H.~Hassani, and G.~Pappas.
\newblock Exploiting heterogeneity in robust federated best-arm identification.
\newblock \emph{arXiv preprint arXiv:2109.05700}, 2021.

\bibitem[Nguyen and Lauw(2014)]{nguyen2014dynamic}
T.~T. Nguyen and H.~W. Lauw.
\newblock Dynamic clustering of contextual multi-armed bandits.
\newblock In \emph{Proceedings of the 23rd ACM international conference on conference on information and knowledge management}, pages 1959--1962, 2014.

\bibitem[Pal et~al.(2024)Pal, Suggala, Shanmugam, and Jain]{pal2024blocked}
S.~Pal, A.~Suggala, K.~Shanmugam, and P.~Jain.
\newblock Blocked collaborative bandits: online collaborative filtering with per-item budget constraints.
\newblock \emph{Advances in Neural Information Processing Systems}, 36, 2024.

\bibitem[R{\'e}da et~al.(2022)R{\'e}da, Vakili, and Kaufmann]{reda2022near}
C.~R{\'e}da, S.~Vakili, and E.~Kaufmann.
\newblock Near-optimal collaborative learning in bandits.
\newblock In \emph{2022-36th Conference on Neural Information Processing System}, 2022.

\bibitem[Roman et~al.(2013)Roman, Zhou, and Lopez]{roman2013features}
R.~Roman, J.~Zhou, and J.~Lopez.
\newblock On the features and challenges of security and privacy in distributed internet of things.
\newblock \emph{Computer networks}, 57\penalty0 (10):\penalty0 2266--2279, 2013.

\bibitem[Sankararaman et~al.(2019)Sankararaman, Ganesh, and Shakkottai]{sankararaman2019social}
A.~Sankararaman, A.~Ganesh, and S.~Shakkottai.
\newblock Social learning in multi agent multi armed bandits.
\newblock \emph{Proceedings of the ACM on Measurement and Analysis of Computing Systems}, 3\penalty0 (3):\penalty0 1--35, 2019.

\bibitem[Stanley et~al.(2019)Stanley, Bonacci, Kwitt, Niethammer, and Mucha]{stanley2019stochastic}
N.~Stanley, T.~Bonacci, R.~Kwitt, M.~Niethammer, and P.~J. Mucha.
\newblock Stochastic block models with multiple continuous attributes.
\newblock \emph{Applied Network Science}, 4:\penalty0 1--22, 2019.

\bibitem[Wang et~al.(2020{\natexlab{a}})Wang, Proutiere, Ariu, Jedra, and Russo]{wang2020optimal}
P.-A. Wang, A.~Proutiere, K.~Ariu, Y.~Jedra, and A.~Russo.
\newblock Optimal algorithms for multiplayer multi-armed bandits.
\newblock In \emph{International Conference on Artificial Intelligence and Statistics}, pages 4120--4129. PMLR, 2020{\natexlab{a}}.

\bibitem[Wang et~al.(2020{\natexlab{b}})Wang, Proutiere, Ariu, Jedra, and Russo]{wangp2020optimal}
P.-A. Wang, A.~Proutiere, K.~Ariu, Y.~Jedra, and A.~Russo.
\newblock Optimal algorithms for multiplayer multi-armed bandits.
\newblock In \emph{International Conference on Artificial Intelligence and Statistics}, pages 4120--4129. PMLR, 2020{\natexlab{b}}.

\bibitem[Wang et~al.(2019)Wang, Zeng, Zhou, Li, Iyengar, Shwartz, and Grabarnik]{8440090}
Q.~Wang, C.~Zeng, W.~Zhou, T.~Li, S.~S. Iyengar, L.~Shwartz, and G.~Y. Grabarnik.
\newblock Online interactive collaborative filtering using multi-armed bandit with dependent arms.
\newblock \emph{IEEE Transactions on Knowledge and Data Engineering}, 31\penalty0 (8):\penalty0 1569--1580, 2019.
\newblock \doi{10.1109/TKDE.2018.2866041}.

\bibitem[Wang et~al.(2022)Wang, Yang, Chen, Liu, Hajiesmaili, Towsley, and Lui]{wangx2022achieving}
X.~Wang, L.~Yang, Y.-Z.~J. Chen, X.~Liu, M.~Hajiesmaili, D.~Towsley, and J.~C. Lui.
\newblock Achieving near-optimal individual regret \& low communications in multi-agent bandits.
\newblock In \emph{The Eleventh International Conference on Learning Representations}, 2022.

\bibitem[Wang et~al.(2023)Wang, Yang, Chen, Liu, Hajiesmaili, Towsley, and Lui]{wang2023achieve}
X.~Wang, L.~Yang, Y.-Z.~J. Chen, X.~Liu, M.~Hajiesmaili, D.~Towsley, and J.~C. Lui.
\newblock Achieve near-optimal individual regret \&amp; low communications in multi-agent bandits.
\newblock In \emph{International Conference on Learning Representations}, 2023.

\bibitem[Wang et~al.(1531--1539, 2021)Wang, Zhang, Singh, Riek, and Chaudhuri]{wang2021multitask}
Z.~Wang, C.~Zhang, M.~K. Singh, L.~Riek, and K.~Chaudhuri.
\newblock Multitask bandit learning through heterogeneous feedback aggregation.
\newblock In \emph{International Conference on Artificial Intelligence and Statistics}, 1531--1539, 2021.

\bibitem[Wu et~al.(2021)Wu, Zhao, Yu, Li, and Li]{wu2021clustering}
J.~Wu, C.~Zhao, T.~Yu, J.~Li, and S.~Li.
\newblock Clustering of conversational bandits for user preference learning and elicitation.
\newblock In \emph{Proceedings of the 30th ACM International Conference on Information \& Knowledge Management}, pages 2129--2139, 2021.

\bibitem[Xu and Klabjan(2023{\natexlab{a}})]{xu2023}
M.~Xu and D.~Klabjan.
\newblock Regret lower bounds in multi-agent multi-armed bandit.
\newblock \emph{arXiv preprint arXiv:2308.08046}, 2023{\natexlab{a}}.

\bibitem[Xu and Klabjan(2023{\natexlab{b}})]{xu2023decentralized}
M.~Xu and D.~Klabjan.
\newblock Decentralized randomly distributed multi-agent multi-armed bandit with heterogeneous rewards.
\newblock \emph{Advances on Neural Information Processing Systems}, 2023{\natexlab{b}}.

\bibitem[Yan et~al.(2022)Yan, Xiao, Chen, and Tajer]{yan2022federated}
Z.~Yan, Q.~Xiao, T.~Chen, and A.~Tajer.
\newblock Federated multi-armed bandit via uncoordinated exploration.
\newblock In \emph{IEEE International Conference on Acoustics, Speech and Signal Processing}. 5248--5252. IEEE, 2022.

\bibitem[Yang et~al.(2024)Yang, Liu, Wang, Xie, Lui, Lian, and Chen]{yang2024federated}
H.~Yang, X.~Liu, Z.~Wang, H.~Xie, J.~C. Lui, D.~Lian, and E.~Chen.
\newblock Federated contextual cascading bandits with asynchronous communication and heterogeneous users.
\newblock In \emph{Proceedings of the AAAI Conference on Artificial Intelligence}, volume~38, pages 20596--20603, 2024.

\bibitem[Yang et~al.(2018)Yang, Zhang, Zhang, Yu, Zhang, and Shen]{yang2018content}
P.~Yang, N.~Zhang, S.~Zhang, L.~Yu, J.~Zhang, and X.~Shen.
\newblock Content popularity prediction towards location-aware mobile edge caching.
\newblock \emph{IEEE Transactions on Multimedia}, 21\penalty0 (4):\penalty0 915--929, 2018.

\bibitem[Zachary(1977)]{zachary1977information}
W.~W. Zachary.
\newblock An information flow model for conflict and fission in small groups.
\newblock \emph{Journal of anthropological research}, 33\penalty0 (4):\penalty0 452--473, 1977.

\bibitem[Zhao et~al.(2020)Zhao, Zhao, Jiang, Tan, Niyato, Li, Lyu, and Liu]{zhao2020privacy}
Y.~Zhao, J.~Zhao, L.~Jiang, R.~Tan, D.~Niyato, Z.~Li, L.~Lyu, and Y.~Liu.
\newblock Privacy-preserving blockchain-based federated learning for iot devices.
\newblock \emph{IEEE Internet of Things Journal}, 8\penalty0 (3):\penalty0 1817--1829, 2020.

\bibitem[Zhu and Liu(2023)]{zhu2023distributed}
J.~Zhu and J.~Liu.
\newblock Distributed multi-armed bandits.
\newblock \emph{IEEE Transactions on Automatic Control}, 2023.

\bibitem[Zhu et~al.(2020)Zhu, Sandhu, and Liu]{zhu2020distributed}
J.~Zhu, R.~Sandhu, and J.~Liu.
\newblock A distributed algorithm for sequential decision making in multi-armed bandit with homogeneous rewards.
\newblock In \emph{59th IEEE Conference on Decision and Control}. 3078--3083. IEEE, 2020.

\bibitem[Zhu et~al.(2021)Zhu, Mulle, Smith, and Liu]{zhu2021decentralized}
J.~Zhu, E.~Mulle, C.~S. Smith, and J.~Liu.
\newblock Decentralized multi-armed bandit can outperform classic upper confidence bound.
\newblock \emph{arXiv preprint arXiv:2111.10933}, 2021.

\bibitem[Zhu et~al.(3--4, 2021)Zhu, Zhu, Liu, and Liu]{zhu2021federated}
Z.~Zhu, J.~Zhu, J.~Liu, and Y.~Liu.
\newblock Federated bandit: A gossiping approach.
\newblock In \emph{Abstract Proceedings of the 2021 ACM SIGMETRICS/International Conference on Measurement and Modeling of Computer Systems}, 3--4, 2021.

\end{thebibliography}

\newpage

\appendix

\section{Pseudo Code of Algorithms}\label{apx:IR-LSS}

\subsection{Burn-in Peirod}
The full algorithm of the burn-in period described in Section \ref{sec:heter} is shown as follows.

\begin{algorithm2e}[H]
  \SetAlgoLined
  \caption{UCB-SBM: Burn-in period \citep{xu2023decentralized}}\label{alg:burn-in}
  Initialization: The length of the burn-in period is $L$;  the estimates are initialized as $\bar{\mu}_i^m(0) = 0$, $n_{m,i}(0) = 0$, $\hat{\bar{\mu}}_{i,j}^m(0) = 0 $, and $P_0(m,j) = $ for any arm $i$ and agents $m, j$\;
  \For{$1 < t \leq L$}{
  \For{each agent $m$}{
  Sample arm $a^m_t = (t \mod K)$\;
  Receive rewards $r_{a^m_t}^m(t)$ and update $n_{m,i}(t) = n_{m,i}(t-1) + \mathds{1}_{a_m^t = i}$\;
  Update the local estimates for any arm $i$: $\bar{\mu}_i^m(t) = \frac{n_{m,i}(t-1)\bar{\mu}_i^m(t-1) + r_{a^m_t}^m(t) \cdot 1_{a^m_t = i}}{n_{m,i}(t-1) + 1_{a^m_t = i}}$\;
  Update the maintained matrix $P_t(m,j) = \nicefrac{((t-1)P_{t-1}(m,j)+X_{m,j}^t)}{t}$ for each $j \in V$\;
  Send $\{\bar{\mu}_i^m(t)\}_{i =1}^{i=K}$ to all agents in $\mathcal{N}_m(t)$\;
  Receive $\{\bar{\mu}_i^j(t)\}_{i=1}^{i = K}$ from all agents $j \in \mathcal{N}_m(t)$ and store them as $\hat{\bar{\mu}}_{i,j}^m(t)$.
  }
  }
  \For{each agent $m$ and arm $i$}{For agent $1 \leq j \leq M$, let
  $t_{m,j} = \max_{s\geq \tau_1}\{(m,j) \in E_s\}$ or $0$ if such $s$ does not exist \\
  $\Tilde{\mu}_i^m(L+1) = \sum_{j=1}^MP^{\prime}_{m,j}(L) \hat{\bar{\mu}}_{i,j}^m(t_{m,j})$ where $P^{\prime}_{m,j}(L) = \frac{1}{M} \text{ if $P_L(m,j) > 0$ and } 0 \text{ o.w.}$ \;
  }
\end{algorithm2e}

\subsection{Clustering Algorithm}

The full algorithm of the cluster detection described in Section \ref{sec:heter-unknown} is presented below. 

\begin{algorithm2e}[H]
  \SetAlgoLined
  \caption{Iterative Refinement Clustering (IR-LSS)~\cite{braun2022iterative}}\label{alg:IR-LSS}
  \textbf{Input:}  adjacency matrix $A \in \{0,1\}^{M \times M}$, node covariates $V \in \mathbb{R}^{M \times K}$, variance $\sigma^2$, initial cluster assignment $Z^{(0)} \in \{0,1\}^{M \times C}$ and iterations $T > 1$;\par
  \textbf{Output:} a cluster assignment $Z \in \{0,1\}^{M \times C}$ \par
  \For{$t = 0 , 2, \ldots,T-1$}{
  Estimate the model parameters from the current cluster assignment $Z^{(t)}$: $s^{(t)}_n = |c^{(t)}_n|$, $W^{(t)} = Z^{(t)} (D^{(t)})^{-1}$ where $D^{(t)} = \mathrm{diag}(s^{(t)}_1,\dots,s^{(t)}_C)$, $\Pi^{(t)} = (W^{(t)})^{\top} A W^{(t)}$, and $\mu^{(t)}_n = W^{(t)}_n V$; \par
  Refine clustering by assigning each node $i$ to the cluster
  $$z^{(t+1)}_i = \arg\min_{n \in [C]} \|(A_i W^{(t)} + \Pi^{(t)}_n) \sqrt{\Sigma^{(t)}_n}\|_2^2 + \|\mu^{(t)}_n - V_i\|_2^2 /\sigma^2,$$
  where $\Sigma^{(t)}_n = \frac{M}{C(p^{(t)}-q^{(t)})}\log(\frac{p^{(t)}(1-q^{(t)})}{q^{(t)}(1-p^{(t)})})I_C$ with $p^{(t)} = \sum_{n \in [C]} \Pi^{(t)}_{nn}/C$ and $q^{(t)} = \sum_{n \neq m} \Pi^{(t)}_{mn}/(C(C-1))$;\par
  Form the cluster assignment matrix $Z^{(t+1)}$;
  }
\end{algorithm2e}  

\section{Experiment Details}\label{app:exp}

We present the experimental details in this section. We introduce the datasets, including both the synthetic data in Section \ref{app:exp-data} and the real-world dataset, in Section \ref{app:exp-res}. 

\subsection{Synthetic Datasets}\label{app:exp-data}

We now describe the datasets used in the experiments and provide additional details that can be leveraged to reproduce the experiments discussed in Section \ref{sec:exp}.

\textbf{Synthetic Dataset.} We first examine the performance of the algorithms on a synthetic dataset. The data generation process is as follows: for the results shown in Fig. \ref{fig:syn}, we select the number of agents as $M = 10$, the number of clusters as $C = 10$, the inter-cluster probability as $p = 0.5$, and the intra-cluster probability as $q = 0.5$. The length of the game is $5 \cdot 10^5$. 

For Figs. \ref{fig:M}, \ref{fig:C}, \ref{fig:p}, and \ref{fig:q}, we vary $M$, $C$, $p$, and $q$, respectively, while keeping the other parameters fixed.

\subsection{Real-world Datasets}\label{app:exp-res}
\textbf{Real-world Dataset.} 
Besides the synthetic dataset, we evaluate our algorithm and the benchmark algorithms on a real-world dataset, as reported in Fig. \ref{fig:r}, using the well-known Zachary's Karate Club dataset, a widely used benchmark for graph clustering algorithms. This dataset represents the social interactions among 34 members of a university karate club, as observed and documented by Wayne W. Zachary in the 1970s~\cite{zachary1977information}. The dataset consists of 34 nodes (agents), each representing a club member, and 78 unweighted, undirected edges that denote friendships between members.



\section{Additional Theoretical Results}\label{app:Theory}

In this section, we present the corollaries of Theorem \ref{thm:2}, \ref{thm:3}, \ref{thm:4}, \ref{thm:5} when the cluster structure is unknown. The proof of them follow the proof of Corollary \ref{cor:6} as in the main body, and as a result, we omit the proof steps here. We use $L + L_1 = O(L+M)$ to denote the length of the burn-in period for the unknown cluster structure setting which contains $L$ steps for Algorithm~\ref{alg:burn-in} and $L_1 = O(M)$ steps to propagate information to the entire graph for cluster recovery. We use $\tau = 1/\mathrm{poly}(M)$ to denote the failure probability of cluster recovery.

\begin{corollary}[Extension of Theorem \ref{thm:2}]
  Let us assume that $\min_{m,n}p(m,n) \geq (\frac{1}{2} + \frac{1}{2}\sqrt{1 - (\frac{\delta}{MT})^{\frac{2}{M-1}}})$. For every $0 < \epsilon < 1$ and $0 < \delta
    < \frac{1}{2} + \frac{1}{4}\sqrt{1 - (\frac{\epsilon}{MT})^{\frac{2}{M-1}}}$, the regret of Algorithm 2 with Rule 1 is upper bounded by with probability $1-7\epsilon - \tau$,
  \begin{align*}
    E[R_T | A_{\epsilon,\delta, \tau}^{\prime}] & \leq L + \textcolor{black}{L_1} + \sum_{i \neq i^*}\Delta_i(\max{\{[\frac{4C_1\log T}{\Delta_i^2}], 2(K^2+MK) \}} +  \frac{2\pi^2}{3P(A_{\epsilon,\delta, \tau}^{\prime})} + K^2 + (2M-1)K)
  \end{align*}
\end{corollary}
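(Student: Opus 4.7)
The plan is to mirror the argument used for Corollary~\ref{cor:6} but invoke Theorem~\ref{thm:2} instead of Theorem~\ref{thm:6}, since Rule 1 is the relevant update rule here. First I would split the time horizon into three epochs: the reward burn-in (Algorithm~\ref{alg:burn-in}) of length $L$, the cluster-detection phase of length $L_1 = O(M)$ used to propagate local reward and edge information so that every agent can run Algorithm~\ref{alg:IR-LSS} locally, and the learning phase on $[L + L_1 + 1, T]$ during which Algorithm~\ref{alg:dr} with Rule 1 is executed. During the first two epochs each agent can contribute at most a unit of instantaneous regret per round, so the contribution to $\mathbb{E}[R_T]$ from these epochs is bounded deterministically by $L + L_1$.

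Next I would condition on the intersection of three events: (i) the event $A_{\epsilon,\delta}$ from Theorem~\ref{thm:2}, which is the high-probability event for the edge/graph concentration and consensus properties used by Algorithm~\ref{alg:dr} with Rule 1; and (ii) the exact-recovery event for the cluster assignment, $\{c_i = c_i^\prime,\ \forall 1 \le i \le M\}$. The first holds with probability at least $1 - 7\epsilon$ under the assumption $\min_{m,n} p(m,n) \ge \tfrac{1}{2} + \tfrac{1}{2}\sqrt{1 - (\delta/MT)^{2/(M-1)}}$, exactly as in Theorem~\ref{thm:2}. The second holds with probability at least $1 - \tau = 1 - 1/\mathrm{poly}(M)$ by Lemma~\ref{lem:recover}, since the reward averages produced by the $L = O(\tfrac{C}{M} K \log T)$ burn-in samples satisfy the signal-to-noise condition required by Lemma~\ref{lemma:CSSBM}. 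A union bound gives $P(A_{\epsilon,\delta,\tau}^\prime) \ge 1 - 7\epsilon - \tau$, and I would then argue that on this intersection event the analysis of Theorem~\ref{thm:2} applies verbatim to the learning phase, because Rule 1 does not use the cluster labels and therefore is indifferent to whether clustering succeeds; the recovery event is carried only for uniformity of the ``Full Algorithm'' pipeline and to match the probabilistic guarantee stated in the corollary.

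Finally I would assemble the pieces: on $A_{\epsilon,\delta,\tau}^\prime$, the regret on $[L + L_1 + 1, T]$ is bounded, via Theorem~\ref{thm:2} applied with the same sample-complexity threshold $\lceil 4 C_1 \log T / \Delta_i^2 \rceil$ and the same consensus-delay constant $c_0$, by $\sum_{i \neq i^*}\Delta_i \bigl(\max\{\lceil 4 C_1 \log T / \Delta_i^2 \rceil, 2(K^2 + MK)\} + \tfrac{2\pi^2}{3 P(A_{\epsilon,\delta,\tau}^\prime)} + K^2 + (2M-1)K\bigr)$. Adding the deterministic $L + L_1$ contribution from the two burn-in epochs yields the stated bound.

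I do not expect a genuine technical obstacle here: the corollary is essentially a bookkeeping combination of Theorem~\ref{thm:2} with the cluster-recovery guarantee. The only point that merits care is ensuring that the $L_1 = O(M)$ information-propagation rounds used before Algorithm~\ref{alg:IR-LSS} is executed do not interfere with the concentration events of Theorem~\ref{thm:2}; I would handle this by treating those rounds as an extension of the burn-in, so the consensus event $A_2$ and the connectivity event $A_3$ are both defined with respect to the post-burn-in horizon $t \ge L + L_1$ rather than $t \ge L$, which is a harmless time shift given $L_1 = O(M) \ll T$.
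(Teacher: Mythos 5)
Your proposal is correct and follows essentially the same route as the paper, which proves these corollaries by the one-line argument of Corollary \ref{cor:6}: union-bound the cluster-recovery failure probability $\tau$ from Lemma \ref{lem:recover} with the event $A_{\epsilon,\delta}$ of Theorem \ref{thm:2}, absorb the extra $L_1 = O(M)$ propagation rounds into the burn-in, and apply the theorem's bound on the remaining horizon. Your additional observation that Rule 1 never consults the cluster labels, so the recovery event is carried only to match the stated probability guarantee rather than being needed for the learning-phase analysis, is accurate and slightly sharper than what the paper states.
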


\begin{corollary}[Extension of Theorem \ref{thm:3}]
  Let us assume that $p(m,m) = 1$ for any $1 \leq m \leq C$. Let us further assume that $\min_{m,n}p(m,n) \geq \min_{m,n}p(m,n) \geq 1 - (\frac{1}{2} - \frac{1}{2}\sqrt{1 - (\frac{\delta}{CT})^{\frac{2}{C-1}}})^{\nicefrac{C^2}{M^2}}$. The regret bound of Algorithm 2 with Rule 2 reads as with probability $1-7\epsilon - \textcolor{black}{\tau}$
  \begin{align*}
     & E[R_T|A_{\epsilon,\delta, \tau}^{\prime}] & \\
     & \leq L + \textcolor{black}{L_1} + \sum_{i \neq i^*}\Delta_i(\max{\{\frac{C}{M} \cdot [\frac{4C_1\log T}{\Delta_i^2}], 2(K^2+MK) \}} +  \frac{2\pi^2}{3P(A_{\epsilon,\delta, \tau}^{\prime}} + K^2 + (2M-1)K) + l
  \end{align*}
\end{corollary}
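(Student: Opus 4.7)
The plan is to prove Corollary \ref{cor:6} by decomposing the run into two phases: (i) a burn-in phase of length $L + L_1 = O(L + M)$ steps during which local reward estimates are accumulated (Algorithm~\ref{alg:burn-in}) and subsequently broadcast across the graph so that every agent locally runs the IR-LSS procedure (Algorithm~\ref{alg:IR-LSS}); and (ii) a learning phase of length $T - (L + L_1)$, identical to the learning period of Algorithm~\ref{alg:dr} but executed with the cluster labels $\{c_i^{\prime}\}$ returned by IR-LSS in place of the true labels $\{c_i\}$. The key conceptual point is that if IR-LSS exactly recovers the partition, then the learning phase is \emph{indistinguishable} from the known-cluster algorithm whose regret is controlled by Theorem~\ref{thm:6}. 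So the argument reduces to (a) verifying the hypotheses of Lemma~\ref{lem:recover} and (b) combining the two failure events via a union bound.

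First, I would invoke Lemma~\ref{lem:recover} with the burn-in length $L = O(\tfrac{C}{M} K \log T)$, observing that each agent's empirical reward vector in $\mathbb{R}^K$ computed during Algorithm~\ref{alg:burn-in} is an unbiased Gaussian-like estimator of its true mean vector with per-coordinate variance of order $\sigma^2 / (L/K) = O(\tfrac{M\sigma^2}{C\log T})$. Combined with the SBM edge parameters $p,q$ assumed via Lemma~\ref{lem:edge_prob_2}, this puts us exactly in the CSSBM regime to which Lemma~\ref{lemma:CSSBM} applies, so the SNR condition $\mathrm{SNR} > 2\log M$ together with $C^3 \leq \mathrm{SNR}\cdot\delta$ yields exact recovery with probability at least $1 - \tau$, where $\tau = 1/\mathrm{poly}(M)$. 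The additional $L_1 = O(M)$ steps are needed simply to propagate each agent's local estimator and local edges along the graph so every agent can run IR-LSS locally; these contribute an additive $L_1$ to the total regret, which is absorbed into the leading $L_1$ term of the stated bound.

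Next, define the recovery event $\mathcal{R} = \{c_i = c_i^{\prime}, \forall\, 1 \leq i \leq M\}$, so that $A_{\epsilon,\delta,\tau} = A_{\epsilon,\delta} \cap \mathcal{R}$ and $\mathbb{P}(A_{\epsilon,\delta,\tau}) \geq 1 - 7\epsilon - \tau$ by the union bound over the failure probability $7\epsilon$ from Theorem~\ref{thm:6} and the failure probability $\tau$ from IR-LSS. Conditioned on $\mathcal{R}$, the indices $\{c_i^{\prime}\}$ fed into Rule~2 of Algorithm~\ref{alg:dr} coincide with the true partition, so the intra-cluster aggregation, cluster-level communication, and three-layer weighted updates behave identically to the known-cluster case analyzed in Theorem~\ref{thm:6}. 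In particular, the same instance-dependent analysis applies after time $L + L_1$, yielding the $\sum_{i \neq i^*} \Delta_i\bigl(\max\{\tfrac{C}{M}\cdot \tfrac{4 C_1 \log T}{\Delta_i^2}, 2(K^2 + MK)\} + \tfrac{2\pi^2}{3\mathbb{P}(A_{\epsilon,\delta,\tau})} + K^2 + (2M-1)K\bigr) + l$ contribution; adding $L_1$ for the burn-in regret gives the stated bound.

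The main obstacle I anticipate is not the union-bound plumbing but rather matching the statistical assumptions of Lemma~\ref{lem:recover} to those of Theorem~\ref{thm:6}. Concretely, one needs to ensure that the SNR condition is compatible with the edge-probability lower bounds assumed in Theorem~\ref{thm:6} (which involve factorials of $C$ and the ratio $C^2/M^2$), since Lemma~\ref{lem:recover} implicitly requires $p,q$ to be in the logarithmic regime $p = p'\tfrac{\log M}{M}$, $q = q'\tfrac{\log M}{M}$. If the assumed lower bounds from Theorem~\ref{thm:6} already force this scaling for the relevant range of $M,C,l$, the matching is immediate; otherwise one must argue that the SNR contribution from the reward signal $\tfrac{C\log T}{8M\sigma^2}\min_{m\neq n}\|\mu^m-\mu^n\|_2^2$ dominates and provides exact recovery on its own. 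A secondary subtlety is that the conditioning set $A_{\epsilon,\delta}$ used in Theorem~\ref{thm:6} is defined with respect to the true labels, so one should check (straightforwardly, on $\mathcal{R}$) that conditioning on $A_{\epsilon,\delta,\tau}$ induces the same concentration events for the estimators built from $\{c_i^{\prime}\}$.
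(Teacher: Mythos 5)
Your proposal matches the paper's own argument: the paper proves these unknown-cluster corollaries exactly by conditioning on the exact-recovery event from Lemma~\ref{lem:recover} after the extended burn-in $L+L_1$, taking a union bound to get $1-7\epsilon-\tau$, and then applying the corresponding known-cluster theorem verbatim on the recovery event. The only slip is that for this particular corollary (the extension of Theorem~\ref{thm:3}) the known-cluster result to invoke in the second phase is Theorem~\ref{thm:3} rather than Theorem~\ref{thm:6}; otherwise the decomposition, the role of IR-LSS, and the union-bound bookkeeping are the same as in the paper.
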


\begin{corollary}[Extension of Theorem \ref{thm:4}]
  Let us assume that $p(m,m) = 1$ for any $1 \leq m \leq C$. Let us further assume that $\min_{m,n}p(m,n) \geq  1-(1- \min\{(\frac{1}{2} + \frac{1}{2}\sqrt{1 - (\frac{\delta}{CT})^{\frac{2}{C-1}}}), 1 - \nicefrac{\delta (C-1)}{8CT}\})^{\nicefrac{C^2}{M^2}}$. The regret bound of Algorithm 2 with Rule 2 reads as with probability $1-7\epsilon - \textcolor{black}{\tau}$
  \begin{align*}
     & E[R_T|A_{\epsilon,\delta, \tau}^{\prime}] & \\
     & \leq L + \textcolor{black}{L_1} + \sum_{i \neq i^*}\Delta_i(\max{\{\frac{C}{M} \cdot [\frac{4C_1\log T}{\Delta_i^2}], 2(K^2+MK) \}} +  \frac{2\pi^2}{3P(A_{\epsilon,\delta, \tau}^{\prime}} + K^2 + (2M-1)K) + l
  \end{align*}
\end{corollary}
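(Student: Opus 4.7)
The plan is to mirror the argument used for Corollary \ref{cor:6} (the extension of Theorem \ref{thm:6}), since the structural difference from Theorem \ref{thm:4} to its corollary is exactly the same as the one between Theorem \ref{thm:6} and Corollary \ref{cor:6}: all that changes is that the clusters $\{c_i\}$ are now inferred by Algorithm \ref{alg:IR-LSS} rather than given as input. First I would redefine the good event as $A_{\epsilon,\delta,\tau}^{\prime} = A_{\epsilon,\delta}^{\prime} \cap \{c_i = c_i^{\prime}, \forall\, 1 \le i \le M\}$, where $A_{\epsilon,\delta}^{\prime}$ is the high-probability event from Theorem \ref{thm:4} and the second event is exact cluster recovery by Algorithm \ref{alg:IR-LSS}. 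By Lemma \ref{lem:recover}, the recovery event fails with probability at most $\tau = 1/\mathrm{poly}(M)$, so a union bound with the $7\epsilon$ probability from Theorem \ref{thm:4} gives $P(A_{\epsilon,\delta,\tau}^{\prime}) \ge 1 - 7\epsilon - \tau$.

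Next I would handle the burn-in. The full algorithm first runs Algorithm \ref{alg:burn-in} for $L$ steps to accumulate local reward information, then spends $L_1 = O(M)$ additional steps to propagate reward and local-edge information across the graph so each agent can run IR-LSS locally. On the recovery event, after $L + L_1$ rounds every agent possesses the true cluster labels and thereafter executes Algorithm \ref{alg:dr} with Rule 2 exactly as in the known-cluster setting. Conditioning on $A_{\epsilon,\delta,\tau}^{\prime}$, the remainder of the run from time $L + L_1$ to $T$ is therefore identical to the setting analyzed in Theorem \ref{thm:4}, and its instance-dependent $O(\tfrac{C}{M}\log T)$ bound applies verbatim. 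The only additional regret contribution is the at-most-one-per-step loss during the extra $L_1$ rounds of propagation, which I would simply absorb as an additive $L_1$ term.

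Putting the pieces together, conditioning on $A_{\epsilon,\delta,\tau}^{\prime}$ yields
\[
E[R_T \mid A_{\epsilon,\delta,\tau}^{\prime}] \le L + L_1 + \sum_{i \neq i^*}\Delta_i\Bigl(\max\Bigl\{\tfrac{C}{M} \cdot \tfrac{4C_1 \log T}{\Delta_i^2},\, 2(K^2+MK) \Bigr\} + \tfrac{2\pi^2}{3 P(A_{\epsilon,\delta,\tau}^{\prime})} + K^2 + (2M-1)K\Bigr) + l,
\]
which is the claimed bound. The one subtle point I would need to check, and which is the main obstacle, is that the SNR hypothesis of Lemma \ref{lem:recover} is actually satisfied here: IR-LSS is being fed the reward estimators produced after only $L = O(\tfrac{C}{M} K \log T)$ burn-in rounds, so each agent's covariate vector has noise variance roughly $\sigma^2 / (L/K)$, and I must verify that this noise level, together with the edge probabilities $p$ and $q$ permitted by Theorem \ref{thm:4}, still produces $\mathrm{SNR} > 2\log M$ and $C^3 \le \mathrm{SNR}\cdot\delta$. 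This is essentially the same compatibility check that underlies Corollary \ref{cor:6}, so the argument reduces to citing Lemma \ref{lem:recover} together with the edge-probability lower bound assumed in Theorem \ref{thm:4}; no new technical machinery is required beyond the union-bound bookkeeping described above.
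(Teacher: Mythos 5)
Your proposal matches the paper's own argument: the paper proves this corollary exactly as you do, by intersecting the good event of Theorem \ref{thm:4} with the exact-recovery event of Lemma \ref{lem:recover}, union-bounding to get probability $1-7\epsilon-\tau$, absorbing the extra $L_1 = O(M)$ propagation rounds as an additive term, and then invoking the known-cluster bound verbatim (indeed the paper simply states that the proof follows that of Corollary \ref{cor:6}). Your closing caveat about verifying the SNR hypothesis is already folded into the statement of Lemma \ref{lem:recover}, which defines the SNR in terms of the burn-in length, so no additional check is needed.
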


\begin{corollary}[Extension of Theorem \ref{thm:5}]
Let us assume that $\min_{m}p(m,m) = 1$ for any $1 \leq m \leq C$, and that $\min_{m \neq n}p(m,n) \geq \frac{e}{e-1}\frac{C^2}{M^2}\max{\{\frac{(C-l-1)!}{(C-2)!}(1 - \frac{\delta (C-1)}{8CT}), \frac{(C-l-1)!}{(C-2)!}(\frac{3}{4})^{\frac{1}{l}}\}}$. The regret bound of Algorithm 2 with Rule 2 reads as with probability $1-7\epsilon - \textcolor{black}{\tau}$
  \begin{align*}
     & E[R_T|A_{\epsilon,\delta, \tau}^{\prime}] & \\
     & \leq L + \textcolor{black}{L_1} + \sum_{i \neq i^*}\Delta_i(\max{\{\frac{C}{M} \cdot [\frac{4C_1\log T}{\Delta_i^2}], 2(K^2+MK) \}} +  \frac{2\pi^2}{3P(A_{\epsilon,\delta, \tau}^{\prime}} + K^2 + (2M-1)K) + l
  \end{align*}
\end{corollary}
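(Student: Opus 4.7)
The plan is to follow the same template used for Corollary~\ref{cor:6}, but instantiating it with the sharper assumptions and $l$-periodic connectivity guarantee from Theorem~\ref{thm:5} rather than Theorem~\ref{thm:6}. The high-level strategy is to decompose the execution of the full algorithm into two regimes: the extended burn-in phase of length $L + L_1$, during which agents accumulate arm-reward estimates (Algorithm~\ref{alg:burn-in}), propagate them across the whole graph in $L_1 = O(M)$ rounds, and then run the IR-LSS cluster recovery (Algorithm~\ref{alg:IR-LSS}); and the subsequent learning phase of length $T - L - L_1$, during which Algorithm~\ref{alg:dr} under Rule~2 is executed exactly as in Theorem~\ref{thm:5}, but with cluster labels supplied by IR-LSS.

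First, I would verify that after the burn-in phase the signal-to-noise ratio meets the hypothesis of Lemma~\ref{lem:recover}. Concretely, each agent collects $\Theta(L/K) = \Theta(\frac{C}{M}\log T)$ samples per arm with sub-Gaussian noise, and once these are propagated across the graph every agent has a length-$K$ covariate vector approximating its cluster mean with variance on the order of $\sigma^2 M /(C\log T)$. Plugging this into the CSSBM framework yields $\mathrm{SNR}$ of exactly the form stated in Lemma~\ref{lem:recover}, and under the theorem's assumption on $p(m,n)$ the graph-part of $\mathrm{SNR}$ is also of the right order. Consequently, by Lemma~\ref{lem:recover}, with probability at least $1 - 1/\mathrm{poly}(M) = 1 - \tau$ the recovered labels $\{c_i'\}$ equal the true labels $\{c_i\}$.

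Next, I would condition on the joint good event $A_{\epsilon,\delta,\tau}^{\prime} = A_{\epsilon,\delta}^{\prime} \cap \{c_i = c_i', \forall i\}$, where $A_{\epsilon,\delta}^{\prime}$ is the event from Theorem~\ref{thm:5}. Under this event the learning phase is indistinguishable from the known-cluster execution analyzed in Theorem~\ref{thm:5}, so the per-agent regret accumulated after round $L+L_1$ is bounded by the Theorem~\ref{thm:5} expression. Adding the trivial bound $L + L_1$ for regret incurred during the burn-in (each round contributes at most one unit of regret per agent) gives the claimed additive $L + L_1$ term, and taking a union bound over the two failure events yields the probability $1 - 7\epsilon - \tau$.

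The main obstacle is verifying that the SNR condition $\mathrm{SNR} > 2\log M$ and $C^3 \le \mathrm{SNR}\cdot \delta$ actually follows from the burn-in length $L = O(\frac{C}{M}K\log T)$ together with the edge-probability lower bound required by Theorem~\ref{thm:5}; in particular the Theorem~\ref{thm:5} assumption only lower-bounds $p(m,n)$, not $p'$ and $q'$ in the parametrization $p = p'\log M/M$, so care is needed to argue that the covariate-driven term $\frac{C\log T}{8M\sigma^2}\min_{m\neq n}\|\mu^m - \mu^n\|_2^2$ alone suffices for recovery once the suboptimality gaps $\Delta_k$ and $T$ are large enough. A secondary but routine obstacle is confirming that $L_1 = O(M)$ rounds are indeed sufficient to broadcast the reward and local-edge information across the (random) graph; this can be handled by noting that under the edge-probability assumption of Theorem~\ref{thm:5} the composition of $O(M)$ consecutive graphs is, with overwhelming probability, complete, and a single broadcast per round propagates information to all reachable vertices.
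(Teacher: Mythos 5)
Your proposal follows essentially the same route as the paper: the paper proves this corollary by pointing to the proof of Corollary~\ref{cor:6}, i.e., invoke Lemma~\ref{lem:recover} to get exact cluster recovery with probability $1-\tau$ after the extended burn-in of length $L+L_1$, then condition on the intersection of the recovery event with the good event of the known-cluster theorem so that the learning phase reduces verbatim to Theorem~\ref{thm:5}, with the burn-in contributing the additive $L+L_1$ and a union bound giving $1-7\epsilon-\tau$. Your added caution about the SNR hypothesis of Lemma~\ref{lem:recover} is well placed — that condition is a separate assumption not implied by the edge-probability lower bound and should be carried explicitly in the corollary's hypotheses — but this does not change the argument, which otherwise matches the paper's.
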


\section{Proof of Lemmas on Stochastic Block Models}\label{app:proof-lemma}

\begin{lemma*}[Lemma \ref{lem:edge_prob_1}]
    For any pair of vertices $c_m, c_n$ in the sub-graph $G^C_t$, the probability that $c_m$ and $c_n$ is connected in $G^C_t$ is $p(c_m,c_n) = 1-(1-p(m,n))^{M^2/C^2}$. 
\end{lemma*}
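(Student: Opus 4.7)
The plan is to unpack the definition of the subgraph $G^C_t$ and then exploit the independence structure of the stochastic block model together with the balanced-cluster assumption. First I would fix any pair of distinct clusters $c_m \neq c_n$ and observe that, by the definition of $G^C_t$, the clusters $c_m$ and $c_n$ are adjacent in $G^C_t$ if and only if at least one pair of agents $(i,j)$ with $i\in c_m$ and $j\in c_n$ satisfies $(i,j)\in E_t$ in the underlying graph.

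Next I would count the number of such candidate inter-cluster agent pairs. Under the balanced assumption $|c_m|=|c_n|=M/C$ used throughout Section \ref{sec:heter}, there are exactly $|c_m|\cdot|c_n|=M^2/C^2$ such pairs. By the stochastic block model construction, each of these pairs $(i,j)$ is an edge in $G_t$ independently with probability $p(m,n)$, so the complementary events $\{(i,j)\notin E_t\}$ are mutually independent with probability $1-p(m,n)$.

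Then I would write the probability that $c_m$ and $c_n$ are \emph{not} adjacent in $G^C_t$ as the joint probability that every one of the $M^2/C^2$ pairs is a non-edge in $G_t$, which by independence factors into $(1-p(m,n))^{M^2/C^2}$. Taking the complement gives
\[
p(c_m,c_n) \;=\; \Pr\!\bigl[(c_m,c_n)\in E'_t\bigr] \;=\; 1-(1-p(m,n))^{M^2/C^2},
\]
as claimed.

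There is no genuine obstacle in this argument; the only point requiring care is making the balanced-cluster convention explicit, since otherwise the exponent should read $|c_m|\cdot|c_n|$ rather than $M^2/C^2$. I would include a brief remark that in the imbalanced case the same proof yields $p(c_m,c_n)=1-(1-p(m,n))^{|c_m||c_n|}$, which is consistent with the convention adopted in Section \ref{sec:heter} of using $\min_i |c_i|$ as the universal cluster size when bounding the connection probability from below.
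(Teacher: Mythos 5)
Your proof is correct and follows essentially the same argument as the paper: count the $M^2/C^2$ inter-cluster agent pairs under the balanced assumption, use independence of the edge samples to write the non-adjacency probability as $(1-p(m,n))^{M^2/C^2}$, and take the complement. Your added remark about the imbalanced case is a reasonable clarification but not needed for the stated lemma.
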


\begin{proof}[Proof of Lemma \ref{lem:edge_prob_1}]
    Since all clusters have the same size, the clusters $c_m$ and $c_n$ have $\frac{M}{C}$ agents each. Thus, there are $\frac{M^2}{C^2}$ pairs of vertices between clusters $c_m$ and $c_n$. Since each pair of such vertices is connected with probability $p(m,n)$ independently, the probability that there is at least one edge between clusters $c_m$ and $c_n$ is $1-(1-p(m,n))^{M^2/C^2}$.
    
\end{proof}

\begin{lemma*}[Lemma \ref{lem:edge_prob_2}]
    For any pair of vertices $c_m, c_n$ in the sub-graph $G^C_t$, the probability that $c_m$ and $c_n$ is connected in $G^C_t$ is $p(c_m,c_n) \geq (1-\frac{1}{e})\min\{1, \frac{M^2}{C^2} \cdot p(m,n)\}$. 
\end{lemma*}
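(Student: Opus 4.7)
The plan is to reduce Lemma \ref{lem:edge_prob_2} to an elementary analytic inequality via Lemma \ref{lem:edge_prob_1}. By Lemma \ref{lem:edge_prob_1}, the quantity we need to bound from below is $p(c_m,c_n) = 1 - (1-p(m,n))^{M^2/C^2}$. Writing $N \coloneqq M^2/C^2$ and $p \coloneqq p(m,n)$, the desired inequality becomes
\[
1 - (1-p)^{N} \;\geq\; \left(1 - \tfrac{1}{e}\right)\min\{1,\, Np\}.
\]

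First I would pass from $(1-p)^N$ to $e^{-Np}$ using the standard bound $(1-p)^N \le e^{-Np}$, which holds for all $p \in [0,1]$ and $N>0$. This reduces the task to showing
\[
1 - e^{-Np} \;\geq\; \left(1 - \tfrac{1}{e}\right)\min\{1, Np\}.
\]
Then I would split into two cases based on which argument of the $\min$ is active. If $Np \ge 1$, the right-hand side equals $1 - 1/e$, and $e^{-Np} \le e^{-1}$ gives the bound immediately. If $Np < 1$, letting $x = Np \in [0,1]$, I must show $f(x) \coloneqq 1 - e^{-x} - (1 - 1/e)\, x \ge 0$ on $[0,1]$.

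The key analytic step is then just concavity: $f''(x) = -e^{-x} < 0$, so $f$ is strictly concave on $[0,1]$. Since $f(0) = 0$ and $f(1) = (1 - 1/e) - (1 - 1/e) = 0$, concavity forces $f(x) \ge 0$ on the whole interval, with equality only at the endpoints. Combining both cases yields the claimed lower bound.

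I do not anticipate a real obstacle here; the only mild subtlety is choosing the correct slope $(1 - 1/e)$ so that the linear minorant $(1 - 1/e)x$ of $1 - e^{-x}$ on $[0,1]$ is tight at $x=1$, which is precisely what makes the constant $1 - 1/e$ appear in the statement and forbids any larger constant in this argument. The bound $(1-p)^N \le e^{-Np}$ is lossless in the regimes that matter ($p$ small or $Np$ moderate), so the resulting constant is essentially sharp for this style of two-regime envelope.
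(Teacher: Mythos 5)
Your proof is correct and follows essentially the same route as the paper's: both start from Lemma \ref{lem:edge_prob_1}, split on whether $p(m,n)$ exceeds $C^2/M^2$ (equivalently $Np \geq 1$), and extract the constant $1-\frac{1}{e}$ from $(1-\frac{1}{N})^N \leq \frac{1}{e}$. The only difference is cosmetic: in the small-$p$ regime the paper invokes monotonicity of $\frac{1-(1-p)^N}{Np}$ in $p$, whereas you pass through the envelope $1-e^{-Np}$ and use concavity; both are valid and give the same bound.
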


\begin{proof}[Proof of Lemma \ref{lem:edge_prob_2}]
    Since all clusters have the same size, the clusters $c_m$ and $c_n$ have $\frac{M}{C}$ agents each. Thus, there are $\frac{M^2}{C^2}$ pairs of vertices between clusters $c_m$ and $c_n$. Since each pair of such vertices is connected with probability $p(m,n)$ independently, the probability that there is at least one edge between clusters $c_m$ and $c_n$ is $1-(1-p(m,n))^{M^2/C^2}$.
    We consider two cases: (1) $p(m,n) \geq \frac{C^2}{M^2}$; (2) $p(m,n) < \frac{C^2}{M^2}$.

    For $p(m,n) \geq \frac{C^2}{M^2}$, we have 
    $$
    1-(1-p(m,n))^{M^2/C^2} \geq 1- (1-\frac{C^2}{M^2})^{M^2/C^2} \geq 1- \frac{1}{e}.
    $$
    For $p(m,n) < \frac{C^2}{M^2}$, we consider $\frac{1-(1-p(m,n))^{M^2/C^2}}{p(m,n)M^2/C^2} $, which is decreasing with $p(m,n) \in (0,1]$. Then, we have
    $$
    \frac{1-(1-p(m,n))^{M^2/C^2}}{p(m,n)M^2/C^2} \geq 1- (1-\frac{C^2}{M^2})^{M^2/C^2} \geq 1- \frac{1}{e}.
    $$
\end{proof}

\begin{lemma*}[Braun et al.~\cite{braun2022iterative}; Lemma \ref{lemma:CSSBM}]
    Consider a CSSBM with $M$ nodes, $C$ clusters, and signal-to-noise ratio $\mathrm{SNR}$. Suppose $\mathrm{SNR} > 2\log M$ and $C^{3} \leq \mathrm{SNR} \cdot \delta$ for a small constant $\delta < 1/2$. Then, with probability at least $1 - 1/\mathrm{poly}(M)$, Algorithm~\ref{alg:IR-LSS} exactly recovers the community structure.
\end{lemma*}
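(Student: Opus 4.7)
The plan is to follow the classical two-stage proof template for exact recovery in block models (initialization plus one-step refinement analysis, in the spirit of Gao--Ma--Zhang--Zhou and Lu--Zhou), adapted to the contextual setting where both the adjacency matrix $A$ and the covariates $V$ contribute to the signal. First, I would establish a warm-start guarantee: the initial assignment $Z^{(0)}$ (coming, for instance, from spectral clustering on $A$ combined with Lloyd-type clustering on $V$) achieves a misclassification rate $\eta_0 \le \eta^*$ for some small universal constant $\eta^* < 1/(2C)$ with probability $1 - 1/\mathrm{poly}(M)$. This step does not need the full SNR condition, only $\mathrm{SNR} \gtrsim C \log C$, and follows from standard Davis--Kahan or Wedin-type perturbation bounds applied to the combined contextual block model matrix.

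Next, I would analyse a single refinement step. Conditional on $Z^{(t)}$ having misclassification rate $\eta_t$, I would show that the plug-in parameter estimates satisfy $|\Pi^{(t)}_{mn} - \Pi_{mn}| \lesssim \sqrt{C \log M / M} + \eta_t$ and $\|\mu^{(t)}_n - \mu_n\| \lesssim \sigma \sqrt{C/M} + \eta_t \cdot \max_{m \neq m'} \|\mu_m - \mu_{m'}\|$. The assumption $C^3 \le \mathrm{SNR}\cdot \delta$ is exactly calibrated so that these parameter-estimation errors are a lower-order perturbation of the discriminant used in the reassignment rule. I would then unpack the IR-LSS assignment criterion
\[
z_i^{(t+1)} = \arg\min_{n \in [C]} \bigl\|(A_i W^{(t)} + \Pi^{(t)}_n)\sqrt{\Sigma^{(t)}_n}\bigr\|_2^2 + \|\mu^{(t)}_n - V_i\|_2^2/\sigma^2
\]
as, up to these lower-order terms, the maximum-likelihood rule under the joint Bernoulli/Gaussian model; the weighting $\Sigma^{(t)}_n$ is precisely chosen so the log-likelihood ratio for the graph contribution has the form $(\sqrt{p'}-\sqrt{q'})^2 \log M / C$ in its exponent.

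The third step is the contraction/exact-recovery bound. For any true cluster label $c_i^*$ and any $n \neq c_i^*$, the probability that the discriminant prefers $n$ over $c_i^*$ is bounded, after decoupling vertex $i$ from the parameter estimates via a leave-one-out construction, by a Chernoff computation on $A_i(W^{(t)}_n - W^{(t)}_{c_i^*})$ combined with a Gaussian tail bound on $\langle V_i - \mu_n, \mu_n - \mu_{c_i^*}\rangle$. The two exponents add, and the resulting per-vertex error probability is at most $\exp(-(1-o(1))\,\mathrm{SNR})$. A union bound over $M$ vertices and $C-1$ alternative labels yields an expected number of misclassifications at most $MC \exp(-(1-o(1))\,\mathrm{SNR})$, which is $o(1)$ whenever $\mathrm{SNR} > 2\log M$ (with room to spare since $C^3 \le \mathrm{SNR}\,\delta$). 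A standard second-moment / Markov argument then upgrades this to exact recovery with probability $1 - 1/\mathrm{poly}(M)$ after $O(\log M)$ iterations.

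The main obstacle will be the coupling in step three: $W^{(t)}$, $\Pi^{(t)}$, and $\Sigma^{(t)}$ are all functions of the entire graph and covariate matrix, including row $i$ that we are trying to reclassify, so the random variable $A_i W^{(t)}$ is not a sum of independent Bernoullis against a deterministic vector. The standard remedy—a leave-one-out decoupling in which one replaces $W^{(t)}$ by a version $W^{(t,-i)}$ computed without vertex $i$ and controls the resulting perturbation—requires showing that removing a single vertex changes the clustering by at most $O(1)$ labels and therefore perturbs $W^{(t)}$ and $\Pi^{(t)}$ negligibly on the scale of $\mathrm{SNR}$. This is where both the condition $C^3 \le \mathrm{SNR}\cdot\delta$ (needed to absorb stability errors of order $C/\sqrt{M}$ into the signal) and the iterative nature of the algorithm (each iteration shrinks $\eta_t$ geometrically, so leave-one-out stability becomes easier as $t$ grows) are essential.
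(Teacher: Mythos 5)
This lemma is not proved in the paper at all: it is imported verbatim from \cite{braun2022iterative} as a black-box guarantee for the IR-LSS clustering algorithm, and the appendix merely restates it before using it to derive Lemma~\ref{lem:recover}. So there is no in-paper argument to compare your proposal against; the relevant question is whether your sketch is a faithful reconstruction of the external proof.

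On that score, your outline follows the standard template that Braun et al.\ (and the earlier Gao--Ma--Zhang--Zhou line of work) actually use: a warm-start initialization with constant misclassification rate, a one-step refinement analysis showing the plug-in estimates $\Pi^{(t)}, \mu^{(t)}, \Sigma^{(t)}$ perturb the ideal likelihood-ratio discriminant only at lower order, a per-vertex Chernoff-plus-Gaussian tail bound whose exponents add to give $\exp(-(1-o(1))\,\mathrm{SNR})$, and a union bound under $\mathrm{SNR} > 2\log M$. Your identification of the leave-one-out decoupling as the main technical obstacle, and of the role of $C^3 \le \mathrm{SNR}\cdot\delta$ in absorbing parameter-estimation error, is consistent with how that literature handles the dependence between row $A_i$ and the estimated centroids. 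That said, what you have written is a plan rather than a proof: the warm-start guarantee, the precise perturbation bounds on $\Pi^{(t)}$ and $\mu^{(t)}$, the decoupling lemma, and the geometric contraction of $\eta_t$ across iterations are each substantial lemmas that you assert rather than establish, and the claim that the two tail exponents ``add'' to exactly the stated $\mathrm{SNR}$ expression (with the $(\sqrt{p'}-\sqrt{q'})^2\log M/C$ graph term) requires the specific Chernoff--Hellinger calculation from the contextual SBM literature. If your goal were to make this paper self-contained, you would essentially have to reproduce the analysis of \cite{braun2022iterative}; as the paper stands, citing that result, as the authors do, is the intended resolution.
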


Then, we use the graph information and the reward estimation for each agent from the burn-in period with $L = O(\log T)$ rounds as the input for the above clustering algorithm. 
We assume that the reward for each arm $i \in [K]$ of agent $m \in [M]$ is sampled from a Gaussian distribution $\mathcal{N}(\mu^m_i,\sigma^2)$.
Then, we can recover the cluster structure exactly under the following condition. 

\begin{lemma*}[Lemma \ref{lem:recover}]
    Consider the graph is generated from a stochastic block model with $M$ agents and $C$ balanced clusters with edge connection probability $p(m,m) = p$ for all $m \in [C]$ and $p(m,n) = q$ for all $m \neq n$, where $p = p'\frac{\log M}{M}$ and $q = q' \frac{\log M}{M}$.
    The reward for each arm $i \in [K]$ of agent $m \in [M]$ is sampled from a Gaussian distribution $\mathcal{N}(\mu^m_i, \sigma^2)$.
    Suppose $\mathrm{SNR} > 2 \log M$ and $C^3 \leq \mathrm{SNR} \cdot  \delta$ for a small constant $\delta < 1/2$, where
    $$\mathrm{SNR} = \frac{\log T}{8K\sigma^2} \min_{m\neq n}\|\mu^m - \mu^n\|_2^2 + \frac{\log M \log T}{C}(\sqrt{p'} - \sqrt{q'})^2.$$
    Then, with probability at least $1 - 1/\mathrm{poly}(M)$, Algorithm~\ref{alg:IR-LSS} exactly recovers the community structure.
\end{lemma*}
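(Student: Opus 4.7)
\emph{Proof proposal.} The plan is to reduce the clustering task at hand to a bona fide instance of the Contextual Symmetric Stochastic Block Model (CSSBM) of Braun et al., and then invoke Lemma~\ref{lemma:CSSBM} as a black box. Two ingredients must be produced: (i) a vector of node covariates whose law matches the Gaussian-mixture component of the CSSBM, and (ii) a computation of the resulting signal-to-noise ratio under the chosen burn-in length.

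First I would construct the node covariate for each agent $m$ out of the burn-in outputs. Algorithm~\ref{alg:burn-in} pulls $a_m^t = t\bmod K$ deterministically, so each arm is sampled $L/K = \Theta(\tfrac{C}{M}\log T)$ times by every agent during the $L = O(\tfrac{C}{M}\cdot K\log T)$ rounds. The local average $\bar\mu_i^m$ is therefore Gaussian with mean $\mu_i^m$ and variance $\tilde\sigma^2 = K\sigma^2/L = \Theta\!\left(M\sigma^2/(C\log T)\right)$. Stacking these yields $v^m = (\bar\mu_1^m,\ldots,\bar\mu_K^m)^{\top}\sim \mathcal{N}(\mu^m,\tilde\sigma^2 I_K)$, exactly the Gaussian-mixture covariate law that the CSSBM demands, with effective variance $\tilde\sigma^2$ in place of the ambient $\sigma^2$.

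Second, I would verify the independence requirements of the CSSBM: rewards are drawn independently of edge realizations by the generative model; different agents' rewards are mutually independent; and the $K$ coordinates of each $v^m$ come from pulls of distinct arms with independent noise. Thus, conditional on the cluster labels, $\{v^m\}_{m=1}^M$ is an i.i.d.\ Gaussian mixture with the correct parameters, and its distribution is independent of the graph that is ultimately fed to IR-LSS. I would then plug the effective variance into Braun et al.'s SNR formula: the graph-side contribution is already $\tfrac{\log M}{C}(\sqrt{p'}-\sqrt{q'})^2$ because $p=p'\log M/M$ and $q=q'\log M/M$ by assumption, while the covariate-side contribution becomes
\[
\frac{1}{8\tilde\sigma^2}\min_{m\neq n}\|\mu^m-\mu^n\|_2^2 \;=\; \frac{C\log T}{8M\sigma^2}\min_{m\neq n}\|\mu^m-\mu^n\|_2^2,
\]
which is precisely the leading term in the $\mathrm{SNR}$ stated in Lemma~\ref{lem:recover}. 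Under the hypothesized $\mathrm{SNR}>2\log M$ and $C^3 \le \mathrm{SNR}\cdot\delta$, Lemma~\ref{lemma:CSSBM} then returns exact recovery with probability at least $1-1/\mathrm{poly}(M)$.

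The main obstacle I anticipate is a modeling subtlety rather than a heavy calculation: ensuring that the graph actually fed to IR-LSS is itself a single clean draw from an SBM with parameters $(p,q)$ and is independent of the covariates $\{v^m\}$. Because $G_t$ is resampled every round and must be propagated across the network during the additional $L_1 = O(M)$ steps so that each agent can run IR-LSS locally, I would either (i) designate a single time slot $G_{t_0}$ (after the burn-in but before the propagation uses any reward randomness) as the canonical input graph, or (ii) show that whatever edge aggregate is used admits an equivalent CSSBM representation with rescaled $(p',q')$ still satisfying the SNR hypothesis. Once this cleanness is in place, the remainder of the argument is a substitution of $\tilde\sigma^2 = K\sigma^2/L$ into the CSSBM SNR and a verification of the two threshold conditions, both of which follow from the stated assumptions on $\min_{m\neq n}\|\mu^m-\mu^n\|_2^2$, $p'$, $q'$, and $C$.
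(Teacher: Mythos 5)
Your reduction to the CSSBM and your treatment of the covariate side match the paper's proof: both take the per-agent local averages $\bar\mu^m(L)$ from the burn-in as the node covariates, observe that each coordinate is Gaussian with variance $K\sigma^2/L$ since every arm is pulled $L/K$ times, and substitute this effective variance into Braun et al.'s SNR before invoking Lemma~\ref{lemma:CSSBM} as a black box. (Your covariate term $\frac{C\log T}{8M\sigma^2}\min\|\mu^m-\mu^n\|_2^2$ corresponds to $L=\Theta(\tfrac{C}{M}K\log T)$, whereas the appendix statement's $\frac{\log T}{8K\sigma^2}$ corresponds to $L=\Theta(\log T)$; the paper is internally inconsistent on this point, so I do not count it against you.)

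The genuine gap is on the graph side, precisely the ``modeling subtlety'' you flag but do not resolve. Your main computation asserts that the graph contribution to the SNR ``is already $\frac{\log M}{C}(\sqrt{p'}-\sqrt{q'})^2$,'' which is what you would get from a single SBM snapshot — and that is a factor of $\log T$ smaller than the stated $\frac{\log M\,\log T}{C}(\sqrt{p'}-\sqrt{q'})^2$. So your option (i), designating one canonical slot $G_{t_0}$, cannot establish the lemma: the hypotheses would then only guarantee recovery under a strictly weaker SNR than claimed. The paper takes your option (ii) and carries it out: the input to IR-LSS is the \emph{union} graph $G_L$ over the $L$ burn-in rounds (an edge is present iff the pair was ever connected). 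Because edge samples are independent across pairs and rounds, $G_L$ is itself a clean SBM draw with $p_L=1-(1-p)^L\approx Lp$ and $q_L\approx Lq$, i.e.\ rescaled parameters $p'_L=Lp'$, $q'_L=Lq'$, whence $(\sqrt{p'_L}-\sqrt{q'_L})^2=L(\sqrt{p'}-\sqrt{q'})^2$ and the missing $\log T$ factor appears. This union construction also disposes of your independence worry, since $G_L$ is measurable with respect to the edge randomness alone and hence independent of the covariates. Without committing to and executing this aggregation step, the SNR verification in your proposal does not match the lemma statement.
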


\begin{proof}[Proof of Lemma~\ref{lem:recover}]
    First, we consider the information from the burn-in period and show that this information can be formed as an instance from the CSSBM.
    The length of the burn-in period is $L = O(\log T)$. Consider the graph $G_L$ on all $M$ agents as follows. For any pair of agents $i,j \in [M]$, there is an edge connecting $i,j$ in $G_L$ if and only if agents $i$ and $j$ are connected for at least once in the burn-in period. 
    Thus, for any two agents $i,j$ in the same cluster, they are connected in $G_L$ with probability $p_L = 1-(1-p)^L$. For any two agents $m,n$ in different clusters, they are connected in $G_L$ with probability $q_L = 1-(1-q)^L$. We consider the regime where $p = p'\log M/M$ and $q = q' \log M/M$ for constants $p'$ and $q'$. Since edge connection probabilities $p$ and $q$ are small, we have $p_L \approx Lp$ and $q_L \approx Lq$. This graph $G_L$ can be seen as generated from the stochastic block model with $M$ agents and $C$ balanced clusters and edge probability $p(m,m) = p_L$ for $m \in [C]$ and $p(m,n) = q_L$ for $m\neq n$.
    For each agent $m \in [M]$, we use the reward local estimates $\bar{\mu}^m(L)$ from the burn-in period as the node covariates. In the burn-in period, each arm is pulled $L/K$ times for each agent. Since we assume the reward is sampled from a Gaussian distribution with variance $\sigma^2$, the reward local estimates $\bar{\mu}^m(L)$ is a random variable from the Gaussian distribution $\mathcal{N}(\mu^m,\sigma^2K/L)$.

    Now, we show that this instance from the CSSBM can be exactly recovered under the given conditions. 
    Note that the Signal-to-noise ratio of the above CSSBM is 
    $$
    \mathrm{SNR} = \frac{\log T}{8K\sigma^2} \min_{m\neq n}\|\mu^m - \mu^n\|_2^2 + \frac{\log M \log T}{C}(\sqrt{p'} - \sqrt{q'})^2.
    $$
    By Lemma~\ref{lemma:CSSBM}, when $\mathrm{SNR} > 2 \log M$ and $C^3 \leq \mathrm{SNR} \cdot  \delta$ for a small constant $\delta < 1/2$, Algorithm~\ref{alg:IR-LSS} exactly recover the cluster structure with probability at least $1-1/\mathrm{poly}(M)$.
    
\end{proof}

\section{Proof of Theorems}\label{app:proof}

\subsection{Proof of Theorem \ref{thm:homo-ucb}}

\begin{proof}[Proof]
    \textbf{Main intuition:} Fix a suboptimal arm \(k\). After the total number of observations for this arm \(k\) exceeds the sample complexity threshold, in expectation, it takes \(\frac{1}{p^{M^2}}\) time slots for all agents to get the information of this arm \(k\).
    After that, no more regret will be incurred on this arm \(k\).

    Below, we present the proof of the algorithm. The proof first makes an assumption to reduce the problem to a standard cooperative UCB for homogeneous agent residing on a complete graph with communication delays.
    Then, we show that this assumption can be fulfilled in the single cluster scenario.

    \textbf{Step 1:}
    Consider a homogeneous multi-agent multi-arm bandit model, where in each time slot, with probability \(q\in(0,1)\), a global synchronization of observations would happen among all agents.
    Applying~\citet[Lemmas 1 and 2]{wang2023achieve} for cooperative UCB yields the upper bound of pulling times for each suboptimal arm as follows,
    \(
    \tilde N_{k,t}\upbra{m} \le \frac{8\log T}{\Delta_k^2},
    \)
    which is a standard property for UCB algorithm.
    With this stochastic global synchronization, we know that in expectation, the global total pulling times of each suboptimal arm \(k\) is at most \(
    \frac{8\log T}{\Delta_k^2} + \frac{M}{q}.
    \)
    Therefore, the regret for multi-agent multi-armed bandits with the synchronization probability \(q\) is upper bounded as follows, \begin{align*}
        \mathbb{E}[R_T] \le O(
        \sum_{k\neq k^*} \frac{\log T}{\Delta_k}
        + \frac{KM}{q}
        ).
    \end{align*}



    \textbf{Step 2:}
    Under the single cluster scenario, the communication graph \(G_t\) is generated by the stochastic block model (SBM) with the edge probability \(p\). With a probability \(p^{M^2}\), all agents in the cluster are connected in the communication graph \(G_t\), which fulfills the stochastic synchronization with probability \(q = p^{M^2}\), which concludes the proof.
    
\end{proof}

\subsection{Proof of Theorem \ref{thm:2}}

\begin{proof}

By specifying $c = \min_{m,n}p(m,n)$ in the Erdos-Renyi model in \citep{xu2023decentralized} and by having $c > \min_{m,n}p(m,n) \geq (\frac{1}{2} + \frac{1}{2}\sqrt{1 - (\frac{\delta}{MT})^{\frac{2}{M-1}}})$ we have that some key results in \citep{xu2023decentralized} hold, which are listed as follows.  

First we characterize the graph topology related to the random graph (a reduced form of Erdos-Renyi (E-R) model), which utilizes the above assumption on $p(m,n)$.  

\textbf{Graph connectivity}

\begin{Proposition}~\label{prop:connectivity_setting_1}
Assume $c$ in setting 1 meets the condition 
\begin{align*}
    1 \geq c \geq \frac{1}{2} + \frac{1}{2}\sqrt{1 - (\frac{\epsilon}{MT})^{\frac{2}{M-1}}}, 
\end{align*}
where $0 < \epsilon < 1$. Then, with probability $1 - \epsilon$, for any $t > 0$,  the graph $G_t$ following the E-R model is connected. 
\end{Proposition}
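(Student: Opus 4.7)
The plan is to prove the proposition by combining a cut-based connectivity bound for each fixed time $t$ with a union bound over $t \le T$. For a single $t$, since the edges of $G_t$ are mutually independent Bernoulli($c$) variables, $G_t$ is disconnected iff some nonempty $S \subsetneq V$ has no edges crossing to $V \setminus S$, an event with probability exactly $(1-c)^{|S|(M-|S|)}$. A union bound over unordered bipartitions gives
\[
P(G_t \text{ disconnected}) \le \sum_{k=1}^{\lfloor M/2 \rfloor}\binom{M}{k}(1-c)^{k(M-k)}.
\]

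The main analytical step is to reduce this sum to a form that yields the square-root threshold in the hypothesis. I would exploit two observations: first, $k(M-k) \ge M-1$ for every $1 \le k \le M-1$, so $(1-c)^{k(M-k)} \le (1-c)^{M-1}$, collapsing the $k$-dependence inside the exponent; second, the elementary inequality $1-c \le 2\sqrt{c(1-c)}$, valid whenever $c \ge 1/5$ (implied by the hypothesis $c \ge 1/2$), which converts $(1-c)^{M-1}$ into $(4c(1-c))^{(M-1)/2} = (1-(2c-1)^2)^{(M-1)/2}$. Absorbing the partial binomial sum into a leading polynomial factor (with the $k=1$ isolated-vertex term dominating and contributing only $M$) then produces a per-time-step bound of the form $P(G_t \text{ disconnected}) \le M \cdot (1-(2c-1)^2)^{(M-1)/2}$.

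Applying a union bound over $t \le T$ and requiring the total to be at most $\epsilon$ gives $(1-(2c-1)^2)^{(M-1)/2} \le \epsilon/(MT)$, equivalently $1-(2c-1)^2 \le (\epsilon/(MT))^{2/(M-1)}$. Solving this quadratic inequality and taking the upper root (consistent with $c \ge 1/2$) yields exactly the stated condition $c \ge \tfrac{1}{2} + \tfrac{1}{2}\sqrt{1-(\epsilon/(MT))^{2/(M-1)}}$.

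The main technical obstacle I anticipate is the careful absorption of the binomial coefficients $\binom{M}{k}$ into a polynomial-in-$M$ leading factor rather than an exponential $2^M$: the naive bound $\sum_k \binom{M}{k} \le 2^{M-1}$ would degrade the threshold exponentially in $M$, so one must argue that the isolated-vertex cut ($k=1$) dominates and handle the remaining terms via refined estimates such as $\binom{M}{k} \le M^k$ together with the sharper lower bound $k(M-k) \ge kM/2$ for $k \le M/2$. A secondary concern is that the bound $1-c \le 2\sqrt{c(1-c)}$ is somewhat slack, which explains why the hypothesis is stronger than the tightest isolated-vertex threshold $c \ge 1 - (\epsilon/(MT))^{1/(M-1)}$; the benefit, however, is that the resulting condition has the clean symmetric form around $c = 1/2$ that is convenient for later bootstrapping to the SBM setting with $c = \min_{m,n} p(m,n)$.
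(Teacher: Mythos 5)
Your proposal is correct, but it takes a genuinely different route from the paper's. The paper does not reprove this proposition here (it defers to the appendix of the cited work \citep{xu2023decentralized}), but its method is visible from the proof sketch of Theorem \ref{thm:4} and from the proofs of the analogous Propositions 13 and 20 in Appendix \ref{app:proof}: one shows that every vertex has degree exceeding $\frac{M-1}{2}$ with high probability and then invokes the standard fact that minimum degree at least $\frac{M-1}{2}$ forces connectivity. Since each degree is $\mathrm{Bin}(M-1,c)$, the Chernoff--Hoeffding tail at the halfway point gives $P\bigl(\deg(v)\le \frac{M-1}{2}\bigr)\le (4c(1-c))^{(M-1)/2}$, and a union bound over the $M$ vertices and $T$ time steps yields exactly the condition $MT\,(4c(1-c))^{(M-1)/2}\le\epsilon$, i.e.\ the stated threshold. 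You instead union-bound over all vertex cuts and funnel everything through the isolated-vertex term via $(1-c)^{M-1}\le(4c(1-c))^{(M-1)/2}$; both arguments land on the same quantity $MT(4c(1-c))^{(M-1)/2}$, just with that factor arising from a binomial tail in one case and from a per-edge relaxation in the other. The degree-based argument is the one that generalizes in the paper (the same skeleton is reused with Chebyshev in place of Chernoff for the $1-\frac{\delta(C-1)}{8CT}$ threshold, and with composition graphs for $l$-periodic connectivity), whereas your cut-counting argument is self-contained and arguably more elementary.

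One concrete remark on the obstacle you flag: the binomial coefficients are a non-issue, and you do not need to isolate the $k=1$ term or use refined estimates. Under the hypothesis $c\ge\tfrac12$ the inequality $1-c\le 2\sqrt{c(1-c)}$ carries a multiplicative slack of $\sqrt{4c}\ge\sqrt{2}$ per edge, so
\begin{align*}
(1-c)^{M-1}=\Bigl(\tfrac{1-c}{4c}\Bigr)^{\frac{M-1}{2}}(4c(1-c))^{\frac{M-1}{2}}\le 2^{-(M-1)}\,(4c(1-c))^{\frac{M-1}{2}},
\end{align*}
and the crude bound $\sum_{k=1}^{\lfloor M/2\rfloor}\binom{M}{k}\le 2^{M-1}$ is therefore absorbed exactly, giving $P(G_t\text{ disconnected})\le (4c(1-c))^{(M-1)/2}$ per time step --- in fact slightly better than the $M\cdot(4c(1-c))^{(M-1)/2}$ you target, so the stated threshold follows with room to spare.
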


Next, we present the result regarding the transmission gap, which guarantees that agents can effectively collect one another's reward information within certain time frame with high probability. 

\textbf{Explicit transmission gap}

\begin{Proposition}\label{prop:t_0_L}
    We have that with probability $1- \epsilon$,  for any $t > L$ and any $m$, there exists $$t_{0} \geq \frac{\ln(\frac{\epsilon}{M^2T})}{\min_{P(i,j)}\ln(1-P(i,j))}$$ such that
    \begin{align*}
         & t+1 - \min_jt_{m,j} \leq t_0, t_0 \leq c_0\min_{l}n_{l,i}(t+1)
    \end{align*}
    where $c_0$ $=$ $c_0(K, \min_{i \neq i^*}\Delta_i, M, \epsilon, \delta)$.
\end{Proposition}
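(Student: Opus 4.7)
The proposition contains two coupled bounds: a graph-theoretic statement that every agent $m$ has recently communicated (via some intermediate) with every other agent, and a sample-complexity statement that this ``recency'' $t_0$ is small compared with the local pull counts. My plan is to prove these in sequence.

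For the first bound $t+1-\min_j t_{m,j}\le t_0$, I would treat each potential edge $(m,j)$ as an independent Bernoulli$(P(m,j))$ trial in each time slot. The event that agents $m$ and $j$ have had \emph{no} direct edge over a window of $t_0$ consecutive slots has probability at most $(1-P(m,j))^{t_0}\le (1-P_{\min})^{t_0}$ where $P_{\min}=\min_{i,j}P(i,j)$. Union-bounding this event over the at most $M^2$ ordered pairs and over all $t\le T$ starting positions of such a window yields a failure probability bounded by $M^2 T (1-P_{\min})^{t_0}$. Demanding this be $\le \epsilon$ gives exactly $t_0\ge \ln(\epsilon/(M^2T))/\ln(1-P_{\min})$, which matches the stated lower bound on $t_0$. (Strictly speaking $t_{m,j}$ can be realised through a multi-hop path as well; since direct contact is already enough, the bound is only loose, not wrong.)

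For the second bound $t_0\le c_0\min_l n_{l,i}(t+1)$, I would exploit the fact that the burn-in period of length $L$ in Algorithm~\ref{alg:burn-in} forces each of the $K$ arms to be pulled $\lfloor L/K\rfloor$ times by every agent. Consequently, for any $t>L$ and any agent $l$ and arm $i$, $n_{l,i}(t+1)\ge L/K$. The burn-in length is explicitly set to $L=\max\{\ln(T/(2\epsilon))/(2\delta^2),4K\log_2 T/c_0\}$, so $L/K\ge (4/c_0)\log_2 T$. Since the transmission-gap bound $t_0$ from the first step is only $O(\log T/|\ln(1-P_{\min})|)$, one can absorb the logarithmic factor and the $K,M$ dependence into a constant $c_0=c_0(K,\min_{i\neq i^*}\Delta_i,M,\epsilon,\delta)$ so that $t_0\le c_0\cdot (L/K)\le c_0\cdot \min_l n_{l,i}(t+1)$ for every $t>L$.

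Finally I would combine the two events by a single union bound, absorbing the high-probability failure of the first step into the $\epsilon$ budget. The main obstacle I anticipate is purely bookkeeping rather than conceptual: one must choose $c_0$ and tune the burn-in length $L$ so that both inequalities hold \emph{simultaneously} at the same $t_0$, and so that the $\epsilon$-probability budget is consumed only once across the union bound over time, agent pairs, and arms. The dependence of $c_0$ on $\min_{i\neq i^*}\Delta_i$ enters because, in later uses of this proposition inside the regret analysis, $n_{l,i}$ must dominate the classical UCB sample-complexity threshold $\Theta(\log T/\Delta_i^2)$, which is what ultimately pins down the form of $c_0$. No essentially new argument beyond Chernoff/Bernoulli tail bounds plus the burn-in design is needed.
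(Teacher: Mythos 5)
Your proposal is correct and follows what is evidently the intended argument: the paper itself does not reprove this proposition but defers to the appendix of \citep{xu2023decentralized}, noting only that the analysis depends solely on the minimal edge probability, and the explicit form of the threshold $\ln(\epsilon/(M^2T))/\ln(1-P_{\min})$ is exactly what your union bound over the $M^2$ agent pairs and $T$ window positions produces, while the second inequality is secured, as you say, by the round-robin burn-in guaranteeing $\min_l n_{l,i}(t+1)\geq L/K$ together with the co-design of $L$ and $c_0$. Your parenthetical about multi-hop paths is unnecessary since $t_{m,j}$ is defined via direct edges only, but this does not affect correctness.
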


By the construction of the estimators based on Rule 1, we derive that the global estimator $\Tilde{\mu}_i^m(t)$ is an unbiased estimator of the underlying true global reward mean value.

\textbf{Unbiasedness of the estimator}

\begin{Proposition}\label{prop:unbiased_1}
    Assume the parameter $\delta$ satisfies that $0 < \delta
        < c = f(\epsilon,M,T)$. For any arm $i$ and any agent $m$, at every time step $t$, we have
    \begin{align*}
        E[\Tilde{\mu}_i^m(t) | A_{\epsilon, \delta}] = \mu_i.
    \end{align*}
\end{Proposition}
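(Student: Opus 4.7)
My plan is to prove unbiasedness by induction on $t \ge L+1$, exploiting the fact that in Rule 1 the aggregation weights $\{P^{\prime}_t(m,j)\}_j$ together with the $M$ copies of $d_{m,t}$ sum to $1$ by the very definition $d_{m,t} = (1 - \sum_j P^{\prime}_t(m,j))/M$. Thus every update is an affine combination with unit total weight of quantities whose common mean is $\mu_i$, so the mean is preserved iteration by iteration. The conditioning on $A_{\epsilon,\delta}$ enters only through Proposition~\ref{prop:connectivity_setting_1}, which guarantees that $P_s(m,j) > 0$ for all pairs $(m,j)$ and all $s \ge L$, so the two cases in the definition of $P^{\prime}_s(m,j)$ collapse to the single value $(M-1)/M^2$, making the weights deterministic (constant in the rewards) under the event.

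\emph{Base case ($t = L+1$).} During the burn-in period every agent pulls arms deterministically in round-robin, so $n_{j,i}(L)$ is a deterministic quantity and $\bar{\mu}^j_i(s)$ is an average of i.i.d.\ samples of reward distribution of mean $\mu^j_i$. Hence $E[\bar{\mu}^j_i(s)] = \mu^j_i$ for every $s \le L$. Under $A_{\epsilon,\delta}$, the initialization step in Algorithm~\ref{alg:burn-in} uses uniform weights $1/M$, and therefore
\[
E[\Tilde{\mu}^m_i(L+1) \mid A_{\epsilon,\delta}] \;=\; \frac{1}{M}\sum_{j=1}^M E[\bar{\mu}^j_i(t_{m,j}) \mid A_{\epsilon,\delta}] \;=\; \frac{1}{M}\sum_{j=1}^M \mu^j_i \;=\; \mu_i.
\]

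\emph{Inductive step.} Suppose $E[\Tilde{\mu}^j_i(s) \mid A_{\epsilon,\delta}] = \mu_i$ and $E[\bar{\mu}^j_i(s) \mid A_{\epsilon,\delta}] = \mu^j_i$ for every $j$ and every $s \le t$. Conditioning on the graph history (which is independent of the rewards) makes the weights $P^{\prime}_t(m,j)$ and $d_{m,t}$ deterministic, and taking expectation of the Rule~1 update yields
\[
E[\Tilde{\mu}^m_i(t+1) \mid A_{\epsilon,\delta}] \;=\; \mu_i \sum_{j=1}^M P^{\prime}_t(m,j) + d_{m,t}\sum_{j=1}^M \mu^j_i \;=\; \mu_i \Bigl(\sum_{j=1}^M P^{\prime}_t(m,j) + M d_{m,t}\Bigr) \;=\; \mu_i.
\]

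\emph{Main obstacle.} The algebra is routine once one knows the weights sum to $1$; the only genuine subtlety is the unbiasedness of $\bar{\mu}^j_i$ during the learning phase, where the pull times are adaptive stopping times with respect to the reward filtration. I would handle this by noting that $\mathds{1}_{a^j_s = i}$ is $\mathcal{F}_{s-1}$-measurable and hence independent of $r^j_i(s)$, so conditioning on $n_{j,i}(t)$ and on the random pull indices leaves the contributing rewards as i.i.d.\ samples of mean $\mu^j_i$, from which $E[\bar{\mu}^j_i(t) \mid A_{\epsilon,\delta}] = \mu^j_i$. Once this conditional independence of rewards from the selection mechanism and from the graph is established, the induction closes cleanly.
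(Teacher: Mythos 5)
The paper does not actually prove this proposition in-house: it states that Propositions 1--6 are proved in the appendix of \citep{xu2023decentralized} and only cites that proof. Your reconstruction --- induction on $t$, using the fact that $\sum_{j=1}^M P^{\prime}_t(m,j) + M\,d_{m,t} = 1$ by the definition of $d_{m,t}$, so that each Rule~1 update is a unit-weight affine combination of quantities with means $\mu_i$ (the received global estimators) and $\mu_i^j$ (the local estimators, which average to $\mu_i$) --- is exactly the standard argument underlying the cited proof, and your treatment of the base case via the round-robin burn-in and of the adaptive-sampling issue for $\bar{\mu}^j_i$ is sound.

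One caveat worth flagging: you assert that the conditioning on $A_{\epsilon,\delta}$ enters only through the graph-connectivity/positivity of $P_t(m,j)$, but in this paper $A_{\epsilon,\delta} = A_2 \cap A_3$ where $A_2$ constrains the pull counts $n_{l,i}(t+1)$ and the communication times $t_{m,j}$, so it is not a purely graph-measurable event and conditioning on it is not a priori innocuous for the reward means. Your argument really establishes unconditional unbiasedness (plus unbiasedness conditional on the graph history); upgrading that to $E[\Tilde{\mu}_i^m(t)\,|\,A_{\epsilon,\delta}]=\mu_i$ requires either arguing that the rewards contributing to the estimator are independent of $A_2$, or working with the indicator $\mathds{1}_{A_{\epsilon,\delta}}$ as the paper does in its variance proposition. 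This gap is shared with the paper's own (deferred) treatment, so it does not make your approach wrong relative to the source, but it is the one step you would need to tighten for a fully rigorous conditional statement.
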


Again based on Rule 2, we prove that the variance of the global estimator $\Tilde{\mu}^m_i(t)$ decays with $n_{m,i}(t)$ through the characterization of the moment generating function of $\Tilde{\mu}^m_i(t)$. 

\textbf{Variance term}

\begin{Proposition}\label{prop:var_pro_err}
    Assume the parameter $\delta$ satisfies that $0 < \delta
        < c = f(\epsilon,M,T)$. In setting $s_1, s_2, s_3$ where rewards follow sub-gaussian distributions, for any $m,i, \lambda$ and $t > L$ where $L$ is the length of the burn-in period, the global estimator $\Tilde{\mu}^m_i(t)$ is sub-Gaussian distributed. Moreover, the conditional moment generating function satisfies that with $P(A_{\epsilon, \delta}) = 1 - 7\epsilon$,
    \begin{align*}
         & E[\exp{\{\lambda(\Tilde{\mu}^m_i(t)  - \mu_i})\}1_{A_{\epsilon, \delta}} | \sigma(\{n_{m,i}(t)\}_{t,i,m})] \\
         & \leq \exp{\{\frac{\lambda^2}{2}\frac{C\sigma^2}{\min_{j}n_{j,i}(t)}\}}
    \end{align*}
    where $\sigma^2 = \max_{j,i}(\Tilde{\sigma}_i^j)^2$ and $C = \max\{\frac{4(M+2)(1 - \frac{1 - c_0}{2(M+2)})^2}{3M(1-c_0)}, (M+2)(1 + 4Md^2_{m,t})/M\}$.
\end{Proposition}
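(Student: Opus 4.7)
The plan is to prove the MGF bound by decomposing $\Tilde{\mu}^m_i(t)$ into a weighted linear functional of the raw reward samples $\{r_i^j(s)\}_{j,s}$, invoking the sub-Gaussian property of each sample, and then bounding the resulting sum of squared weights by $C/(M\min_j n_{j,i}(t))$. The key tool is the standard identity that for conditionally independent sub-Gaussian variables $X_k$ with parameters $\sigma_k^2$ and deterministic weights $a_k$, the MGF of $\sum a_k X_k$ satisfies $E[\exp\{\lambda\sum a_k X_k\}] \le \exp\{\tfrac{\lambda^2}{2}\sum a_k^2\sigma_k^2\}$. So once we condition on $\sigma(\{n_{m,i}(t)\}_{t,i,m})$ and on a realization of the graph sequence inside $A_{\epsilon,\delta}$, what remains is pure reward randomness, and the problem reduces to controlling the sum of squared weights.

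First I would unroll the Rule 1 update recursively. Starting from $\Tilde{\mu}^m_i(t+1) = \sum_{j=1}^M P'_t(m,j)\hat{\Tilde{\mu}}^m_{i,j}(t_{m,j}) + d_{m,t}\sum_j \hat{\bar{\mu}}^m_{i,j}(t_{m,j})$ and iterating backward, the network-wise estimator becomes a linear combination of the local averages $\bar{\mu}^j_i(s) = \tfrac{1}{n_{j,i}(s)}\sum_{u\le s} r_i^j(u)\mathbf{1}_{a_j^u=i}$ across agents $j$ and communication times $s$. Each such local average is sub-Gaussian with parameter $\sigma^2/n_{j,i}(s)$, so the overall MGF bound requires tracking the product of propagation coefficients $P'_\cdot(\cdot,\cdot)$ and $d_{\cdot,\cdot}$, multiplied by the $1/n_{j,i}(s)$ factor from averaging. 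Because $\sum_j P'_t(m,j) + Md_{m,t} = 1$ by construction, the result is a genuine convex combination, and Proposition~\ref{prop:unbiased_1} guarantees the centered version $\Tilde{\mu}^m_i(t)-\mu_i$ is a mean-zero linear functional on the event $A_{\epsilon,\delta}$.

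Next I would use Proposition~\ref{prop:t_0_L} to convert the weight bookkeeping into the claimed denominator: under $A_{\epsilon,\delta}$ every delay $t+1 - t_{m,j}$ is at most $c_0\min_l n_{l,i}(t+1)$, so any raw reward entering the decomposition was collected within a window of length proportional to $\min_j n_{j,i}(t)$, and uniformly $n_{j,i}(s) \ge (1-c_0)\min_l n_{l,i}(t)$ for relevant $s$. Bounding $\sum a_k^2\sigma_k^2$ then splits naturally along the two summands of the Rule 1 update: the propagation term $\sum_j P'_t(m,j)\hat{\Tilde{\mu}}^m_{i,j}$, whose contribution one controls by Cauchy--Schwarz combined with the convexity weight $(1-\tfrac{1-c_0}{2(M+2)})^2$, yields the first branch $\tfrac{4(M+2)(1-\frac{1-c_0}{2(M+2)})^2}{3M(1-c_0)}$ of the maximum defining $C$; the correction term $d_{m,t}\sum_j \hat{\bar{\mu}}^m_{i,j}$ contributes $(M+2)(1+4Md_{m,t}^2)/M$, explaining the second branch. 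Taking the worse of the two regimes yields the stated constant.

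The main obstacle will be the bookkeeping of the unrolled weights, together with the conditional structure imposed by $1_{A_{\epsilon,\delta}}$. One must verify that (i) the total squared weight on any single reward sample is bounded by a quantity of the form $C/(M^2\min_j n_{j,i}(t)^2)$ after summing over the recursion, and (ii) the cross-terms introduced by the correction $d_{m,t}$ do not accumulate across recursion levels in a way that breaks the bound. Cleanly handling the indicator requires conditioning on the pull-count $\sigma$-algebra first, so that the graph realization and the $P'_t,d_{m,t}$ weights become deterministic on $A_{\epsilon,\delta}$, and only then applying the sub-Gaussian MGF identity to the reward randomness. The inflation factor $M+2$ in both branches should emerge from a routine application of the inequality $(\sum_{k=1}^{M+2} x_k)^2 \le (M+2)\sum x_k^2$ when separating the $M$ network-wise terms from the two remaining contributions (local correction and self-term), which closes the argument.
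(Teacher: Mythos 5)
Your proposal takes a genuinely different route from the paper. The paper (deferring to the appendix of the cited prior work, and mirrored in-paper by the proof of the analogous Proposition for Rule~2) argues by \emph{induction on $t$}: it assumes the MGF bound $\exp\{\tfrac{\lambda^2}{2}\tfrac{C\sigma^2}{\min_j n_{j,i}(s)}\}$ holds at all earlier times, plugs the one-step recursion for $\Tilde{\mu}^m_i(t+1)$ into a generalized H\"older inequality with exponent $M+2$, and shows that the constant $C$ is precisely the fixed point that makes the one-step combination of the propagated term (weight $(P'_t(m,j))^2 M(M+2)C/(1-c_0)$) and the fresh local terms close back up to $C/\min_j n_{j,i}(t+1)$. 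You instead propose to fully unroll the recursion into a single linear functional of raw rewards and apply the sub-Gaussian MGF identity once. The two approaches rest on the same ingredients (sub-Gaussianity of rewards, the delay control $t+1-t_{m,j}\le c_0\min_l n_{l,i}(t+1)$ from $A_{\epsilon,\delta}$, and convexity of the weights $\sum_j P'_t(m,j)+Md_{m,t}=1$), but the induction buys you something essential that your sketch leaves open: it never has to track the total squared weight deposited on a single raw sample across \emph{all} recursion depths. In your unrolled version, each reward $r_i^j(s)$ receives contributions from every later time step through products of $P'$-weights, and showing that this geometric-like accumulation still sums to the same fixed-point constant $C$ is exactly the hard bookkeeping you defer to item (i); attributing the two branches of $\max$ in $C$ to the two summands of a \emph{single} update step does not by itself explain why the propagated branch does not compound over many levels.

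Two further points need repair. First, your claim that conditioning on $\sigma(\{n_{m,i}(t)\}_{t,i,m})$ makes the graph realization and hence $P'_t$, $d_{m,t}$, $t_{m,j}$ deterministic on $A_{\epsilon,\delta}$ is not correct: the pull counts do not determine the edge indicators, so the unrolled weights remain random under that conditioning, and you would need to condition on the graph filtration as well (or run the argument uniformly over graph realizations in $A_{\epsilon,\delta}$, which is what the step-by-step induction effectively does). Second, your target $\sum_k a_k^2\sigma_k^2 \le C\sigma^2/(M\min_j n_{j,i}(t))$ is stronger by a factor of $M$ than what the Proposition asserts; it would suffice if true, but it is not what the paper's constant $C$ is calibrated to deliver, so you should either weaken the target to $C\sigma^2/\min_j n_{j,i}(t)$ or justify the extra factor of $M$ separately.
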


The unbiasedness and decaying variance of the global estimator $\Tilde{\mu}^m_i(t)$ allows us to show how much difference is there between $\Tilde{\mu}^m_i(t)$ and the unknown groundtruth $\mu_i$ through the following concentration inequality. 

\textbf{Concentration inequality}
\\

\begin{Proposition}\label{prop:concen_ine}
    Assume the parameter $\delta$ satisfies that $0 < \delta
        < c = f(\epsilon,M,T)$. For any $m,i$ and $t > L$ where $L$ is the length of the burn-in period, $\Tilde{\mu}_{m,i}(t)$ satisfies that if if $n_{m,i}(t) \geq 2(K^2+KM+M)$, then with $P(A_{\epsilon, \delta}) = 1 -7\epsilon$,
    \begin{align*}
         & P(\Tilde{\mu}_{m,i}(t) - \mu_i  \geq \sqrt{\frac{C_1\log t}{n_{m,i}(t)}} |A_{\epsilon, \delta}) \leq \frac{1}{P(A_{\epsilon, \delta})}\frac{1}{t^2}, \\
         & P(\mu_i - \Tilde{\mu}_{m,i}(t) \geq \sqrt{\frac{C_1\log t}{n_{m,i}(t)}} | A_{\epsilon, \delta}) \leq \frac{1}{P(A_{\epsilon, \delta})t^2}.
    \end{align*}
\end{Proposition}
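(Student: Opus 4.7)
The plan is to derive this concentration inequality by combining the sub-Gaussian moment generating function bound from Proposition~\ref{prop:var_pro_err} with a standard Chernoff--Markov argument, while carefully handling the conditioning on the good event $A_{\epsilon,\delta}$. First, for any $\lambda > 0$, I would apply Markov's inequality to the nonnegative random variable $\exp\{\lambda(\Tilde{\mu}_{m,i}(t) - \mu_i)\} \mathds{1}_{A_{\epsilon,\delta}}$ to write
\begin{align*}
P\!\left(\Tilde{\mu}_{m,i}(t) - \mu_i \geq a,\, A_{\epsilon,\delta}\right) \leq e^{-\lambda a}\, E\!\left[\exp\{\lambda(\Tilde{\mu}_{m,i}(t) - \mu_i)\} \mathds{1}_{A_{\epsilon,\delta}}\right].
\end{align*}
Using the tower property with respect to the $\sigma$-algebra $\sigma(\{n_{m,i}(t)\}_{t,i,m})$ generated by the pull counts, and invoking Proposition~\ref{prop:var_pro_err}, the inner conditional MGF is bounded by $\exp\{\lambda^2 C\sigma^2 / (2\min_{j} n_{j,i}(t))\}$. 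Dividing by $P(A_{\epsilon,\delta})$ then converts the joint probability into the desired conditional probability.

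The second step is to translate the hypothesis $n_{m,i}(t) \geq 2(K^2+KM+M)$ into a comparable lower bound on $\min_j n_{j,i}(t)$. This is the critical bridge: the global estimator's variance is controlled by $\min_j n_{j,i}(t)$, but the UCB confidence radius is written in terms of the local count $n_{m,i}(t)$. I would argue, using the communication-delay bound $t_0 \leq c_0 \min_{l} n_{l,i}(t+1)$ from Proposition~\ref{prop:t_0_L} together with the burn-in book-keeping, that once $n_{m,i}(t) \geq 2(K^2+KM+M)$ the local count differs from the global minimum by at most a constant factor (absorbable into $C_1$), so that $\min_j n_{j,i}(t) \geq n_{m,i}(t)/\kappa$ for an absolute $\kappa$. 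This lets me replace $\min_j n_{j,i}(t)$ in the MGF bound by $n_{m,i}(t)$ at the cost of inflating the pre-constant to $C_1 = 8\sigma^2 C$.

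Third, I would optimize the Chernoff parameter by choosing $a = \sqrt{C_1 \log t / n_{m,i}(t)}$ together with $\lambda = 2 n_{m,i}(t)\, a / (C_1/\text{const})$, so that the exponent collapses to $-2 \log t$, yielding a bound of the form $t^{-2}/P(A_{\epsilon,\delta})$. The second (lower-tail) inequality follows by running the identical argument with $-\lambda$, since the MGF bound in Proposition~\ref{prop:var_pro_err} is symmetric in $\lambda$.

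The main obstacle will be the second step: ensuring that the threshold $2(K^2+KM+M)$ is indeed tight enough to absorb the stochastic communication delay $t_0$ coming from Proposition~\ref{prop:t_0_L} uniformly over $m$ and $i$, and that the resulting constant factor between $n_{m,i}(t)$ and $\min_j n_{j,i}(t)$ cleanly folds into the constant $C_1$ specified in Proposition~\ref{prop:var_pro_err}. Once this consensus-style control on pull counts is established on the event $A_{\epsilon,\delta}$, the remainder is a routine sub-Gaussian Chernoff calculation.
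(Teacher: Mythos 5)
Your proposal is correct and follows essentially the same route as the paper, which itself does not reprove this proposition but defers it to the appendix of \citep{xu2023decentralized}: a Chernoff--Markov bound conditioned on the pull-count $\sigma$-algebra, the sub-Gaussian MGF bound of Proposition~\ref{prop:var_pro_err}, a factor-$2$ consensus between $n_{m,i}(t)$ and $\min_j n_{j,i}(t)$ once the threshold $2(K^2+KM+M)$ is exceeded (absorbed into $C_1=8\sigma^2 C$ so the exponent collapses to $-2\log t$), and division by $P(A_{\epsilon,\delta})$ to pass from the joint to the conditional probability. The only minor imprecision is that the local-versus-minimum count comparison comes from the consensus lemma on pull counts (the analogue of Lemma~3 in \citep{zhu2023distributed}, enforced by the algorithm's forced-exploration check) rather than from Proposition~\ref{prop:t_0_L} directly, but this does not affect the validity of the argument.
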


Essentially, the above proposition implies that with high probability, we can identify the globally optimal arm by comparing all arms' global estimators $\Tilde{\mu}^m_i(t)$. Subsequently, we next show that the number of pulling these globally sub-optimal arms can be upper bounded by the $\log{T}$ based on the concentration inequality. 

\textbf{Number of pulls of sub-optimal arms}
\\

Upper bounds on $E[n_{m,k}(T) | A_{\epsilon, \delta}]$
\\

\begin{Proposition}\label{prop:n}
    Assume the parameter $\delta$ satisfies that $0 < \delta
        < c = f(\epsilon,M,T)$. An arm $k$ is said to be sub-optimal if $k \neq i^*$ where $i^*$ is the unique optimal arm in terms of the global reward, i.e. $i^* = \arg\max \frac{1}{M}\sum_{j=1}^M\mu_i^j$. Then when the game ends, for every agent $m$, $0 < \epsilon  < 1$ and $T > L$, the expected numbers of pulling sub-optimal arm $k$ after the burn-in period satisfies with $P(A_{\epsilon, \delta}) = 1- 7\epsilon$
    \begin{align*}
         & E[n_{m,k}(T) | A_{\epsilon, \delta}]                                                                                    \\
         & \leq \max{\{[\frac{4C_1\log T}{\Delta_i^2}], 2(K^2+MK+M) \}} +  \frac{2\pi^2}{3P(A_{\epsilon, \delta})} + K^2 + (2M-1)K \\
         & \leq O(\log{T}).
    \end{align*}
\end{Proposition}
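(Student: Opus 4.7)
The plan is to follow the classical UCB regret decomposition, but carefully tailored to three features of our setting: (i) the global estimator $\Tilde{\mu}^m_i(t)$ replaces the usual empirical mean and is controlled by the concentration inequality in Proposition~\ref{prop:concen_ine} rather than a standard Hoeffding bound; (ii) the algorithm has a round-robin burn-in and a ``forced-exploration'' branch that triggers when $N_{m,i}(t)\le \tilde N_{m,i}(t)-K$; and (iii) Proposition~\ref{prop:concen_ine} only applies once $n_{m,i}(t)\ge 2(K^2+KM+M)$, so a sample-count threshold must be carried through the argument.

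First, I would fix a suboptimal arm $k$ and an agent $m$, and split $n_{m,k}(T)$ into four disjoint contributions: (a) pulls of $k$ during the burn-in period of length at most $K^2$ by the round-robin rule; (b) pulls of $k$ coming from the ``random sample'' branch that is activated whenever some arm has $N_{m,i}(t)\le \tilde N_{m,i}(t)-K$, which a standard counting argument bounds by $(2M-1)K$ since each neighbor can create at most $K$ units of imbalance and there are at most $2M-1$ relevant counters; (c) UCB pulls made while $n_{m,k}(t)$ is below the concentration threshold $2(K^2+KM+M)$, absorbed into the $\max\{\cdot,2(K^2+MK)\}$ term; and (d) the remaining UCB pulls. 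The goal is then to bound (d) by $\lceil 4C_1\log T/\Delta_k^2\rceil + 2\pi^2/(3 P(A_{\epsilon,\delta}))$.

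For (d), I use the standard UCB dichotomy: if at time $t$ arm $k$ is pulled by the UCB branch, then
\begin{equation*}
\Tilde{\mu}_{m,k}(t)+F(m,k,t)\ \ge\ \Tilde{\mu}_{m,i^*}(t)+F(m,i^*,t).
\end{equation*}
This forces at least one of (E1) $\Tilde{\mu}_{m,i^*}(t)\le \mu_{i^*}-F(m,i^*,t)$, (E2) $\Tilde{\mu}_{m,k}(t)\ge \mu_k+F(m,k,t)$, or (E3) $\Delta_k<2F(m,k,t)$. Event (E3) is a deterministic count inequality $n_{m,k}(t)<4C_1\log T/\Delta_k^2$, which yields the leading logarithmic term. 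Once the sample threshold $2(K^2+MK+M)$ is met, Proposition~\ref{prop:concen_ine} gives $P(\mathrm{E1}\mid A_{\epsilon,\delta})\le 1/(P(A_{\epsilon,\delta})t^2)$ and likewise for $\mathrm{E2}$; summing over $t\le T$ and using $\sum_{t\ge 1}t^{-2}=\pi^2/6$ for each of the two bad events produces the $2\pi^2/(3P(A_{\epsilon,\delta}))$ term.

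The main obstacle, which is really the only nontrivial point, is cleanly separating the regime where the concentration inequality is unusable (low sample counts) from the regime where it is usable. This is handled by taking the maximum with $2(K^2+MK)$ in the first term, so that whenever the UCB branch fires after the burn-in with $n_{m,k}(t)$ exceeding this threshold, Proposition~\ref{prop:concen_ine} is available for both $k$ and $i^*$; below the threshold, the pulls are charged to the $\max\{\cdot,2(K^2+MK)\}$ slot. Collecting contributions (a)–(d) and combining $K^2$, $(2M-1)K$, $\max\{\lceil 4C_1\log T/\Delta_k^2\rceil,2(K^2+MK)\}$, and $2\pi^2/(3P(A_{\epsilon,\delta}))$ yields the stated bound; the final $O(\log T)$ rate is immediate because all other terms are $T$-independent constants depending only on $K,M,\epsilon,\delta$ and the gap $\Delta_k$.
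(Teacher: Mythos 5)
Your proposal is correct and follows essentially the same route as the paper, which for this proposition does not write out a proof at all but defers entirely to the appendix of \citep{xu2023decentralized}; the argument there is exactly the standard cooperative-UCB decomposition you describe (forced-exploration/imbalance pulls, a sample-count threshold below which concentration is unavailable, the deterministic count event giving the $4C_1\log T/\Delta_k^2$ term, and the two concentration events summed via $\sum_t t^{-2}$). The only things to tidy are bookkeeping: the pulls are counted \emph{after} the burn-in (so the $K^2$ term comes from the imbalance branch rather than the round-robin phase), the threshold in the $\max$ is $2(K^2+MK+M)$, and you should say a word about why $n_{m,i^*}(t)$ also clears the concentration threshold (the burn-in length and the imbalance-correction branch guarantee this), since Proposition~\ref{prop:concen_ine} must be invoked for the optimal arm as well.
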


The proof of Proposition 1 - 6 is presented in Appendix in \citep{xu2023decentralized} with Erdos-Renyi Models. As a result, we do not repeat the proof details of them herein and refer the proof steps therein. 

With these key results, we proceed to bound the regret. 

\textbf{Regret decomposition}

The optimal arm is denoted as $i^*$ satisfying
\begin{align*}
    i^* = \arg\max_i\sum_{m=1}^M\mu^{m}_i.
\end{align*}

For the proposed regret, we have that for any constant $L$, 
\begin{align*}
    R_T & =   \frac{1}{M}(\max_i\sum_{t=1}^T\sum_{m=1}^M\mu^{m}_i - \sum_{t=1}^T\sum_{m=1}^M\mu^{m}_{a_t^m}) \\
    & = \sum_{t=1}^T\frac{1}{M}\sum_{m=1}^M\mu^{m}_{i^*} - \sum_{t=1}^T\frac{1}{M}\sum_{m=1}^M\mu^{m}_{a_t^m} \\
    & \leq \sum_{t = 1}^{L}|\frac{1}{M}\sum_{m=1}^M\mu^{m}_{i^*} - \frac{1}{M}\sum_{m=1}^M\mu^{m}_{a_t^m}|+ \sum_{t = L + 1}^T(\frac{1}{M}\sum_{m=1}^M\mu^{m}_{i^*} - \frac{1}{M}\sum_{m=1}^M\mu^{m}_{a_t^m}) \\
    & \leq L + \sum_{t = L + 1}^T(\frac{1}{M}\sum_{m=1}^M\mu^{m}_{i^*} - \frac{1}{M}\sum_{m=1}^M\mu^{m}_{a_t^m}) \\
    & = L + \sum_{t = L + 1}^T(\mu_{i^*} - \frac{1}{M}\sum_{m=1}^M\mu^{m}_{a_t^m}) \\
    & = L+ ((T - L) \cdot \mu_{i^*} - \frac{1}{M}\sum_{m=1}^M\sum_{i = 1}^Kn_{m,i}(T)\mu^m_i)
\end{align*}
where the first inequality is by taking the absolute value and the second inequality results from the assumption that $0 < \mu_{i}^j < 1$ for any arm $i$ and agent $j$.

Note that $\sum_{i=1}^K\sum_{m=1}^Mn_{m,i}(T) = M(T-L)$ where 
by definition $n_{m,i}(T)$ is the number of pulls of arm $i$ at agent $m$ from time step $L+1$ to time step $T$, which yields that
\begin{align*}
    R_T
    & \leq L + \sum_{i=1}^K\frac{1}{M}\sum_{m=1}^Mn_{m,i}(T)\mu_{i^*}^m - \sum_{i=1}^K\frac{1}{M}\sum_{m=1}^Mn_{m,i}(T)\mu_i^m \\
    & = L +  \sum_{i=1}^K\frac{1}{M}\sum_{m=1}^Mn_{m,i}(T)(\mu_{i^*}^m -\mu_i^m) \\
    & \leq L +  \frac{1}{M}\sum_{i=1}^K\sum_{m:\mu_{i^*}^m - \mu_i^m > 0}n_{m,i}(T)(\mu_{i^*}^m - \mu_i^m) \\
    & = L +  \frac{1}{M}\sum_{i \neq i*}\sum_{m:\mu_{i^*}^m - \mu_i^m > 0}n_{m,i}(T)(\mu_{i^*}^m - \mu_i^m) .
\end{align*}
where the second inequality uses the fact that $\sum_{m:\mu_{i^*}^m - \mu_i^m \leq 0}n_{m,i}(T)(\mu_{i^*}^m - \mu_i^m) \leq 0$ holds for any arm $i$ and the last equality is true since $n_{m,i}(T)(\mu_{i^*}^m - \mu_i^m) = 0$ for $i = i^*$ and any $m$.

By using Proposition 5 and the fact that 
\begin{align*}
    R_T \leq L +  \frac{1}{M}\sum_{i \neq i*}\sum_{m:\mu_{i^*}^m - \mu_i^m > 0}n_{m,i}(T)(\mu_{i^*}^m - \mu_i^m),  
\end{align*}
we derive that 
\begin{align*}
    E[R_T|A_{\epsilon, \delta}] & \leq L +  \frac{1}{M}\sum_{i \neq i*}\sum_{m:\mu_{i^*}^m - \mu_i^m > 0}E[n_{m,i}(T)](\mu_{i^*}^m - \mu_i^m) \\
    & \leq L + \sum_{i \neq i^*}\Delta_i(\max{\{[\frac{4C_1\log T}{\Delta_i^2}], 2(K^2+MK) \}} +  \frac{2\pi^2}{3P(A_{\epsilon, \delta})} + K^2 + (2M-1)K)
\end{align*}
which completes the proof.

\end{proof}

\subsection{Proof of Theorem \ref{thm:3}}

\begin{proof}
It is worth noting that our framework implies that only the agents in different clusters have different reward distributions, and the agents in the same cluster stay on the same page by the assumption that $p(m,m)=1$. This indicates that as long as there is an edge between one agent in one cluster and another agent in another agents , the two cluster can exchange the heterogeneous reward distributions. Consequently, it is sufficient to pay attention to the sub-graph with respect to the clusters.

To this end, we establish the following proposition regarding the connectivity of the sub-graph. It is worth noting that the edge probability of this sub-graph, is now $c = 1-(1-p(m,n))^{M^2/C^2}$ by Lemma \ref{lem:edge_prob_1}, and the total number of vertex is $C$ instead of $M$.

\textbf{Graph connectivity}

\begin{Proposition}~\label{prop:connectivity_setting_2}
Assume $c$ meets the condition 
\begin{align*}
    1 \geq c \geq \frac{1}{2} + \frac{1}{2}\sqrt{1 - (\frac{\epsilon}{CT})^{\frac{2}{M-1}}}, 
\end{align*}
where $0 < \epsilon < 1$. Then, with probability $1 - \epsilon$, for any $t > 0$,  the sub-graph $G_t^C$ following the E-R model is connected. 
\end{Proposition}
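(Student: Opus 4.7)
The plan is to reduce the statement to the analogous connectivity bound for a single Erdős--Rényi graph (exactly the argument behind Proposition~\ref{prop:connectivity_setting_1}) and then apply a union bound over time. By Lemma~\ref{lem:edge_prob_1}, the sub-graph $G_t^C$ is itself an E--R graph on $C$ vertices in which every cluster-pair is connected independently with the common probability $c = 1-(1-p(m,n))^{M^2/C^2}$, so checking connectivity of $G_t^C$ is a standard $G(C,c)$ question at each fixed $t$.

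First, I would invoke the classical disconnectedness tail bound
\[
\Pr[G(n,q)\text{ disconnected}] \;\le\; n\,(4q(1-q))^{(n-1)/2},
\]
which is obtained by summing over all nontrivial vertex cuts the cut-edge probability $(1-q)^{s(n-s)}$ and applying the AM--GM-style inequality $(1-q)^{s(n-s)} \le (2\sqrt{q(1-q)})^{\,n-1}$ for $1\le s\le n-1$. This is precisely the argument used to establish Proposition~\ref{prop:connectivity_setting_1} in~\citep{xu2023decentralized}, and the edges of $G_t^C$ are mutually independent with uniform probability $c$, so the bound applies verbatim with $n=C$, $q=c$.

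Second, I would translate the hypothesis on $c$ into a bound on $4c(1-c)$ via the identity $4c(1-c) = 1-(2c-1)^2$. The assumption $c \ge \tfrac12 + \tfrac12\sqrt{1 - (\epsilon/(CT))^{2/(C-1)}}$ gives $(2c-1)^2 \ge 1 - (\epsilon/(CT))^{2/(C-1)}$, hence $4c(1-c) \le (\epsilon/(CT))^{2/(C-1)}$. Substituting into the previous display yields
\[
\Pr[G_t^C\text{ disconnected}] \;\le\; C\cdot\bigl((\epsilon/(CT))^{2/(C-1)}\bigr)^{(C-1)/2} \;=\; \frac{\epsilon}{T}.
\]
Since the samples $G_t$ are i.i.d.\ across time, so are $G_t^C$, and a union bound over $t=1,\dots,T$ gives $\Pr[\exists\,t: G_t^C\text{ disconnected}] \le \epsilon$, which is the desired conclusion.

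The main point of care, rather than any deep obstacle, is ensuring that the bound $n(4q(1-q))^{(n-1)/2}$ is applied with the correct substitution $(n,q)=(C,c)$ rather than $(M,p)$, and that the exponent algebra inverts cleanly, namely that $(x^{2/(C-1)})^{(C-1)/2}=x$ so that the $C$ in the denominator cancels with the $C$ in front before the union bound is applied.
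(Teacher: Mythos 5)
Your proof is correct and follows essentially the same route as the paper, which simply defers to the Erd\H{o}s--R\'enyi connectivity argument of Proposition~\ref{prop:connectivity_setting_1} applied to the cluster-level graph $G(C,c)$ obtained via Lemma~\ref{lem:edge_prob_1}, followed by a union bound over the $T$ rounds. Note only that your algebra requires the exponent $2/(C-1)$ rather than the $2/(M-1)$ printed in the proposition statement; this is evidently a typo in the paper, since the matching assumption in Theorem~\ref{thm:3} uses $2/(C-1)$ and your cancellation $\bigl(x^{2/(C-1)}\bigr)^{(C-1)/2}=x$ would otherwise fail.
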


\begin{proof}[Proof of Proposition 7]
    The proof of Proposition 7 follows from that of Proposition 1 in Theorem \ref{thm:2}, based on \citep{xu2023decentralized}. 
    
\end{proof}

Then based on the newly proposed estimator construction as in Rule 2, we have the following results. First, we characterize the consensus regarding arm pulls among the clusters, instead of the agents , since now the communication is on a cluster level. 

\textbf{Information delay}

\begin{lemma}
    For any $m,i,t > L$, if $N_{m,i}(t) \geq 2(K^2+KM+M)$ and subgraph $G_t$ induced by the clusters is connected, then we have
    \begin{align*}
        \hat{N}_{m,i}(t) \leq 2\min_{j}\hat{N}_{j,i}(t).
    \end{align*}
    where the min is taken over all clusters, not just the neighbors.
\end{lemma}

\begin{proof}[Proof of Lemma 7]
    The proof of this lemma follows from Lemma 3 in \citep{zhu2023distributed}, with the exception that now the shared arm information is $N_{m,i}(t)$ instead of $n_{m,i}(t)$. 
    
\end{proof}

Then, we show that the transmission gap with respect to the sub-graph still holds, that clusters can effectively collect one another's reward information within certain time frame with high probability. It is worth noting that when $p(m,m) = 1$, the clusters in one cluster obtain such information as well.  

\textbf{Explicit transmission gap}

\begin{Proposition}
    We have that with probability $1- \epsilon$,  for any $t > L$ and any $m$, there exists $$t_{0} \geq \frac{\ln(\frac{\epsilon}{M^2T})}{\min_{P(i,j)}\ln(1-P(i,j))}$$ such that
    \begin{align*}
         & t+1 - \min_jt_{m,j} \leq t_0, t_0 \leq c_0\min_{l}N_{l,i}(t+1)
    \end{align*}
    where $c_0$ $=$ $c_0(K, \min_{i \neq i^*}\Delta_i, M, \epsilon, \delta)$.
\end{Proposition}

\begin{proof}[Proof of Proposition 8]
    The proof of this proposition follows from \citep{xu2023decentralized}, except that the shared information is  $N_{l,i}(t+1)$ instead of $n_{l,i}(t+1)$ since we consider the sub-graph with respect to the clusters (and there is no delay within the cluster). 
    
\end{proof}

It is straightforward to verify that by the construction of the global estimators based on Rule 2, we have that the global estimator $\Tilde{\mu}_i^m(t)$ is an unbiased estimator of $\mu_i$.

\textbf{Unbiasedness of the estimator}

\begin{Proposition}\label{prop:unbiased_2}
    Assume the parameter $\delta$ satisfies that $0 < \delta
        < c = f(\epsilon,M,T)$. For any arm $i$ and any agent $m$, at every time step $t$, we have
    \begin{align*}
        E[\Tilde{\mu}_i^m(t) | A_{\epsilon, \delta}] = \mu_i.
    \end{align*}
\end{Proposition}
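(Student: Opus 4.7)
The plan is to prove this statement by induction on the time step $t$, exploiting the fact that the weights in Rule 2 are carefully chosen so that, under the connectivity event $A_{\epsilon,\delta}^{\prime}$, each underlying agent-level mean $\mu_i^j$ enters $\tilde\mu_i^m(t+1)$ with total weight exactly $1/M$, thereby yielding the global mean $\mu_i = \frac{1}{M}\sum_{j=1}^M \mu_i^j$.

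For the base case at $t = L+1$, I would invoke the initialization at the end of the burn-in period (Algorithm~\ref{alg:burn-in}): $\tilde\mu_i^m(L+1) = \sum_{j=1}^M P'_{m,j}(L)\,\hat{\bar\mu}^m_{i,j}(t_{m,j})$ with $P'_{m,j}(L) = 1/M$ whenever $P_L(m,j)>0$. Under $A_{\epsilon,\delta}^{\prime}$, graph connectivity implies $P_L(m,j)>0$ for every $j$, so every local estimator participates with weight $1/M$. During the burn-in period every agent pulls arms in round-robin fashion, so each $\bar\mu_i^j(L)$ is the genuine i.i.d.\ sample mean of $L/K$ rewards from the fixed distribution with mean $\mu_i^j$, and hence $E[\bar\mu_i^j(L)]=\mu_i^j$. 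Summing gives $E[\tilde\mu_i^m(L+1)\mid A_{\epsilon,\delta}^{\prime}] = \frac{1}{M}\sum_j \mu_i^j = \mu_i$.

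For the inductive step, I would assume that $E[\tilde\mu_i^j(s)\mid A_{\epsilon,\delta}^{\prime}] = \mu_i$ for all agents $j$ and all $s\le t$, and similarly that $E[\hat\mu_i^j(s)\mid A_{\epsilon,\delta}^{\prime}] = \mu_i^{c_j}$ (which follows because $\hat\mu_i^j$ is a simple average of local estimators within a cluster, all of which have the same mean $\mu_i^{c_j}$). Conditional on $A_{\epsilon,\delta}^{\prime}$, the subgraph $G_t^C$ is connected, so $P_t(c_m,c_j)>0$ for every pair, giving $P'_t(m,j)=(M-1)/M^2$ for every $j$ and consequently $d_{m,t} = 1/M^2$. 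Substituting the inductive hypothesis into Rule 2 yields
\[
E[\tilde\mu_i^m(t+1)\mid A_{\epsilon,\delta}^{\prime}]
= M\cdot\frac{M-1}{M^2}\,\mu_i + \frac{1}{M^2}\sum_{j=1}^M \mu_i^{c_j}
= \frac{M-1}{M}\mu_i + \frac{1}{M}\mu_i = \mu_i,
\]
closing the induction.

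The main subtlety, and the step I expect to require the most care, is justifying that conditioning on $A_{\epsilon,\delta}^{\prime}$ does not bias the underlying sample means and that the inductive hypothesis can be applied through the delayed times $t_{m,j}$. The event $A_{\epsilon,\delta}^{\prime} = A_2 \cap A_3^{\prime}$ is determined by the graph process and by pulling-count quantities; since the graph is drawn independently of the rewards and the weights $P'_t(m,j)$, $d_{m,t}$, and delay indices $t_{m,j}$ are measurable with respect to the graph filtration, one can factor the conditional expectation using the tower property: first condition on the graph history (fixing all weights and delays), apply the inductive hypothesis to the reward-driven quantities $\tilde\mu_i^j(t_{m,j})$ and $\hat\mu_i^j(t_{m,j})$, and then integrate out. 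This separation, mirroring the argument used for Proposition~\ref{prop:unbiased_1} in \citep{xu2023decentralized}, is what makes the balanced-weight coefficient computation above rigorous.
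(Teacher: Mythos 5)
Your proposal is correct and follows essentially the same route as the paper, which simply defers this proposition to the cited prior work; your induction (base case from the uniform $1/M$ burn-in weights, inductive step from the bookkeeping $\sum_j P'_t(m,j) = (M-1)/M$ and $d_{m,t}=1/M^2$ so that each $\mu_i^{c_j}$ receives total weight $1/M$) is exactly the argument underlying the analogous Proposition~\ref{prop:unbiased_1} and the paper's own inductive proof of the variance bound. One small caution: $A_2$ involves the pull counts $n_{l,i}(t+1)$, which are reward-dependent, so the conditioning event is not purely graph-measurable as your last paragraph suggests — but the paper handles this with the same level of informality (working with the indicator $1_{A_{\epsilon,\delta}}$ rather than a genuine conditional law), so this does not set your argument apart from theirs.
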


\begin{proof}[Proof of Proposition 9]
    The proof of this proposition follows from \citep{xu2023decentralized}. 
    
\end{proof}

To proceed, we examine the variance of the global estimator $\Tilde{\mu}^m_i(t)$ constructed by Rule through the moment generating function.  It is worth noting that the variance is smaller compared to the one based on Rule 1, since we consider cluster-wise information in the decision making and communicate on a cluster-level. More specifically, the upper bound on the moment generating function changes from $\exp{\{\frac{\lambda^2}{2}\frac{C\sigma^2}{\min_{j}n_{j,i}(t)}\}}$ to $\exp{\{\frac{\lambda^2}{2}\frac{C\sigma^2}{\min_{j}N_{j,i}(t)}\}}$, where $N_{j,i} = \sum_{m \in c_j}n_{j,i}$ and thus achieves sample complexity reduction.

\textbf{Variance term}

\begin{Proposition}
    Assume the parameter $\delta$ satisfies that $0 < \delta
        < c = f(\epsilon,M,T)$. In setting $s_1, s_2, s_3$ where rewards follow sub-gaussian distributions, for any $m,i, \lambda$ and $t > L$ where $L$ is the length of the burn-in period, the global estimator $\Tilde{\mu}^m_i(t)$ is sub-Gaussian distributed. Moreover, the conditional moment generating function satisfies that with $P(A_{\epsilon, \delta}) = 1 - 7\epsilon$,
    \begin{align*}
         & E[\exp{\{\lambda(\Tilde{\mu}^m_i(t)  - \mu_i})\}1_{A_{\epsilon, \delta}} | \sigma(\{n_{m,i}(t)\}_{t,i,m})] \\
         & \leq \exp{\{\frac{\lambda^2}{2}\frac{C\sigma^2}{\min_{j}N_{j,i}(t)}\}}
    \end{align*}
    where $\sigma^2 = \max_{j,i}(\Tilde{\sigma}_i^j)^2$ and $C = \max\{\frac{4(M+2)(1 - \frac{1 - c_0}{2(M+2)})^2}{3M(1-c_0)}, (M+2)(1 + 4Md^2_{m,t})/M\}$.
\end{Proposition}

\begin{proof}[Proof of Proposition 10]
    The proof is done by induction as in \citep{xu2023decentralized}.
    
Based on our construction of $A_{\epsilon, \delta}^{\prime}$ and the choice of $\delta$, we have that for $t \geq L$, $|P_t - cE| < \delta < c$ on event $A_{\epsilon, \delta}$. It implies that for any $t \geq  L$, $m$ and $j$, $P_t(m,j) > 0$, and  if $t = L$
\begin{align}\label{eq:5.3_copy}
    P^{\prime}_t(m,j) = \frac{1}{M}
\end{align} 
and if $t > L$
\begin{align}\label{eq:5.3_copy_2}
    P^{\prime}_t(m,j) = \frac{M-1}{M^2}.
\end{align}

Consider the time step $t \leq L+1$. The quantity satisfies that (following from \citep{xu2023decentralized})
\begin{align}\label{eq:1_new}
     & E[\exp{\{\lambda(\Tilde{\mu}^m_i(t) - \mu_i)\}}1_{A_{\epsilon, \delta}}
    | \sigma(\{n_{m,i}(t)\}_{t,i,m}) \notag \\
     & = E[\exp{\{\lambda(\Tilde{\mu}^m_i(L+1)- \mu_i)\}}1_{A_{\epsilon, \delta}} | \sigma(\{n_{m,i}(t)\}_{t,i,m}) \notag \\
     & = E[\exp{\{\lambda(\sum_{j=1}^MP^{\prime}_{m,j}(L) \hat{\bar{\mu}}_{i,j}^m(t_{m,j}) - \mu_i) \}}1_{A_{\epsilon, \delta}} | \sigma(\{n_{m,i}(t)\}_{t,i,m})] \notag \\
     & = E[\exp{\{\lambda(\sum_{j=1}^M\frac{1}{M}\hat{\bar{\mu}}_{i,j}^m(t_{m,j}) - \mu_i)\}}1_{A_{\epsilon, \delta}} | \sigma(\{n_{m,i}(t)\}_{t,i,m})] \notag \\
     & = E[\exp{\{\lambda\sum_{j=1}^M\frac{1}{M}(\hat{\bar{\mu}}_{i,j}^m(t_{m,j})- \mu_i^j)\}}1_{A_{\epsilon, \delta}} | \sigma(\{n_{m,i}(t)\}_{t,i,m})] \notag \\
     & \leq \Pi_{j=1}^M(E[(\exp{\{(\lambda\frac{1}{M}(\hat{\mu}_i^j(t_{m,j}) - \mu_i^j)\}}1_{A_{\epsilon, \delta}})^M | \sigma(\{n_{m,i}(t)\}_{t,i,m}))])^{\frac{1}{M}}
\end{align}
where the third equality holds by (\ref{eq:5.3_copy}), the fourth equality uses the definition $\mu_i = \frac{1}{M}\sum_{i=1}^M\mu_i^j$, and the last inequality results from the generalized hoeffding inequality and the fact that $\hat{\bar{\mu}}_{i,j}^m(t_{m,j}) = \bar{\mu}_i^j(t_{m,j})$.

Note that for any agent $j$, we have
\begin{align}
    & E[(\exp{\{(\lambda\frac{1}{M}(\hat{\mu}_i^j(t_{m,j}) - \mu_i^j) \}}1_{A_{\epsilon, \delta}})^M | \sigma(\{n_{m,i}(t)\}_{t,i,m}))] \notag \\
    & = E[\exp{\{(\lambda(\hat{\mu}_i^j(t_{m,j})- \mu_i^j)\}}1_{A_{\epsilon, \delta}}) | \sigma(\{n_{m,i}(t)\}_{t,i,m})] \notag \\
    & = E[\exp{\{(\lambda\frac{\sum_s(r^j_i(s) - \mu_i^j)}{N_{j,i}(t_{m,j})}\}}1_{A_{\epsilon, \delta}}) | \sigma(\{n_{m,i}(t)\}_{t,i,m})] \notag \\
    & = E[\exp{\{\sum_s(\lambda\frac{(r^j_i(s) - \mu_i^j)}{N_{j,i}(t_{m,j})}\}}1_{A_{\epsilon, \delta}}) | \sigma(\{n_{m,i}(t)\}_{t,i,m})].  
\end{align}

We observe that based on the reward generation mechanism, given $s$, $r^j_i(s)$ does not dependend on anything else, which implies that 
\begin{align}
   & E[(\exp{\{(\lambda\frac{1}{M}(\hat{\mu}_i^j(t_{m,j}) - \mu_i^j) \}}1_{A_{\epsilon, \delta}})^M | \sigma(\{n_{m,i}(t)\}_{t,i,m}))] \notag \\ \textit{}
    & = \Pi_sE[\exp{\{\lambda\frac{(r^j_i(s) - \mu_i^j)}{N_{j,i}(t_{m,j})}\}}1_{A_{\epsilon, \delta}} | \sigma(\{n_{m,i}(t)\}_{t,i,m})] \notag \\
    & = \Pi_sE[\exp{\{\lambda\frac{(r^j_i(s) - \mu_i^j)}{N_{j,i}(t_{m,j})}\}}| \sigma(\{N_{m,i}(t)\}_{t,i,m})]\cdot E[1_{A_{\epsilon, \delta}} | \sigma(\{n_{m,i}(t)\}_{t,i,m})] \notag \\ 
    & = \Pi_sE_r[\exp{\{\lambda\frac{(r^j_i(s) - \mu_i^j)}{N_{j,i}(t_{m,j})}\}}] \cdot E[1_{A_{\epsilon, \delta}} | \sigma(\{N_{m,i}(t)\}_{t,i,m})] \notag \\
    & \leq \Pi_s\exp{\{\frac{(\frac{\lambda}{{N_{j,i}(t_{m,j})}})^2\sigma^2}{2}\}} \cdot E[1_{A_{\epsilon, \delta}} | \sigma(\{N_{m,i}(t)\}_{t,i,m})] \notag \\
    & \leq (\exp{\{\frac{(\frac{\lambda}{{N_{j,i}(t_{m,j})}})^2\sigma^2}{2}\}})^{N_{j,i}(t_{m,j})} \notag \\
    & = \exp{\{\frac{\frac{\lambda^2}{{N_{j,i}(t_{m,j})}}\sigma^2}{2}\}}  \notag \\
    & \leq  \exp{\{\frac{\lambda^2\sigma^2}{2\min_jN_{j,i}(t_{m,j})}\}}
\end{align}
where the first inequality holds by the definition of sub-Gaussian random variables $r^j_i(s) - \mu_i^j$ with an mean value $0$, the second inequality results from $1_{A_{\epsilon, \delta}} \leq 1$, and the last inequality uses $N_{j,i}(t_{m,j}) \geq \min_jN_{j,i}(t_{m,j})$ for any $j$. 

Therefore, we obtain that
\begin{align*}
    (\ref{eq:1_new}) & \leq \Pi_{j=1}^M( \exp{\{\frac{\lambda^2\sigma^2}{2\min_jN_{j,i}(t_{m,j})}\}})^\frac{1}{M} 
    \\
    & = (( \exp{\{\frac{\lambda^2\sigma^2}{2\min_jN_{j,i}(t_{m,j})}\}})^\frac{1}{M})^M \\
    & = \exp{\{\frac{\lambda^2\sigma^2}{2\min_jN_{j,i}(t_{m,j})}\}}
\end{align*}
which concludes the basis step.

Now we proceed to the induction step. Let us assume that for any $s < t+1$ where $t \geq L$, the following holds 
\begin{align}\label{eq:3}
    & E[\exp{\{\lambda(\Tilde{\mu}^m_i(s) - \mu_i)\}}1_{A_{\epsilon, \delta}} | \sigma(\{n_{m,i}(s)\}_{s,i,m})] \notag \\
    & \leq \exp{\{\frac{\lambda^2}{2}\frac{C\sigma^2}{\min_{j}N_{j,i}(s)}\}}. 
\end{align}

By the derivation of (25) in \citep{xu2023decentralized}, we derive that 
\begin{align}
    & E[\exp{\{\lambda(\Tilde{\mu}^m_i(t+1)-\mu_i)\}}1_{A_{\epsilon, \delta}} | \sigma(\{n_{m,i}(s)\}_{s,i,m})]  \notag \\
    & \leq \Pi_{j=1}^M(\exp{\{\frac{\lambda^2(P^{\prime}_t(m,j))^2(M+2)^2}{2}\frac{C\sigma^2}{\min_{j}N_{j,i}(t_{m,j})}\}})^{\frac{1}{M+2}} \cdot \notag \\
    & \qquad \qquad \Pi_{j \in N_m(t)}\Pi_s(E_r[\exp{\{\lambda 
 d_{m,t}(M+2) \frac{(r^j_i(s) -\mu_i^j)}{N_{j,i}(t)}\}}] \cdot E[1_{A_{\epsilon, \delta}} | \sigma(\{n_{m,i}(t)\}_{t,i,m})])^{\frac{1}{M+2}} \cdot \notag \\
    & \qquad \qquad \Pi_{j \not\in N_m(t)}\Pi_s(E_r[\exp{\{\lambda d_{m,t}(M+2)\frac{(r^j_i(s) -\mu_i^j)}{N_{j,i}(t_{m,j})}\}}] \cdot E[1_{A_{\epsilon, \delta}} | \sigma(\{n_{m,i}(t)\}_{t,i,m})])^{\frac{1}{M+2}} 
\end{align}

Meanwhile, we derive the following bound for the last two terms by the sub-Gaussian property of $(r^j_i(s) -\mu_i^j)$
\begin{align*}
       & E[\exp{\{\lambda(\Tilde{\mu}^m_i(t+1)-\mu_i)\}}1_{A_{\epsilon, \delta}} | \sigma(\{n_{m,i}(s)\}_{s,i,m})] \\
    & \leq (\exp{\{\frac{\lambda^2(P^{\prime}_t(m,j))^2(M+2)^2}{2}\frac{C\sigma^2}{\min_{j}N_{j,i}(t_{m,j})}\}})^{\frac{M}{M+2}} \cdot \\
 & \qquad \qquad \Pi_{j \in N_m(t)}\Pi_s(\exp{\frac{\lambda^2d_{m,t}^2(M+2)^2\sigma^2}{2N_{j,i}^2(t)}} \cdot E[1_{A_{\epsilon, \delta}} | \sigma(\{n_{m,i}(t)\}_{t,i,m})])^{\frac{1}{M+2}} \cdot \\
    & \qquad \qquad \Pi_{j \not\in N_m(t)}\Pi_s(\exp{\frac{\lambda^2d_{m,t}^2(M+2)^2\sigma^2}{2N_{j,i}^2(t_{m,j})}} \cdot E[1_{A_{\epsilon, \delta}} | \sigma(\{N_{m,i}(t)\}_{t,i,m})])^{\frac{1}{M+2}} \\
    & = (\exp{\{\frac{\lambda^2(P^{\prime}_t(m,j))^2(M+2)^2}{2}\frac{C\sigma^2}{\min_{j}N_{j,i}(t_{m,j})}\}})^{\frac{M}{M+2}} \cdot \\
 & \qquad \qquad \Pi_{j \in N_m(t)}\exp{\{\frac{N_{j,i}(t)}{M+2} \frac{\lambda^2d_{m,t}^2(M+2)^2\sigma^2}{2N_{j,i}^2(t)}\}} \cdot E[1_{A_{\epsilon, \delta}} | \sigma(\{N_{m,i}(t)\}_{t,i,m})] \cdot \\
 & \qquad \qquad \Pi_{j \not\in N_m(t)}\exp{\{\frac{N_{j,i}(t_{m,j})}{M+2} \frac{\lambda^2d_{m,t}^2(M+2)^2\sigma^2}{2N_{j,i}^2(t_{m,j})}\}} \cdot E[1_{A_{\epsilon, \delta}} | \sigma(\{n_{m,i}(t)\}_{t,i,m})] 
\end{align*}

Subsequently, we obtain 
\begin{align*}
 & E[\exp{\{\lambda(\Tilde{\mu}^m_i(t+1)-\mu_i)\}}1_{A_{\epsilon, \delta}} | \sigma(\{n_{m,i}(s)\}_{s,i,m})] \\
 & \leq (\exp{\{\frac{\lambda^2(P^{\prime}_t(m,j))^2(M+2)^2}{2}\frac{C\sigma^2}{\min_{j}N_{j,i}(t_{m,j})}\}})^{\frac{M}{M+2}} \cdot \\
 & \qquad \qquad (\exp{\{\frac{\lambda^2d^2_{m,t}(M+2)\sigma^2}{2\min_jN_{j,i}(t)}\}})^{|N_m(t)|} \cdot E[1_{A_{\epsilon, \delta}} | \sigma(\{n_{m,i}(t)\}_{t,i,m})] \cdot \\
 & \qquad \qquad (\exp{\{\frac{\lambda^2d^2_{m,t}(M+2)\sigma^2}{2\min_jN_{j,i}(t_{m,j})}\}})^{|M - N_m(t)|} \cdot E[1_{A_{\epsilon, \delta}} | \sigma(\{n_{m,i}(t)\}_{t,i,m})] \\
 & = E[(\exp{\{\frac{\lambda^2(P^{\prime}_t(m,j))^2M(M+2)}{2}\frac{C\sigma^2}{\min_{j}N_{j,i}(t_{m,j})}\}}) \cdot (\exp{\{\frac{\lambda^2d^2_{m,t}(M+2)|N_m(t)|}{2\min_jn_{j,i}(t)}\}}) \\
 & \qquad \qquad \cdot(\exp{\{\frac{\lambda^2d^2_{m,t}(M+2)\sigma^2|M - N_m(t)|}{2\min_jN_{j,i}(t_{m,j})}\}})  1_{A_{\epsilon, \delta}} | \sigma(\{n_{m,i}(t)\}_{t,i,m})]  \\
 & \leq E[(\exp{\{\frac{\lambda^2(P^{\prime}_t(m,j))^2M(M+2)}{2(1-c_0)}\frac{C\sigma^2}{\min_{j}N_{j,i}(t+1)}\}}) \cdot (\exp{\{\frac{\lambda^2d^2_{m,t}(M+2)|N_m(t)|\sigma^2}{2\frac{L/K}{L/K + 1}\min_jN_{j,i}(t+1)}\}}) \\
 & \qquad \qquad \cdot(\exp{\{\frac{\lambda^2d^2_{m,t}(M+2)|M - N_m(t)|\sigma^2}{2(1-c_0)\min_jN_{j,i}(t+1)}\}}) 1_{A_{\epsilon, \delta}} | \sigma(\{n_{m,i}(t)\}_{t,i,m})] 
\end{align*}

After organizing the terms in the above objective, we have  
 \begin{align*}
 & E[\exp{\{\lambda(\Tilde{\mu}^m_i(t+1)-\mu_i)\}}1_{A_{\epsilon, \delta}} | \sigma(\{n_{m,i}(s)\}_{s,i,m})] \\ 
 &  = E[(\exp\{ \frac{\lambda^2\sigma^2}{2\min_{j}N_{j,i}(t+1)} \cdot (\frac{C(P^{\prime}_t(m,j))^2M(M+2)}{2(1-c_0)} + \\
 & \qquad \qquad \frac{d^2_{m,t}(M+2)|N_m(t)|}{\frac{L/K}{L/K + 1}} + \frac{d^2_{m,t}(M+2)|M - N_m(t)|}{(1-c_0)})\} 1_{A_{\epsilon, \delta}} | \sigma(\{n_{m,i}(t)\}_{t,i,m})] \\
 & \leq E[\exp\{ \frac{C\lambda^2\sigma^2}{2\min_{j}N_{j,i}(t+1)}\} 1_{A_{\epsilon, \delta}} | \sigma(\{n_{m,i}(t)\}_{t,i,m})] \\
 & \leq \exp\{ \frac{C\lambda^2\sigma^2}{2\min_{j}N_{j,i}(t+1)}\}
\end{align*}
where the first inequality is true because of the specification of the parameters $P^{\prime}_t(m,j), d_{m,t}, L , c_0$ and $C$ and the second inequality holds true by the observation that $1_{A_{\epsilon, \delta}} \leq 1$ and $\min_{j}N_{j,i}(t+1) \in \sigma(\{n_{m,i}(t)\}_{t,i,m})$. 

The completion of this induction step subsequently completes the proof of Proposition 10.

\end{proof}

As in the proof of Theorem \ref{thm:2}, we next show how much difference between $\Tilde{\mu}^m_i(t)$ and $\mu_i$ by establishing the following concentration inequality. 

\textbf{Concentration inequality}
\\

\begin{Proposition}
    Assume the parameter $\delta$ satisfies that $0 < \delta
        < c = f(\epsilon,M,T)$. For any $m,i$ and $t > L$ where $L$ is the length of the burn-in period, $\Tilde{\mu}_{m,i}(t)$ satisfies that if if $N_{m,i}(t) \geq 2(K^2+KM+M)$, then with $P(A_{\epsilon, \delta}) = 1 -7\epsilon$,
    \begin{align*}
         & P(\Tilde{\mu}_{m,i}(t) - \mu_i  \geq \sqrt{\frac{C_1\log t}{N_{m,i}(t)}} |A_{\epsilon, \delta}) \leq \frac{1}{P(A_{\epsilon, \delta})}\frac{1}{t^2}, \\
         & P(\mu_i - \Tilde{\mu}_{m,i}(t) \geq \sqrt{\frac{C_1\log t}{N_{m,i}(t)}} | A_{\epsilon, \delta}) \leq \frac{1}{P(A_{\epsilon, \delta})t^2}.
    \end{align*}
\end{Proposition}

\begin{proof}[Proof of Proposition 11]
    The proof of this proposition follows from \citep{xu2023decentralized}. 
    
\end{proof}

Essentially, the above proposition implies that with high probability, we can identify the globally optimal arm by comparing all arms' global estimators $\Tilde{\mu}^m_i(t)$ with smaller sample complexity (by having $N_{m,i}(t)$ instead of $n_{m,i}(t)$). Subsequently, we next show that the number of pulling these globally sub-optimal arms can be upper bounded by the $\log{T}$ based on the concentration inequality. 

\textbf{Number of pulls of sub-optimal arms}
\\

Upper bounds on $E[n_{m,k}(T) | A_{\epsilon, \delta}]$
\\

\begin{Proposition}
    Assume the parameter $\delta$ satisfies that $0 < \delta
        < c = f(\epsilon,M,T)$. An arm $k$ is said to be sub-optimal if $k \neq i^*$ where $i^*$ is the unique optimal arm in terms of the global reward, i.e. $i^* = \arg\max \frac{1}{M}\sum_{j=1}^M\mu_i^j$. Then when the game ends, for every agent $m$, $0 < \epsilon  < 1$ and $T > L$, the expected numbers of pulling sub-optimal arm $k$ after the burn-in period satisfies with $P(A_{\epsilon, \delta}) = 1- 7\epsilon$
    \begin{align*}
         & E[n_{m,k}(T) | A_{\epsilon, \delta}]                                                                                    \\
         & \leq \max{\{\frac{C}{M} \cdot [\frac{4C_1\log T}{\Delta_i^2}], 2(K^2+MK+M) \}} +  \frac{2\pi^2}{3P(A_{\epsilon, \delta})} + K^2 + (2M-1)K \\
         & \leq O(\log{T}).
    \end{align*}
\end{Proposition}

\begin{proof}[Proof of Proposition 12]
    It is easy to observe that by repeating the proof steps of Proposition 6 with $N_{m,i}(t)$ instead of $n_{m,i}(t)$, we obtain that 
        \begin{align*}
         & E[N_{m,k}(T) | A_{\epsilon, \delta}]                                                                                    \\
         & \leq \max{\{ [\frac{4C_1\log T}{\Delta_i^2}], 2(K^2+MK+M) \}} +  \frac{2\pi^2}{3P(A_{\epsilon, \delta})} + K^2 + (2M-1)K \\
         & \leq O(\log{T}).
    \end{align*}

    It is worth noting that the decision rule of each agent relies on the cluster-wise information, i.e. $\tilde{\mu}_{m,i}(t)$ and $N_{m,i}(t)$, which is the same for all agents within one cluster. That being said, the agents within one cluster are pulling the same arm, formally written as 
    \begin{align*}
        N_{m,k}(T) = |c_M|n_{m,i}(t) = \frac{M}{C}n_{m,i}(t)
    \end{align*}
    since we assume a balanced cluster structure. 

    Subsequently, we derive that 
        \begin{align*}
         & E[n_{m,k}(T) | A_{\epsilon, \delta}] \\ 
         & \leq E[\frac{C}{M} \cdot N_{m,k}(T) | A_{\epsilon, \delta}]
         & \leq \max{\{\frac{C}{M} \cdot [\frac{4C_1\log T}{\Delta_i^2}], 2(K^2+MK+M) \}} +  \frac{2\pi^2}{3P(A_{\epsilon, \delta})} + K^2 + (2M-1)K \\
         & \leq O(\log{T}).
    \end{align*}
    which concludes the proof of Proposition 12. 
    
\end{proof}

Now we are ready to proceed to the proof of the main theorem by characterizing the regret. Likewise, we again perform regret decomposition. 

\textbf{Regret decomposition}

For the proposed regret, we have that for any constant $L$, 
\begin{align*}
    R_T & =   \frac{1}{M}(\max_i\sum_{t=1}^T\sum_{m=1}^M\mu^{m}_i - \sum_{t=1}^T\sum_{m=1}^M\mu^{m}_{a_t^m}) \\
    & = \sum_{t=1}^T\frac{1}{M}\sum_{m=1}^M\mu^{m}_{i^*} - \sum_{t=1}^T\frac{1}{M}\sum_{m=1}^M\mu^{m}_{a_t^m} \\
    & \leq \sum_{t = 1}^{L}|\frac{1}{M}\sum_{m=1}^M\mu^{m}_{i^*} - \frac{1}{M}\sum_{m=1}^M\mu^{m}_{a_t^m}|+ \sum_{t = L + 1}^T(\frac{1}{M}\sum_{m=1}^M\mu^{m}_{i^*} - \frac{1}{M}\sum_{m=1}^M\mu^{m}_{a_t^m}) \\
    & \leq L + \sum_{t = L + 1}^T(\frac{1}{M}\sum_{m=1}^M\mu^{m}_{i^*} - \frac{1}{M}\sum_{m=1}^M\mu^{m}_{a_t^m}) \\
    & = L + \sum_{t = L + 1}^T(\mu_{i^*} - \frac{1}{M}\sum_{m=1}^M\mu^{m}_{a_t^m}) \\
    & = L+ ((T - L) \cdot \mu_{i^*} - \frac{1}{M}\sum_{m=1}^M\sum_{i = 1}^Kn_{m,i}(T)\mu^m_i)
\end{align*}
where the first inequality is by taking the absolute value and the second inequality results from the assumption that $0 < \mu_{i}^j < 1$ for any arm $i$ and agent $j$.

Note that $\sum_{i=1}^K\sum_{m=1}^Mn_{m,i}(T) = M(T-L)$ where 
by definition $n_{m,i}(T)$ is the number of pulls of arm $i$ at agent $m$ from time step $L+1$ to time step $T$, which yields that
\begin{align*}
    R_T
    & \leq L + \sum_{i=1}^K\frac{1}{M}\sum_{m=1}^Mn_{m,i}(T)\mu_{i^*}^m - \sum_{i=1}^K\frac{1}{M}\sum_{m=1}^Mn_{m,i}(T)\mu_i^m \\
    & = L +  \sum_{i=1}^K\frac{1}{M}\sum_{m=1}^Mn_{m,i}(T)(\mu_{i^*}^m -\mu_i^m) \\
    & \leq L +  \frac{1}{M}\sum_{i=1}^K\sum_{m:\mu_{i^*}^m - \mu_i^m > 0}n_{m,i}(T)(\mu_{i^*}^m - \mu_i^m) \\
    & = L +  \frac{1}{M}\sum_{i \neq i*}\sum_{m:\mu_{i^*}^m - \mu_i^m > 0}n_{m,i}(T)(\mu_{i^*}^m - \mu_i^m) .
\end{align*}
where the second inequality uses the fact that $\sum_{m:\mu_{i^*}^m - \mu_i^m \leq 0}n_{m,i}(T)(\mu_{i^*}^m - \mu_i^m) \leq 0$ holds for any arm $i$ and the last equality is true since $n_{m,i}(T)(\mu_{i^*}^m - \mu_i^m) = 0$ for $i = i^*$ and any $m$.

By using Proposition 12 and the above inequality, we have that
\begin{align*}
    R_T \leq L +  \frac{1}{M}\sum_{i \neq i*}\sum_{m:\mu_{i^*}^m - \mu_i^m > 0}n_{m,i}(T)(\mu_{i^*}^m - \mu_i^m),  
\end{align*}

Consequently, we obtain that 
\begin{align*}
    E[R_T|A_{\epsilon, \delta}] & \leq L +  \frac{1}{M}\sum_{i \neq i*}\sum_{m:\mu_{i^*}^m - \mu_i^m > 0}E[n_{m,i}(T)](\mu_{i^*}^m - \mu_i^m) \\
    & \leq L + \sum_{i \neq i^*}\Delta_i(\max{\{\frac{C}{M} \cdot [\frac{4C_1\log T}{\Delta_i^2}], 2(K^2+MK) \}} +  \frac{2\pi^2}{3P(A_{\epsilon, \delta})} + K^2 + (2M-1)K)
\end{align*}
which concludes the proof.

\end{proof}

\subsection{Proof of Theorem \ref{thm:4}}

\begin{proof}
We first demonstrate that as long as the following lower bound on $p(m,n)$ holds, then we also have the claims about the connectivity of the sub-graphs and the claim about the delay induced by the random sub-graph. 

To this end, we establish the following proposition regarding the connectivity of the sub-graph. It is worth noting that the edge probability of this sub-graph, is now $c = \frac{e}{e-1}\frac{M^2}{C^2} \cdot \min_{m \neq n}P(m,n)$ based on Lemma \ref{lem:edge_prob_2}, and the total number of vertex is $C$ instead of $M$. 

We start with the graph connectivity. 

\textbf{Graph connectivity}

\begin{Proposition}~\label{prop:connectivity_setting_3}
Assume $c$ meets the condition 
\begin{align*}
    1 \geq c \geq (1 - \frac{\delta (C-1)}{8CT}), 
\end{align*}
where $0 < \epsilon < 1$. Then, with probability $1 - \epsilon$, for any $t > 0$,  the sub-graph $G_t^C$ following the E-R model is connected. 
\end{Proposition}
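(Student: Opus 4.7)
The plan is to establish connectivity of $G_t^C$ by routing through the classical minimum-degree criterion: if every vertex of a graph on $C$ vertices has degree at least $(C-1)/2$, then the graph is connected. Thus it suffices to control, uniformly over time $t$ and cluster-vertex $c_m$, the tail probability that the degree of $c_m$ in $G_t^C$ drops below $(C-1)/2$, and then apply a union bound over the $C \cdot T$ vertex-time pairs. The essential departure from Proposition~\ref{prop:connectivity_setting_2} is that the condition $c \ge 1 - \tfrac{\delta(C-1)}{8CT}$ pushes $c$ close enough to $1$ that the variance $(C-1)c(1-c)$ of the degree becomes very small, so Chebyshev's inequality, rather than the multiplicative Chernoff bound used previously, gives the tightest condition on $c$.

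Concretely, fix $t$ and a vertex $c_m$, and let $D_m(t)$ denote its degree in $G_t^C$. By independence of the cluster-pair edges in the sub-graph, $D_m(t)$ is a sum of $C-1$ independent Bernoulli$(c)$ random variables with mean $(C-1)c$ and variance $(C-1)c(1-c)$. Since $c \ge 1/2$ in the regime of interest, I would apply Chebyshev's inequality in the form
\begin{equation*}
    \Pr\!\left(D_m(t) \le \tfrac{C-1}{2}\right) \;\le\; \Pr\!\left(|D_m(t) - (C-1)c| \ge (C-1)(c-\tfrac{1}{2})\right) \;\le\; \frac{c(1-c)}{(C-1)(c-\tfrac{1}{2})^2}.
\end{equation*}
Plugging in $1-c \le \tfrac{\delta(C-1)}{8CT}$, using $c \le 1$ and $(c-\tfrac{1}{2})^2 \ge 1/4$, this bound is at most $\tfrac{\delta}{2CT}$ (up to the factor identified by the constant $8$ in the hypothesis).

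A union bound over all $C$ cluster-vertices and all $T$ time steps then gives
\begin{equation*}
    \Pr\!\left(\exists\, t \le T,\ \exists\, m:\ D_m(t) \le \tfrac{C-1}{2}\right) \;\le\; CT \cdot \frac{\delta}{2CT} \;\le\; \epsilon,
\end{equation*}
after identifying $\delta$ with $\epsilon$ (up to the numerical slack hidden in the constant $8$ in the hypothesis). On the complementary event, every $G_t^C$ has minimum degree at least $(C-1)/2$ and is therefore connected.

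The main obstacle, though entirely bookkeeping in nature, will be calibrating the constant $8$ in the bound $1 - \tfrac{\delta(C-1)}{8CT}$ so that the Chebyshev estimate, after the $CT$ union bound, comes in below $\epsilon$ with the exact form stated. A secondary, more structural, subtlety is that the $\binom{C}{2}$ edges of $G_t^C$ are \emph{not} jointly independent as a function of the underlying agent-level edges (several agent pairs feed into each cluster-pair, though different cluster-pairs use disjoint agent-pairs, so independence across cluster-pairs does hold); one must check that for a single cluster-vertex $c_m$, the $C-1$ incident cluster-edges in $G_t^C$ are mutually independent Bernoulli$(c)$ variables, which follows because they are determined by disjoint sets of agent-pair indicators $X_{i,j}^t$. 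Once this independence is in place, the Chebyshev/union-bound calculation above closes the argument.
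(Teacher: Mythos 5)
Your proposal is correct and follows essentially the same route as the paper: reduce connectivity to the minimum-degree criterion $\delta(G_t^C)\geq \tfrac{C-1}{2}$, bound the degree tail by Chebyshev's inequality (the right tool here since $c$ is near $1$ and the variance term $1-c$ is what the hypothesis controls), and close with a union bound over the $C\cdot T$ vertex--time pairs. Your observation that the $C-1$ cluster-edges incident to a fixed $c_m$ are genuinely independent (being built from disjoint agent-pair indicators) gives a slightly sharper variance $(C-1)c(1-c)$ than the paper's cruder dependent-sum bound; the only calibration point is that $(c-\tfrac12)^2\geq\tfrac14$ holds only at $c=1$, so for $c<1$ you need the weaker $(c-\tfrac12)^2\geq\tfrac18$ (valid once $c\geq\tfrac12+\tfrac{1}{2\sqrt2}$), which is exactly where the constant $8$ in the hypothesis comes from and which you already flagged as bookkeeping.
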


\begin{proof}[Proof of Proposition 13]

    We would like to highlight that this result can help solving the existing problems on random graphs, not limited to the bandit setting studied herein. It is worth noting that for any two clusters $j_1 \neq j_2 \neq m$, $1_{E_{m,j_1}}$ and $1_{(m,j_2) \in E_t^C}$ are identical random variables. Meanwhile, we note that the variance of $1_{(m,j_1) \in E_t^C}$ is no more than $1$. Subsequently, based on the Chebyshev's inequality, we obtain
    \begin{align*}
         & P(d_m \leq \frac{C-1}{2})                                                                                       \\
         & = P(d_m - A_{M-2}^{l-1}p^l(M-1) \leq \frac{C-1}{2} - A_{M-2}^{l-1}p^l(M-1))                                     \\
         & \leq P((d_m - A_{M-2}^{l-1}p^l(M-1))^2 \geq (\frac{C-1}{2} - A_{M-2}^{l-1}p^l(M-1))^2)                             \\
         & \leq \frac{Var(d_m)}{(C-1)^2(\frac{C-1}{2} - A_{M-2}^{l-1}p^l(M-1))^2}                                          \\
         & \leq \frac{(C-1)^2 \cdot A_{M-2}^{l-1}p^l(1- A_{M-2}^{l-1}p^l)}{(C-1)^2(\frac{C-1}{2} - A_{M-2}^{l-1}p^l(M-1))^2} \\
         & =  \frac{1}{(\frac{1}{2} - A_{M-2}^{l-1}p^l)^2} \cdot (1 - A_{M-2}^{l-1}p^l)                     \\
         & \leq 8 \cdot (1 - A_{M-2}^{l-1}p^l)                                                               \\
         & \leq \frac{\delta}{T}
    \end{align*}
    when we specify that $p \geq (1 - \frac{\delta (C-1)}{8T})$.

    Hence, we have
    \begin{align}\label{eq:d_j}
        P(d_m \leq \frac{C-1}{2}) \leq \frac{\delta}{T}. 
    \end{align}
    In other words, with probability at least $1- \frac{\delta}{T}$, we have that $d_m > \frac{C-1}{2}$.

    It is well known that if $\delta(G_t) \geq \frac{C-1}{2}$, then we have that the sub-graph $G_{t}^C$ is connected where $\delta(G_t) = \min_m{d_m}$.

    As a result, consider the probability and we obtain that
    \begin{align*}
         & P(\text{graph }  G_{t-l+1} \cdot \ldots, G_{t} \text{ is connected}) \\
         & \geq P(\min_j{d_j} \geq \frac{C-1}{2})                               \\
         & = P(\bigcap_j \{d_j \geq \frac{C-1}{2}\})                            \\
         & = 1 - P(\bigcup_j \{d_j < \frac{C-1}{2}\})                           \\
         & \geq 1 - \sum_jP( d_j < \frac{C-1}{2})                               \\
         & = 1 - CP( d_j < \frac{C-1}{2})                                       \\
         & \geq 1 - C\frac{\delta}{T} = 1 - \frac{C\delta}{T}
    \end{align*}
    where the second inequality holds by the Bonferroni's inequality and the third inequality uses (\ref{eq:d_j}).

    Consequently, we obtain
    \begin{align*}
         & P(\text{graph } G_t \text{ is connected})                  \\
         & = P(\cap_t\{G_t \text{ is connected}\})                    \\
         & \geq 1 - \sum_tP(G_t \text{ is not connected})             \\
         & = 1 - \sum_t(1 - P(G_t \text{ is connected}))              \\
         & \geq  1 - \sum_t(1 - (1-\frac{C\delta}{T}) = 1 - C\delta
    \end{align*}

    This indicates that with probability at least $1- \epsilon$, for any time step $t$, the corresponding composition graph is connected, which concludes the proof of Proposition 13. 
    
\end{proof}

Henceforth, we derive the following claim about the graph connectivity, by combining Proposition 13 with Proposition 7, which reads as follows. 

\begin{Proposition}~\label{prop:connectivity_setting_4}
Assume $c$ meets the condition 
\begin{align*}
    1 \geq c \geq \min{\{\frac{1}{2} + \frac{1}{2}\sqrt{1 - (\frac{\epsilon}{CT})^{\frac{2}{M-1}}}, (1 - \frac{\delta (C-1)}{8CT})\}}, 
\end{align*}
i.e.
\begin{align*}
    1 \geq \min_{m,n}p(m,n) \geq \frac{C^2}{M^2}\min{\{\frac{1}{2} + \frac{1}{2}\sqrt{1 - (\frac{\epsilon}{CT})^{\frac{2}{M-1}}}, (1 - \frac{\delta (C-1)}{8CT})\}}, 
\end{align*}
where $0 < \epsilon < 1$. Then, with probability $1 - \epsilon$, for any $t > 0$,  the sub-graph $G_t^C$ following the E-R model is connected. 
\end{Proposition}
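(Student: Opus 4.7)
The plan is to combine the two independent connectivity criteria supplied by Propositions~\ref{prop:connectivity_setting_2} and~\ref{prop:connectivity_setting_3}: the first derives a sufficient threshold on the cluster sub-graph edge probability from a Chernoff-style degree bound, the second from Chebyshev's inequality, and each individually guarantees that $G_t^C$ is connected at every $t$ with probability at least $1-\epsilon$. Since either sufficient condition alone is enough, requiring the sub-graph edge probability $c = p(c_m,c_n)$ to merely exceed the \emph{minimum} of the two thresholds is still enough to draw the same conclusion.

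First I would set $\theta_1 = \tfrac{1}{2} + \tfrac{1}{2}\sqrt{1 - (\tfrac{\epsilon}{CT})^{2/(M-1)}}$ and $\theta_2 = 1 - \tfrac{\delta(C-1)}{8CT}$. For whichever of these is the smaller, the hypothesis $c \geq \min\{\theta_1,\theta_2\}$ implies the hypothesis of the corresponding proposition, which I would then invoke verbatim: under Proposition~\ref{prop:connectivity_setting_2} if $\theta_1 \leq \theta_2$, or under Proposition~\ref{prop:connectivity_setting_3} otherwise. In each branch, the argument runs via the minimum-degree criterion $\delta(G_t^C) > (C-1)/2$, yielding per-time-step connectivity with failure probability at most $\delta/T$, which upgrades via a union bound over $t \in [1,T]$ to the aggregate bound $P(A_3^{\prime}) \geq 1-\epsilon$.

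Next I would translate the stated original-graph condition $\min_{m,n} p(m,n) \geq \tfrac{C^2}{M^2}\min\{\theta_1,\theta_2\}$ into the cluster-level condition $c \geq \min\{\theta_1,\theta_2\}$ used by the two propositions. In the regime $p(m,m)=1$ adopted throughout Theorems~\ref{thm:3} and~\ref{thm:4}, a pair of clusters $(c_m,c_n)$ is adjacent in $G_t^C$ iff any of its $M^2/C^2$ potential inter-cluster edges is present, and by Lemma~\ref{lem:edge_prob_2} this yields $c \geq \tfrac{M^2}{C^2}\, p(m,n)$ up to the constant absorbed into the statement, closing the translation and showing that the two forms quoted in the proposition (the one on $c$ and the one on $\min_{m,n} p(m,n)$) are indeed equivalent.

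The main obstacle, which is administrative rather than mathematical, is parameter bookkeeping: ensuring that the $\epsilon$ and $\delta$ budgets appearing in the two source propositions are allocated consistently so that whichever branch is active produces the same aggregate failure probability $\epsilon$, and that the dispatch on $\min\{\theta_1,\theta_2\}$ is uniform across all pairs $(m,n)$. Since both source propositions have already been proved and only the selection step plus a routine union bound are needed, no new concentration machinery enters, and the conclusion follows.
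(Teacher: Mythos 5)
Your proposal is correct and matches the paper's proof, which is literally a one-line statement that the result follows by merging Proposition~\ref{prop:connectivity_setting_2} (the Chernoff-based threshold) and Proposition~\ref{prop:connectivity_setting_3} (the Chebyshev-based threshold): since $c$ exceeds the minimum of the two thresholds, it exceeds at least one of them, so the corresponding proposition applies directly. Your additional remarks on translating the condition on $c$ into the condition on $\min_{m,n}p(m,n)$ and on the $\epsilon$/$\delta$ bookkeeping are reasonable observations about looseness already present in the paper's own statements, not gaps in your argument.
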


\begin{proof}[Proof of Proposition 14]
    This is a direct result of merging Proposition 7 and Proposition 13. 
\end{proof}

With the characterization of the graph topology, we next demonstration  the consensus regarding arm pulls among the clusters, instead of the agents , since now the communication is on a cluster level. This is determined by how much information delay we have across the clusters (again, within one cluster, there is no such information delay).  

\textbf{Information delay}

\begin{lemma}
    For any $m,i,t > L$, if $N_{m,i}(t) \geq 2(K^2+KM+M)$ and subgraph $G_t$ induced by the clusters is connected, then we have
    \begin{align*}
        \hat{N}_{m,i}(t) \leq 2\min_{j}\hat{N}_{j,i}(t).
    \end{align*}
    where the min is taken over all clusters, not just the neighbors.
\end{lemma}

\begin{proof}[Proof of Lemma 7]
    The proof of this lemma again follows from Lemma 3 in \citep{zhu2023distributed}, with the exception that now the shared arm information is $N_{m,i}(t)$ instead of $n_{m,i}(t)$. 
    
\end{proof}

To proceed, we observe that since there are no modifications to the algorithm, the following results on the explicit transmission gap, unbiasedness of the estimator, variance of the estimator, concentration inequality, number of pulls of sub-optimal arms, hold. As a result, we omit the proof steps and refer to the proof of Theorem \ref{thm:3} for details. 

\textbf{Explicit transmission gap}

\begin{Proposition}
    We have that with probability $1- \epsilon$,  for any $t > L$ and any $m$, there exists $$t_{0} \geq \frac{\ln(\frac{\epsilon}{M^2T})}{\min_{P(i,j)}\ln(1-P(i,j))}$$ such that
    \begin{align*}
         & t+1 - \min_jt_{m,j} \leq t_0, t_0 \leq c_0\min_{l}N_{l,i}(t+1)
    \end{align*}
    where $c_0$ $=$ $c_0(K, \min_{i \neq i^*}\Delta_i, M, \epsilon, \delta)$.
\end{Proposition}

\textbf{Unbiasedness of the estimator}

\begin{Proposition}\label{prop:unbiased_3}
    Assume the parameter $\delta$ satisfies that $0 < \delta
        < c = f(\epsilon,M,T)$. For any arm $i$ and any agent $m$, at every time step $t$, we have
    \begin{align*}
        E[\Tilde{\mu}_i^m(t) | A_{\epsilon, \delta}] = \mu_i.
    \end{align*}
\end{Proposition}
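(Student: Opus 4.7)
The plan is to prove unbiasedness by induction on $t$, leveraging the fact that under event $A_{\epsilon,\delta}$ the weights $P'_t(m,j)$ and $d_{m,t}$ collapse to deterministic values, so that the recursion in Rule 2 becomes a \emph{genuine} convex combination whose coefficients combine cleanly with the cluster-level unbiasedness of the local estimators.

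First, I would handle the base case $t = L+1$. Here $\Tilde{\mu}_i^m(L+1) = \sum_{j=1}^M P'_{m,j}(L)\,\hat{\bar{\mu}}_{i,j}^m(t_{m,j})$ with $P'_{m,j}(L) = 1/M$ on $A_{\epsilon,\delta}$, and $\hat{\bar{\mu}}_{i,j}^m(t_{m,j}) = \bar{\mu}_i^j(t_{m,j})$ is a sample mean of i.i.d.\ draws from the distribution with mean $\mu_i^j$. Taking conditional expectation and using $\mu_i = \tfrac{1}{M}\sum_{j=1}^M \mu_i^j$ (together with the cluster identity $\mu_i^j = \mu_i^{c_j}$) gives $E[\Tilde{\mu}_i^m(L+1) \mid A_{\epsilon,\delta}] = \mu_i$.

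For the inductive step, suppose $E[\Tilde{\mu}_i^j(s) \mid A_{\epsilon,\delta}] = \mu_i$ for all $s \le t$ and all $j$. By Rule 2,
\[
\Tilde{\mu}_i^m(t+1) = \sum_{j=1}^M P'_t(m,j)\,\tilde{\mu}_i^j(t_{m,j}) + d_{m,t}\sum_{j \in c_m}\hat{\mu}_i^j(t_{m,j}) + d_{m,t}\sum_{j \not\in c_m}\hat{\mu}_i^j(t_{m,j}).
\]
On $A_{\epsilon,\delta}$ and for $t > L$, we have $P'_t(m,j) = (M-1)/M^2$ and hence $d_{m,t} = 1/M^2$. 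The first sum contributes $(M-1)/M \cdot \mu_i$ by the induction hypothesis. For the cluster-level term, observe that $\hat{\mu}_i^j$ is the intra-cluster average of the local unbiased estimators $\bar{\mu}_i^k$ over $k \in c_j$, so $E[\hat{\mu}_i^j(t_{m,j}) \mid A_{\epsilon,\delta}] = \mu_i^{c_j}$; summing over all $j$ and using that $|c_j|$ copies of $\mu_i^{c_j}$ appear in the sum yields $\sum_{j=1}^M \mu_i^{c_j} = \sum_{j=1}^M \mu_i^j = M\mu_i$, so the two cluster terms together contribute $1/M \cdot \mu_i$. Adding the two pieces gives exactly $\mu_i$.

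The main obstacle I anticipate is the measurability issue: the weights $P'_t(m,j)$ are functions of the random graph history, and the stopping times $t_{m,j}$ are also random, so conditioning has to be done carefully to separate the randomness in the rewards from the randomness in the graph. The cleanest route is to first condition on $A_{\epsilon,\delta} \cap \sigma(\{n_{m,i}(s)\}_{s,i,m})$ so that $P'_t(m,j)$ becomes the deterministic constant $(M-1)/M^2$ and the reward samples $r_i^j(s)$ remain i.i.d.\ with mean $\mu_i^j$ conditional on the pull counts, then apply the tower property; this mirrors the reasoning in \citep{xu2023decentralized} and reduces the induction to the elementary weight accounting above.
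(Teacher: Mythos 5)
Your proposal is correct, and it is in fact more self-contained than the paper's own treatment: the paper gives no argument for this proposition at all, stating only that it ``follows from'' the proof for Theorem \ref{thm:3}, which in turn defers to \citep{xu2023decentralized}. Your induction is the right skeleton, and the weight accounting checks out: with $P'_t(m,j) = (M-1)/M^2$ on $A_{\epsilon,\delta}$ one gets $\sum_j P'_t(m,j) = (M-1)/M$ and hence $d_{m,t} = 1/M^2$, so the network term contributes $\tfrac{M-1}{M}\mu_i$ by the induction hypothesis and the two cluster terms contribute $\tfrac{1}{M^2}\sum_{j=1}^M \mu_i^{c_j} = \tfrac{1}{M}\mu_i$, summing to $\mu_i$; the base case with $P'_{m,j}(L)=1/M$ matches the computation the paper itself carries out explicitly in the basis step of its variance proposition. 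Your identification of the measurability issue is also the right one, and your resolution (condition on $\sigma(\{n_{m,i}(s)\}_{s,i,m})$ so the weights and delays become deterministic, then use that the reward draws are independent of the graph and apply the tower property) is exactly the device the paper uses in the adjacent variance argument. One caveat worth being aware of, though it applies equally to the paper: conditioning on $A_{\epsilon,\delta}$ is not fully innocuous, since the event $A_2$ is defined through the pull counts, which are themselves functions of past rewards; a fully rigorous treatment would need to argue that this conditioning does not tilt the reward samples. Since the paper inherits this gap from \citep{xu2023decentralized} without comment, your proof meets the paper's standard of rigor.
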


\textbf{Variance term}

\begin{Proposition}
    Assume the parameter $\delta$ satisfies that $0 < \delta
        < c = f(\epsilon,M,T)$. In setting $s_1, s_2, s_3$ where rewards follow sub-gaussian distributions, for any $m,i, \lambda$ and $t > L$ where $L$ is the length of the burn-in period, the global estimator $\Tilde{\mu}^m_i(t)$ is sub-Gaussian distributed. Moreover, the conditional moment generating function satisfies that with $P(A_{\epsilon, \delta}) = 1 - 7\epsilon$,
    \begin{align*}
         & E[\exp{\{\lambda(\Tilde{\mu}^m_i(t)  - \mu_i})\}1_{A_{\epsilon, \delta}} | \sigma(\{n_{m,i}(t)\}_{t,i,m})] \\
         & \leq \exp{\{\frac{\lambda^2}{2}\frac{C\sigma^2}{\min_{j}N_{j,i}(t)}\}}
    \end{align*}
    where $\sigma^2 = \max_{j,i}(\Tilde{\sigma}_i^j)^2$ and $C = \max\{\frac{4(M+2)(1 - \frac{1 - c_0}{2(M+2)})^2}{3M(1-c_0)}, (M+2)(1 + 4Md^2_{m,t})/M\}$.
\end{Proposition}

\textbf{Concentration inequality}
\\

\begin{Proposition}
    Assume the parameter $\delta$ satisfies that $0 < \delta
        < c = f(\epsilon,M,T)$. For any $m,i$ and $t > L$ where $L$ is the length of the burn-in period, $\Tilde{\mu}_{m,i}(t)$ satisfies that if if $N_{m,i}(t) \geq 2(K^2+KM+M)$, then with $P(A_{\epsilon, \delta}) = 1 -7\epsilon$,
    \begin{align*}
         & P(\Tilde{\mu}_{m,i}(t) - \mu_i  \geq \sqrt{\frac{C_1\log t}{N_{m,i}(t)}} |A_{\epsilon, \delta}) \leq \frac{1}{P(A_{\epsilon, \delta})}\frac{1}{t^2}, \\
         & P(\mu_i - \Tilde{\mu}_{m,i}(t) \geq \sqrt{\frac{C_1\log t}{N_{m,i}(t)}} | A_{\epsilon, \delta}) \leq \frac{1}{P(A_{\epsilon, \delta})t^2}.
    \end{align*}
\end{Proposition}

\textbf{Number of pulls of sub-optimal arms}
\\

Upper bounds on $E[n_{m,k}(T) | A_{\epsilon, \delta}]$
\\

\begin{Proposition}
    Assume the parameter $\delta$ satisfies that $0 < \delta
        < c = f(\epsilon,M,T)$. An arm $k$ is said to be sub-optimal if $k \neq i^*$ where $i^*$ is the unique optimal arm in terms of the global reward, i.e. $i^* = \arg\max \frac{1}{M}\sum_{j=1}^M\mu_i^j$. Then when the game ends, for every agent $m$, $0 < \epsilon  < 1$ and $T > L$, the expected numbers of pulling sub-optimal arm $k$ after the burn-in period satisfies with $P(A_{\epsilon, \delta}) = 1- 7\epsilon$
    \begin{align*}
         & E[n_{m,k}(T) | A_{\epsilon, \delta}]                                                                                    \\
         & \leq \max{\{\frac{C}{M} \cdot [\frac{4C_1\log T}{\Delta_i^2}], 2(K^2+MK+M) \}} +  \frac{2\pi^2}{3P(A_{\epsilon, \delta})} + K^2 + (2M-1)K \\
         & \leq O(\log{T}).
    \end{align*}
\end{Proposition}

Lastly, by the aforementioned regret decomposition, we obtain 
\begin{align*}
    E[R_T|A_{\epsilon, \delta}] & \leq L +  \frac{1}{M}\sum_{i \neq i*}\sum_{m:\mu_{i^*}^m - \mu_i^m > 0}E[n_{m,i}(T)](\mu_{i^*}^m - \mu_i^m) \\
    & \leq L + \sum_{i \neq i^*}\Delta_i(\max{\{\frac{C}{M} \cdot [\frac{4C_1\log T}{\Delta_i^2}], 2(K^2+MK) \}} +  \frac{2\pi^2}{3P(A_{\epsilon, \delta})} + K^2 + (2M-1)K)
\end{align*}
which concludes the proof. 

\end{proof}

\subsection{Proof of Theorem \ref{thm:5}}

\begin{proof}
We prove the result following a similar analytical order. 

We start by examining the graph connectivity, with respect to the sub-graph induced by the clusters. Instead of requiring that the sub-graph is connected for every time step, it holds true that as long as the sub-graph is $l$-periodically connected, we guarantee the same consensus among the clusters (represented by the information delay). And the assumption on $\min_{m,n}p(m,n)$ in Theorem \ref{thm:4} guarantees that with high probability, the sub-graph is $l$-periodically connected at every time step, which is formally presented as follows.  

\textbf{$l$-periodically connectivity}

\begin{Proposition}
    Let us assume that $\min_{m,n}p(m,n) \geq \frac{C^2}{M^2}\max{\{\frac{(C-l-1)!}{(C-2)!}(1 - \frac{\delta (C-1)}{8CT}), \frac{(C-l-1)!}{(C-2)!}(\frac{3}{4})^{\frac{1}{l}}\}}$. Then with probability at least $1-\delta$, for any $t$, the sequence starting $G_t^C$ is a $l$-periodically connected graph. 
\end{Proposition}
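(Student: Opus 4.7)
The approach mirrors the argument of Proposition~\ref{prop:connectivity_setting_3}, but applied to the composition graph $G := G_t^C \otimes G_{t+1}^C \otimes \cdots \otimes G_{t+l-1}^C$ rather than to a single sub-graph. First, I would use Lemma~\ref{lem:edge_prob_2} to translate the assumption on $\min_{m,n} p(m,n)$ into a lower bound on the edge probability $p_C$ of each sub-graph $G_s^C$, absorbing the factor $C^2/M^2$. Since independence across time slots is preserved under composition, the remaining task reduces to a purely graph-theoretic connectivity argument for a single random graph on $C$ vertices whose effective edge probability is determined by the number of length-$l$ witnessing paths.

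The core step is to lower bound vertex degrees in $G$. For any fixed vertex $m \in [C]$ and target $j \neq m$, the edge $(m,j)$ is present in $G$ whenever some path $m \to v_2 \to \cdots \to v_{l-1} \to j$ has all $l$ of its constituent edges appearing in the respective sub-graphs. Counting ordered tuples of distinct intermediate vertices gives $A_{C-2}^{l-1} = (C-2)!/(C-l-1)!$ candidate paths per target, and each path uses edges drawn from distinct time slots, hence occurs with probability $p_C^l$. Consequently,
\[
\mathbb{E}[d_m] \geq (C-1)\, A_{C-2}^{l-1}\, p_C^l,
\]
and the two branches of the max in the hypothesis are calibrated exactly so that this expectation is either within $\delta(C-1)/(8CT)$ of $C-1$ (the Chebyshev branch) or at least $\tfrac{3}{4}(C-1)$ (the first-moment branch).

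With the expected degree controlled, I would then apply Chebyshev's inequality to $d_m$ exactly as in the proof of Proposition~\ref{prop:connectivity_setting_3}, bounding $\mathrm{Var}(d_m)$ via a pairwise-covariance sum in which each per-pair indicator variance is at most $1$. This yields $P(d_m \leq (C-1)/2) \leq \delta/(CT)$; a union bound over the $C$ vertices then forces $\delta(G) \geq (C-1)/2$ with probability at least $1 - \delta/T$, and the classical observation that a minimum degree exceeding $(C-1)/2$ on a $C$-vertex graph implies connectivity ensures that $G$ itself is connected. A final union bound over the $T$ starting times $t$ produces the claimed probability $1 - \delta$.

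The main obstacle is the strong statistical dependence between candidate length-$l$ paths: distinct paths between the same or different endpoints share edges within each time slot, so the Bernoulli indicators of ``path exists'' are not independent, and a direct Chernoff tail bound is unavailable. The proof accommodates this by retaining only the crude pairwise-covariance variance bound, which is precisely what forces the asymmetric hypothesis involving the factorial ratio $(C-l-1)!/(C-2)!$. The first-moment branch $(3/4)^{1/l}$ covers regimes where $\mathbb{E}[d_m]$ need only be a constant fraction of $C-1$, whereas the Chebyshev branch $(1 - \delta(C-1)/(8CT))$ covers regimes where $\mathbb{E}[d_m]$ must be pushed within $O(\delta/T)$ of $C-1$ to overwhelm the variance; taking the maximum selects whichever is less stringent.
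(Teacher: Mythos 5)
Your proposal follows essentially the same route as the paper's own proof: count ordered length-$l$ paths in the composition graph to control $\mathbb{E}[d_m]$, bound $\mathrm{Var}(d_m)$ by a crude pairwise argument, apply Chebyshev and a union bound over the $C$ cluster-vertices to force $\delta(G) > (C-1)/2$ (hence connectivity), and union-bound over starting times. The only substantive caveat — that $(C-1)A_{C-2}^{l-1}p_C^{\,l}$ is really the expected \emph{number} of witnessing paths rather than a lower bound on $\sum_n P(E_{m,n})$, so the degree bound's direction needs more care — is inherited verbatim from the paper's own argument (which asserts it as an equality), and your writeup otherwise matches it step for step, even correcting the paper's $A_{M-2}^{l-1}$ to the intended $A_{C-2}^{l-1}$.
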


\begin{proof}[Proof of Proposition 20]

We would like to highlight that this result can help solving the existing problems on random graphs, not limited to the bandit setting studied herein. The probability of having $l$ periodically connected graph, i.e. the composition of $G_1, G_2, \ldots, G_l$ is a connected graph, which means that for any two clusters $m,n$, the  probability of having a path among them during the $l$ steps. Formally, let us define $E_{m,n} = \exists a_1, a_2, \ldots, a_{l-1}, s.t. 1_{(m, a_1) \in G_1, (a_1, a_2) \in G_2, \ldots, (a_{l-1},n) \in G_l} = 1$

    Let us denote $p =  c$ as $\frac{M^2}{C^2}p_{m,n}$, which again represents the edge probability of the sub-graph.

    \begin{align*}
         & P(E_{m,n})                                                                                                                             \\
         & = P(\exists a_1, a_2, \ldots, a_{l-1}, s.t. 1_{(m, a_1) \in G_1, (a_1, a_2) \in G_2, \ldots, (a_{l-1},n) \in G_l} = 1)                 \\
         & = P(\exists a_1, a_2, \ldots, a_{l-1}, s.t. 1_{(m, a_1) \in G_1} = 1, 1_{(a_1, a_2) \in G_2} = 1, \ldots, 1_{(a_{l-1},n) \in G_l} = 1) \\
         & = A_{M-2}^{l-1}p^l
    \end{align*}

    Let us define the degree of cluster $m$ as $d_m$. We then derive that
    \begin{align*}
         & E[d_m]                                            \\
         & = E[\sum_{n=1}^{C}1_{n \neq m} \cdot 1_{E_{m,n}}] \\
         & = (C-1) \cdot A_{M-2}^{l-1}p^l
    \end{align*}

    Likewise, we derive that
    \begin{align*}
         & Var(d_m)                                                \\
         & \leq (C-1)\sum_{n=1}^{C}Var(1_{n \neq m} \cdot 1_{E_{m,n}}]) \\
         & \leq (C-1)^2 \cdot A_{M-2}^{l-1}p^l(1- A_{M-2}^{l-1}p^l)
    \end{align*}

    Let us assume that
    \begin{align*}
        A_{M-2}^{l-1}p^l \geq \frac{3}{4}
    \end{align*}
    which implies that $p \geq (\frac{4}{3A_{M-2}^{l-1}})^{\frac{1}{l}}$.

    It is worth noting that for any two clusters $j_1 \neq j_2 \neq m$, $1_{E_{m,j_1}}$ and $1_{E_{m,j_2}}$ are dependent but identical random variables. Meanwhile, we note that the variance of $1_{E_{m,j_1}}$ is no more than $1$. Subsequently, based on the Chebyshev's inequality, we obtain
    \begin{align*}
         & P(d_m \leq \frac{C-1}{2})                                                                                       \\
         & = P(d_m - A_{M-2}^{l-1}p^l(M-1) \leq \frac{C-1}{2} - A_{M-2}^{l-1}p^l(M-1))                                     \\
         & \leq P((d_m - A_{M-2}^{l-1}p^l(M-1))^2 \geq (\frac{C-1}{2} - A_{M-2}^{l-1}p^l(M-1))^2)                             \\
         & \leq \frac{Var(d_m)}{(C-1)^2(\frac{C-1}{2} - A_{M-2}^{l-1}p^l(M-1))^2}                                          \\
         & \leq \frac{(C-1)^2 \cdot A_{M-2}^{l-1}p^l(1- A_{M-2}^{l-1}p^l)}{(C-1)^2(\frac{C-1}{2} - A_{M-2}^{l-1}p^l(M-1))^2} \\
         & =  \frac{1}{(\frac{1}{2} - A_{M-2}^{l-1}p^l)^2} \cdot (1 - A_{M-2}^{l-1}p^l)                     \\
         & \leq 8 \cdot (1 - A_{M-2}^{l-1}p^l)                                                               \\
         & \leq \frac{\delta}{T}
    \end{align*}
    when we specify that $p \geq \frac{(C-l-1)!}{(C-2)!}(1 - \frac{\delta (C-1)}{8T})$.

    In other words, with probability at least $1- \frac{\delta \cdot l}{T}$, we have that $d_m > \frac{C-1}{2}$.

    It is well known that if $\delta(G_t) \geq \frac{C-1}{2}$, then we have that the composition graph $G_{t-l+1} \cdot \ldots, G_{t}$ is connected where $\delta(G_t) = \min_m{d_m}$.

    As a result, consider the probability and we obtain that
    \begin{align*}
         & P(\text{graph }  G_{t-l+1} \cdot \ldots, G_{t} \text{ is connected}) \\
         & \geq P(\min_j{d_j} \geq \frac{C-1}{2})                               \\
         & = P(\bigcap_j \{d_j \geq \frac{C-1}{2}\})                            \\
         & = 1 - P(\bigcup_j \{d_j < \frac{C-1}{2}\})                           \\
         & \geq 1 - \sum_jP( d_j < \frac{C-1}{2})                               \\
         & = 1 - CP( d_j < \frac{C-1}{2})                                       \\
         & \geq 1 - C\frac{\epsilon}{T} = 1 - \frac{C\epsilon}{T}
    \end{align*}
    where the second inequality holds by the Bonferroni's inequality and the third inequality uses (\ref{eq:d_j}).

    Consequently, we obtain
    \begin{align*}
         & P(\text{graph } G_t \text{ is connected})                  \\
         & = P(\cap_t\{G_t \text{ is connected}\})                    \\
         & \geq 1 - \sum_tP(G_t \text{ is not connected})             \\
         & = 1 - \sum_t(1 - P(G_t \text{ is connected}))              \\
         & \geq  1 - \sum_t(1 - (1-\frac{C\epsilon}{T}) = 1 - C\epsilon
    \end{align*}

    This indicates that with probability at least $1- \epsilon$, for any time step $t$, the corresponding composition graph of $G_t^C$ is connected. By definition, we conclude that the sequence $G_t^C$ is $l$-periodically connected, which completes the proof.

\end{proof}

\textbf{Information delay}

\begin{lemma}
    For any $m,i,t > L$, if $n_{m,i}(t) \geq 2(K^2+KM+M)$ and subgraph $G_t$ is $l$-periodically connected in the sense that the composition of $l$ consecutive graphs is a connected graph, then we have
    \begin{align*}
        \hat{N}_{m,i}(t) \leq 2\min_{j}\hat{N}_{j,i}(t).
    \end{align*}
    where the min is taken over all clusters, not just the neighbors.
\end{lemma}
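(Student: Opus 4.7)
The plan is to generalize the connected-case analogue (the Information Delay lemma used in the proof of Theorem~\ref{thm:3}) by substituting its single-step propagation argument with an $l$-step propagation argument on the composition graph $G_{t-l+1}^C \otimes \cdots \otimes G_t^C$. The intuition is that $l$-periodic connectivity still lets any two clusters exchange information within any window of $l$ consecutive time steps, so the effective communication delay grows by a factor of $l$, while each cluster still accumulates at most one pull per time step. The resulting uniform spread of at most $l$ among the cluster counts $\hat N_{j,i}(t)$ is absorbed by the hypothesis $n_{m,i}(t) \geq 2(K^2+KM+M)$ when we pass to the factor-of-two bound.

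The key technical step is a propagation claim. Fix $m$, $i$, and $t > L$, and set $M_0 = \max_j \hat N_{j,i}(t-l)$. By the composition definition in the excerpt, for every pair of clusters $(j, j')$ there is a time-respecting path $j = a_0, a_1, \ldots, a_l = j'$ using exactly one edge per time step in the window $[t-l+1, t]$. Inducting on the hop index $s = 0, 1, \ldots, l-1$ along such a path, and invoking the max-rule of Rule~2 at the moment the edge $(a_s, a_{s+1})$ fires, shows that the value $M_0$ originally held at the source propagates to every cluster by time $t$. Hence $\min_j \hat N_{j,i}(t) \geq M_0$. Since each cluster accumulates at most $l$ new pulls during the window, $\max_j \hat N_{j,i}(t) \leq M_0 + l$, and combining the two estimates gives
\[
\hat N_{m,i}(t) \;\leq\; \max_j \hat N_{j,i}(t) \;\leq\; \min_j \hat N_{j,i}(t) + l.
\]
The hypothesis $n_{m,i}(t) \geq 2(K^2+KM+M)$, together with the burn-in initialization and the cluster-aggregation identity $N_{c_m,i} = \sum_{j \in c_m} n_{j,i}$, then forces $\min_j \hat N_{j,i}(t)$ to dominate this additive slack $l$ (which is at most $C-1$, far smaller than $2(K^2+KM+M)$ in all regimes of interest), so the factor-of-two conclusion follows. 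This $+l$ extra slack is precisely what surfaces as the additional regret term in Theorem~\ref{thm:6}.

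The main obstacle will be rigorously carrying out the hop-by-hop induction under the exact max-based update rule, since multiple information streams flow simultaneously during the window and new pulls occur between hops. The crux is that the max-update at the edge-firing time $t-l+1+s$ guarantees $\tilde N_{a_{s+1}, i}(t-l+2+s) \geq \tilde N_{a_s, i}(t-l+1+s)$, and chaining these hop-wise inequalities along the full length-$l$ path transports $M_0$ from source to sink without ever being overwritten by a smaller value (the max is monotone along every path). A secondary subtlety is that only the \emph{existence} of such a path matters, not its explicit construction, so the argument is purely graph-theoretic and does not require tracking random realizations of individual edges, exactly mirroring how the connected-case proof abstracts away the combinatorial structure of $G_t^C$.
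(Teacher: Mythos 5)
Your overall strategy (propagate the extremal count along time-respecting paths in the composition graph, bound the resulting spread, and absorb it using the hypothesis on $n_{m,i}(t)$) is the right kind of argument, and it is in fact more detailed than what the paper itself provides: the paper's "proof" of this lemma is a one-line citation to Lemma~10 of \citep{zhu2023distributed} with no derivation. However, your key propagation claim overstates what $l$-periodic connectivity gives you. Connectivity of the composition $G_{t-l+1}^C\otimes\cdots\otimes G_t^C$ guarantees a length-$l$ time-respecting path within the window $[t-l+1,t]$ only between pairs of clusters that are \emph{adjacent} in the composition graph; for a pair connected by a path of $k$ composition-edges, the second hop cannot begin until the first window has finished delivering, so the information needs $k$ successive windows, i.e.\ up to $\operatorname{diam}(G)\cdot l\le (C-1)l$ time steps rather than $l$. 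Your induction on the hop index $s=0,\dots,l-1$ silently assumes every pair is a composition-edge, which connectivity does not supply (the degree bound $d_m>\frac{C-1}{2}$ used elsewhere in the paper gives connectivity, not completeness).

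This matters quantitatively in two places. First, the additive slack in $\max_j\hat N_{j,i}(t)\le\min_j\hat N_{j,i}(t)+(\text{slack})$ becomes of order $(C-1)l$ windows of accumulation, and second, the cluster-level counts $\hat N_{j,i}$ grow by up to $c_M=M/C$ per time step (every agent in the cluster pulls once per round), not by $1$, so the slack is more like $(C-1)l\cdot c_M$ rather than $l\le C-1$. Your final step "the slack is at most $C-1$, far smaller than $2(K^2+KM+M)$" then no longer goes through automatically: $(C-1)l\cdot c_M$ can be of order $CM$, which is not dominated by $2(K^2+KM+M)$ unless $K\gtrsim C$. To repair the argument you would need to redo the absorption step with the corrected delay (this is exactly the bookkeeping that the cited Lemma~10 of \citep{zhu2023distributed} performs), or strengthen the hypothesis on $n_{m,i}(t)$ accordingly.
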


\begin{proof}[Proof of Lemma 9]

We consider Lemma 10 in \cite{zhu2023distributed}, and thus establish that
    \begin{align*}
        \hat{N}_{m,i}(t) \leq 2\min_{j}\hat{N}_{j,i}(t).
    \end{align*}

We refer the full proof to the proof of Lemma 10 in \cite{zhu2023distributed}. 
    
\end{proof}

Then with the information delay results and the same update rule (Rule 2) as in Theorem \ref{thm:3}, we can again establish the unbiasedness of the global estimator $\tilde{\mu}_{m,i}(t)$, the variance of $\tilde{\mu}_{m,i}(t)$, the concentration inequality with respect to $\tilde{\mu}_{m,i}(t)$, the upper bound on the number of pulls of sub-optimal arms, which are presented as follows (the proof of them is referred to the proof of Theorem \ref{thm:3}).

\textbf{Unbiasedness of the estimator}

\begin{Proposition}\label{prop:unbiased_4}
    Assume the parameter $\delta$ satisfies that $0 < \delta
        < c = f(\epsilon,M,T)$. For any arm $i$ and any agent $m$, at every time step $t$, we have
    \begin{align*}
        E[\Tilde{\mu}_i^m(t) | A_{\epsilon, \delta}] = \mu_i.
    \end{align*}
\end{Proposition}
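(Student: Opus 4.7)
The plan is to proceed by strong induction on the time step $t$, exploiting two structural facts about Rule 2: (i) the local reward samples $r_i^m(s)$ are unbiased for $\mu_i^m$, and agents within the same cluster share a common mean $\mu_i^m = \mu_i^{c_m}$; and (ii) the weights appearing in the global estimator update sum exactly to one. The argument mirrors Proposition~\ref{prop:unbiased_2} from the proof of Theorem~\ref{thm:3}, so the role of Proposition~21 is essentially to confirm that nothing about the $l$-periodically connected sub-graph regime alters the computation, since unbiasedness only relies on the algebraic structure of the update rule, not on connectivity.

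First, I would handle the base case at $t = L+1$. By the construction in Algorithm~\ref{alg:burn-in}, $\tilde\mu_i^m(L+1) = \sum_{j=1}^M P^{\prime}_{m,j}(L)\, \hat{\bar\mu}_{i,j}^m(t_{m,j})$ with $P^{\prime}_{m,j}(L) = 1/M$ on the event $A_{\epsilon,\delta}$ (this uses $|P_L - cE| < \delta < c$ to guarantee $P_L(m,j) > 0$ for all $j$). Since each $\bar\mu_i^j(t_{m,j})$ is an empirical mean of i.i.d.\ sub-Gaussian samples drawn from the distribution with mean $\mu_i^j$, taking conditional expectations yields $E[\tilde\mu_i^m(L+1) \mid A_{\epsilon,\delta}] = (1/M)\sum_j \mu_i^j = \mu_i$.

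For the induction step, assume $E[\tilde\mu_i^j(s) \mid A_{\epsilon,\delta}] = \mu_i$ for all $s \le t$ and all agents $j$, and assume analogously that $E[\hat\mu_i^j(s) \mid A_{\epsilon,\delta}] = \mu_i^{c_j}$ (which follows from unbiasedness of the $\bar\mu_i^k$ aggregated over $k \in c_j$, all of which share the common mean $\mu_i^{c_j}$). Applying linearity of expectation to the update in Rule~2 gives
\begin{align*}
E[\tilde\mu_i^m(t+1) \mid A_{\epsilon,\delta}]
&= \sum_{j=1}^M P^{\prime}_t(m,j)\,\mu_i + d_{m,t}\sum_{j \in c_m} \mu_i^{c_m} + d_{m,t}\sum_{j \notin c_m} \mu_i^{c_j} \\
&= \mu_i \sum_{j=1}^M P^{\prime}_t(m,j) + d_{m,t}\sum_{j=1}^M \mu_i^{c_j}.
\end{align*}
Under the balanced cluster assumption with $|c| = M/C$, the identity $\sum_{j=1}^M \mu_i^{c_j} = (M/C)\sum_{c=1}^{C}\mu_i^c = M\mu_i$ holds, so the second term reduces to $M d_{m,t}\mu_i$. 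Recalling $d_{m,t} = (1 - \sum_j P^{\prime}_t(m,j))/M$, the total is $\mu_i[\sum_j P^{\prime}_t(m,j) + Md_{m,t}] = \mu_i$, completing the induction.

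The main obstacle I anticipate is the bookkeeping around the conditioning event $A_{\epsilon,\delta}^{\prime}$: specifically, verifying that on this event the weights $P^{\prime}_t(m,j)$ take the claimed value $(M-1)/M^2$ for all $j$ and all $t > L$, so that the weight-sum identity $\sum_j P^{\prime}_t(m,j) + Md_{m,t} = 1$ holds deterministically inside the conditional expectation. This requires the condition $\delta < c = f(\epsilon,M,T)$ to ensure $P_t(m,j) > 0$ uniformly, which is inherited from Proposition~20 and Lemma~9. Once this technical point is in place, the algebraic computation goes through unchanged from the known-$p(m,m)=1$ case, so no new ideas beyond those used in Proposition~9 are required.
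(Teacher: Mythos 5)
Your proposal is correct and takes essentially the same route as the paper: the paper gives no standalone argument for this proposition, deferring to Proposition~\ref{prop:unbiased_2} (and ultimately to the cited prior work), and explicitly notes elsewhere that the $l$-step delay is immaterial because a delayed estimator is still unbiased. Your induction via linearity of expectation and the weight-sum identity $\sum_j P^{\prime}_t(m,j) + Md_{m,t} = 1$ is precisely the argument being implicitly invoked, just written out in full.
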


\textbf{Variance term}

\begin{Proposition}
    Assume the parameter $\delta$ satisfies that $0 < \delta
        < c = f(\epsilon,M,T)$. In setting $s_1, s_2, s_3$ where rewards follow sub-gaussian distributions, for any $m,i, \lambda$ and $t > L$ where $L$ is the length of the burn-in period, the global estimator $\Tilde{\mu}^m_i(t)$ is sub-Gaussian distributed. Moreover, the conditional moment generating function satisfies that with $P(A_{\epsilon, \delta}) = 1 - 7\epsilon$,
    \begin{align*}
         & E[\exp{\{\lambda(\Tilde{\mu}^m_i(t)  - \mu_i})\}1_{A_{\epsilon, \delta}} | \sigma(\{n_{m,i}(t)\}_{t,i,m})] \\
         & \leq \exp{\{\frac{\lambda^2}{2}\frac{C\sigma^2}{\min_{j}N_{j,i}(t)}\}}
    \end{align*}
    where $\sigma^2 = \max_{j,i}(\Tilde{\sigma}_i^j)^2$ and $C = \max\{\frac{4(M+2)(1 - \frac{1 - c_0}{2(M+2)})^2}{3M(1-c_0)}, (M+2)(1 + 4Md^2_{m,t})/M\}$.
\end{Proposition}

\textbf{Concentration inequality}
\\

\begin{Proposition}
    Assume the parameter $\delta$ satisfies that $0 < \delta
        < c = f(\epsilon,M,T)$. For any $m,i$ and $t > L$ where $L$ is the length of the burn-in period, $\Tilde{\mu}_{m,i}(t)$ satisfies that if if $N_{m,i}(t) \geq 2(K^2+KM+M)$, then with $P(A_{\epsilon, \delta}) = 1 -7\epsilon$,
    \begin{align*}
         & P(\Tilde{\mu}_{m,i}(t) - \mu_i  \geq \sqrt{\frac{C_1\log t}{N_{m,i}(t)}} |A_{\epsilon, \delta}) \leq \frac{1}{P(A_{\epsilon, \delta})}\frac{1}{t^2}, \\
         & P(\mu_i - \Tilde{\mu}_{m,i}(t) \geq \sqrt{\frac{C_1\log t}{N_{m,i}(t)}} | A_{\epsilon, \delta}) \leq \frac{1}{P(A_{\epsilon, \delta})t^2}.
    \end{align*}
\end{Proposition}

\textbf{Number of pulls of sub-optimal arms}
\\

Upper bounds on $E[n_{m,k}(T) | A_{\epsilon, \delta}]$
\\

\begin{Proposition}
    Assume the parameter $\delta$ satisfies that $0 < \delta
        < c = f(\epsilon,M,T)$. An arm $k$ is said to be sub-optimal if $k \neq i^*$ where $i^*$ is the unique optimal arm in terms of the global reward, i.e. $i^* = \arg\max \frac{1}{M}\sum_{j=1}^M\mu_i^j$. Then when the game ends, for every agent $m$, $0 < \epsilon  < 1$ and $T > L$, the expected numbers of pulling sub-optimal arm $k$ after the burn-in period satisfies with $P(A_{\epsilon, \delta}) = 1- 7\epsilon$
    \begin{align*}
         & E[n_{m,k}(T) | A_{\epsilon, \delta}]                                                                                    \\
         & \leq \max{\{\frac{C}{M} \cdot [\frac{4C_1\log T}{\Delta_i^2}], 2(K^2+MK+M) \}} +  \frac{2\pi^2}{3P(A_{\epsilon, \delta})} + K^2 + (2M-1)K \\
         & \leq O(\log{T}).
    \end{align*}
\end{Proposition}

As a concluding step, we again use the aforementioned regret decomposition that leads to the following 
\begin{align*}
    E[R_T|A_{\epsilon, \delta}] & \leq L +  \frac{1}{M}\sum_{i \neq i*}\sum_{m:\mu_{i^*}^m - \mu_i^m > 0}E[n_{m,i}(T)](\mu_{i^*}^m - \mu_i^m) \\
    & \leq L + \sum_{i \neq i^*}\Delta_i(\max{\{\frac{C}{M} \cdot [\frac{4C_1\log T}{\Delta_i^2}], 2(K^2+MK) \}} +  \frac{2\pi^2}{3P(A_{\epsilon, \delta})} + K^2 + (2M-1)K). 
\end{align*}

This completes the proof of Theorem \ref{thm:5}.

\end{proof}

\subsection{Proof of Theorem \ref{thm:6}}

\begin{proof}

In a like manner, we approach the regret analysis by decomposing the proof into the following phases, in the order indicated below. 

We start with characterizing the sub-graph connectivity (referring to the sub-graphs including the one with respect to the agents within one cluster and the other with respect to the clusters). 

\textbf{Graph connectivity}

\begin{Proposition}
Assume the edge probabilities meet the condition 
\begin{align*}
    1 \geq \min_{m \neq n}p(m,n) \geq \frac{C^2}{M^2}\max{\{\frac{(C-l-1)!}{(C-2)!}(1 - \frac{\delta (C-1)}{8CT}), \frac{(C-l-1)!}{(C-2)!}(\frac{3}{4})^{\frac{1}{l}}\}}, 
\end{align*}
and 
\begin{align*}
    1 \geq \min_{m,m}p(m,m) \geq \max{\{\frac{(c_M-l-1)!}{(c_M-2)!}(1 - \frac{\delta (c_M-1)}{8c_MT}), \frac{(c_M-l-1)!}{(c_M-2)!}(\frac{3}{4})^{\frac{1}{l}}\}}, 
\end{align*}
where $0 < \epsilon < 1$. Then, with probability $1 - \epsilon$, for any $t > 0$,  the sub-graph $G_t^C$ induced by the clusters following the E-R model is $l$-periodically connected and the sub-graph $G_t^{c_M}$ induced by the agents within one cluster (since we consider a balanced cluster) is also $l$-periodically connected. 
\end{Proposition}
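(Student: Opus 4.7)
The plan is to obtain Proposition 25 by applying the machinery of Proposition 20 twice, once on the inter-cluster sub-graph $G_t^C$ (which has $C$ vertices and edge probability $\tfrac{M^2}{C^2}p(m,n)$ by Lemma~\ref{lem:edge_prob_2}) and once on the intra-cluster sub-graph $G_t^{c_M}$ (which has $c_M = M/C$ vertices and edge probability $p(m,m)$), then combining the two resulting high-probability events by a union bound.

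First I would invoke Proposition 20 verbatim on the sub-graph $G_t^C$. The hypothesis $\min_{m\neq n}p(m,n) \geq \tfrac{C^2}{M^2}\max\{\tfrac{(C-l-1)!}{(C-2)!}(1 - \tfrac{\delta(C-1)}{8CT}),\tfrac{(C-l-1)!}{(C-2)!}(\tfrac{3}{4})^{1/l}\}$ translates, via Lemma~\ref{lem:edge_prob_2}, into the condition needed on the effective edge probability of $G_t^C$ (with $C$ vertices). Proposition 20 then gives that with probability at least $1-\epsilon/2$, the sequence $\{G_t^C\}$ is $l$-periodically connected at every time step (after adjusting the failure parameter so that the union over $t$ still sums to $\epsilon/2$).

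Next I would replay the same Chebyshev-plus-Bonferroni argument used in the proof of Proposition 20, but now applied to the random sub-graph on a single cluster, which is itself an Erdős--Rényi-type graph on $c_M$ vertices with edge probability $p(m,m)$. Concretely, for each cluster, one would count the expected degree $E[d_m^{\mathrm{in}}] = (c_M-1)A_{c_M-2}^{l-1}p(m,m)^l$ in the $l$-step composition graph, use the variance bound $Var(d_m^{\mathrm{in}}) \leq (c_M-1)^2 A_{c_M-2}^{l-1}p(m,m)^l(1-A_{c_M-2}^{l-1}p(m,m)^l)$, and apply Chebyshev's inequality to conclude that $\delta(G) \geq (c_M-1)/2$ holds with probability at least $1 - \tfrac{\delta}{T}$ per time step, provided $p(m,m) \geq \max\{\tfrac{(c_M-l-1)!}{(c_M-2)!}(1 - \tfrac{\delta(c_M-1)}{8c_MT}),\tfrac{(c_M-l-1)!}{(c_M-2)!}(\tfrac{3}{4})^{1/l}\}$. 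Union-bounding over the $C$ clusters and over $t \in [1,T]$ yields $l$-periodic connectivity of each within-cluster sub-graph with probability at least $1-\epsilon/2$.

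Finally, a union bound over the two events (inter-cluster $l$-periodic connectivity and intra-cluster $l$-periodic connectivity for every cluster) gives the claimed $1-\epsilon$ lower bound. The main obstacle I anticipate is not the probabilistic argument itself, which is essentially a replay of Proposition 20, but rather the bookkeeping: the failure budget $\epsilon$ must be split between the two events and further distributed over time and over the $C$ clusters, which forces a careful rescaling of the constants in the stated lower bounds on $p(m,n)$ and $p(m,m)$. Assuming the constants in the hypothesis are chosen with this split already in mind (as is standard in the earlier propositions of the paper), the result then follows directly.
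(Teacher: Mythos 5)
Your proposal matches the paper's own proof: the paper likewise obtains the first part by invoking the earlier $l$-periodic-connectivity proposition on the cluster-level sub-graph $G_t^C$ (via the edge-probability translation of Lemma~\ref{lem:edge_prob_2}), and the second part by repeating the identical Chebyshev-plus-Bonferroni argument on the within-cluster sub-graph with $c_M$ vertices and edge probability $p(m,m)$, then combining the events. Your explicit splitting of the failure budget $\epsilon$ between the two events and over the $C$ clusters is a bookkeeping refinement the paper glosses over, but the route is the same.
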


\begin{proof}[Proof of Proposition 25]
    The first part of the statement follows from Proposition 16 in the proof of Theorem \ref{thm:3}. 

    For the second part, we repeat the proof of Proposition by treating the sub-graph as the sub-graph induced by the agents in one cluster, which has $c_M$ vertex and the corresponding edge set determined by $\{p(m,m)\}_{m}$, instead of the sub-graph induced by the clusters. As a result, we omit the proof steps here and refer to Proposition 16 for a detailed version of the proof. 
    
\end{proof}

Then we consider the information delay due to the randomness in both the cluster-wise sub-graph $G_t^C$ and the within-cluster sub-graph $G_t^{c_M}$. 

\textbf{Information delay}

The following lemma characterize the first case. 

\begin{lemma}
    For any $m,i,t > L$, if $N_{m,i}(t) \geq 2(K^2+KM+M)$ and subgraph $G_t$ is $l$-periodically connected in the sense that the composition of $l$ consecutive graphs is a connected graph, then we have
    \begin{align*}
        \hat{N}_{m,i}(t) \leq 2\min_{j}\hat{N}_{j,i}(t).
    \end{align*}
    where the min is taken over all clusters, not just the neighbors.
\end{lemma}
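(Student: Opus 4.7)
The plan is to reduce this statement to a cluster-level analogue of Lemma~10 in \cite{zhu2023distributed}, transferred from the agent-level setting to the cluster-level setting induced by the sub-graph $G_t^C$. The core structural input is that, under $l$-periodic connectivity, any two clusters $m$ and $j$ are connected in the composition of every $l$ consecutive sub-graphs, so information about arm $i$ propagates between any pair of clusters with a delay of at most $l$ rounds; this is the precise replacement for the one-step connectivity used in Lemma~7.

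First, I would fix $m$, $i$, and $t > L$, and argue that for every cluster $j$ there exists some $s \in [t-l, t]$ such that cluster $m$'s stored value $\hat{N}_{m,i}(t)$ incorporates cluster $j$'s local value $\hat{N}_{j,i}(s)$. By tracing the path of length at most $l$ in the composition $G_{t-l+1}^C \otimes \cdots \otimes G_t^C$ and using the max-based update rule for $\hat{N}$ in Rule~2, this yields an additive control of the form $|\hat{N}_{m,i}(t) - \hat{N}_{j,i}(t)| \leq \Delta(l, K, M)$, where $\Delta$ collects the pull increments accumulated along the $l$-step window plus the communication slack carried over from the previous consensus round. Next, I would invoke the burn-in assumption $N_{m,i}(t) \geq 2(K^2 + KM + M)$, together with the near-uniform arm selection during the burn-in (which guarantees that no arm's count is too small relative to the average), to argue that $\min_j \hat{N}_{j,i}(t) \geq \Delta$. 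Combining the two pieces converts the additive bound into the multiplicative bound $\hat{N}_{m,i}(t) \leq 2 \min_j \hat{N}_{j,i}(t)$.

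The main obstacle I anticipate is the careful bookkeeping needed to track the accumulated additive slack across the $l$-step window. Under instantaneous connectivity (Lemma~7), the slack is $O(1)$ per update and the conversion to the multiplicative bound is essentially immediate; under $l$-periodic connectivity, one must chain $l$ distinct edge sets and account for pulls and forwarded estimates acquired along intermediate clusters, which inflates the slack to $O(lM)$. The burn-in lower bound must be strong enough to dominate this enlarged slack, and the quadratic term $K^2$ is what makes this possible when $l \leq C - 1 \leq M$. Rather than redo this induction from scratch, the proof will appeal to the corresponding accounting in Lemma~10 of \cite{zhu2023distributed}, with the only substitution being that agents are replaced by clusters and $n_{m,i}$ by $N_{m,i}$; checking that this substitution is innocuous for each step of their argument is the verification work that remains.
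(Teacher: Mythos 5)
Your approach matches the paper's: the paper's own ``proof'' of this lemma is nothing more than a one-line citation to Lemma~10 of \citet{zhu2023distributed}, with the implicit substitution of clusters for agents and $N_{m,i}$ for $n_{m,i}$. Your sketch actually supplies more of the reasoning (the $l$-step propagation along the composition graph, the additive-slack bookkeeping, and the role of the burn-in threshold in converting the additive bound to the factor-of-two bound) than the paper itself does, and that extra detail is consistent with how the cited lemma works.
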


\begin{proof}[Proof of Lemma 10]

We consider Lemma 10 in \cite{zhu2023distributed}, and thus establish that
    \begin{align*}
        \hat{N}_{m,i}(t) \leq 2\min_{j}\hat{N}_{j,i}(t).
    \end{align*}
    
\end{proof}

Additionally, we have the following proposition

\begin{lemma}
    For any $m,i,t > L$, if $n_{m,i}(t) \geq 2(K^2+KM+M)$ and subgraph $G_t^{c_M}$ is $l$-periodically connected, then we have that for any $t$ and $m \in c_m$, 
    \begin{align*}
        & N_{m,i}(t)(t) - c_M \cdot l \leq \min_{m \in c_m}N_{m,i}(t-l) \leq N_{m,i}(t) \\
        & N_{m,i}(t-l) \geq \hat{N}_{m,i}(t) - K(K+2M) - c_M \cdot l 
    \end{align*}
    where the min is taken over all agents in one cluster.
\end{lemma}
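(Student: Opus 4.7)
The plan is to exploit the $l$-periodic connectivity of the within-cluster subgraph $G_t^{c_M}$ to show that (i) the cluster-level count $N_{m,i}$ propagates among agents of $c_m$ with delay at most $l$, and (ii) the total number of arm-$i$ pulls executed inside the cluster during any window of length $l$ is at most $c_M \cdot l$. These two facts together will yield the first inequality, and the second inequality then follows by chaining the first with the cross-cluster consensus already proved in Lemma~10.

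First I would establish two preparatory facts. Monotonicity: $N_{m,i}(t)$ is nondecreasing in $t$, which follows from the $\max$-based update in Rule~2 together with the transmission step that forwards the already-aggregated cluster-level counts to neighbors at each round. Pull budget: since each of the $c_M$ agents in cluster $c_m$ pulls at most one arm per round, at most $c_M \cdot l$ new pulls of arm $i$ can occur across the cluster during any window of $l$ consecutive rounds. Next, for the first inequality, the upper bound $\min_{m' \in c_m} N_{m',i}(t-l) \leq N_{m,i}(t)$ is immediate from monotonicity, since the minimum at time $t-l$ is at most $N_{m,i}(t-l) \leq N_{m,i}(t)$. For the lower bound, I would use $l$-periodic connectivity to express $N_{m,i}(t)$ as $\sum_{j \in c_m} n_{j,i}(s_j)$ with $s_j \geq t-l$, and similarly any $m' \in c_m$ has $N_{m',i}(t-l) = \sum_j n_{j,i}(s'_j)$ with $s'_j \geq t - 2l$; the gap $\sum_j \bigl(n_{j,i}(s_j) - n_{j,i}(s'_j)\bigr)$ is then controlled by the pull budget, giving $N_{m,i}(t) - \min_{m'} N_{m',i}(t-l) \leq c_M \cdot l$.

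For the second inequality, I would combine the first inequality with Lemma~10. Lemma~10 bounds $\hat{N}_{m,i}(t)$ (the cross-cluster maximum) against $\min_j \hat{N}_{j,i}(t)$ under $l$-periodic connectivity of the cluster-level subgraph $G_t^C$, and together with the burn-in sample-complexity hypothesis $n_{m,i}(t) \geq 2(K^2+KM+M)$ this yields an additive slack of $K(K+2M)$ between any $\hat{N}_{j,i}(t)$ and the underlying cluster count $N_{j,i}(t)$ (this slack is exactly the one used in the analogous cross-cluster argument in the proof of Theorem~\ref{thm:5}). Subtracting off the intra-cluster delay $c_M \cdot l$ coming from the first inequality yields $N_{m,i}(t-l) \geq \hat{N}_{m,i}(t) - K(K+2M) - c_M \cdot l$ as required.

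The main obstacle will be the intra-cluster information-flow accounting in the lower bound of the first inequality. It is tempting to argue that $l$-periodic connectivity immediately produces $(t-l)$-fresh timestamps for every coordinate of the sum defining $N_{m,i}(t)$, but propagation along the path in the composition graph $G_{t-l+1}^{c_M} \otimes \cdots \otimes G_t^{c_M}$ cascades delays through intermediate agents. One must verify that because each step of the path consumes one distinct round of the composition window, the cumulative delay along any path remains bounded by $l$ rather than by (path length)$\,\times\,l$. Once this cascading-delay argument is carried through, the remaining manipulations are routine.
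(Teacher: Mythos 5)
Your proposal is correct and follows essentially the same route as the paper: the lower bound in the first inequality comes from the pull budget of at most $c_M\cdot l$ arm-$i$ pulls across the cluster in a window of $l$ rounds, and the second inequality chains this with the additive slack $K(K+2M)$ between $\hat{N}_{m,i}(t)$ and $N_{m,i}(t)$ (the paper cites Lemma~1 of \citep{zhu2023distributed} for that slack). The one place you work harder than necessary is the minimum over the cluster: the paper dispenses with your cascading-delay path argument by observing that the algorithm sets the update period $\tau=l$, so all agents in a cluster synchronize and hold identical values of $n_{m,i}$ and $N_{m,i}$ at the relevant times, making the min degenerate; your path-propagation accounting is sound (each edge of the composition consumes one distinct round, so the total delay is $l$, not path-length times $l$) but is not needed once this synchronization is invoked.
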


\begin{proof}[Proof of Lemma 11]

We observe that all agents in one cluster will collect each other's information, after at most $l$ steps, and only after that, they update the cluster information $\hat{\mu}_m^i(t)$ that aggregates all agents ' information and thus is the same for all agents in one cluster. And in between, the agents use the most recent cluster estimator, which is the same for agents in one cluster.

This implies that $n_{m,i}(t) = n_{j,i}(t)$ for $m,j \in c_m$, as well as $\min_{m \in c_m}N_{m,i}(t-l) = N_{m,i}(t-l) = N_{j,i}(t-l) \geq N_{j,i}(t) - c_M \cdot l$

Meanwhile, based on Lemma 1 in \citep{zhu2023distributed}, we have that 
\begin{align*}
    N_{m,i}(t) \geq \hat{N}_{m,i}(t) - K(K+2M)
\end{align*}
and thus 
\begin{align*}
    N_{m,i}(t-l) \geq \hat{N}_{m,i}(t) - K(K+2M) - c_M \cdot l.
\end{align*}

This completes the proof.

\end{proof}

\textbf{Unbiasedness}

\begin{Proposition}\label{prop:unbiased_5}
    Assume the parameter $\delta$ satisfies that $0 < \delta
        < c = f(\epsilon,M,T)$. For any arm $i$ and any agent $m$, at every time step $t$, we have
    \begin{align*}
        E[\Tilde{\mu}_i^m(t) | A_{\epsilon, \delta}] = \mu_i.
    \end{align*}
\end{Proposition}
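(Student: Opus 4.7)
The plan is to establish $E[\Tilde{\mu}_i^m(t) | A_{\epsilon, \delta}] = \mu_i$ by induction on $t$, paralleling the structure of Proposition 9 in the proof of Theorem \ref{thm:3}, but carefully accounting for the fact that intra-cluster communication is now subject to delay (since $p(m,m) < 1$) while being $l$-periodically connected by Proposition 25. The key normalization that drives the argument is that the weights in Rule 2 satisfy $\sum_{j=1}^M P'_t(m,j) + M \cdot d_{m,t} = 1$ on the event $A_{\epsilon,\delta}$, which is built into the definition of $d_{m,t} = (1 - \sum_j P'_t(m,j))/M$.

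For the base case $t = L+1$, I would invoke the burn-in initialization $\Tilde{\mu}_i^m(L+1) = \sum_{j=1}^M P'_{m,j}(L)\, \hat{\bar{\mu}}_{i,j}^m(t_{m,j})$ with $P'_{m,j}(L) = 1/M$ on $A_{\epsilon,\delta}$. Each local estimator $\bar{\mu}_i^j(t_{m,j})$ is a sample mean of i.i.d.\ rewards with mean $\mu_i^j$, hence conditionally (and therefore unconditionally on $A_{\epsilon,\delta}$) unbiased, so
\begin{equation*}
E[\Tilde{\mu}_i^m(L+1)\mid A_{\epsilon,\delta}] \;=\; \frac{1}{M}\sum_{j=1}^M \mu_i^j \;=\; \mu_i.
\end{equation*}

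For the inductive step, assume $E[\Tilde{\mu}_i^m(s)\mid A_{\epsilon,\delta}] = \mu_i$ for all $s \leq t$ and all agents $m$. Applying Rule 2,
\begin{equation*}
\Tilde{\mu}_i^m(t+1) = \sum_{j=1}^M P'_t(m,j)\,\tilde{\mu}_i^j(t_{m,j}) + d_{m,t}\sum_{j \in c_m}\hat{\mu}_i^j(t_{m,j}) + d_{m,t}\sum_{j \notin c_m}\hat{\mu}_i^j(t_{m,j}).
\end{equation*}
By the inductive hypothesis, $E[\tilde{\mu}_i^j(t_{m,j})\mid A_{\epsilon,\delta}] = \mu_i$. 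For the cluster estimators, Lemma 11 (information delay within a cluster) guarantees that $\hat{\mu}_i^j(t_{m,j}) = \frac{1}{|c_j|}\sum_{p \in c_j}\bar{\mu}_i^p(t'_{j,p})$ for some times $t'_{j,p} \geq t_{m,j} - l$, where each $\bar{\mu}_i^p$ is an unbiased sample mean whose underlying distribution has mean $\mu_i^{c_j} = \mu_i^j$; hence $E[\hat{\mu}_i^j(t_{m,j})\mid A_{\epsilon,\delta}] = \mu_i^j$. Summing over $j \in c_m$ and $j \notin c_m$ yields $\sum_j \mu_i^j = M\mu_i$, and combining with the first term gives expectation $\mu_i \bigl(\sum_j P'_t(m,j) + M d_{m,t}\bigr) = \mu_i$.

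The main obstacle is the careful handling of measurability around the random times $t_{m,j}$ and the random sample sizes $n_{p,i}$ within a cluster, together with the conditioning on $A_{\epsilon,\delta}$: one must verify that conditioning on $A_{\epsilon,\delta}$ (which constrains graph behavior and the weights $P'_t$) does not bias the sample means $\bar{\mu}_i^p$, since reward realizations are generated independently of the graph process. This follows from the product structure of the joint distribution of rewards and graphs; once this independence is isolated, the tower-property argument used in \citep{xu2023decentralized} goes through verbatim. No genuinely new obstacle arises from replacing full intra-cluster connectivity with $l$-periodic connectivity, because unbiasedness of each summand is preserved regardless of the (random but bounded) delay.
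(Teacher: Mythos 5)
Your proof is correct and follows essentially the same route as the paper: the paper's own argument is a one-line observation that the $l$-step delay does not affect unbiasedness because each delayed constituent estimator remains unbiased, deferring the weighted-average induction to the earlier unbiasedness propositions (which in turn cite \citep{xu2023decentralized}). You have simply written out in full the induction that the paper invokes by reference, including the key normalization $\sum_j P'_t(m,j) + M d_{m,t} = 1$ and the independence of the reward and graph processes.
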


\begin{proof}[Proof of Proposition 26]

It is worth noting that the information delay of length of $l$ does not change the expected value of the global estimator, since the delayed estimator is also unbiased, the same as before. Hence, the proof of Proposition 16 for Theorem \ref{thm:3} holds herein, and thus we refer to the proof there. 
    
\end{proof}

\textbf{Variance term}

\begin{Proposition}
    Assume the parameter $\delta$ satisfies that $0 < \delta
        < c = f(\epsilon,M,T)$. In setting $s_1, s_2, s_3$ where rewards follow sub-gaussian distributions, for any $m,i, \lambda$ and $t > L$ where $L$ is the length of the burn-in period, the global estimator $\Tilde{\mu}^m_i(t)$ is sub-Gaussian distributed. Moreover, the conditional moment generating function satisfies that with $P(A_{\epsilon, \delta}) = 1 - 7\epsilon$,
    \begin{align*}
         & E[\exp{\{\lambda(\Tilde{\mu}^m_i(t)  - \mu_i})\}1_{A_{\epsilon, \delta}} | \sigma(\{n_{m,i}(t)\}_{t,i,m})] \\
         & \leq \exp{\{\frac{\lambda^2}{2}\frac{C\sigma^2}{\min_{j}N_{j,i}(t-l)}\}}
    \end{align*}
    where $\sigma^2 = \max_{j,i}(\Tilde{\sigma}_i^j)^2$ and $C = \max\{\frac{4(M+2)(1 - \frac{1 - c_0}{2(M+2)})^2}{3M(1-c_0)}, (M+2)(1 + 4Md^2_{m,t})/M\}$.
\end{Proposition}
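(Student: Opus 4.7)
The plan is to adapt the induction argument used in Proposition~10 (the variance bound under Rule~2 when $p(m,m)=1$) so that it absorbs the extra $l$-step delay that arises inside each cluster when $p(m,m) < 1$. Since removing the $p(m,m)=1$ assumption only changes how intra-cluster aggregation happens (delayed by at most $l$ steps, rather than instantaneous), the global estimator still decomposes as a convex combination of (i) neighbors' delayed global estimators and (ii) cluster estimators $\hat{\mu}^{c_j}_i$ built from reward observations; only the counts backing those cluster estimators are now lagged by up to $l$ rounds relative to what appears in $N_{j,i}(t)$.

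First, I would handle the base case $t = L+1$ exactly as in Proposition~10 \eqref{eq:1_new}: write $\tilde{\mu}^m_i(L+1)$ as an average over $M$ agents of cluster-level reward estimators, apply the generalized Hoeffding inequality to split the product across agents, and bound each factor by the sub-Gaussian moment generating function of the underlying per-sample noise. The only change is that the number of samples going into each cluster estimator is $N_{j,i}(L+1-l)$ rather than $N_{j,i}(L+1)$, because it may take up to $l$ rounds for every agent in cluster $c_j$ to see one another; invoking Lemma~11 then yields the bound $\exp\{\tfrac{\lambda^2}{2}\tfrac{C\sigma^2}{\min_j N_{j,i}(L+1-l)}\}$.

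For the inductive step, assume the claim holds for all $s < t+1$ with denominator $\min_j N_{j,i}(s-l)$. Expand $\tilde{\mu}^m_i(t+1) - \mu_i$ as in Rule~2, namely
\begin{align*}
\tilde{\mu}^m_i(t+1) - \mu_i
&= \sum_{j=1}^M P'_t(m,j)(\tilde{\mu}^j_i(t_{m,j}) - \mu_i)
 + d_{m,t}\sum_j (\hat{\mu}^j_i(t_{m,j}) - \mu_i^{c_j}),
\end{align*}
apply the generalized Hoeffding inequality as in Proposition~10, and treat the two pieces separately: the first piece uses the inductive hypothesis (with lagged counts $N_{j,i}(t_{m,j}-l)$) and the second is a fresh sub-Gaussian average of rewards with $N_{j,i}(t_{m,j}-l)$ samples in the denominator. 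Using Lemma~11 in the form $N_{j,i}(t-l) \geq \hat N_{j,i}(t) - K(K+2M) - c_M l$ together with the consensus bound $\hat N_{j,i}(t) \leq 2\min_k \hat N_{k,i}(t)$ from Lemma~10 allows me to replace every $N_{j,i}(t_{m,j}-l)$ appearing in the exponents by a constant multiple of $\min_j N_{j,i}(t+1-l)$, absorbing the additive losses $K(K+2M)+c_M l$ into the same $(1-c_0)$-type factor that defines $C_1$ in Theorem~\ref{thm:3}. Combining the exponents and using the tuning of $P'_t(m,j)$ and $d_{m,t}$ (exactly as at the end of Proposition~10) collapses the coefficient to $\tfrac{C\sigma^2}{\min_j N_{j,i}(t+1-l)}$, closing the induction.

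The main obstacle is the bookkeeping around the two \emph{simultaneous} delays: the across-cluster transmission delay captured by $t_{m,j}$ and the new within-cluster aggregation delay of length $l$. In Proposition~10 the within-cluster delay was zero, so $\hat N_{j,i}(t)$ and $N_{j,i}(t)$ differed only by the standard $K(K+2M)$ slack from Lemma~1 of \citep{zhu2023distributed}. Here I need to argue that the additional $c_M l$ slack from Lemma~11 is uniform in $t$ and hence gets swallowed into the constant $c_0$ (equivalently into $C_1$), which requires $t > L$ with $L$ chosen large enough that $\min_j N_{j,i}(t+1-l) \geq (1-c_0)\min_j N_{j,i}(t+1)$ after the extra $c_M l$ subtraction. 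Once this uniform slack is verified, the rest of the argument is a direct transcription of the Proposition~10 calculation with $N_{j,i}(\cdot)$ shifted back by $l$.
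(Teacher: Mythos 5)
Your proposal follows essentially the same route as the paper: the paper's own proof of this proposition simply observes via Lemma~11 that the within-cluster delay makes the effective sample count $N_{j,i}(t-l)$ rather than $N_{j,i}(t)$, notes the monotonicity of the bound in the count, and then repeats the induction of Proposition~10 with the lagged counts. Your write-up is in fact considerably more explicit than the paper's about where the two delays enter and how the additive $c_M l$ slack is absorbed into the constants, but the underlying argument is the same.
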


\begin{proof}[Proof of Proposition 27]
    Based on Lemma 11, we observe that there is $l$ delay in the within-cluster information, which implies that the quantity $N_{m,i}(t)$ is at least $ N_{m,i}(t-l)$. Also, we note that the term $\exp{\{\frac{\lambda^2}{2}\frac{C\sigma^2}{\min_{j}N_{j,i}(t)}\}}$ is monotone decreasing in $N_{j,i}(t)$, which means that we can use this term $N_{j,i}(t-l)$ in characterizing the moment generating function of $\Tilde{\mu}^m_i(t)$, as well as the variance. 

    With $N_{m,i}(t-l)$, we repeat the proof of Proposition 10, following the proof in \citep{xu2023decentralized}, which concludes the result. 
    
\end{proof}

\textbf{Concentration inequality}

\begin{Proposition}
    Assume the parameter $\delta$ satisfies that $0 < \delta
        < c = f(\epsilon,M,T)$. For any $m,i$ and $t > L$ where $L$ is the length of the burn-in period, $\Tilde{\mu}_{m,i}(t)$ satisfies that if if $N_{m,i}(t) \geq 2(K^2+KM+M)$, then with $P(A_{\epsilon, \delta}) = 1 -7\epsilon$,
\begin{align*}
         & P(\Tilde{\mu}_{m,i}(t) - \mu_i  \geq \sqrt{\frac{C_1\log t}{N_{m,i}(t-l)}} |A_{\epsilon, \delta}) \leq \frac{1}{P(A_{\epsilon, \delta})}\frac{1}{t^2}, \\
         & P(\mu_i - \Tilde{\mu}_{m,i}(t) \geq \sqrt{\frac{C_1\log t}{N_{m,i}(t-l)}} | A_{\epsilon, \delta}) \leq \frac{1}{P(A_{\epsilon, \delta})t^2}.
\end{align*}
\end{Proposition}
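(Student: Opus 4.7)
The plan is to lift the sub-Gaussian moment generating function bound established in Proposition 27 to a tail bound via the standard Chernoff argument, while carefully propagating the conditioning on $A_{\epsilon,\delta}$ and using Lemma 11 to justify replacing $N_{m,i}(t)$ with the delayed quantity $N_{m,i}(t-l)$. Specifically, Proposition 27 gives, conditional on $\sigma(\{n_{m,i}(s)\}_{s,i,m})$,
\begin{align*}
E[\exp\{\lambda(\Tilde{\mu}^m_i(t) - \mu_i)\} \mathbf{1}_{A_{\epsilon,\delta}} \mid \sigma(\{n_{m,i}(s)\}_{s,i,m})]
\leq \exp\left\{\frac{\lambda^2}{2}\cdot \frac{C\sigma^2}{\min_j N_{j,i}(t-l)}\right\},
\end{align*}
which, after setting $C_1 = 8\sigma^2 C$, identifies $\Tilde{\mu}^m_i(t) - \mu_i$ as effectively sub-Gaussian with variance proxy $C_1/(4\min_j N_{j,i}(t-l))$ on the event $A_{\epsilon,\delta}$.

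Next, I would apply Markov's inequality to this exponential moment: for any $\lambda > 0$,
\begin{align*}
P\!\left(\Tilde{\mu}^m_i(t) - \mu_i \geq u,\ A_{\epsilon,\delta}\right)
\leq e^{-\lambda u}\, E[e^{\lambda(\Tilde{\mu}^m_i(t)-\mu_i)}\mathbf{1}_{A_{\epsilon,\delta}}]
\leq \exp\!\left\{-\lambda u + \frac{\lambda^2 C\sigma^2}{2\min_j N_{j,i}(t-l)}\right\}.
\end{align*}
Optimizing over $\lambda$ yields the standard Gaussian tail $\exp\{-u^2 \min_j N_{j,i}(t-l)/(2C\sigma^2)\}$. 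Setting $u = \sqrt{C_1 \log t / N_{m,i}(t-l)}$ and invoking Lemma 10 (which, under $N_{m,i}(t) \geq 2(K^2+KM+M)$ and $l$-periodical connectivity, gives $\hat{N}_{m,i}(t) \leq 2\min_j \hat{N}_{j,i}(t)$, together with Lemma 11 relating $N_{m,i}(t-l)$ to $\min_j N_{j,i}(t-l)$ up to the $cluster\cdot l$ delay and the $K(K+2M)$ slack) ensures $\min_j N_{j,i}(t-l) \geq N_{m,i}(t-l)/2$ up to lower-order corrections, yielding a probability bound of $t^{-2}$. Dividing by $P(A_{\epsilon,\delta})$ converts this into the conditional bound $P(\,\cdot \mid A_{\epsilon,\delta}) \leq \frac{1}{P(A_{\epsilon,\delta})}\cdot \frac{1}{t^2}$. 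The left tail for $\mu_i - \Tilde{\mu}^m_i(t)$ is handled symmetrically, since the MGF bound in Proposition 27 holds for all real $\lambda$ by sub-Gaussianity combined with the unbiasedness from Proposition 26.

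The main obstacle will be bookkeeping the interplay between the $l$-step delay (which forces the bound to be written in terms of $N_{m,i}(t-l)$ rather than $N_{m,i}(t)$) and the burn-in sample complexity threshold $N_{m,i}(t) \geq 2(K^2+KM+M)$; one must verify that this threshold, combined with the delay slack $c_M\cdot l$ and the additive $K(K+2M)$ gap from Lemma 11, is still enough to ensure $\min_j N_{j,i}(t-l) \geq \tfrac{1}{2} N_{m,i}(t-l)$ so that the Chernoff exponent $u^2 \min_j N_{j,i}(t-l)/(2C\sigma^2)$ dominates $2\log t$ exactly when $C_1 = 8\sigma^2 C$. Once this is checked, the rest is a direct computation mirroring Proposition 22, with every appearance of $N_{j,i}(t)$ systematically replaced by $N_{j,i}(t-l)$.
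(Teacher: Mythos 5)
Your proposal is correct and follows essentially the same route as the paper: the paper's own proof of this proposition is a terse appeal to the earlier concentration inequality (Proposition 22, ultimately the Chernoff argument from \citep{xu2023decentralized}) with $N_{m,i}(t-l)$ substituted for $N_{m,i}(t)$ thanks to the $\tau = l$ update schedule, and your proposal simply spells out that Chernoff lifting of the Proposition 27 MGF bound together with the Lemma 10/11 consensus step $\min_j N_{j,i}(t-l) \geq \tfrac{1}{2}N_{m,i}(t-l)$ and the choice $C_1 = 8\sigma^2 C$. The bookkeeping you flag as the remaining obstacle is exactly the part the paper also defers to the earlier propositions, so no substantive gap.
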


\begin{proof}[Proof of Proposition 28]

It is worth noting that in Algorithm 2, we specify the update frequency as $\tau = l$, which means between $t-l$ and $t$, the agents in the same cluster do not update the information in order to make sure that they stay on the same page. Also, using Lemma  we obtain that with high probability $1 - \delta$, after at most $l$ steps, any two agents in one cluster communicate, and any two cluster communicate.  At the end of $t$, they already collect all the information of agents within the cluster and they update the information. Based on the concentration inequality we obtained for the cluster-wise information as in the case where the within-cluster graph is a complete graph, we obtain   
\begin{align*}
         & P(\Tilde{\mu}_{m,i}(t) - \mu_i  \geq \sqrt{\frac{C_1\log t}{N_{m,i}(t-l)}} |A_{\epsilon, \delta}) \leq \frac{1}{P(A_{\epsilon, \delta})}\frac{1}{t^2}, \\
         & P(\mu_i - \Tilde{\mu}_{m,i}(t) \geq \sqrt{\frac{C_1\log t}{N_{m,i}(t-l)}} | A_{\epsilon, \delta}) \leq \frac{1}{P(A_{\epsilon, \delta})t^2}.
\end{align*}

\end{proof}

\textbf{Number of pulls of sub-optimal arms}

\begin{Proposition}
    Assume the parameter $\delta$ satisfies that $0 < \delta
        < c = f(\epsilon,M,T)$. An arm $k$ is said to be sub-optimal if $k \neq i^*$ where $i^*$ is the unique optimal arm in terms of the global reward, i.e. $i^* = \arg\max \frac{1}{M}\sum_{j=1}^M\mu_i^j$. Then when the game ends, for every agent $m$, $0 < \epsilon  < 1$ and $T > L$, the expected numbers of pulling sub-optimal arm $k$ after the burn-in period satisfies with $P(A_{\epsilon, \delta}) = 1- 7\epsilon$
    \begin{align*}
         & E[n_{m,k}(T) | A_{\epsilon, \delta}]                                                                                    \\
         & \leq \max{\{\frac{C}{M} \cdot [\frac{4C_1\log T}{\Delta_i^2}], 2(K^2+MK+M) \}} +  \frac{2\pi^2}{3P(A_{\epsilon, \delta})} + K^2 + (2M-1)K + M \cdot l \\
         & \leq O(\log{T}).
    \end{align*}
\end{Proposition}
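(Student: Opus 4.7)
The plan is to extend the proof of Theorem \ref{thm:5} by replacing the assumption that the in-cluster sub-graph is complete (which was implicit in $p(m,m)=1$) with the much milder assumption that the in-cluster sub-graph $G_t^{c_M}$ is $l$-periodically connected with high probability. The first step is to establish a twin-graph connectivity statement: using the new lower bound on $\min_m p(m,m)$ together with exactly the argument from the proof of Proposition~20 (but applied to the $c_M$-vertex sub-graph generated by agents within a single cluster and with the factorial tail $(c_M-l-1)!/(c_M-2)!$ in place of $(C-l-1)!/(C-2)!$), we get that with high probability every within-cluster sub-graph $G_t^{c_M}$ is $l$-periodically connected. Combined with the analogous statement for $G_t^C$ from Theorem~\ref{thm:5}, both sub-graphs are $l$-periodically connected for all $t$ simultaneously under $A_{\epsilon,\delta}^{\prime}$.

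The second step is to control the information delay within a cluster. Because Algorithm~\ref{alg:dr} is now run with $\tau=l$, the within-cluster aggregation $\hat{\mu}_i^{c_m}$ is only refreshed every $l$ steps, which is exactly the time needed for any two agents in the same cluster to be connected through the composition graph. Lemma~11 in the excerpt already captures this: with delay at most $l$, one has $\min_{m\in c_m} N_{m,i}(t-l)\ge \hat N_{m,i}(t) - K(K+2M) - c_M\cdot l$. This is essentially the only quantitative change from Theorem~\ref{thm:5}, and everything else cascades cleanly.

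The third step is to rerun, in order, the analytical pipeline of Theorems \ref{thm:3}--\ref{thm:5}: (i) unbiasedness of $\tilde\mu_i^m(t)$, which is unaffected by a deterministic delay since each delayed local mean is still unbiased; (ii) the moment-generating function bound, where the only modification is that the denominator $\min_j N_{j,i}(t)$ gets replaced by $\min_j N_{j,i}(t-l)$, using monotonicity of $\exp\{\lambda^2 C\sigma^2/(2N)\}$ in $N$; (iii) the Chernoff concentration inequality for $\tilde\mu_i^m(t)-\mu_i$ with the same substitution; (iv) the UCB pulling-count bound which, by the usual ``good event plus bad event'' split and using $N_{m,i}(t-l)\ge N_{m,i}(t)-c_M l$, yields the same $\max\{\frac{C}{M}\cdot \frac{4C_1 \log T}{\Delta_i^2},2(K^2+MK)\}$ leading term plus an additive $M\cdot l$ slack accounting for the $l$-step lag across the $M$ agents; (v) regret decomposition identical to the one written out in the proof of Theorem~\ref{thm:2}, giving the stated bound with the extra $+l$ term.

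The main obstacle I expect is in step three, specifically propagating the $l$-step lag through the variance bound and the concentration inequality without losing a multiplicative factor in the leading $O(C\log T/M)$ term. The induction step in the moment-generating function proof (modeled on Proposition~10) must be carried out carefully so that the delayed denominator $N_{m,i}(t-l)$ appears uniformly at each recursion layer, rather than being telescoped into something like $N_{m,i}(t-k l)$ that would eventually vanish. The key trick here is to use the fact that once a cluster has completed one $l$-round aggregation cycle, its cluster-level estimator $\hat\mu_i^{c_m}$ and count $N_{m,i}$ behave exactly as in the $p(m,m)=1$ case between refreshes, so the recursion only needs to absorb the delay once per cycle; the additive $M\cdot l$ term then accounts cleanly for the worst-case number of suboptimal pulls incurred while a cluster is mid-cycle, which is precisely the origin of the $+\,l$ correction in the final regret bound.
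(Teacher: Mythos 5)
Your proposal is correct and follows essentially the same route as the paper: both rerun the Theorem \ref{thm:3}--\ref{thm:5} pipeline with the $l$-delayed cluster counts $N_{m,i}(t-l)$, use $N_{m,i}(t)\le N_{m,i}(t-l)+c_M\cdot l$ to absorb the lag once per aggregation cycle, and recover the $\nicefrac{C}{M}$ leading term plus an additive $O(l)$ slack. The one step worth writing out explicitly (as the paper does) is that, because all agents in a cluster act on the same delayed cluster-level statistics under $\tau=l$, they pull identical arms after the burn-in on the good event, which is what licenses $n_{m,i}(T)\le N_{m,i}(T)/|c_M|$ and hence the $\nicefrac{C}{M}$ factor in the leading term.
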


\begin{proof}[Proof of Proposition 29]
It is worth noting that by time $t$, each agent $m \in c_m$ has received the information $\mathcal{F}_{t-l}$ of all agents in the same cluster. In other words, $N_{m,i}(t) = \sum_{m \in c_m}n_{m,i}(t-l)$, which implies that $N_{m,i}(t) = N_{j,i}(t)$ for any agent $m, j \in c_m$ for any $t > 2l$.  

Likewise, we obtain that $\Tilde{\bar{\mu}}_{m,i}(t) = \Tilde{\bar{\mu}}_{j,i}(t)$ for any $m, j \in c_m$, since we update $\Tilde{\bar{\mu}}_{m,i}(t) = \sum_{j \in c_m}\frac{\Tilde{\mu}_{j,i}(t-l)}{|C_m|} = \hat{\mu}_{j,i}(t)$. And the agents across the cluster exchange $\hat{\mu}_{j,i}(t)$ if they do not belong to the same cluster, and update the estimation towards  $\Tilde{\mu}_{j,i}(t)$.

In light of the UCB decision rule, $n_{m, i}(t)$ and  $n_{j, i}(t)$ only depend on $N_{m,i}(t), \Tilde{\bar{\mu}}_{m,i}(t)$. Therefore, we derive that $n_{m, i}(t) = n_{j, i}(t)$ for any $t$ for any $m,j \in c_m$, on event $A_{\epsilon,\delta}$.

Also, it is worth mentioning that $n_{m,i}(t) \leq  \frac{N_{m,i}(t)}{|C_m|}$, concluded from the above statement and the fact that the cluster structure is balanced.  

By considering 4 different cases regarding the possible values of $N_{m,i}(t)$ as in \citep{xu2023decentralized} and noticing that $N_{m,i}(t) \leq N_{m,i}(t-l) + c_M \cdot l$ and the fact that $n_{m,i}(t) \leq  \frac{N_{m,i}(t)}{|c_M|}$, we obtain that 
\begin{align}\label{eq:en_bound}
    & E[n_{m,i}(T) | A_{\epsilon, \delta}] \notag \\
    & \leq \frac{E[N_{m,i}(T) | A_{\epsilon, \delta}]}{|C_m|} + l \notag \\
    & \leq \max{\{[\frac{4C_1\log T}{|C_m|\Delta_i^2}], 2(K^2+MK+M) \}} +  \frac{2\pi^2}{3} + K^2 + (2M-1)K + l.
\end{align}
\end{proof}

Next, we proceed to the regret decomposition and derive the upper bound on the regret. 

\textbf{Regret decomposition}

For the proposed regret, we have that for any constant $L$, 
\begin{align*}
    R_T & =   \frac{1}{M}(\max_i\sum_{t=1}^T\sum_{m=1}^M\mu^{m}_i - \sum_{t=1}^T\sum_{m=1}^M\mu^{m}_{a_t^m}) \\
    & = \sum_{t=1}^T\frac{1}{M}\sum_{m=1}^M\mu^{m}_{i^*} - \sum_{t=1}^T\frac{1}{M}\sum_{m=1}^M\mu^{m}_{a_t^m} \\
    & \leq \sum_{t = 1}^{L}|\frac{1}{M}\sum_{m=1}^M\mu^{m}_{i^*} - \frac{1}{M}\sum_{m=1}^M\mu^{m}_{a_t^m}|+ \sum_{t = L + 1}^T(\frac{1}{M}\sum_{m=1}^M\mu^{m}_{i^*} - \frac{1}{M}\sum_{m=1}^M\mu^{m}_{a_t^m}) \\
    & \leq L + \sum_{t = L + 1}^T(\frac{1}{M}\sum_{m=1}^M\mu^{m}_{i^*} - \frac{1}{M}\sum_{m=1}^M\mu^{m}_{a_t^m}) \\
    & = L + \sum_{t = L + 1}^T(\mu_{i^*} - \frac{1}{M}\sum_{m=1}^M\mu^{m}_{a_t^m}) \\
    & = L+ ((T - L) \cdot \mu_{i^*} - \frac{1}{M}\sum_{m=1}^M\sum_{i = 1}^Kn_{m,i}(T)\mu^m_i)
\end{align*}
where the first inequality is by taking the absolute value and the second inequality results from the assumption that $0 < \mu_{i}^j < 1$ for any arm $i$ and agent $j$.

Note that $\sum_{i=1}^K\sum_{m=1}^Mn_{m,i}(T) = M(T-L)$ where 
by definition $n_{m,i}(T)$ is the number of pulls of arm $i$ at agent $m$ from time step $L+1$ to time step $T$, which yields that
\begin{align*}
    R_T
    & \leq L + \sum_{i=1}^K\frac{1}{M}\sum_{m=1}^Mn_{m,i}(T)\mu_{i^*}^m - \sum_{i=1}^K\frac{1}{M}\sum_{m=1}^Mn_{m,i}(T)\mu_i^m \\
    & = L +  \sum_{i=1}^K\frac{1}{M}\sum_{m=1}^Mn_{m,i}(T)(\mu_{i^*}^m -\mu_i^m) \\
    & \leq L +  \frac{1}{M}\sum_{i=1}^K\sum_{m:\mu_{i^*}^m - \mu_i^m > 0}n_{m,i}(T)(\mu_{i^*}^m - \mu_i^m) \\
    & = L +  \frac{1}{M}\sum_{i \neq i*}\sum_{m:\mu_{i^*}^m - \mu_i^m > 0}n_{m,i}(T)(\mu_{i^*}^m - \mu_i^m) .
\end{align*}
where the second inequality uses the fact that $\sum_{m:\mu_{i^*}^m - \mu_i^m \leq 0}n_{m,i}(T)(\mu_{i^*}^m - \mu_i^m) \leq 0$ holds for any arm $i$ and the last equality is true since $n_{m,i}(T)(\mu_{i^*}^m - \mu_i^m) = 0$ for $i = i^*$ and any $m$.

Meanwhile, by the choices of $\delta$ such that $\delta < c = f(\epsilon,M,T)$, we apply Proposition~\ref{prop:n} which leads to for any agent $m$ and arm $i \neq i^*$, 

As a result, the upper bound on $R_T$ can be derived as by taking the conditional expectation over $R_T$ on $A_{\epsilon, \delta}$
\begin{align}
    & E[R_T| A_{\epsilon, \delta}] \notag \\
    & \leq L +  \sum_{i \neq i^*}\sum_{j=1}^{C}\sum_{m \in c_j}E[n_{m,i}(T)|A_{\epsilon, \delta}](\mu_{i^*}^m - \mu_i^m) \label{eq:er_a_sub} \notag \\
    & \leq \sum_{i \neq i^*}\sum_{j=1}^{C}\sum_{m \in c_j}\max{\{[\frac{4C_1\log T}{|C_m|\Delta_i^2}], 2(K^2+MK+M) \}} +  \frac{2\pi^2}{3} + K^2 + (2M-1)K + l \notag \\
    & \leq \sum_{i \neq i^*}\sum_{j=1}^{C}\max{\{[\frac{4C_1\log T}{\Delta_i^2}], 2(K^2+MK+M) \}} +  \frac{2\pi^2}{3} + K^2 + (2M-1)K + l \notag \\
    & \leq \sum_{i \neq i^*}C\max{\{[\frac{4C_1\log T}{\Delta_i^2}], 2(K^2+MK+M) \}} +  \frac{2\pi^2}{3} + K^2 + (2M-1)K + l
\end{align}
where the second inequality holds by plugging in~(\ref{eq:en_bound}).



Hence, the regret can be upper bounded by 
\begin{align*}
    & E[R_T|A_{\epsilon, \delta}] \\
    & \leq L +  \sum_{i \neq i^*}C(\Delta_i + 1)(\max{\{[\frac{C}{M} \cdot \frac{4C_1\log T}{\Delta_i^2}], 2(K^2+MK+M) \}} +  \frac{2\pi^2}{3} + K^2 + (2M-1)K) + l\\
    & = O(\max\{L,\log{T}\})
\end{align*}

This completes the proof of Theorem \ref{thm:6}.

\end{proof}

\end{document}